\renewcommand{\cite}{\citep}
\newcommand{\Rho}{\mathrm{P}}
\newtheoremstyle{spaced}%
  {10pt}   
  {10pt}   
  {\itshape} 
  {}        
  {\bfseries} 
  {.}       
  { }       
  {}        
\theoremstyle{spaced}
\newtheorem{example}{Example}
\newtheorem{remark}{Remark}
\newtheorem{proposition}{Proposition}
\newtheorem{definition}{Definition}
\newtheorem{corollary}{Corollary}
\newtheorem{lemma}{Lemma}
\newtheorem{notation}{Notation}
\newtheorem{method}{Method}
\newtheoremstyle{mystyle}%
  {3pt} 
  {3pt} 
  {\itshape} 
  {} 
  {\bfseries} 
  {.} 
  { } 
  {\thmname{#1} \thmnumber{#2}: \thmnote{\bfseries #3}} 
\theoremstyle{mystyle}
\tikzstyle{line}=[draw]
\tikzstyle{arrow}=[draw, -latex] 
\tikzset{
    block3/.style = {rectangle, draw, fill=orange!10, text width=6.5em, text centered, rounded corners, minimum height=4em},
    block2/.style = {rectangle, draw, fill=lightgray!30, text width=6.5em, text centered, minimum height=4em},
       block4/.style = {rectangle, draw, fill=orange!20, text width=6.5em, text centered, rounded corners, minimum height=4em},
    thinBlock/.style={rectangle, draw, minimum width=5em, minimum height=3em, text centered, fill=olive!10},
    arrow/.style = {thick,->,>=stealth},
    output/.style = {rectangle, draw, fill=yellow!20, text width=6.9em, text centered, rounded corners, minimum height=4em},
    trapeziumBlock/.style={
    trapezium, 
    trapezium stretches=true, 
    inner sep=0.15em, 
    align=left, 
    shape border rotate=180, 
    draw, 
    fill=olive!10, 
    text width=4em, 
    minimum height=2em, 
    minimum width=3em
},
}
\newcommand{\myvaremphasis}[1]{\textit{\textsf{#1}}}
\begin{document}

\title[$\Pi$-NeSy: A Possibilistic  Neuro-Symbolic Approach]{$\Pi$-NeSy: A Possibilistic  Neuro-Symbolic Approach}


\author[1]{\fnm{Ismaïl} \sur{Baaj}}\email{ismail.baaj@assas-universite.fr}

\author[2,3]{\fnm{Pierre} \sur{Marquis}}\email{marquis@cril.fr}

\affil[1]{\orgdiv{LEMMA}, \orgname{Paris-Panthéon-Assas University}, \orgaddress{
\city{Paris}, \postcode{75006},
\country{France}}}

\affil[2]{\orgdiv{Univ. Artois, CNRS, CRIL}, \orgaddress{\city{Lens}, \postcode{F-62300},  \country{France}}}

\affil[3]{\orgdiv{Institut Universitaire de France}}

\hypersetup{
pdftitle={Pi-Nesy: A Possibilistic Neuro-Symbolic Approach},
pdfsubject={cs.AI},
pdfauthor={Ismaïl Baaj, Pierre Marquis},
pdfkeywords={Neuro-symbolic, Possibility theory, inference, learning},
}


\abstract{In this article, we introduce a neuro-symbolic approach that combines a low-level perception task performed by a neural network with a high-level reasoning task performed by a possibilistic rule-based system.  

The goal is to be able to derive for each input instance the degree of possibility that it belongs to a target (meta-)concept. This (meta-)concept is connected to intermediate concepts by a possibilistic rule-based system. The probability of each intermediate concept for the input instance is inferred using a neural network. The connection between the low-level perception task and the high-level reasoning task lies in the transformation of neural network outputs modeled by probability distributions (through  softmax activation) into possibility distributions.
The use of intermediate concepts is valuable for the explanation purpose: using the rule-based system, the classification of an input instance as an element of the (meta-)concept can be justified by the fact that intermediate concepts have been recognized.

From the technical side, our contribution consists of the design of efficient methods for defining the matrix relation and the equation system associated with a possibilistic rule-based system. The corresponding matrix and equation are key data structures used to perform inferences from a possibilistic rule-based system and to learn the values of the rule parameters in such a system according to a training data sample. Furthermore, leveraging recent results on the handling of inconsistent systems of fuzzy relational equations, an approach for learning rule parameters according to multiple training data samples is presented. Experiments carried out on the MNIST addition problems and the MNIST Sudoku puzzles problems  highlight the effectiveness of our approach compared with state-of-the-art neuro-symbolic ones.}

\keywords{Neuro-Symbolic Approach, Possibility Theory, Inference, Learning}



\maketitle

\section{Introduction}

One of the key challenges in Artificial Intelligence (AI) research is to combine neural and symbolic approaches.  Neural-based methods excel at low-level perception and learning on large data sets, while exhibiting robustness against noise and data ambiguity. However, neural methods may  encounter difficulties in generalization and they lack interpretability, hindering a good understanding of their decision processes. On the other hand, symbolic methods leverage domain knowledge to reason on data sets, providing transparency through explicit rule-based representations, thus enabling a better understanding of the decision-making process, which in turn fosters trust and intelligibility. However, symbolic methods are typically not good at capturing intricate low-level patterns and handling efficiently noisy or ambiguous data.

These challenges have led to the emergence of \emph{Neuro-Symbolic Computing (NeSy)}, which advocates for finding meeting points between neural and symbolic methods \cite{baaj2024synergies,Garcez2019NeuralSymbolicCA,Hitzler2021NeuroSymbolicAI,MARRA2024104062}. In addition to seeking synergies and complementarities, it is nowadays  desirable to develop  frameworks that enable neural and symbolic methods to interact.  Especially, a significant   objective is to develop two highly desirable capabilities for neuro-symbolic approaches: i) \textit{joint inference}: the ability to generate structured predictions based on pieces of knowledge and on examples involving intricate low-level features, ii) \textit{joint learning}: the ability to jointly learn and adapt parameters of neural and symbolic models. 

Another important component of NeSy approaches is the \emph{Uncertainty Theory} that is used. Because both data and pieces of knowledge are typically pervaded by uncertainty,  reasoning and learning from them requires to leverage an Uncertainty Theory that is suited to the nature of the uncertainty one has to deal with. In this perspective, Probability Theory is by far the dominant approach, so that one often equates NeSy with the sum "Neural+Logical+Probabilistic" \cite{DBLP:conf/nesy/RaedtMDDK19}. However, it is known for a while that Probability Theory (following the so-called Bayesian approach where a single probability distribution is considered) is not adequate to model every uncertain scenario, especially those situations where uncertainty is not due to randomness but comes from partial ignorance \cite{shafer1976,DBLP:books/sp/20/DenoeuxDP20}.
Accordingly, Dubois and Prade  have emphasized the use of \textit{Possibility Theory} as a valuable uncertainty theory on which neuro-symbolic approaches could be defined as well \cite{dubois2023reasoning}. Indeed, Possibility Theory is a qualitative framework for handling epistemic uncertainty (uncertainty due to a lack or limited amount of information) that appears as an interesting candidate to integrate both logic and neural networks when dealing with data and pieces of knowledge that are pervaded with this kind of uncertainty. 

In some sense, Possibility Theory lies between logical and probabilistic frameworks for uncertainty handling. 
Basically, in Possibility Theory, uncertainty is modeled by two dual measures called \textit{possibility} and \textit{necessity}, allowing to distinguish what is possible
without being certain at all and what is certain to some extent. These two measures  have simple set-based interpretations and they form the 
simplest non-trivial system of upper and lower probabilities \cite{dubois2023reasoning}.
In a nutshell, the possibility measure assigns to an event the highest plausibility of the individual elements composing it, while the dual necessity measure gives a lower bound on the certainty level of the event, indicating how impossible the complement of the event is. 
In \cite{dubois2023reasoning}, Dubois and Prade suggest that by taking advantage of recent tools introduced for performing inferences and learning with possibilistic rule-based systems \cite{baaj2022learning,baaj2021min}, one may be able to jointly use possibilistic reasoning capabilities and neural network-based learning, given that neural network results are interpreted as possibility distributions. 

\medskip
This is precisely the research line that we follow in this paper. We propose a neuro-symbolic approach called \textit{$\Pi$-NeSy} based on Possibility Theory, involving:
\begin{itemize}
    \item a low-level perception task performed by a neural network (Figure \ref{fig:low-levelperceptiontask}), which  is designed to determine (typically through a softmax transformation) the degree of probability of the occurrence of intermediate concepts from input data. 
    For example, if the neural network is trained to recognize whether an image represents a 0 or a 1, it will produce, from each input image, two intermediate concepts with their probabilities.  For a first input image (image 1), the concepts would be `image 1 is 0' and `image 1 is 1'. For a second input image (image 2), the concepts would be `image 2 is 0' and `image 2 is 1'. For each image, the probabilities of their two intermediate concepts form a probability distribution.

    \begin{figure}[H]
        \centering
        \scalebox{0.7}{
        \begin{tikzpicture}[node distance=2cm, auto]

\node (A) {Input data};

\node[block2, right=of A] (B) {Neural network};

\node[block3, right=of B, xshift=1cm] (C) {Probability dist.\\ of intermediate\\ concepts};

\node[trapeziumBlock, below=0.1cm of B] (L1) {Neural learning};

\draw[arrow] (A) -- (B);
\draw[arrow] (B) -- node[midway, above, align=center] {output\\(applying softmax\\activation)} (C);

\end{tikzpicture}
}
\smallskip
        \caption{$\Pi$-NeSy's low-level perception task.}
        \label{fig:low-levelperceptiontask}
    \end{figure}
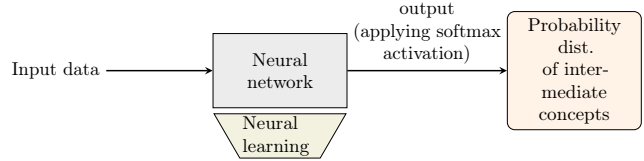

    \begin{figure}[H]
        \centering
\scalebox{0.7}{
\begin{tikzpicture}[node distance=0.3cm, auto]

\node[block3] (B2) {Probability dist.\\of intermediate concepts 1};
\node[block3, below=0.3cm of B2] (B2_2) {Probability dist.\\of intermediate concepts 2};
\node[block3, below=0.3cm of B2_2] (B2_3) {...};
\node[block3, below=0.3cm of B2_3] (B2_n) {Probability dist.\\of intermediate concepts $k$};

\node[block4, right=1.25cm of B2] (B4) {Possibility dist.\\of intermediate concepts 1};
\node[block4, below=0.3cm of B4] (B4_2) {Possibility dist.\\of intermediate concepts 2};
\node[block4, below=0.3cm of B4_2] (B4_3) {...};
\node[block4, below=0.3cm of B4_3] (B4_n) {Possibility dist.\\ of intermediate concepts $k$};

\node[block2, below right=-0.5cm and 1.5cm of B4_2] (C) {Possibilistic rule-based system};

\node[output, right=2cm of C] (D) {Possibility dist. of\\ (meta-)concepts};

\node[trapeziumBlock, below=0.1cm of C] (L2) {Possibilistic learning};

\draw[arrow] (B2) -- node[midway, above] {transf.} (B4);
\draw[arrow] (B2_2) -- (B4_2);
\draw[arrow] (B2_3) -- (B4_3);
\draw[arrow] (B2_n) -- (B4_n);

\draw[arrow] (B4.east) -- node[midway, above, sloped] {input 1} (C.west);
\draw[arrow] (B4_2.east) -- node[near start, above, sloped] {input 2} (C.west);
\draw[arrow] (B4_3.east) -- node[midway, above, sloped] {...} (C.west);
\draw[arrow] (B4_n.east) -- node[midway, above, sloped] {input $k$} (C.west);

\draw[arrow] (C) -- node[midway, above] {output} (D);

\end{tikzpicture}
}
\smallskip

        \caption{$\Pi$-NeSy's high-level reasoning task.}
        \label{fig:highlevelreasoningtask}
    \end{figure}
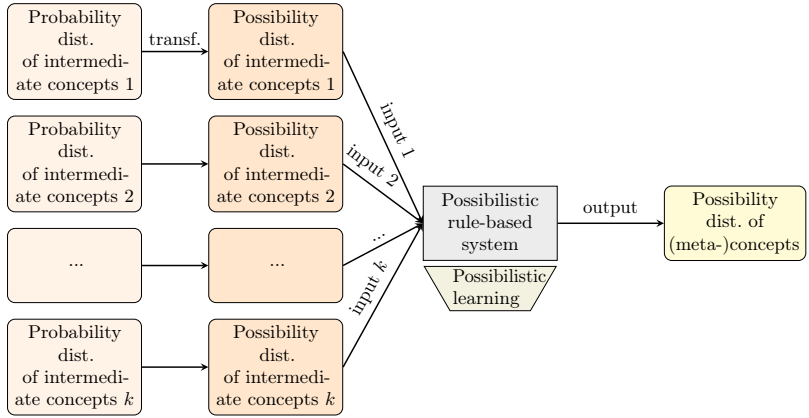
        \item a high-level reasoning task performed by a possibilistic rule-based system (Figure \ref{fig:highlevelreasoningtask}), which is designed to determine the degree of possibility of the occurrence of  a (meta-)concept by reasoning based on possibility distributions over intermediate concepts. 
\end{itemize}
The connection between these two tasks lies in the transformation of neural network outputs modeled by probability distributions into possibility distributions, using probability-possibility transformations \cite{delgado1987concept,dubois1982several,dubois1983unfair}.  Once transformed, the possibilistic rule-based system takes these possibility distributions as inputs. Continuing the previous example about digit recognition from two images (here, $k = 2$), the role of the possibilistic rule-based system could be to determine whether the two images represent the same digit. 

$\Pi$-NeSy requires explicit background knowledge to perform the high-level reasoning task. This background knowledge is modeled by a \textit{possibilistic rule-based system}, i.e., a  finite set of  possibilistic rules. Each possibilistic rule has the form ``if $p$ then $q$'', where $p$ is the rule premise and $q$ the conclusion of the rule. The rule premise $p$ is a conjunction of propositions of the form "$a(x)$ in $P$", where $a$ is an attribute applied to an item $x$, $P$ is a subset of the attribute domain $D_a$. The negation of the proposition  "$a(x)$ in $P$" is the proposition "$a(x)$ in $\overline{P}$" where $\overline{P}$ is the complement of $P$ in $D_a$, so the proposition "$a(x)$ in $P$"  can be viewed as a literal. The conclusion $q$ is also a proposition. Each rule is   associated with two parameters $r$ and $s$. These two rule parameters provide information on the uncertainty associated with the rule as follows: "if $p$ then $q$" is certain to a degree of $1-r$, while the converse rule "if $\neg p$ then $\neg q$" is certain to a degree of $1-s$ \cite{dubois2020possibilistic,farreny1986default}. $r$ and $s$ vary in $[0, 1]$. Especially, a rule "if $p$ then $q$" is totally certain when $r = 0$: in such a case, the necessity of its conclusion $q$ knowing its premise $p$ is equal to the maximal value $1$ (equivalently, the possibility of $\neg q$ knowing $p$ is equal to the minimal value $0$, stating that $\neg q$ is impossible when $p$ holds for sure).

\paragraph*{\texorpdfstring{$\Pi$-NeSy joint inference}{Pi-NeSy joint inference}}
The structure of our neuro-symbolic approach $\Pi$-NeSy, which enables joint inference, is based on explicit connections between the neural network and the possibilistic rule-based system. Specifically, when the neural network processes an input data item, it produces a probability distribution over a set of intermediate concepts. This set of intermediate concepts constitutes the domain of an input attribute in the possibilistic rule-based system, where each intermediate concept is a value within that domain. Thus, these sets of intermediate concepts allow us to link the outputs of the neural network to the inputs of the rule-based system.

To perform joint inference with $\Pi$-NeSy, a finite set of input data items is required, each of them being associated with a set of intermediate concepts. Joint inference is performed as follows:
\begin{enumerate}
    \item Each input data item is processed by the neural network, which produces a probability distribution over a set of intermediate concepts.
    \item These probability distributions are then transformed into possibility distributions.
    \item The resulting possibility distributions are used as inputs of the possibilistic rule-based system.
    \item Inference of possibilistic rule-based system is performed using the matrix relation from \cite{baaj2021min}, resulting in an output possibility distribution over the possible (meta-)concepts.
\end{enumerate}

\paragraph*{\texorpdfstring{$\Pi$-NeSy joint learning}{Pi-NeSy joint learning}} 
Joint learning in $\Pi$-NeSy relies on a training dataset where each example consists of input data items, each data item being associated with some intermediate concepts, and a targeted (meta-)concept. The sets of intermediate concepts for the data input items and the set of possible (meta-)concepts are determined from the training dataset.

Joint learning is performed in two steps: \textit{neural learning}, to learn neural network parameters based only on data items for identifying intermediate concepts; and then \textit{possibilistic learning}, to learn values of the possibilistic rule parameters based on examples from the training dataset 
using the equation system of \cite{baaj2022learning}.

The two types of learning (neural and possibilistic) are joined as follows. First, neural learning is performed based on the pairs of (input data item, targeted intermediate concepts) composing the examples of the dataset. Then, for each data item used for neural learning,  an inference is performed from the neural network to get an output probability distribution over the set of  intermediate concepts associated with the item. In the next step, these probability distributions are transformed into possibility distributions. Thus, a possibility distribution over the set of  intermediate concepts is generated for each data item of an example.

Possibilistic learning is then performed with training data based on these possibility distributions. Each training data sample used for possibilistic learning is derived from an example of the initial training dataset and consists of:
\begin{itemize}
    \item A set of input possibility distributions of the training data sample, which is given by the possibility distributions obtained for each data item in the example.
    \item A targeted output possibility distribution over the set of (meta-)concepts, where the targeted (meta-)concept of the example has a possibility degree equal to one, while the other (meta-)concepts have a possibility degree equal to zero.
\end{itemize}

\begin{example}\label{ex:firstrules-set}

As an illustration, the following neuro-symbolic problem is tackled using $\Pi$-NeSy: given two handwritten digits on two images, where the digits are either zero or one, e.g. \includegraphics[width=0.017\textwidth]{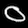} and \includegraphics[width=0.017\textwidth]{./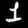}, how can we determine that the two handwritten digits are the same (or not)? This problem will be considered throughout the paper as a running example.

To address this problem with $\Pi$-NeSy, a convolutional neural network is used for the low-level perception task. The neural network is trained with image-label pairs, e.g., $\{$(\includegraphics[width=0.017\textwidth]{z_img__9158_ZERO.png}, 0) (\includegraphics[width=0.017\textwidth]{./z_img__ONE_41266.png}, 1)$\}$, and it  is designed to recognize the digit (an intermediate concept) on a given image (a data item) by providing a probability distribution over possible digits using softmax activation: for the image \includegraphics[width=0.017\textwidth]{z_img__9158_ZERO.png},  the following probability distribution is obtained: P(\includegraphics[width=0.017\textwidth]{z_img__9158_ZERO.png}) = [P(\includegraphics[width=0.017\textwidth]{z_img__9158_ZERO.png} = 0), P(\includegraphics[width=0.017\textwidth]{z_img__9158_ZERO.png} = 1)] = [0.9, 0.1]. This probability distribution can be   transformed into a possibility distribution,  
e.g., [$\Pi$(\includegraphics[width=0.017\textwidth]{z_img__9158_ZERO.png} = 0), $\Pi$(\includegraphics[width=0.017\textwidth]{z_img__9158_ZERO.png} = 1)] = [1, 0.1] (see Subsection \ref{sec:ppt}).

As sketched previously, $\Pi$-NeSy's high-level reasoning task relies on a possibilistic rule-based system. This system infers whether two images (alias two data items) represent the same digit (the (meta-)concept) by using two possibility distributions, each of them being associated with one of the two images. Each possibility distribution provides information on the uncertainty of the digit appearing in the image. As background knowledge, possibilistic rules that compose the possibilistic rule-based system are defined. Each rule involves two input attributes, $a_1$ and $a_2$, each of them corresponding to one of the two images. The domain for both attributes is $D_{a_1} = D_{a_2} = \{0, 1\}$, representing the potential labels for the images. The domain of the output attribute $b$ used in the possibilistic rule-based system is $D_b = D_{a_1} \times D_{a_2} = \{(0,0), (0,1), (1,0), (1,1)\}$, indicating the possible combinations of labels for the two images. The four rules composing the possibilistic rule-based system for this problem are:
\begin{itemize}
\item $R^1$: ``If $a_1(x) \in \{ 0 \}$ then $b(x) \in Q_1$'', where $Q_1 = \{ (0,0), (0,1)\}$, 
\item $R^2$: ``If $a_1(x) \in \{ 1 \}$ then $b(x) \in Q_2$'', where $Q_2 = \{ (1,0), (1,1)\}$, 
\item $R^3$: ``If $a_2(x) \in \{ 0 \}$  then $b(x) \in Q_3$'', where $Q_3= \{ (0,0), (1,0) \}$,
\item $R^4$: ``If $a_2(x) \in \{ 1 \}$ then $b(x) \in Q_4$'',  where  $Q_4 = \{(0,1), (1,1)\}$.
\end{itemize}

\smallskip
For this specific neuro-symbolic problem,  the domain knowledge is  \emph{certain}, i.e., the conclusion of any of the four rules above holds for sure when its premise hold. So, the rule parameters $s_i$ and $r_i$ associated with each rule $R^i$ are all equal to zero, see Subsection \ref{subsubsec:uncertain}. However, when needed, the values of the rule parameters can also be \emph{learned according to training data} (see Section \ref{sec:learning}). This facility is of the utmost value when the domain knowledge used is not certain.

To infer from the possibilistic rule-based system, possibility distributions $\pi_{a_1(x)} : D_{a_1} \rightarrow [0,1]$ and $\pi_{a_2(x)} : D_{a_2} \rightarrow [0,1]$, associated with the input attributes $a_1$ and $a_2$, must be generated. These possibility distributions are obtained by applying a probability-possibility transformation of the probability distributions produced by the neural network, which were obtained for each image associated with $a_1$ and $a_2$.  The inference process 
uses the matrix relation of \cite{baaj2021min} to produce an output possibility distribution $\pi_{b(x)}^\ast : D_b \rightarrow [0,1]$, which assigns a possibility degree to each pair of labels of $D_b$. $\pi_{b(x)}^\ast$ is obtained from the four rules, using $\pi_{a_1(x)}$ and $\pi_{a_2(x)}$. The pair  $u \in D_b$ with the highest possibility degree with respect to $\pi_{b(x)}^\ast$ gives the conclusion one looks for: if $u$ is either $(0,0)$ or $(1,1)$, then the two images are supposed to represent the same digit; if $u$ is either $(0,1)$ or $ (1,0)$, then the images are supposed to represent different digits. If multiple pairs have a possibility degree equal to the highest degree, the output possibility distribution $\pi_{b(x)}^\ast$ is ambiguous (no conclusion is drawn).
\end{example}

\medskip
The possibilistic framework on which $\Pi$-NeSy relies offers many advantages compared to alternative settings on which current neuro-symbolic approaches are based:

\begin{enumerate}
    \item \textbf{Computational complexity of inference}:
        Inference in probabilistic neuro-symbolic approaches has a high computational complexity \cite{maene2024hardness}. For instance, DeepProbLog \cite{manhaeve2018deepproblog} relies on Weighted Model Counting (WMC) \cite{chavira2008probabilistic}, making its inference process \#P-hard. Therefore, with these approaches, approximate inference methods are commonly used \cite{maene2024hardness}.\\
In contrast, performing an inference from a possibilistic rule-based system reduces to a min-max matrix product (where min is used as the product and max as the addition) of a vector by a matrix, see Subsection \ref{subsec:matrixrelpractical}. This operation is comparable  to a classical matrix product in terms of computational complexity \cite{duan2009fast}. Furthermore, constructing the matrix relation used to perform an inference 
\cite{baaj2021min} has a polynomial-time complexity with respect to the number of rules, see Proposition \ref{prop:complexity:nbOp}.

    \item \textbf{Handling uncertain domain knowledge}:
        The possibilistic framework is suited to reason with uncertain domain knowledge, through the use of possibilistic rules, see Subsection \ref{subsubsec:uncertain}.  Inference from a rule ``if $p$ then $q$'' consists in deriving the possibility degrees of the conclusion $q$ and that of $\neg q$ according to the possibility degrees of the rule premise $p$ and that of $\neg p$ and the rule parameters $s$ and $r$. In the possibilistic framework, the computation of  the possibility degrees of  $q$ and that of $\neg q$  relies on a max-min formula (see (\ref{eq:pi_q})) 
        which is \textit{analog} to the total probability theorem (see (\ref{eq:prob_q})) and where the rule parameters  $s$ and $r$ represent  conditional possibility distributions between the premise $p$ and the conclusion $q$ (or its negation), i.e., $s = \pi(q\mid \neg p)$ and $r = \pi(\neg q\mid p)$, akin to conditional probability distributions in probabilistic reasoning but using possibility distributions $\pi$ instead (see Subsection \ref{subsection:possibility}). Note that $s = \pi(q \mid \neg p)$ states that the necessity degree of $\neg q$ knowing $\neg p$ is equal to $1-s$, and that $r = \pi(\neg q \mid p)$ states that the necessity degree of $q$ knowing $p$ is equal to $1-r$.
        Especially, when the rule parameters are assigned to zero, i.e., $s = r = 0$, the rule "if $p$ then $q$" and the converse rule ``if $q$ then $p$'' (viewed as ``if $\neg p$ then $\neg q$'') \cite{dubois2020possibilistic,farreny1986default} are considered as equivalent.\\
         While the possibilistic framework used in $\Pi$-NeSy makes possible to exploit uncertain domain knowledge, most of the neuro-symbolic frameworks require to assume that domain knowledge is certain. For example, DeepProbLog \cite{manhaeve2018deepproblog} restricts uncertainty to probabilistic facts and the logical rules that are used are not subject to uncertainty. In NeuPSL \cite{pryor2023ijcai}, which is based on Probabilistic Soft Logic \cite{bach2017hinge},  a weight is attached to each rule to quantify its importance in the model. While this weight indicates a confidence level, it does not measure the certainty of the rule  "if $p$ then $q$" or of  its converse rule "if $\neg p$ then $\neg q$". In contrast, the possibilistic framework uses two parameters $s$ and $r$,  which represent the certainty of the rule and of the converse rule.
        
    \item \textbf{Learning}:
   The possibilistic learning method presented in Section \ref{sec:learning} focuses on learning 
 the values of the rule parameters of  
 possibilistic rules defined for addressing a multi-class classification problem. As the possibilistic framework relies on the combination of min and max functions, standard learning methods based on gradient descent    cannot be directly applied due to the non-differentiability of these two functions. The method introduced in Section \ref{sec:learning} is based on the work of \cite{baaj2022learning}, showing that the values of the rule parameters can be determined by solving a min-max equation system  constructed from the training data sample that is considered. A min-max equation system is similar to a system of linear equations, but using min and max operations instead of addition and multiplication  respectively. 
 In the min-max equation system  of  \cite{baaj2022learning}, the components of the unknown vector are the rule parameters.  \\
 \cite{baaj2022learning} shows that any solution of the min-max equation system yields values for the rule parameters that are compatible  with the considered training data sample. Thus, if the rule parameters are set using one of the solutions of the min-max equation system and then an inference from the rule-based system is performed using the input possibility distributions of the sample, the targeted output possibility distribution of the sample can be soundly derived.
 Whenever the min-max equation system is inconsistent, i.e., it has no solution  (which can happen when the data used to construct the equation system contain noise and/or outliers), one can take advantage of the tools introduced in \cite{BAAJ2024} to obtain \emph{approximate} solutions defined as follows: an approximate solution of an inconsistent equation system is a solution of one of the closest consistent equation systems. A consistent equation system is said to be close  to the given inconsistent equation system whenever the distance (based on the L-infinity norm) between the second member of the consistent equation system and the second member of the inconsistent equation system is minimal. \\
 To wrap up, our learning approach based on \cite{baaj2022learning,BAAJ2024} has the following advantages:
    \begin{itemize}
        \item  Using  \cite{BAAJ2024}, one can assess the extent to which a training data sample 
        is reliable  with respect to a  set of rules. The reliability measure is based on checking the consistency of a min-max equation system defined in \cite{baaj2022learning}, in order to determine how much the data must be perturbed (if so) to obtain a consistent equation system. \cite{BAAJ2024} indicates how to make minimal modifications to the data in order to obtain a consistent equation system and therefore how to get approximate solutions.

        \item In a neuro-symbolic approach where a neural model is connected to a possibilistic rule-based system, learning can be achieved sequentially, i.e., the parameters of the neural model are first learned and then the rule parameters of the possibilistic rule-based system. However, joint learning is also possible:
        the inputs of the training data  used for possibilistic learning can be directly derived from the results inferred from the neural model using the inputs from the data used to train the neural model.
When the possibilistic rules and their parameters have been defined beforehand, one can check whether these rule parameters align with those obtained from possibilistic learning based on training data. 
    \end{itemize}

         \item \textbf{Backpropagation}:
        A backpropagation process can be performed using a possibilistic rule-based system (see Subsection \ref{subsub:backpropagation}). This capability may aid in refining and improving  the neural model used. 
         \item \textbf{Explainability}:
        Possibilistic rule-based systems are endowed with explanatory capabilities \cite{baaj2021representation} (see also \cite{baajtel-03647652}). This is useful to develop neuro-symbolic approaches that are able to explain their inference results using intermediate concepts, see Subsection \ref{subsub:explainability}.
\end{enumerate}

\medskip
The contribution of this paper mainly consists in improving the computation of two key data structures associated with a possibilistic rule-based system and used in $\Pi$-NeSy: the matrix relation for performing inferences \cite{baaj2021min} and the equation system for learning \cite{baaj2022learning}. The generation of the matrix relation and the generation of the equation system are based on \textit{an explicit partition of the  domain of the output attribute $b(x)$} of the set of 
possibilistic rules that is considered as domain knowledge (see (\ref{eq:partition}) and \cite{baaj2021min}). This partition is constructed according to the conclusions of the rules. Rules concluding $b(x) \in \emptyset$ can be discarded without questioning the validity of the approach. 

In the following, a refined method for generating such a partition reduced to its non-empty subsets is first presented. 
This refined method is much better suited to practical cases than the one presented in \cite{baaj2021min}, the latter involving the generation of more than $2^n$ subsets (where $n$ is the number of rules) before reducing the partition to its non-empty subsets.  The partition generated by the refined method allows us to directly  construct, row by row, the matrices governing the matrix relation and those governing the equation system, where each row corresponds exactly to a subset of the partition.  As a result, the complexity of the  construction of the matrix relation and the equation system is reduced so that those two key ingredients of $\Pi$-NeSy can be used in practice.

We then address the possibilistic learning issue using the $\min-\max$ equation system of \cite{baaj2022learning}. When the equation system is consistent, i.e., it has solutions, explicit solutions of the system are computed. When the equation system is inconsistent which may happen when it is constructed from a noisy data sample (the possibility degrees used to describe the instance at hand are incorrect) or based on an outlier (the possibility degrees used to describe the degrees of membership of the instance to the targeted (meta-)concepts are incorrect), recent results about the handling of inconsistent $\min-\max$ equation systems presented in \cite{BAAJ2024} can be leveraged. The goal is to assess the quality of the training data sample used to construct the equation system and to obtain a new equation system which is consistent. The resulting system is close to the  inconsistent equation system considered before, in the sense that solutions of the resulting consistent equation system are approximate solutions of the inconsistent equation system. Consistency is recovered through a minimal update of the output possibility distribution associated with the sample, indicating its degree of membership to each (meta-)concept.   Using \cite{BAAJ2024}, we also show how to learn from \emph{multiple training data, i.e., sets of training data samples}.  This leads us to introduce a practical method for performing possibilistic learning called \textit{possibilistic cascade learning}, see Method \ref{meth:learningincascade}. 

Another contribution of the paper consists of experimental results. Experiments have been carried out to assess our neuro-symbolic approach $\Pi$-NeSy  on well-known neuro-symbolic datasets: MNIST-Additions problems \cite{manhaeve2018deepproblog} and MNIST Sudoku puzzles problems \cite{augustine2022visual}. For these two  problems, the empirical results found show that our neuro-symbolic approach $\Pi$-NeSy exhibits reasonable inference and learning times, and a classification accuracy that is quite good  compared to those achieved using recent neuro-symbolic approaches.

The rest of the paper is structured as follows. In Section \ref{sec:bg}, the necessary background for our neuro-symbolic approach $\Pi$-NeSy is reminded. In particular, we explain how to obtain a probability distribution as output of a neural network, we give a refresher about  Possibility Theory, and we recall the inference mechanism used for possibilistic rule-based system and two probability-possibility transformations.  In Section \ref{sec:practical-procedure-building-reduced-ES}, 
our inductive method for constructing the partition of the domain of the output attribute of the possibilistic rule-based system at hand according to the conclusions of the rules is described. 
In Section \ref{sec:learning}, a practical method for performing possibilistic learning (Method \ref{meth:learningincascade}) is presented. In Section \ref{sec:exp}, experiments carried out with $\Pi$-NeSy on well-known neuro-symbolic datasets  are reported. The resulting empirical results are compared to those obtained by state-of-the-art approaches using the same experimental protocol. Finally, the results are discussed and we conclude the paper with some perspectives.

\section{Background}
\label{sec:bg}
In this section, we briefly remind how neural networks work, and how they can output  a probability distribution through softmax activation. Then we give basic definitions about Possibility Theory,  describe the inference mechanism at work when dealing with possibilistic rule-based systems and present two probability-possibility transformations.

\subsection{Artificial neural networks}

The following paragraphs focus on concepts related to artificial neural networks (NNs) and  deep learning that are needed for understanding our work.  Extensive further details can be found, e.g., in \cite{goodfellow2016deep}. \\
NNs are computational models that learn to map complex input-output relationships through a series of interconnected layers consisting of  neurons (nodes). Though several types of neurons can be considered, some of them compute a weighted sum of their inputs, add a bias, and then apply a nonlinear activation function, such as the Rectified Linear Unit (ReLU), to the result. Complex patterns in the data can be identified using successive layers.

The training of NNs involves the minimization of a loss function, which quantifies the difference between the predicted outputs and the true outputs with respect to training data (supervised learning). This is achieved through an optimization process, typically using algorithms such as  gradient descent, where the parameters of the models (weights and biases) are iteratively adjusted based on the gradient of the loss function with respect to these parameters.

For classification tasks, the softmax function is commonly used in the final layer of a neural network. The softmax function converts the  raw output scores of the network  (real numbers) into probabilities by taking the exponential of each output score and then normalizing these exponentials so that they sum to 1. The probability for each class $i$ is given by the formula:

\begin{equation}\label{eq:softmax}
    \text{Softmax}(i) = \frac{e^{z_i}}{\sum_{j} e^{z_j}} 
\end{equation}

\noindent where $z_i$ is the output score for class $i$. This ensures that the output of the softmax function can be interpreted as a probability distribution over the classes, with each value representing the probability that the model assigns to the corresponding class.

\begin{example}\label{ex:nn:probs}
    As an example, the MNIST problem \cite{deng2012mnist} is considered. The goal is to recognize handwritten digits on images.  The training set contains 60,000 images, and the test set has 10,000 images. The neural network whose architecture is given in Table \ref{tab:NN} has been used. It has been trained using the Adadelta optimizer \cite{zeiler2012adadelta} with a learning rate of 1.0, adjusting it with a decay factor  of 0.7, over 20 epochs and a batch size of 64.

Given an input image, the neural network generates raw scores. These scores are then transformed into a probability distribution using a softmax activation, see  (\ref{eq:softmax}), which gives the probability that the image corresponds to each digit from 0 to 9. This probability distribution can be represented by a vector. For instance, from the two MNIST images \includegraphics[width=0.017\textwidth]{z_img__9158_ZERO.png} and \includegraphics[width=0.017\textwidth]{./z_img__ONE_41266.png}  one can obtain:
\begin{itemize}
    \item $P(\includegraphics[width=0.017\textwidth]{z_img__9158_ZERO.png}) = [P(\includegraphics[width=0.017\textwidth]{z_img__9158_ZERO.png} = 0),P(\includegraphics[width=0.017\textwidth]{z_img__9158_ZERO.png} = 1),\cdots,P(\includegraphics[width=0.017\textwidth]{z_img__9158_ZERO.png} = 9)] = [0.9742,$ $0,$ $0.0002,$ $0.0007,$ $0,$ $0.0237,$ $0,$ $0,$ $0,$ $0.0012]$,
    \item $P(\includegraphics[width=0.017\textwidth]{./z_img__ONE_41266.png}) = [P(\includegraphics[width=0.017\textwidth]{./z_img__ONE_41266.png}) = 0,P(\includegraphics[width=0.017\textwidth]{./z_img__ONE_41266.png}) = 1,\cdots,P(\includegraphics[width=0.017\textwidth]{./z_img__ONE_41266.png}) = 9] = [0,$ $0.9939,$ $0.0002,$ $0.0049,$ $0.0001,$ $0.0009,$ $0,$ $0,$ $0,$ $0]$.
\end{itemize}

\end{example}

\subsection{Possibility Theory}\label{subsection:possibility}
Possibility Theory is an uncertainty theory, which provides computable methods for the representation of incomplete and/or imprecise information. Initially introduced by Zadeh \cite{zadeh1978fuzzy} and considerably developed by Dubois and Prade \cite{dubois2023reasoning}, Possibility Theory models uncertainty by two dual measures, possibility and necessity, which are useful to distinguish what is possible without being certain at all and what is certain to  some extent.

In the following, we give some background on Possibility Theory, focusing on concepts needed to define  possibilistic rule-based systems and probability-possibility transformations (that will be presented in subsequent subsections).

Let $U$ be a set. Any subset  $A \subseteq U$ is called an \textit{event}. In particular, for each $u \in U$, the singleton $\{u\}$ is called an \textit{elementary event}. 

\begin{definition}
A \emph{possibility measure} on $U$ is a mapping 
$\Pi : 2^U \rightarrow [0,1]$, which assigns a degree $\Pi(A)$ to each event $A \subseteq U$ in order to assess to what extent the event $A$ is possible. It satisfies the following conditions:

\begin{itemize}
    \item $\Pi(\emptyset) = 0$ and $\Pi(U) = 1$,
    \item For any subset $\{ A_1,A_2,\dots,A_n \} \subseteq 2^U$, $\Pi(\bigcup_{i=1}^n A_i) = \max_{i=1,2,\dots,n} \Pi(A_i)$.
\end{itemize}
\end{definition}

For any event $A$, if $\Pi(A)$ is equal to 1, it means that $A$ is totally possible, while if $\Pi(A)$ is equal to 0, it means that $A$ is impossible. A possibility measure $\Pi$ has the following properties:
\begin{itemize}
    \item $\Pi(A \cup \overline{A}) = \max(\Pi(A),\Pi(\overline{A}))=1$.
    \item 
    For any $A_1 ,A_2\in 2^U$, if $A_1 \subseteq A_2$, then $\Pi(A_1) \leq \Pi(A_2)$. It follows that for any $A_1 ,A_2\in 2^U$, we have $\Pi(A_1 \cap A_2) \leq \min(\Pi(A_1),\Pi(A_2))$.
\end{itemize}

\smallskip
Likewise the notion of possibility measure, a \textit{necessity measure} is defined by:
\begin{definition}
\noindent A \emph{necessity measure} on $U$ is a mapping $N: 2^U \rightarrow [0,1]$, which assigns a degree $N(A)$ to each event $A\subseteq U$ in order to assess to what extent the event $A$ is certain. It satisfies:
\begin{itemize}
    \item $N(\emptyset) = 0$ and $N(U) = 1$,
    \item For any subset $\{ A_1,A_2,\dots,A_n \} \subseteq 2^U$, $N(\bigcap_{i=1}^n A_i) = \min_{i=1,2,\dots,n} N(A_i)$.
\end{itemize}
\end{definition}

 If $N(A) = 1$, it means that $A$ is certain. If $N(A) = 0$, the event $A$ is not certain at all, but this does not mean that $A$ is impossible. A necessity measure has the following properties:
\begin{itemize}
    \item $N(A \cap \overline{A}) = \min(N(A),N(\overline{A}))=0$.
    \item For any $A_1 ,A_2\in 2^U$, if $A_1 \subseteq A_2$, then $N(A_1) \leq N(A_2)$. It follows that for any $A_1 ,A_2\in 2^U$, we have $N(A_1 \cup A_2) \geq \max(N(A_1),N(A_2))$.
\end{itemize}

\smallskip
The two notions of possibility measure and of necessity measure are dual to each other in the following sense:
\begin{itemize}
    \item If $\Pi$ is a possibility measure, then the corresponding necessity measure $N$ is defined by the following formula:
    \[ N(A):= 1 - \Pi(\overline{A}).\]
    \item Reciprocally, if $N$ is a necessity measure, then the corresponding possibility measure $\Pi$ is defined by the following formula:
    \[ \Pi(A) := 1 - N(\overline A).\] 
\end{itemize}

A \textit{possibility distribution} on the set $U$ is defined by: 
\begin{definition}
A \emph{possibility distribution} $\pi$ on the set $U$ is a mapping $\pi: U \rightarrow [0,1]$, which assigns to each element $u \in U$ a possibility degree $\pi(u) \in [0,1]$. A possibility distribution is said to be \emph{normalized} if $\exists u \in U$ such that $\pi(u) = 1$.
\end{definition}

Any possibility measure $\Pi$ gives rise to a normalized possibility distribution $\pi$ defined by the formula:
\[\pi(u) = \Pi(\{u\}),  u\in U.\]

Therefore, for any subset $A\subseteq U$, we have:
\[ \Pi(A) = \sup_{x\in A} \pi(x) \quad \text{ and }  
N(A) =1 - \Pi(\overline A)= \min_{x \notin A} (1 - \pi(x)). \]

Reciprocally, a normalized possibility distribution $\pi$ gives rise to a possibility measure $\Pi$ 
and a necessity measure $N$ defined by:
\[ \text{for any } A \subseteq U,\quad \Pi(A) = \sup_{x\in A} \pi(x) \quad \text{ and } \quad  
N(A) =1 - \Pi(\overline A)= \inf_{x \notin A} (1 - \pi(x)).\]

Possibilistic conditioning is defined in both the qualitative and the quantitative frameworks of Possibility Theory. For a detailed overview, see \cite{dubois2023reasoning}. In the following,
the qualitative framework is used.

\subsection{Possibilistic handling of a rule-based system}
\label{subsec:possibilistic-rbs}

Let us now remind possibilistic rules and  possibilistic rule-based systems before focusing on the case of a cascade, i.e., when a possibilistic rule-based system uses two chained sets of possibilistic rules. 

\subsubsection{Possibilistic handling of uncertain rules}
\label{subsubsec:uncertain}

The possibilistic handling of a rule-based system was introduced in the 80's \cite{farreny1986default,farrency2013positive} and recently revisited in \cite{dubois2020possibilistic}. In this framework, the uncertainty of  rules ``if $p$ then $q$'' and  ``if $\neg p$ then $\neg q$'' is handled by a matrix calculus based on max-min composition:
$$\begin{bmatrix}
    \pi(q) \\ 
    \pi(\neg q) 
    \end{bmatrix} = \begin{bmatrix}
        \pi(q\mid p) &   \pi(q\mid \neg p) \\
          \pi(\neg q\mid p) &   \pi(\neg q\mid \neg p) 
    \end{bmatrix} \Box_{\min}^{\max} \begin{bmatrix}
        \pi(p) \\
        \pi(\neg p) 
    \end{bmatrix}$$
\noindent where the matrix product $\Box_{\min}^{\max}$ uses $\min$ as the product and $\max$ as the addition. The max-min product between a vector composed of the possibility degrees of  $p$ and $\neg p$ and the uncertainty propagation matrix, which contains the conditional possibility distributions between $p$ and $q$ (and its negation), yields the possibility degrees of $q$ and $\neg q$. 

\noindent One can remark that the obtained formula for $\pi(q)$:
\begin{equation}\label{eq:pi_q}
    \pi(q) = \max (\min( \pi(q \mid p), \pi(p)), \min( \pi(q \mid \neg p), \pi(\neg p)))
\end{equation}
\noindent is analog to the total probability theorem (shown in \cite{farreny1986default}):
\begin{equation}\label{eq:prob_q}
    \text{prob}(q) = \text{prob}(q \mid p)\cdot \text{prob}(p) + \text{prob}(q \mid \neg p)\cdot \text{prob}(\neg p).
\end{equation}

The uncertainty weights in the uncertainty propagation matrix obey a qualitative form of conditioning and encode the uncertainty of ``if $p$ then $q$'' and of ``if $\neg p$ then $\neg q$''. In  the formula (\ref{eq:pi_q}), one can see that the weights $ \pi(q \mid p)$ and $\pi(q \mid \neg p)$ act as thresholds for $\pi(p)$ and $\pi(\neg p)$ respectively.

\smallskip
The $\max-\min$ composition governing the matrix calculus and the normalization conditions $\max(\pi(p),\pi(\neg p))=1$, $\max(\pi(q\mid p),\pi(\neg q\mid p))=1$ and $\max(\pi(q\mid \neg p),\pi(\neg q\mid \neg p))=1$, ensure that the possibility degrees of the conclusion $q$ are normalized, i.e., $\max(\pi(q),\pi(\neg q))=1$. \\

In the possibilistic setting, an uncertainty propagation matrix of the form $\begin{bmatrix}
        \pi(q\mid p) &   \pi(q\mid \neg p) \\
          \pi(\neg q\mid p) &   \pi(\neg q\mid \neg p) 
    \end{bmatrix}=\begin{bmatrix}
        1 & s \\
        r & 1 
    \end{bmatrix}$ where $s,r \in [0,1]$ are the rule parameters, is associated with the rules  ``if $p$ then $q$''  and ``if $\neg p$ then $\neg q$'' to indicate that  ``if $p$ then $q$'' holds with certainty
$1 - r$ and that ``if $\neg p$ then $\neg q$'' holds with certainty $1 - s$ \cite{dubois2020possibilistic}. From such a matrix, assuming the normalization condition  $\max(\pi(p),\pi(\neg p))=1$, we have:

\begin{equation}\label{eq:pi_q2}
\pi(q) = \max(\pi(p),\min( s, \pi(\neg p))) = \max(\pi(p), s),
\end{equation}
\begin{equation}\label{eq:pi_neg_q}
\pi(\neg q) = \max(\pi(\neg p),\min( r, \pi(p)) = \max(\pi(\neg p),r).
\end{equation}

\smallskip
If the rule parameters are equal to zeros, i.e., $s = r = 0$,  a form of equivalence is expressed between ``if $p$ then $q$'' and ``if $q$ then $p$'' (viewed as if $\neg p$ then $\neg q$) \cite{dubois2020possibilistic,farreny1986default}.

The possibilistic handling of an uncertain rule is based on the $\max-\min$ matrix calculus, which  can be closely related to possibilistic logic \cite{dubois2020possibilistic}, the latter being highly compatible with classical logic \cite{dubois2004possibilistic}. 

Through simple examples, let us explain how uncertainty weights are involved in inference from a possibilistic rule:
\begin{example}
Let us start with a rule where $s = 0$ and $r = 0$, so both ``if $p$ then $q$'' and ``if $\neg p$ then $\neg q$'' are certain. In this case, the possibility degree of $q$ (resp. $\neg q$) will be equal to that of $p$ (resp. $\neg p$). In particular, we have:
\begin{itemize}
    \item if $(\pi(p),\pi(\neg p)) = (0,1)$, then $(\pi(q),\pi(\neg q)) = (0,1)$,
    \item if $(\pi(p),\pi(\neg p)) = (1,0)$, then $(\pi(q),\pi(\neg q)) = (1,0)$.
\end{itemize}

\noindent Let us now consider the rule parameters: $s = 0.3$ and $r = 0.5$. In this case, ``if $p$ then $q$'' is certain to a degree $1 - r = 0.5$ and ``if $\neg p$ then $\neg q$'' is certain to a degree $1 - s = 0.7$. We study the following cases:
\begin{itemize}
    \item if $(\pi(p),\pi(\neg p)) = (0,1)$, then $(\pi(q),\pi(\neg q)) = (s,1) = (0.3,1)$, so even if $\pi(p) = 0$, we have $\pi(q) = s > 0$ where $s$ is the rule parameter associated with ``if $\neg p$ then $\neg q$''.
    \item if $(\pi(p),\pi(\neg p)) = (1,0)$, then $(\pi(q),\pi(\neg q)) = (1,r) = (1,0.5)$, so even if $\pi(\neg p) = 0$, we have $\pi(\neg q) = r > 0$ where $r$ is the rule parameter associated with the rule ``if $p$ then $q$''.
\end{itemize}
\end{example}

We can therefore describe uncertain domain knowledge using this type of rule.

Finally, in order to specify that a rule ``if $p$ then $q$'' is totally certain, it is enough to set $r = 0$. Indeed, when $r = 0$, provided that the premise $p$ of the rule is totally certain (i.e., $\pi(p) = 1$ and $\pi(\neg p) = 0$), its conclusion $q$ is also totally certain (i.e., $\pi(q) = 1$ and $\pi(\neg q) = 0$).

\subsubsection{Possibilistic rule-based system}
\label{subsec:possibilistic-rule-based-system}

As sketched previously, a \textit{possibilistic rule-based system} is composed of $n$ if-then possibilistic rules $R^1,R^2,\dots,R^n$ \cite{dubois2020possibilistic}. Each rule $R^i$ is of the form ``if $p_i$ then $q_i$'' and is associated with an uncertainty propagation matrix which has two rule parameters $s_i,r_i$:
\[ \begin{bmatrix}  \pi(q_i | p_i) & \pi(q_i | \neg p_i) \\
 \pi(\neg q_i | p_i) & \pi(\neg q_i | \neg p_i) 
  \end{bmatrix} =\begin{bmatrix} 1 & s_i \\r_i & 1\end{bmatrix}.\] 
  
 \noindent The premise  $p_i = p^i_1 \wedge p^i_2 \wedge \dots \wedge p^i_k$ of $R^i$ is a conjunction of propositions $p_j^i$: ``$a_j^i(x) \in P_j^i$'', where $P_j^i$ is a subset of the domain $D_{a_j^i}$ of the attribute $a_j^i$ and $\overline{P_j^i}$ is its complement. The attribute $a_j^i$ is applied to an item $x$. The information about $a_j^i(x)$ is represented by a  possibility distribution $\pi_{a_j^i(x)}: D_{a_j^i} \rightarrow [0,1]$, which is supposed to be normalized, i.e., $\exists u \in D_{a_j^i}$ such that $\pi_{a_j^i(x)}(u) = 1$. The possibility degree of $p_j^i$ and that of its negation are computed using the possibility measure $\Pi$ by:
 \[ \pi(p_j^i) = \Pi(P_j^i) = \sup_{u \in P_j^i}\pi_{a_j^i(x)}(u) \text{ and } \pi(\neg p_j^i) = \Pi(\overline{P_j^i}) = \sup_{u \in  \overline{P_j^i}}\pi_{a_j^i(x)}(u).  \] 
  As $\pi_{a_j^i(x)}$ is normalized, we have $\max(\pi(p_j^i),\pi(\neg p_j^i))=1$. The necessity degree of  $p_j^i$ is defined with  the necessity measure $N$ by $n(p_j^i) = N(P_j^i) =  1 - \pi(\neg p_j^i) = \inf_{u \in \overline{P_j^i}}(1 - \pi_{a_j^i(x)}(u))$. \\

Let us stress that the notation $a_j^i$ is specifically used for distinguishing the attribute involved in the  proposition $p_j^i$ of the premise $p_i$. In cases when $i \neq i'$, two attributes $a_j^i$ and $a_{j'}^{i'}$ may represent the same concept and therefore share the same possibility distribution, i.e., ${\pi_{a_j^i(x)}} = {\pi_{a_{j'}^{i'}(x)}}$.\\

The  possibility degree of $p_i$ and that of its negation are defined by:
\begin{equation}\label{eq:pinegpiformulae}
    \pi(p_i) = \min_{j=1}^k \pi(p_j^i) \text{ and }  \pi(\neg p_i) = \max_{j=1}^k \pi(\neg p_j^i). 
\end{equation}

\noindent These values $\pi(p_i)$ and $\pi(\neg p_i)$ preserve the  normalization, i.e., $\max(\pi(p_i),\pi(\neg p_i))=1$ and are respectively noted  $\lambda_i$ and $\rho_i$. The necessity degree of $p_i$ is $n(p_i) = 1 - \pi(\neg p_i) = \min_{j=1}^k(1 - \pi(\neg p_j^i)) = \min_{j=1}^k n(p_j^i)$. The degrees $\lambda_i$ and $\rho_i$ are such that:
\begin{itemize}
    \item $\pi(p_i) = \lambda_i$ estimates to what extent $p_i$ is possible,
    \item  $n(p_i)=1 - \rho_i$ estimates to what extent $p_i$ is certain.
\end{itemize}

\smallskip
The conclusion $q_i$ of $R^i$ is of the form ``$b(x) \in Q_i$'', where $Q_i \subseteq D_b$ and $Q_i \neq \emptyset$.  \textit{Accordingly, in a possibilistic rule-based system, the conclusions of the rules are all about the same attribute $b$}.
The possibility degrees of $q_i$ and $\neg q_i$ are respectively noted $\alpha_i$ and $\beta_i$. They are defined by the following matrix calculus: \[ \begin{bmatrix} \pi(q_i) \\ \pi(\neg q_i) \end{bmatrix} = \begin{bmatrix}
    \alpha_i\\
    \beta_i
\end{bmatrix}= \begin{bmatrix} 1 & s_i \\r_i & 1\end{bmatrix} \Box_{\min}^{\max} \begin{bmatrix}\lambda_i \\ \rho_i \end{bmatrix},\] \noindent where the operator  $\Box_{\min}^{\max}$ uses $\min$ as the product and $\max$ as the addition. We still have $\max(\pi(p_i),\pi(\neg p_i))=1$, which implies, see (\ref{eq:pi_q}) and (\ref{eq:pi_neg_q}):
\begin{equation}
    \label{eq:alphabeta}
    \alpha_i = \max(s_i,\lambda_i) \text{ and }\beta_i = \max(r_i,\rho_i).
\end{equation}

\noindent The possibility distribution of the output attribute $b$ associated with $R^i$ is given by $\pi_{b(x)}^{\ast i}(u) = \alpha_i \mu_{Q_i}(u)   +  \beta_i \mu_{\overline{Q_i}}(u)$ for any $u \in D_b$, where $\mu_{Q_i}$ and $\mu_{\overline{Q_i}}$ are the characteristic functions of the sets $Q_i$ and $\overline{Q_i}$, respectively.

If the possibility distribution $\pi_{b(x)}^{\ast i}$ is not normalized, the rule $R^i$ is viewed as incoherent 
\cite{dubois2020possibilistic,dubois1994validation}.  Rules with a conclusion ``$b(x) \in \emptyset$'' are discarded to avoid this situation. More generally, incoherent rules $R^i$ can be "repaired" by considering an additional value in the domain of the attribute $b$ specifically for the purpose of ensuring that $\pi_{b(x)}^{\ast i}$ is normalized. 

\smallskip
When the possibilistic rule-based system consists of $n$ rules,  the output possibility distribution of the output attribute $b$ is defined by a min-based conjunctive~combination: 
\begin{equation}\label{eq:pistarbxu} 
   \pi^{\ast}_{b(x)}(u) = \min(\pi_{b(x)}^{\ast 1}(u), \pi_{b(x)}^{\ast 2}(u), \dots ,\pi_{b(x)}^{\ast n}(u)). 
\end{equation}

The concept of possibilistic rule-based system is illustrated at Example \ref{ex:firstrules-set}. For this specific example, an equivalence between the premise and the conclusion of each rule holds, so the parameters of the rules can be set to zero, i.e., $s_i = r_i = 0$.

In the context of our neuro-symbolic approach $\Pi$-NeSy, for joint inference, the neural model is connected to the possibilistic rule-based system by establishing that each domain $D_{a_j^i}$ of an input attribute $a_j^i$ is a set of  intermediate concepts and the domain of the output attribute $b$ is the set of (meta-)-concepts.

\subsubsection{Cascading rules}
\label{subsub:cascade}
 In the case of a cascade, a possibilistic rule-based system relies on a first set of $n$ if-then possibilistic rules $R^1,R^2,\dots,R^n$ and a second set of $m$ if-then possibilistic rules $R'^1, R'^2, \dots, R'^m$, where  both the conclusions of the rules $R^i$ and the premises of the rules $R'^j$ use the \textit{same attribute} $b$, making possible to chain the two sets of rules. Indeed, each rule $R'^j$ is of the form ``if $p'_j$ then $q'_j$'' where $p'_j$ is a proposition ``$b(x) \in Q'_j$'',  $Q'_j$ being a subset of $D_b$. The conclusion $q'_j$ is of the form ``$c(x) \in Q''_j$'' where $Q''_j$  is a subset of $D_c$, the domain of the attribute $c$. 
 
 The  possibility degrees associated with $R'^j$ are computed in the same way as those of the rules $R^i$: as $p'_j$ is a proposition, we compute $\lambda'_j = \pi(p'_j)$ and $\rho'_j = \pi(\neg p'_j)$ with the normalized possibility distribution of the attribute $b$. Similarly, $R'^j$ has an uncertainty propagation matrix  with its associated parameters $s'_j,r'_j$.

\begin{example}\label{ex:secondset-rules}(Example \ref{ex:firstrules-set}, cont'ed)\\
The set of possibilistic rules given in Example \ref{ex:firstrules-set} can be chained to another set $\{R^{'1}, R^{'2}\}$ consisting of two rules, where the domain of the output attribute $c$ of the second set is $D_c = \{ 0, 1 \}$:
\begin{itemize}
    \item $R^{'1}$: If $b(x) \in \{ (0,0), (1,1) \}$ then $c(x) \in \{ 1\}$, 
    \item $R^{'2}$: If $b(x) \in \{ (0,1), (1,0) \}$ then $c(x) \in \{ 0 \}$.
\end{itemize}
 Since the new rules are intended to represent equivalences, all the parameters of the rules are set to zero, i.e., $s'_j = r'_j = 0$.  
\noindent Then the statement $\pi_{c(x)}(1) = 1$ means that it is fully possible (i.e., with a possibility degree of 1) that the two MNIST images represent the same handwritten numbers. If $\pi_{c(x)}(0) = 1$, it is possible with a degree of 1 that the two MNIST images do not represent the same handwritten numbers. 
\end{example}

\subsection{Probability-possibility transformations}\label{sec:ppt}

In the following, we present the two main probability-possibility transformations from the literature.  The first method was introduced in \cite{dubois1983unfair} and named  ``antipignistic method'' in  \cite{dubois2020possibilistic}.  The second method   obeys the minimum specificity principle and was introduced in   \cite{delgado1987concept,dubois1982several}.

Transforming a probability distribution $p$ on $X$ (with its associated probability measure $P$) into a possibility distribution $\pi$ on $X$  (with its associated possibility measure $\Pi$ and necessity measure $N$) 
consists in \textit{finding a framing interval} $[N(A), \Pi(A)]$ of $P(A)$ for any subset $A \subseteq X$ \cite{dubois2006possibility,dubois1993possibility}:  the possibility measure $\Pi$ dominates the probability measure $P$. The transformation of the probability distribution $p$ into a possibility distribution $\pi$ should preserve the shape of the distribution: for $u,u' \in X$, $p(u) > p(u') \Longleftrightarrow \pi(u) > \pi(u')$. We also want to obtain a possibility distribution that is as specific as possible  (given two possibility distributions $\pi$ and $\pi'$,  $\pi$ is at least as specific as $\pi'$ if $\pi \leq \pi'$, i.e., $\forall u \in X, \pi(u) \leq \pi'(u)$ \cite{dubois2004probability}).
This last condition is motivated by the concern to keep as much information as possible, i.e., $\pi$ is more informative than $\pi'$.

The two transformations presented in the following are used in  our neuro-symbolic approach $\Pi$-NeSy:  the output probability distributions of the neural network are transformed into  normalized possibility distributions, in order  to connect our low-level perception task to our high-level reasoning task. 

\subsubsection{Antipignistic method}
\label{subsec:antipignistic}
The antipignistic method, recently reminded in \cite{dubois2020possibilistic}, was introduced  and motivated in \cite{dubois1983unfair}.

\noindent If $p$ is a probability distribution on $X$, let $P$ denote the probability measure on $X$ defined by $p$, i.e., $P(A) = \sum_{x\in A} p_x$ where $p_x = P(\{x\})$. 

\noindent The antipignistic method associates  a normalized possibility distribution $\pi$ on $X$ with $p$, which verifies that for all $A \subseteq X$:
 \[ N(A) \leq P(A) \leq \Pi(A),\]
\noindent where $N(A)$ and $\Pi(A)$ are the necessity and possibility measures defined by $\pi$.

\noindent {\it Let us suppose that the elements of $X$ are ordered} so that for $X=\{x_1, \dots, x_n\}$, we have  $p_1 \geq p_2 \dots \geq p_n\,$ \text{where} $\quad p_i =P(\{x_i\}).$ We call this assumption \textit{the decreasing assumption}.

\noindent The possibility degree $\pi_i = \pi(x_i)$ of $x_i$ where $1 \leq i \leq n$ is defined  by:

\begin{equation*}
\pi_i = i p_i + \sum_{j = i+1}^n p_j =  \sum_{j=1}^n \min(p_j, p_i).    
\end{equation*}

\noindent where the equality  $i p_i + \sum_{j = i+1}^n p_j =  \sum_{j=1}^n \min(p_j, p_i)$ holds because of the assumption  $p_1 \geq p_2 \dots \geq p_n$.

\noindent For all $A \subseteq X$, the necessity measure of $A$ can be computed as:
$$N(A) = \sum_{x\in A} \max(p_x  - \max_{y\notin A} p_y,0).$$
\noindent  Note that $X$ can be exhausted as follows:
\[ A_0 = \emptyset  \subset A_1 \subset A_2 \subset \dots \subset A_n = X,\,  \text{with }\,
A_i = \{x_1, x_2, \dots, x_i\}.\]
Then, we have:
\[ N(A) = \max_{0\leq k \leq n, A_k \subseteq A} N(A_k). \]
For $k = 0, 1, 2, \dots, n$, the computation of $N(A_k)$ by the preceding abstract formula (with the convention $p_{n+1} = 0 $) becomes:
\[ N(\emptyset) = 0,\, N(A_k) = \sum_{i=1}^k (p_i - p_{k+1}),\, N(X) = \sum_{i=1}^n p_i = 1.\]

\noindent We then have for all $A \subseteq X$:  $N(A) \leq P(A) \leq \Pi(A)$,  (see \cite{dubois1983unfair} for the proof and the   underlying semantics of this result). 

\noindent Note that the possibility distribution $\pi$ associated with such a probability distribution $p$ verifies:
\[ \pi_1 = 1, \quad 
\pi_i - \pi_{i+1 } = i(p_i  - p_{i+1}) \geq 0.\]
and then we have $\pi_1 = 1 \geq  \pi_2 \geq \dots \geq \pi_n$.

\noindent Observe that starting from a normalized possibility distribution $\pi$  that verifies  $\pi_1 = 1 \geq  \pi_2 \geq \dots \geq \pi_n$, the previous formula can also be used for associating with $\pi$ a probability distribution $p$ which verifies $p_1 \geq p_2 \geq \dots \geq p_n$. The probability distribution 
$p$ expressed in terms of $\pi$ is defined by: 
\[p_i = \sum_{j=i}^n \frac{1}{j}(\pi_j - \pi_{j+1}) \quad \text{with the convention } \quad  \pi_{n+1} = 0. \]

\noindent In conclusion, between the set of probability distributions $p$ on the set $\{1, 2, \dots, n\}$ which verifies $p_1 \geq p_2 \geq \dots \geq p_n$ and 
the set of   normalized possibility distributions
$\pi$ on the set $\{1, 2, \dots, n\}$ that verify 
$\pi_1=1 \geq  \pi_2 \geq \dots \geq \pi_n$
we have the following one-to-one correspondence:
\[ p \mapsto \pi : \pi_i =   i p_i + \sum_{j = i+1}^n p_j =  \sum_{j=1}^n \min(p_j, p_i),\] 
\[ \pi \mapsto p : p_i =   \sum_{j=i}^n \frac{1}{j}(\pi_j - \pi_{j+1}), \text{with the convention }  \pi_{n+1} = 0.\]
This one-to-one correspondence can be used on any set $X = \{x_1, x_2, \dots, x_n\}$ where the  domains of definition of each of the two mappings $p \mapsto \pi$ and $\pi \mapsto p$ satisfy the decreasing assumption. 

\noindent Finally, one can observe that the mapping $\pi \mapsto p$ preserves the shape of the distributions, i.e., $\pi_i > \pi_{i+1} \Longleftrightarrow p_{i} > p_{i+1}$.\\

\begin{example}\label{ex:probposs_antipignistic}
A neural network typically yields a probability distribution of the following form, where one value of the distribution is highly probable while the others are not:
\begin{align*}
    P(\includegraphics[width=0.017\textwidth]{z_img__9158_ZERO.png}) &= \left[P\left(\includegraphics[width=0.017\textwidth]{z_img__9158_ZERO.png} = 0\right), P\left(\includegraphics[width=0.017\textwidth]{z_img__9158_ZERO.png} = 1\right), \cdots, P\left(\includegraphics[width=0.017\textwidth]{z_img__9158_ZERO.png} = 9\right)\right]\\  
    &= \left[0.91, 0.01, 0.01, 0.01, 0.01, 0.01, 0.01, 0.01, 0.01, 0.01\right]
\end{align*}

\noindent We obtain the corresponding possibility distribution using the antipignistic method:

\begin{align*}
    \left[\Pi\left(\includegraphics[width=0.017\textwidth]{z_img__9158_ZERO.png} = 0\right), \Pi\left(\includegraphics[width=0.017\textwidth]{z_img__9158_ZERO.png} = 1\right), \cdots, \Pi\left(\includegraphics[width=0.017\textwidth]{z_img__9158_ZERO.png} = 9\right)\right]=
    \left[1.00, 0.10, 0.10, \cdots, 0.10\right].
\end{align*}

\noindent If we apply the antipignistic transformation to a more  ambiguous probability distribution such as:
\begin{align*}
    P(\includegraphics[width=0.017\textwidth]{z_img__9158_ZERO.png}) &= \left[P\left(\includegraphics[width=0.017\textwidth]{z_img__9158_ZERO.png} = 0\right), P\left(\includegraphics[width=0.017\textwidth]{z_img__9158_ZERO.png} = 1\right), \cdots, P\left(\includegraphics[width=0.017\textwidth]{z_img__9158_ZERO.png} = 9\right)\right]\\  
    &= \left[0.15, 0.14, 0.13, 0.12, 0.11, 0.09, 0.08, 0.07, 0.06, 0.05\right].
\end{align*}
\noindent  we obtain the following possibility distribution:
\begin{align*}
    \left[\Pi\left(\includegraphics[width=0.017\textwidth]{z_img__9158_ZERO.png} = 0\right), \Pi\left(\includegraphics[width=0.017\textwidth]{z_img__9158_ZERO.png} = 1\right), \cdots, \Pi\left(\includegraphics[width=0.017\textwidth]{z_img__9158_ZERO.png} = 9\right)\right]=
    \left[1.00, 0.99, 0.97, 0.94, 0.90, 0.80, 0.74, 0.67, 0.59, 0.50\right].
\end{align*}
 Using the probability-possibility transformation presented in Subsection \ref{subsec:probposs_mps},  we can obtain more specific possibility distributions than those obtained here, as we will see at Example \ref{ex:probposs_msp}.

\end{example}

Dubois and Prade state that the antipignistic method provides an intuitive ground to the perception of the idea of certainty \cite{dubois2020possibilistic}. However, the possibility distribution that is generated is less specific than the the one that can be obtained when using the probability-possibility transformation presented in the following subsection.

In the case of a neuro-symbolic approach where a neural model is connected to a possibilistic rule-based system based on the antipignistic transformation, the converse mapping from possibility distributions to probability distributions is valuable since it makes feasible the implementation of a \emph{backpropagation mechanism}, see Subsection \ref{subsub:backpropagation}: a targeted input possibility distribution of the possibilistic rule-based system can be transformed into a targeted output probability distribution for the neural network. Accordingly, the predictive performance of the neural model that is used can be improved thanks to the feedback received from the system of possibilistic rules.

\subsubsection{Method obeying minimum specificity principle}
\label{subsec:probposs_mps}

A second well-known transformation of probability distributions into possibility distributions is based on a totally different rationale and was introduced in \cite{delgado1987concept,dubois1982several}. The resulting possibility distribution is the most specific possibility distribution $\pi^\star$
whose associated possibility measure dominates the probability measure.

\noindent Starting from  a probability distribution $p$ on $\{1, 2, \dots ,n\}$, for all $i\in\{1, 2, \dots ,n\}$, we put:
\[ \pi_i^* = \sum_{j=i}^n p_j.\]
Then, we have:
\[ \pi_1^* = \sum_{j=1}^n p_j  = 1 \quad \text{ and }\quad  \pi_1^* =1 \geq \pi_2^* \geq \dots \geq \pi_n^*\geq 0, \]
so $\pi^*$
is a normalized possibility distribution on $\{1, 2, \dots, n\}$. 

\noindent Assuming  that we have $p_1 \geq p_2 \geq \dots \geq p_n$, we can associate with $p$, using the antipignistic method, 
the normalized possibility distribution $\pi$ defined by  
 $$\pi_i  = i p_i + \sum_{k = i+1}^n p_k.$$

\noindent Then, we have:
 \[ \forall i\in \{1, 2, \dots, n\} ,\,
 \pi_i^* \leq \pi_i.\]
 Indeed, we have:
 \[ \pi_i^* = \sum_{j=i}^n p_j = p_i +    \sum_{j=i +1}^n p_j \leq i p_i + \sum_{j = i+1}^n p_j = \pi_i.\]
 
\noindent  The inequality $\forall i\in \{1, 2, \dots, n\},\,
 \pi_i^* \leq \pi_i$ means that $\pi^*$ is at least as specific as $\pi$:  what we observe is that, with regard to extreme values, $\pi_i = 0 \Longrightarrow \pi_i^* = 0$ so $\pi^*$ restricts the possible values at least as much as $\pi$ and $\pi_i^* = 1 \Longrightarrow \pi_i = 1$ so $\pi^*$ is at least as informative as $\pi$.

This transformation $p \mapsto \pi^\ast$ is \textit{motivated by the concern to keep as much information as possible} \cite{dubois2020possibilistic}. 
The main drawback of the method obeying minimum specificity principle is that this transformation cannot be reversed, i.e., for this method,  there is no inverse mapping from possibility to probability. For a neuro-symbolic approach where a neural model is connected to a possibility rule-based system based on this probability-possibility transformation, this impossibility precludes any backpropagation of information from the symbolic rule-based system to the neural model.

\begin{example}\label{ex:probposs_msp}
To illustrate the probability-possibility transformation obeying  the minimum specificity principle, we use the same probability distributions as those used in Example \ref{ex:probposs_antipignistic} with the antipignistic method.
Based on the probability distribution that would typically be obtained from a neural network:
\begin{align*}
    P(\includegraphics[width=0.017\textwidth]{z_img__9158_ZERO.png}) &= \left[P\left(\includegraphics[width=0.017\textwidth]{z_img__9158_ZERO.png} = 0\right), P\left(\includegraphics[width=0.017\textwidth]{z_img__9158_ZERO.png} = 1\right), \cdots, P\left(\includegraphics[width=0.017\textwidth]{z_img__9158_ZERO.png} = 9\right)\right]\\  
    &= \left[0.91, 0.01, 0.01, 0.01, 0.01, 0.01, 0.01, 0.01, 0.01, 0.01\right],
\end{align*}
\noindent we obtain the following corresponding possibility distribution using the method obeying  the minimum specificity principle:
\begin{align*}
    \left[\Pi\left(\includegraphics[width=0.017\textwidth]{z_img__9158_ZERO.png} = 0\right), \Pi\left(\includegraphics[width=0.017\textwidth]{z_img__9158_ZERO.png} = 1\right), \cdots, \Pi\left(\includegraphics[width=0.017\textwidth]{z_img__9158_ZERO.png} = 9\right)\right]=
    \left[1.00, 0.09, 0.08, 0.07, 0.06, 0.05, 0.04, 0.03, 0.02, 0.01\right].
\end{align*}
From the probability distribution with higher levels of ambiguity:
\begin{align*}
    P(\includegraphics[width=0.017\textwidth]{z_img__9158_ZERO.png}) &= \left[P\left(\includegraphics[width=0.017\textwidth]{z_img__9158_ZERO.png} = 0\right), P\left(\includegraphics[width=0.017\textwidth]{z_img__9158_ZERO.png} = 1\right), \cdots, P\left(\includegraphics[width=0.017\textwidth]{z_img__9158_ZERO.png} = 9\right)\right]\\  
    &= \left[0.15, 0.14, 0.13, 0.12, 0.11, 0.09, 0.08, 0.07, 0.06, 0.05\right],
\end{align*}
\noindent we obtain the following possibility distribution using the method obeying the minimum specificity principle:
\begin{align*}
    \left[\Pi\left(\includegraphics[width=0.017\textwidth]{z_img__9158_ZERO.png} = 0\right), \Pi\left(\includegraphics[width=0.017\textwidth]{z_img__9158_ZERO.png} = 1\right), \cdots, \Pi\left(\includegraphics[width=0.017\textwidth]{z_img__9158_ZERO.png} = 9\right)\right]=
    \left[1.00, 0.85, 0.71, 0.58, 0.46, 0.35, 0.26, 0.18, 0.11, 0.05\right].
\end{align*}
 We can see that the possibility distributions generated in this example are more specific than those obtained in Example \ref{ex:probposs_antipignistic}.
\end{example}

In practice, the choice of a method for transforming a probability distribution into a possibility distribution mainly depends on the required specificity of the obtained possibility distribution. For some applications, the fact that the antipignistic method is a one-to-one correspondence between  probability measures and possibility measures can be useful.

\section{Practical procedures for building the matrix relation and the equation system associated with a possibilistic rule-based system}
\label{sec:practical-procedure-building-reduced-ES}

 For taking advantage of a possibilistic rule-based system (Subsection \ref{subsec:possibilistic-rule-based-system}) one needs to generate:
\begin{itemize}
    \item the matrix relation of \cite{baaj2021min}, see (\ref{eq:esFP}),  which is used to perform inferences from the possibilistic rule-based system at hand via a min-max matrix product. 
    
    \item the equation system of \cite{baaj2022learning}, see (\ref{eq:sigma}), which is used to learn the rule parameters of the possibilistic rule-based system according to training data samples.
\end{itemize}
 The essential tool governing  the matrix relation and the equation system is \textit{an explicit partition of the domain $D_b$ of the output attribute $b$} which is constructed according to the conclusions of the rules \cite{baaj2021min}. As explained 
 at the beginning of Subsection \ref{subsec:matrixrelpractical}  (resp. Subsection \ref{subsec:practical-building-learning}), in the matrix relation of \cite{baaj2021min} (resp. the equation system of \cite{baaj2022learning}), each component of its second member (and each row of its matrix) is directly  related to a subset of this partition. \\
 Following \cite{baaj2021min}, a partition of $D_b$ denoted  $(E_{k}^{(i)})_{1 \leq k \leq 2^{i}}$ and  built from the sets $Q_1,Q_2,\cdots,Q_n$ used in the conclusions of the rules $R^1, R^2, \cdots, R^n$ and their complements  $\overline{Q_1},\overline{Q_2},\cdots, \overline{Q_n}$ can be defined inductively as follows: 
\begin{subequations}
\label{eq:partition}
\begin{align}
&\text{\textbullet\,} E_{1}^{(1)} = Q_1\text{ and }E_{2}^{(1)} = \overline{Q_1}&\\ 
 &\text{and for  $i > 1$:}&\nonumber
 \\
&\text{\textbullet\,} E_{k}^{(i)} = \begin{cases} 
 E_{k}^{(i - 1)}  \cap Q_i & \text{ if }  1 \leq k \leq 2^{i-1}  \\
  E_{k -2^{i-1}}^{(i - 1)}   \cap \overline{Q_i} & \text{ if }  2^{i-1} < k \leq 2^{i}\end{cases}.&
\end{align}
\end{subequations}

\noindent This partition $(E_{k}^{(n)})_{1 \leq k \leq 2^{n}}$ can be reduced to its non-empty subsets (since the possibility degree of  the emptyset is equal to zero), and it contains at most $\min(\text{card}(D_b),2^n)$ non-empty subsets, see  \cite{baaj2021min}. Unfortunately,  the above inductive definition  (\ref{eq:partition}) does not allow us to construct a partition that is suitable for practical use since it involves generating $2^i$ subsets at each iteration  where $i$ ranges from $1$ to $n$ ($n$ being the number of rules).

In this section, in order to address this crucial computational issue, we present  an efficient inductive procedure for constructing a partition (\ref{eq:partition}) reduced to its non-empty subsets, see  Lemma \ref{lemma:LambdaN} and Proposition \ref{prop:isom}. This method has a polynomial-time complexity with respect to the number of rules, see Proposition \ref{prop:complexity:nbOp}.  Using it, the matrix relation and the equation system for learning, whose rows directly correspond to the non-empty subsets of the partition, can be generated efficiently, see Subsection \ref{subsec:matrixrelpractical} and Subsection \ref{subsec:practical-building-learning} respectively. Therefore, thanks to this method, the matrix relation and the equation system can be used in practice (the approach scales sufficiently well).

\subsection{A practical method for constructing a partition reduced to its non-empty subsets}
\label{subsec:partitionred}
 
\begin{notation}\label{nottrois}
    Let us consider:
    \begin{itemize}
    \item $D_b$ is the domain of the output attribute $b$, 
    \item $Q_1, Q_2, \cdots Q_n$ are the subsets of $D_b$ which are used in the conclusions of the rules $R^1, R^2\dots, R^n$, 
    \item $(E_k^{(n)})_{1 \leq k \leq 2^n}$   is the partition of $D_b$ formed from the subsets $Q_1, Q_2, \cdots Q_n$  and their complements by following the inductive method (\ref{eq:partition}), 
    \item $J^{(n)} = \left\{k \in\{1, 2, \cdots, 2^n\} \bigm\vert E_{k}^{(n)} \neq \emptyset \right\}$ is the set of the indexes of the non-empty subsets of the partition $(E_k^{(n)})_{1 \leq k \leq 2^n}$ and   $\omega^{(n)} = \text{card}(J^{(n)})$. The indexes in $J^{(n)}$ are arranged in a strictly increasing sequence: $1 \leq k^{(n)}_1 < k^{(n)}_2 < \dots < k^{(n)}_{\omega^{(n)}} \leq 2^n$.
\end{itemize}
\end{notation}

\noindent Our purpose is to give an inductive definition  of the ordered set $(J^{(n)},  \leq)$ whose elements are the indexes of the non-empty subsets $(E_k^{(n)})_{k \in J^{(n)}}$. To do this, we inductively define an ordered set $(\Lambda^{(n)}, \preceq^{(n)})$ isomorphic to the ordered set $(J^{(n)}, \leq)$, see proofs in Section \ref{sec:proofs}.

\noindent
\subsubsection{\texorpdfstring{Definition of the set $\Lambda^{(n)}$}{Definition of the set Lambda(n)}}

\noindent
We begin by giving a characterization of the non-empty sets of the partition $(E_{k}^{(n)})_{1 \leq k \leq 2^{n}}$:
\begin{restatable}{lemma}{lemmasubsettouniquetuplecmd}\label{lemma:subsettouniquetuple}
Each subset in the partition $(E_k^{(n)})_{1 \leq k \leq 2^{n}}$ is of the form:
 $$E_k^{(n)} =  T_1 \cap T_2 \cap \dots \cap T_n \quad \text{where}  \quad T_i\in \{Q_i, \overline{Q_i}\}.$$ 
Reciprocally, if $(T_1, T_2, \dots, T_n)$ is a tuple of sets such that   $T_i\in \{Q_i, \overline{Q_i}\}$, then $T_1 \cap T_2 \cap \dots \cap T_n$ is a subset of the partition  $(E_k^{(n)})_{1 \leq k \leq 2^n}$.

\noindent
Suppose that a set $E_k^{(n)}$ with $k \in \{1, 2, \dots, 2^n\}$ is non-empty, then there exists a \textbf{unique} tuple $(T_1,T_2,\dots,T_n)$ such that 
$E_k^{(n)} =  T_1 \cap T_2 \cap \dots \cap T_n$ and each $T_i$ is equal to $Q_i$ or $\overline{Q_i}$.
\end{restatable}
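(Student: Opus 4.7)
The plan is to prove the three assertions simultaneously by induction on $n$, following the inductive definition (\ref{eq:partition}). The proof is mostly structural, and each claim flows naturally from the recursive construction.

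For the first assertion, I would induct on $n$. The base case $n=1$ is immediate: $E_1^{(1)} = Q_1$ and $E_2^{(1)} = \overline{Q_1}$ are exactly of the required form. For the inductive step, assume every $E_k^{(n-1)}$ can be written as $T_1 \cap T_2 \cap \dots \cap T_{n-1}$ with $T_i \in \{Q_i, \overline{Q_i}\}$. Then (\ref{eq:partition}) gives, for $1 \leq k \leq 2^{n-1}$, $E_k^{(n)} = E_k^{(n-1)} \cap Q_n$, so setting $T_n = Q_n$ produces the desired form; for $2^{n-1} < k \leq 2^n$, $E_k^{(n)} = E_{k-2^{n-1}}^{(n-1)} \cap \overline{Q_n}$, and we set $T_n = \overline{Q_n}$.

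For the reciprocal statement, I would set up an explicit bijection between the $2^n$ tuples $(T_1, \dots, T_n)$ and the $2^n$ indices $\{1, 2, \dots, 2^n\}$. Encoding $b_i = 0$ when $T_i = Q_i$ and $b_i = 1$ when $T_i = \overline{Q_i}$, define $k(T_1, \dots, T_n) = 1 + \sum_{i=1}^n b_i \, 2^{i-1}$. By inspecting (\ref{eq:partition}), the ``low'' half of indices at level $i$ corresponds to picking $Q_i$ and the ``high'' half to picking $\overline{Q_i}$, so this encoding precisely tracks the recursive indexing. Hence the intersection $T_1 \cap \dots \cap T_n$ coincides with $E_{k(T_1,\dots,T_n)}^{(n)}$, and every such intersection appears as a set in the partition.

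For the uniqueness clause, suppose $E_k^{(n)}$ is non-empty and admits two representations $E_k^{(n)} = T_1 \cap \dots \cap T_n = T_1' \cap \dots \cap T_n'$ with each $T_i, T_i' \in \{Q_i, \overline{Q_i}\}$. If some index $i$ satisfied $T_i \neq T_i'$, then $\{T_i, T_i'\} = \{Q_i, \overline{Q_i}\}$ and thus $T_i \cap T_i' = \emptyset$. But $E_k^{(n)} \subseteq T_i \cap T_i' = \emptyset$, contradicting non-emptiness. Therefore $T_i = T_i'$ for every $i$, yielding the desired uniqueness.

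I do not anticipate any serious obstacle; the only slightly delicate point is the bookkeeping in the bijection step, where one must confirm that the binary encoding of $(T_1, \dots, T_n)$ into $k$ exactly reproduces the splitting scheme of (\ref{eq:partition}). This amounts to verifying that the $n$-th bit records whether $Q_n$ or $\overline{Q_n}$ is used, which matches the ``first half vs.\ second half'' dichotomy in the recursion.
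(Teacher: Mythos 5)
Your proposal is correct and follows essentially the same route as the paper: induction on $n$ via the recursive definition (\ref{eq:partition}) for the first two assertions, and for uniqueness the observation that an element of the non-empty intersection must lie in $T_i \cap T_i'$ for every $i$, forcing $T_i = T_i'$. Your explicit binary encoding of the index $k$ is just a more detailed spelling-out of the bookkeeping the paper leaves to the reader (and which it later makes precise in Lemma \ref{lemma:muto}).
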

\noindent See Subsection \ref{subsec:proof:lemmasubsettouniquetuple} for the proof.

\noindent
Lemma  \ref{lemma:subsettouniquetuple} leads us to introduce for each $i \in \{1, 2, \dots, n\}$,  the following mapping:
\begin{equation}\label{eq:sigman}
    \sigma^{(i)} :   (t_1, t_2, \dots, t_i) \mapsto T_1 \cap T_2 \cap \dots \cap T_i \quad \text{ where }\quad     t_k = \pm k \quad \text{and} \quad 
 T_k = \begin{cases} Q_k & \text{ if } t_k = k\\ 
\overline{Q_k} & \text{ if }   t_k = -k \end{cases}.
\end{equation}
Then, we rely on $\sigma^{(1)}, \sigma^{(2)}, \cdots, \sigma^{(n)}$ to introduce:
\begin{definition}\label{def:LambdaN}
For each $i=1,2,\cdots,n$, let $\Lambda^{(i)}$ be the following set of tuples: 
\begin{equation}
    \Lambda^{(i)} = \bigg\{ \mu=(t_1,t_2,\cdots,t_i) \mid \text{for all } k \in \{1,2,\dots,i\}, \, t_k =\pm k \quad \text{ and } \quad \sigma^{(i)}(\mu) \neq \emptyset \bigg\},
\end{equation}
\noindent $\Lambda^{(i)}$ is  defined using the mapping $\sigma^{(i)}$ and thus relies on  the sets $Q_1,Q_2,\cdots,Q_i$ used in the conclusions of the rules $R^1,  \dots, R^i$ and their complements.
\end{definition}
\noindent
For each $i=1,2,\cdots,n$, we take advantage of the following result to compute inductively $\Lambda^{(i)}$:
\begin{restatable}{lemma}{LambdaNcmd}\label{lemma:LambdaN}
We have:
    \begin{subequations}
\label{eq:orderedset}
\begin{align}
&\text{\textbullet\,} \Lambda^{(1)} = \begin{cases} \{(1) \} & \text{ if } \overline{Q_1} = \emptyset \\[4pt]
\{(-1) \} & \text{ if } {Q_1} = \emptyset\\[4pt]
\{(1), (-1) \} & \text{ otherwise.}
\end{cases}&\\ 
 &\text{and for  $i > 1$:}&\nonumber
 \\
&\text{\textbullet\,} \Lambda^{(i)} = \Lambda^{(i)}_+ \bigcup \Lambda^{(i)}_- \text{where:}&\\
&\Lambda^{(i)}_+ = \bigg\{(t_1, t_2, \dots, t_{i-1}, i)\mid (t_1, t_2, \dots, t_{i-1})\in \Lambda^{(i-1)} \text{ and } \sigma^{(i-1)}(t_1, t_2, \dots, t_{i-1}) \cap Q_i \not=\emptyset\bigg\} &\nonumber\\
&\text{ and }\nonumber\\
&\Lambda^{(i)}_- = \bigg\{(t_1, t_2, \dots, t_{i-1}, -i)\mid (t_1, t_2, \dots, t_{i-1})\in \Lambda^{(i-1)} \text{ and } \sigma^{(i-1)}(t_1, t_2, \dots, t_{i-1}) \cap \overline{Q_i} \not=\emptyset\bigg\}.&\nonumber
\end{align}
\end{subequations}
\end{restatable}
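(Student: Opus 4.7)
The plan is to prove the lemma by induction on $i$, using the compositional identity $\sigma^{(i)}(t_1,\dots,t_i) = \sigma^{(i-1)}(t_1,\dots,t_{i-1}) \cap T_i$ (where $T_i = Q_i$ if $t_i = i$ and $T_i = \overline{Q_i}$ if $t_i = -i$), which follows directly from the definition of $\sigma^{(i)}$ given in (\ref{eq:sigman}).

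For the base case $i = 1$, I would simply enumerate the only two candidate tuples, $(1)$ and $(-1)$. By definition, $\sigma^{(1)}(1) = Q_1$ and $\sigma^{(1)}(-1) = \overline{Q_1}$, so membership in $\Lambda^{(1)}$ is determined by whether $Q_1$ and $\overline{Q_1}$ are empty. The three cases in the lemma then correspond exactly to the three possibilities ($\overline{Q_1} = \emptyset$, $Q_1 = \emptyset$, or both non-empty); one can implicitly assume $D_b \neq \emptyset$ to rule out the pathological case where both are empty simultaneously.

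For the inductive step, fix $i > 1$ and assume the inductive hypothesis that $\Lambda^{(i-1)}$ has been correctly characterized. I would first prove $\Lambda^{(i)} \subseteq \Lambda^{(i)}_+ \cup \Lambda^{(i)}_-$: take any $\mu = (t_1,\dots,t_i) \in \Lambda^{(i)}$; by definition $t_i = \pm i$, so $\mu$ splits into two cases. If $t_i = i$, then using the compositional identity, $\emptyset \neq \sigma^{(i)}(\mu) = \sigma^{(i-1)}(t_1,\dots,t_{i-1}) \cap Q_i$, which forces both $\sigma^{(i-1)}(t_1,\dots,t_{i-1}) \neq \emptyset$ (so $(t_1,\dots,t_{i-1}) \in \Lambda^{(i-1)}$ by the inductive hypothesis) and $\sigma^{(i-1)}(t_1,\dots,t_{i-1}) \cap Q_i \neq \emptyset$; hence $\mu \in \Lambda^{(i)}_+$. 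The case $t_i = -i$ is symmetric and yields $\mu \in \Lambda^{(i)}_-$.

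The reverse inclusion $\Lambda^{(i)}_+ \cup \Lambda^{(i)}_- \subseteq \Lambda^{(i)}$ is essentially the same argument run backwards: if $\mu = (t_1,\dots,t_{i-1}, i) \in \Lambda^{(i)}_+$, then the defining condition gives $\sigma^{(i-1)}(t_1,\dots,t_{i-1}) \cap Q_i \neq \emptyset$, which by the compositional identity is exactly $\sigma^{(i)}(\mu) \neq \emptyset$, so $\mu \in \Lambda^{(i)}$; similarly for $\Lambda^{(i)}_-$. I do not expect any real obstacle: the only subtle point is being explicit about the compositional identity for $\sigma^{(i)}$, which is itself a direct unfolding of (\ref{eq:sigman}) and is essentially the same observation that underlies Lemma \ref{lemma:subsettouniquetuple}.
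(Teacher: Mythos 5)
Your proposal is correct and rests on exactly the same key observation as the paper's proof, namely the compositional identity $\sigma^{(i)}(t_1,\dots,t_i)=\sigma^{(i-1)}(t_1,\dots,t_{i-1})\cap T_i$ unfolded from (\ref{eq:sigman}); the paper states this identity and leaves the two inclusions implicit, while you spell them out. The only cosmetic remark is that the induction framing is not strictly needed: since $\Lambda^{(i)}$ and $\Lambda^{(i-1)}$ are both defined directly in Definition \ref{def:LambdaN}, membership of $(t_1,\dots,t_{i-1})$ in $\Lambda^{(i-1)}$ follows from that definition rather than from an inductive hypothesis, so each identity can be verified independently for each $i$.
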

\noindent See Subsection \ref{subsec:proof:LambdaN} for the proof.

\noindent
Based on Lemma \ref{lemma:subsettouniquetuple}, the following lemma shows that the set $\Lambda^{(n)} $ is directly related to the set $J^{(n)}$: 
\begin{restatable}{lemma}{lemmamutokmucmd}\label{lemma:mutokmu}
For each $\mu \in \Lambda^{(n)}$, there is a unique index $k_\mu \in J^{(n)}$  such that  $\sigma^{(n)}(\mu) = E_{k_\mu}^{(n)}$.  The mapping:
\begin{equation}\label{eq:defPsi}
\begin{tabular}{ll}
$\Psi  :$ & $\Lambda^{(n)} \rightarrow J^{(n)}$\\
& $\mu \mapsto k_\mu$ 
\end{tabular} 
\end{equation}
 is bijective and therefore  $\text{card}\, \Lambda^{(n)} = \text{card}\, J^{(n)} \leq \min(\text{card}\,D_b, 2^n)$ .
\end{restatable}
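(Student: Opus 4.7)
The plan is to use Lemma \ref{lemma:subsettouniquetuple} as the main engine: it establishes a one-to-one correspondence between non-empty subsets of the partition and the sign tuples $\mu$ that make $\sigma^{(n)}(\mu)$ non-empty. I then transport this correspondence into a bijection between $\Lambda^{(n)}$ and $J^{(n)}$.

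First I would verify that the map $\Psi$ is well-defined. Take $\mu = (t_1,\dots,t_n) \in \Lambda^{(n)}$. By definition of $\Lambda^{(n)}$, $\sigma^{(n)}(\mu)$ is non-empty, and by construction (see (\ref{eq:sigman})), it has the form $T_1 \cap T_2 \cap \dots \cap T_n$ with $T_i \in \{Q_i, \overline{Q_i}\}$. The first part of Lemma \ref{lemma:subsettouniquetuple} tells us that such an intersection is always a subset of the partition $(E_k^{(n)})_{1 \leq k \leq 2^n}$; hence there exists $k_\mu \in \{1,\dots,2^n\}$ with $\sigma^{(n)}(\mu) = E_{k_\mu}^{(n)}$. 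Since the sets $E_k^{(n)}$ form a partition (so they are pairwise disjoint) and $\sigma^{(n)}(\mu) \neq \emptyset$, this index $k_\mu$ is unique, and moreover $k_\mu \in J^{(n)}$. Thus $\Psi : \mu \mapsto k_\mu$ is well-defined.

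Next I would prove injectivity and surjectivity. For injectivity, suppose $\Psi(\mu) = \Psi(\mu') = k$. Then $\sigma^{(n)}(\mu) = E_k^{(n)} = \sigma^{(n)}(\mu')$, which is non-empty. The uniqueness statement in Lemma \ref{lemma:subsettouniquetuple} says that a non-empty set of the partition can be written as $T_1 \cap \dots \cap T_n$ with $T_i \in \{Q_i, \overline{Q_i}\}$ in only one way; this forces the same tuple of $T_i$'s, and therefore $\mu = \mu'$ by the definition of $\sigma^{(n)}$. For surjectivity, pick $k \in J^{(n)}$, so $E_k^{(n)} \neq \emptyset$. Lemma \ref{lemma:subsettouniquetuple} again provides a tuple $(T_1,\dots,T_n)$ with $T_i \in \{Q_i,\overline{Q_i}\}$ and $E_k^{(n)} = T_1 \cap \dots \cap T_n$. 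Setting $t_i = i$ if $T_i = Q_i$ and $t_i = -i$ otherwise gives $\mu = (t_1,\dots,t_n)$ with $\sigma^{(n)}(\mu) = E_k^{(n)} \neq \emptyset$, hence $\mu \in \Lambda^{(n)}$ and $\Psi(\mu) = k$.

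Finally, the cardinality bound is immediate: bijectivity yields $\text{card}\,\Lambda^{(n)} = \text{card}\,J^{(n)}$; the inclusion $J^{(n)} \subseteq \{1,\dots,2^n\}$ gives $\text{card}\,J^{(n)} \leq 2^n$; and since $\{E_k^{(n)} \mid k \in J^{(n)}\}$ is a family of pairwise disjoint non-empty subsets of $D_b$, we also have $\text{card}\,J^{(n)} \leq \text{card}\,D_b$, yielding the bound $\min(\text{card}\,D_b, 2^n)$.

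There is really no hard part here: all the combinatorial content is packaged inside Lemma \ref{lemma:subsettouniquetuple}. The only thing to be slightly careful about is not to conflate "the intersection $T_1 \cap \dots \cap T_n$ belongs to the partition" (existence part of Lemma \ref{lemma:subsettouniquetuple}, used for well-definedness and surjectivity) with "the tuple representing a non-empty partition block is unique" (uniqueness part, used for injectivity and for well-definedness of $k_\mu$). Once these two roles are kept apart, the argument reduces to translating Lemma \ref{lemma:subsettouniquetuple} through the parameterization $\sigma^{(n)}$.
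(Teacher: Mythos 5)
Your proof is correct and follows essentially the same route as the paper's: both establish well-definedness of $k_\mu$ from the partition property, derive injectivity and surjectivity from the uniqueness and existence parts of Lemma \ref{lemma:subsettouniquetuple}, and obtain the cardinality bound from $J^{(n)} \subseteq \{1,\dots,2^n\}$ together with the fact that the non-empty blocks partition $D_b$. No gaps.
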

\noindent See Subsection \ref{subsec:proof:lemmamutokmu} for the proof.

\noindent
The following result is used to relate the images of the two  mappings\\ 
\begin{tabular}{ll}
$\Psi  :$ & $\Lambda^{(n)} \rightarrow J^{(n)}$\\
& $\mu \mapsto k_\mu$ 
\end{tabular} \\
\noindent and\\
\begin{tabular}{ll}
$\widetilde \Psi  :$ & $\Lambda^{(n-1)} \rightarrow J^{(n-1)}$\\
& $\widetilde \mu \mapsto k_{\widetilde \mu}$ 
\end{tabular}:
\begin{restatable}{lemma}{lemmamutocmd}\label{lemma:muto}
Let  $n > 1$ and  $\mu = (t_1, t_2, \dots, t_{n-1}, t_n)\in \Lambda^{(n)}$.  Set    
$\mu = (\widetilde \mu, t_n)$ with    $\widetilde \mu = (t_1, t_2, \dots, t_{n-1})$. Then, we have    $\widetilde \mu = (t_1, t_2, \dots, t_{n-1})  \in \Lambda^{(n-1)}$ and $t_n = \pm n$.  The associated indices $k_\mu $  and $k_{\widetilde\mu}$, see  Lemma \ref{lemma:mutokmu}, satisfy: 
\begin{equation}\label{eq:kmu10}
k_\mu = \begin{cases}
     k_{\widetilde\mu}    & \text{ if }\,  t_n = n \\
  k_{\widetilde\mu} + 2^{n-1}& \text{ if }\,  t_n = - n
\end{cases}.
 \end{equation}
\end{restatable}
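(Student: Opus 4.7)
The plan is to unwind the definitions and use the inductive structure of the partition $(E_k^{(n)})_{1 \leq k \leq 2^n}$ together with Lemma \ref{lemma:mutokmu}.

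First, I would establish that $\widetilde{\mu} \in \Lambda^{(n-1)}$. This follows directly from Lemma \ref{lemma:LambdaN}: since $\mu = (\widetilde{\mu}, t_n) \in \Lambda^{(n)} = \Lambda^{(n)}_+ \cup \Lambda^{(n)}_-$, the prefix $\widetilde{\mu}$ belongs to $\Lambda^{(n-1)}$ in both cases. Hence the index $k_{\widetilde{\mu}} \in J^{(n-1)} \subseteq \{1, 2, \dots, 2^{n-1}\}$ is well-defined by Lemma \ref{lemma:mutokmu}, and satisfies $\sigma^{(n-1)}(\widetilde{\mu}) = E_{k_{\widetilde{\mu}}}^{(n-1)}$.

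Next, I would split on the sign of $t_n$. From the definition (\ref{eq:sigman}) of $\sigma^{(n)}$, one has
\[
\sigma^{(n)}(\mu) \;=\; \sigma^{(n-1)}(\widetilde{\mu}) \cap T_n
\;=\; E_{k_{\widetilde{\mu}}}^{(n-1)} \cap T_n,
\]
where $T_n = Q_n$ if $t_n = n$ and $T_n = \overline{Q_n}$ if $t_n = -n$. If $t_n = n$, the inductive definition (\ref{eq:partition}) gives $E_{k_{\widetilde{\mu}}}^{(n-1)} \cap Q_n = E_{k_{\widetilde{\mu}}}^{(n)}$ (using that $1 \leq k_{\widetilde{\mu}} \leq 2^{n-1}$). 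If $t_n = -n$, then setting $k = k_{\widetilde{\mu}} + 2^{n-1}$ satisfies $2^{n-1} < k \leq 2^n$, and (\ref{eq:partition}) yields $E_{k_{\widetilde{\mu}}}^{(n-1)} \cap \overline{Q_n} = E_{k_{\widetilde{\mu}} + 2^{n-1}}^{(n)}$.

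Finally, I would conclude by uniqueness. By Lemma \ref{lemma:mutokmu}, $\sigma^{(n)}(\mu) = E_{k_\mu}^{(n)}$ is non-empty. Since $(E_k^{(n)})_{1\leq k\leq 2^n}$ is a partition of $D_b$, two distinct indices cannot label the same non-empty subset. Therefore, in the case $t_n = n$, the equality $E_{k_\mu}^{(n)} = E_{k_{\widetilde{\mu}}}^{(n)}$ forces $k_\mu = k_{\widetilde{\mu}}$, and in the case $t_n = -n$, the equality $E_{k_\mu}^{(n)} = E_{k_{\widetilde{\mu}} + 2^{n-1}}^{(n)}$ forces $k_\mu = k_{\widetilde{\mu}} + 2^{n-1}$, as claimed. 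The argument is essentially bookkeeping; the only point requiring care is the invocation of the partition's disjointness to deduce equality of indices from equality of the (non-empty) sets they label.
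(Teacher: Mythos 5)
Your proposal is correct and follows essentially the same route as the paper's proof: both factor $\sigma^{(n)}(\mu)$ as $\sigma^{(n-1)}(\widetilde\mu)\cap T_n$, invoke the inductive definition (\ref{eq:partition}) to identify the resulting set with a cell of the level-$n$ (resp.\ level-$(n-1)$) partition, and then use non-emptiness together with the disjointness of partition cells to equate the indices. The only (immaterial) difference is that you apply the disjointness argument to the partition $(E_k^{(n)})_k$ while the paper applies it to $(E_k^{(n-1)})_k$ after first deducing the equivalence $t_n = n \Leftrightarrow k_\mu \leq 2^{n-1}$.
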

\noindent See Subsection \ref{subsec:proof:lemmamuto} for the proof.
 
\noindent
\subsubsection{\texorpdfstring{Definition of the total order relation $\preceq^{(n)}$ on the set $\Lambda^{(n)}$}{Definition of the total order relation on the set Lambda(n)}}

\noindent
A total order  $\preceq^{(n)}$ can now be defined on the set $\Lambda^{(n)}$. 
We start from the following order relation $\preceq^{(1)}$   on the set $\Lambda^{(1)}$: 
\begin{itemize}
    \item if  $\Lambda^{(1)}  = \{(1)  \}$ or $\Lambda^{(1)}  = \{(-1)  \}$, since the set $\Lambda^{(1)}$ has only one element, the order relation $\preceq^{(1)}$ is defined as the equality relation, 
    \item if $\Lambda^{(1)}  = \{(1), (-1) \}$, the order relation $\preceq^{(1)}$ is the unique order   relation on the set $\{(1), (-1) \}$ satisfying the condition $(1) \preceq^{(1)} (-1)$.
\end{itemize}
 Assuming that $(\Lambda^{(i-1)} ,\preceq^{(i-1)})$, i.e., a total order $\preceq^{(i-1)}$ on the set $\Lambda^{(i-1)}$ has been defined, the total order $\preceq^{(i)}$ on the set $\Lambda^{(i)}$ can be defined as follows: 

\begin{definition}\label{def:orderrel}
Let $\mu =(t_1,t_2,\dots,t_i), \mu' = (t'_1, t'_2, \dots t'_{i}) \in \Lambda^{(i)}$: 
\begin{subequations}
\label{eq:relationorder}
\begin{align}
&\bullet\text{ if  }\mu = (t_1, t_2, \dots t_{i-1}, i) \text{ and }\mu' = (t'_1, t'_2, \dots t'_{i-1}, - i),\text{ then we take:}&\nonumber\\ 
&\quad \quad \quad\quad \quad \quad  \mu \prec^{(i)}\mu'&\\
& \text{ where } \mu \prec^{(i)}\mu' \text{ means that } \mu \preceq^{(i)}\mu' \text{ and } \mu' \npreceq^{(i)}\mu. &\nonumber\\ 
&\bullet \text{ if } t_i = t'_i,\, \text{then we take:}&\nonumber\\
&\quad \quad \quad \quad \quad \quad \mu \preceq^{(i)}\mu' \Longleftrightarrow
(t_1, t_2, \dots t_{i-1}) \preceq^{(i -1)} (t'_1, t'_2, \dots t'_{i-1}).&&
\end{align}
\end{subequations}
\end{definition}
\noindent Using Lemma \ref{lemma:LambdaN}, it is easy to show that if $(\Lambda^{(i-1)} \preceq^{(i-1)})$ is totally ordered, then $(\Lambda^{(i)} \preceq^{(i)})$ is totally ordered. Starting from $\preceq^{(1)}$, the orders  $\preceq^{(2)}, \preceq^{(3)},\cdots, \preceq^{(n)}$ can be defined using Definition \ref{def:orderrel}.
\subsubsection{\texorpdfstring{The order isomorphism $\Psi :  (\Lambda^{(n)}, \preceq^{(n)}) \rightarrow (J^{(n)}, \leq) $}{The order isomorphism psi : (Lambda(n), preceq(n)) to (J(n),leq)}}

Based on   Lemma \ref{lemma:mutokmu} and Lemma \ref{lemma:muto}, we prove that the mapping
\begin{center}
\begin{tabular}{ll}
$\Psi  :$ & $\Lambda^{(n)} \rightarrow J^{(n)}$\\
& $\mu \mapsto k_\mu$ 
\end{tabular} 
\end{center}
\noindent respects the order relations:
\begin{restatable}{proposition}{propisomcmd}\label{prop:isom}

 Let $\mu$ and $\mu'$ be in $\Lambda^{(n)}$. We have:
\begin{equation}\label{eq:kmu}
 \mu \preceq^{(n)} \mu' \Longleftrightarrow
 k_\mu \leq k_{\mu'}.
\end{equation}
\end{restatable}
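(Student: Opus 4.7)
The plan is to proceed by induction on $n$, exploiting the parallel inductive structures of the order $\preceq^{(n)}$ (Definition \ref{def:orderrel}) and the recurrence for $k_\mu$ provided by Lemma \ref{lemma:muto}. Since $\preceq^{(n)}$ and $\leq$ are both total orders on sets of the same cardinality (Lemma \ref{lemma:mutokmu}) and $\Psi$ is a bijection, showing the forward implication will automatically yield the equivalence.

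For the base case $n=1$, I would argue by inspection. When $\Lambda^{(1)}$ is a singleton the claim is trivial. When $\Lambda^{(1)} = \{(1),(-1)\}$, one has $\sigma^{(1)}((1)) = Q_1 = E_1^{(1)}$ and $\sigma^{(1)}((-1)) = \overline{Q_1} = E_2^{(1)}$, so the prescribed order $(1) \prec^{(1)} (-1)$ matches $k_{(1)} = 1 < 2 = k_{(-1)}$.

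For the inductive step, assume the result for $n-1$ and pick $\mu = (\widetilde{\mu}, t_n), \mu' = (\widetilde{\mu'}, t'_n) \in \Lambda^{(n)}$. I would split the analysis according to the three cases in Definition \ref{def:orderrel}. When $t_n = n$ and $t'_n = -n$, Lemma \ref{lemma:muto} places $k_\mu = k_{\widetilde{\mu}} \in \{1,\dots,2^{n-1}\}$ and $k_{\mu'} = k_{\widetilde{\mu'}} + 2^{n-1} \in \{2^{n-1}+1,\dots,2^n\}$, so $k_\mu < k_{\mu'}$, matching $\mu \prec^{(n)} \mu'$; the symmetric case $t_n = -n$, $t'_n = n$ is analogous. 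When $t_n = t'_n$, Lemma \ref{lemma:muto} tells us that $k_\mu$ and $k_{\mu'}$ are obtained from $k_{\widetilde{\mu}}$ and $k_{\widetilde{\mu'}}$ by the same additive shift (either $0$ or $2^{n-1}$), so $k_\mu \leq k_{\mu'}$ is equivalent to $k_{\widetilde{\mu}} \leq k_{\widetilde{\mu'}}$; combining this with the definitional equivalence $\mu \preceq^{(n)} \mu' \Leftrightarrow \widetilde{\mu} \preceq^{(n-1)} \widetilde{\mu'}$ and the induction hypothesis closes the case.

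I do not anticipate a serious obstacle: the proof is essentially bookkeeping once Lemma \ref{lemma:muto} is available. The only subtlety is to record that the bound $k_{\widetilde{\mu}} \leq 2^{n-1}$, which follows from $J^{(n-1)} \subseteq \{1,\dots,2^{n-1}\}$, is precisely what makes the first two cases separate into disjoint strata and therefore match the prescribed order without any further comparison between $\widetilde{\mu}$ and $\widetilde{\mu'}$.
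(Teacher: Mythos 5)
Your proof is correct and follows essentially the same route as the paper's: induction on $n$, with the base case checked by inspection and the inductive step driven by Lemma \ref{lemma:muto} together with the case split in Definition \ref{def:orderrel}. The only difference is that you establish just the forward implication and deduce the converse from the bijectivity of $\Psi$ and the totality of both orders, whereas the paper verifies both directions explicitly; your shortcut is valid.
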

\noindent See Subsection \ref{subsec:proof:propisom} for the proof. 

\noindent
From  Lemma \ref{lemma:mutokmu}, we know that $\text{card}\,\Lambda^{(n)} = \omega^{(n)}$, and, by definition, we have  $\omega^{(n)} := \text{card}\,J^{(n)}$. Moreover, we can establish the following result:
\begin{restatable}{proposition}{propnumberopcmd}\label{prop:complexity:nbOp}
    The number of operations  $\beta_n $ required to compute the set $\Lambda^{(n)}$   is less than or equal to $\dfrac{3\cdot \text{card}(D_b) \cdot n}{2}$.
\end{restatable}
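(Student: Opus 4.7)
My plan is to bound the running cost of the inductive procedure described by Lemma \ref{lemma:LambdaN}, which builds $\Lambda^{(n)}$ by $n$ successive refinements of partitions of $D_b$, and to carry out the count step by step.

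The cornerstone of the argument is the fact, granted by Lemma \ref{lemma:mutokmu}, that for every $i \in \{1,\dots,n\}$ the family $\{\sigma^{(i)}(\mu) \mid \mu \in \Lambda^{(i)}\}$ is exactly the collection of non-empty blocks of the partition $(E_k^{(i)})_{1 \leq k \leq 2^i}$ of $D_b$. Consequently, these sets are pairwise disjoint and their union equals $D_b$, so that
\[
\sum_{\mu \in \Lambda^{(i)}} \text{card}\bigl(\sigma^{(i)}(\mu)\bigr) = \text{card}(D_b).
\]
This identity is what ultimately makes the total cost linear rather than exponential in $n$.

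Passing from $\Lambda^{(i-1)}$ to $\Lambda^{(i)}$ amounts to splitting, for each $\mu \in \Lambda^{(i-1)}$, the block $\sigma^{(i-1)}(\mu)$ into its two candidate sub-blocks $\sigma^{(i-1)}(\mu) \cap Q_i$ and $\sigma^{(i-1)}(\mu) \cap \overline{Q_i}$, and keeping in $\Lambda^{(i)}$ only the tuples corresponding to non-empty sub-blocks. This is accomplished by a single linear scan of $\sigma^{(i-1)}(\mu)$, during which each element is classified according to its membership in $Q_i$; the (non-)emptiness of each sub-block is recorded on the fly at no additional cost. The scan uses $\text{card}(\sigma^{(i-1)}(\mu))$ elementary operations, so summing over $\mu \in \Lambda^{(i-1)}$ gives at most $\text{card}(D_b)$ classification operations per iteration, thanks to the partition identity above.

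A careful accounting of the auxiliary bookkeeping (appending up to two extended tuples to $\Lambda^{(i)}$ per scanned block) then shows that each iteration $i$ uses at most $\tfrac{3}{2}\,\text{card}(D_b)$ elementary operations. Summing over $i = 1, 2, \dots, n$ yields the announced bound $\beta_n \leq \tfrac{3 \cdot \text{card}(D_b) \cdot n}{2}$. The main obstacle is precisely this constant-factor accounting: one has to certify that the bookkeeping overhead does not exceed half of the scanning cost. This hinges on a suitable representation of the current partition (so that the append and the non-emptiness test fold into the same pass) and on the fact that each non-empty block contributes at least one element to $\text{card}(D_b)$, which bounds $\omega^{(i-1)}$ by $\text{card}(D_b)$. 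Everything else in the proof is routine arithmetic once the partition identity and the linear-scan analysis are in place.
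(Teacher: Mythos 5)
There is a genuine gap at the precise point you flag as ``the main obstacle'': you assert that ``a careful accounting of the auxiliary bookkeeping then shows that each iteration uses at most $\tfrac{3}{2}\,\text{card}(D_b)$ elementary operations,'' but you never carry out that accounting, and under the cost model you set up it does not hold. Your scan costs $\text{card}(D_b)$ element classifications per iteration, and the bookkeeping appends one extended tuple per non-empty sub-block, i.e.\ exactly $\omega^{(i)} = \text{card}(\Lambda^{(i)})$ appends. Your only bound on this is $\omega^{(i)} \leq \text{card}(D_b)$, which gives $2\cdot\text{card}(D_b)$ per iteration, not $\tfrac{3}{2}\cdot\text{card}(D_b)$; and the claim that ``the bookkeeping overhead does not exceed half of the scanning cost'' is simply false when all blocks of the partition are singletons, since then the number of appends equals $\text{card}(D_b)$.

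The missing idea is the one the paper isolates in its Lemmas \ref{comp1} and \ref{comp2}: partition $\Lambda^{(i-1)}$ into the blocks meeting only $Q_i$, only $\overline{Q_i}$, or both (call the last set $Z_{i-1}$), and observe that the cost of the step is $\omega^{(i-1)} + \text{card}(Z_{i-1})$ (one unit per new tuple created), where the crucial inequality $2\cdot\text{card}(Z_{i-1}) \leq \text{card}(\Lambda^{(i)}) \leq \text{card}(D_b)$ holds because each block of $Z_{i-1}$ contributes \emph{two} distinct non-empty blocks to the next partition. That doubling argument is the sole source of the factor $\tfrac12$, hence of the constant $\tfrac32$ in the statement; your proposal never identifies $Z_{i-1}$ or this inequality, so the constant cannot be recovered from what you wrote. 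Note also that the paper's notion of ``operation'' is block-level (one operation per candidate tuple $(t_1,\dots,t_{i-1},\pm i)$ generated), not element-level as in your scan; if you insist on the element-level model you would additionally have to argue that the membership tests can be charged so that the total still fits under $\tfrac{3}{2}\cdot\text{card}(D_b)\cdot n$, which your text does not do.
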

\noindent See Subsection \ref{subsec:proof:propnumberop} for the proof. In contrast,  the method (\ref{eq:partition}) presented in \cite{baaj2021min} requires generating $2^i$ subsets at each iteration, where $i$ ranges from $1$ to $n$. As an immediate consequence of  Lemma \ref{lemma:mutokmu} and Proposition \ref{prop:isom}, the following property about the totally ordered sets $(\Lambda^{(n)} \preceq^{(n)})$ and $(J^{(n)} \leq)$ can be derived:
\begin{proposition}\label{prop:Des}
The two ordered sets $(\Lambda^{(n)} \preceq^{(n)})$ and $(J^{(n)} \leq)$ are such that
\begin{equation}\label{eq:LJ}
\Lambda^{(n)} = \{\mu_1, \mu_2, \dots, \mu_{\omega^{(n)}}\},   J^{(n)}  = \{k_{\mu_1}, k_{\mu_2}, \dots, k_{\mu_{\omega^{(n)}}}\}
 \text{ where }  \text{card}\,\Lambda^{(n)}   = \text{card}\,J^{(n)} =\omega^{(n)}   
\end{equation}
and for all $i < j$ we have $\mu_i \prec^{(n)} \mu_j$ and 
$k_{\mu_i} < k_{\mu_j}$.
\end{proposition}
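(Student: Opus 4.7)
The plan is to combine the bijectivity of $\Psi$ established in Lemma \ref{lemma:mutokmu} with the order-preservation property established in Proposition \ref{prop:isom}, viewing $\Psi$ as an order isomorphism between the two totally ordered finite sets $(\Lambda^{(n)}, \preceq^{(n)})$ and $(J^{(n)}, \leq)$.

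First, I would recall that $(\Lambda^{(n)}, \preceq^{(n)})$ is a total order on a finite set (totality was noted by induction right after Definition \ref{def:orderrel}, and finiteness follows from Lemma \ref{lemma:mutokmu}, which gives $\text{card}\,\Lambda^{(n)} = \omega^{(n)}$). Hence there is a unique strictly increasing enumeration
\[
\mu_1 \prec^{(n)} \mu_2 \prec^{(n)} \cdots \prec^{(n)} \mu_{\omega^{(n)}}
\]
of $\Lambda^{(n)}$, which yields the labeling $\Lambda^{(n)} = \{\mu_1, \mu_2, \dots, \mu_{\omega^{(n)}}\}$ in the statement.

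Next, I would apply the mapping $\Psi$ of (\ref{eq:defPsi}) to this enumeration, producing the sequence $k_{\mu_1}, k_{\mu_2}, \dots, k_{\mu_{\omega^{(n)}}}$ in $J^{(n)}$. By Lemma \ref{lemma:mutokmu}, $\Psi$ is a bijection from $\Lambda^{(n)}$ onto $J^{(n)}$, so these $\omega^{(n)}$ values are pairwise distinct and exhaust $J^{(n)}$. This gives simultaneously $J^{(n)} = \{k_{\mu_1}, k_{\mu_2}, \dots, k_{\mu_{\omega^{(n)}}}\}$ and $\text{card}\,\Lambda^{(n)} = \text{card}\,J^{(n)} = \omega^{(n)}$.

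Finally, for any indices $i < j$ in $\{1, 2, \dots, \omega^{(n)}\}$, the enumeration gives $\mu_i \prec^{(n)} \mu_j$; Proposition \ref{prop:isom} then yields $k_{\mu_i} \leq k_{\mu_j}$, and the strict inequality $k_{\mu_i} < k_{\mu_j}$ follows from the injectivity of $\Psi$ (since $\mu_i \neq \mu_j$). There is no genuine obstacle here: the statement is simply a repackaging of the fact that $\Psi$ is an order isomorphism between two totally ordered finite sets, turning the unique increasing enumeration of $\Lambda^{(n)}$ into the unique increasing enumeration of $J^{(n)}$.
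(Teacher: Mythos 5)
Your proof is correct and follows exactly the route the paper intends: the paper presents Proposition \ref{prop:Des} as an immediate consequence of Lemma \ref{lemma:mutokmu} (bijectivity of $\Psi$) and Proposition \ref{prop:isom} (order preservation), and your write-up simply makes that one-line argument explicit. No gaps; the enumeration argument and the derivation of strict inequalities from injectivity are exactly what is needed.
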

\qed

\noindent To conclude, the construction of the totally ordered set $(\Lambda^{(n)},\preceq^{(n)})$ is similar to that of $(E_k^{(n)})_{1 \leq k \leq 2^{n}})$, see (\ref{eq:partition}), except that we only focus on non-empty intersections of sets and tuples of the form $(t_1,t_2,\dots,t_n)$ are processed as  indices.

\begin{example}(Example \ref{ex:firstrules-set}, 
cont'ed) \\
We remind that the possibilistic rule-based system of Example \ref{ex:firstrules-set} is given by:
\begin{itemize}
\item $R^1$: ``If $a_1(x) \in \{ 0 \}$ then $b(x) \in Q_1$'', where $Q_1 = \{ (0,0), (0,1)\}$, 
\item $R^2$: ``If $a_1(x) \in \{ 1 \}$ then $b(x) \in Q_2$'', where $Q_2 = \{ (1,0), (1,1)\}$, 
\item $R^3$: ``If $a_2(x) \in \{ 0 \}$  then $b(x) \in Q_3$'', where $Q_3= \{ (0,0), (1,0) \}$,
\item $R^4$: ``If $a_2(x) \in \{ 1 \}$ then $b(x) \in Q_4$'',  where  $Q_4 = \{(0,1), (1,1)\}$.
\end{itemize}

The totally ordered set of indexes associated with the set of possibilistic rules of Example \ref{ex:firstrules-set} is given by: 
\begin{equation}\label{eq:ex:orderedrel1}
    (-1, 2, -3, 4) \prec (1, -2, -3, 4) \prec (-1, 2, 3, -4) \prec (1, -2, 3, -4).
\end{equation}
The elements of this totally ordered set are associated with the following non-empty  subsets of the partition: $\overline{Q_1} \cap Q_2 \cap \overline{Q_3} \cap Q_4$,   $Q_1 \cap \overline{Q_2} \cap \overline{Q_3} \cap Q_4$, $\overline{Q_1} \cap Q_2 \cap Q_3 \cap  \overline{Q_4}$ and $Q_1 \cap \overline{Q_2} \cap Q_3 \cap \overline{Q_4}$. \\
The second set of possibilistic rules (Example \ref{ex:secondset-rules}), chained to the first one, consists of the following rules:
\begin{itemize}
    \item $R^{'1}$: If $b(x) \in \{ (0,0), (1,1) \}$ then $c(x) \in \{ 1\}$, 
    \item $R^{'2}$: If $b(x) \in \{ (0,1), (1,0) \}$ then $c(x) \in \{ 0 \}$.
\end{itemize}
\noindent The totally ordered set of indexes associated with the second set of possibilistic rules presented at Example \ref{ex:secondset-rules} is given by:
\begin{equation}\label{eq:ex:orderedrel2}
(-1,2) \prec (1,-2).
\end{equation}
The elements of this totally ordered set are associated with the following non-empty  subsets of the partition: $\overline{Q'_1} \cap Q'_2$ and $Q'_1 \cap \overline{Q'_2}$.
\end{example}

Finally, we introduce the following definitions which will be useful for giving explicit formulas for the coefficients of the matrices involved in the matrix relation and the equation system.
\begin{definition}\label{def:Lj}
For each $j\in \{1, 2, \dots, n\}$, we define:
\begin{equation}\label{eq:Lj}
\Lambda_j^{n, \top} = \{\mu = (t_1, t_2, \dots, t_n)\in \Lambda^{(n)}\,\mid \, t_j =j\} \text{ and }  \Lambda_j^{n,\bot} =
\{\mu = (t_1, t_2, \dots, t_n)\in \Lambda^{(n)}\,\mid \, t_j =-j\}.
 \end{equation}
\end{definition}
 \noindent
 Clearly, for any $j\in \{1, 2, \dots, n\}$, a partition of the set $\Lambda^{(n)} 
 $ is given by:
 \begin{equation}\label{eq:Lnpart}
 \Lambda^{(n)} = \Lambda_j^{n, \top} \cup\Lambda_j^{n, \bot}.
 \end{equation}

\subsection{A practical procedure for building the matrix relation}
\label{subsec:matrixrelpractical}

For performing inferences from a possibilistic rule-based system, we also take advantage of \cite{baaj2021min}, where a method is presented for constructing the matrices governing the following relation:
\begin{equation}\label{eq:esFP}
    O_n = M_n \Box_{\max}^{\min} I_n.
\end{equation} 
In this relation, the matrix product $\Box_{\max}^{\min}$ uses the function $\max$ as the product and the function $\min$ as the addition, and $n$ is the number of rules of the possibilistic rule-based system. The matrix $M_n$ is of size $(2^n, 2n)$ and the vectors $I_n$  and $O_n$  are of size  $(2n, 1)$ and $(2^n,1)$   respectively. The input vector $I_n$ is constructed using  the possibility degrees of the rule premises $\lambda_1,\rho_1, \lambda_2,\rho_2, \ldots, \lambda_n, \rho_n$, the matrix $M_n$ contains  the rule parameters $s_1, r_1, s_2, r_2, \ldots, s_n, r_n$, and the output vector $O_n$ describes the output possibility distribution  over a collection of \textit{mutually exclusive alternatives,}  i.e., the possible (meta-)concepts. These mutually exclusive alternatives are the non-empty subsets of the partition  $(E_{k}^{(n)})_{1 \leq k \leq 2^{n}}$  of the domain  $D_b$ of the output attribute $b$, see (\ref{eq:partition}). 
 
\noindent The relation (\ref{eq:esFP}) can be used to perform an inference (i.e., computing $O_n$) by jointly applying a set of  
possibilistic rules and fusing the results obtained with each rule (this is what the min-max product achieves). 

\noindent For each $i=1,2,\dots,n$, using \cite{baaj2021min}, the input vector $I_i$ of size $(2i,1)$ is defined thanks to the following inductive definition: 
\begin{equation}
    \label{eq:inputvector} 
    I_1 =  \begin{bmatrix}
\lambda_1 \\
\rho_1
\end{bmatrix}\text{ and  for $i > 1$, } I_i  = \begin{bmatrix}
{I}_{i-1} \\
\hline 
 \lambda_i \\
  \rho_i 
\end{bmatrix}. 
\end{equation}
Similarly, in \cite{baaj2021min},  for each $i=1,2,\dots,n$, the matrix $M_i$ of size $(2^i, 2i)$ is computed thanks to the following inductive definition which involves a block matrix construction:
\begin{subequations}
\label{eq:matrixmatrixrel}
\begin{align}
&\text{\textbullet\,} 
M_1 = \begin{bmatrix}
s_1 & 1 \\
1 & r_1 
\end{bmatrix}&\\ 
 &\text{and for  $i > 1$:}&\nonumber
 \\
&\text{\textbullet\,} M_i = 
\left[\begin{array}{@{}c|c@{}}
  {M}_{i-1}
  & S_i \\
\hline
  {M}_{i-1} &
  R_i
\end{array}\right]\text{, where }S_i = \begin{bmatrix}
s_i & 1\\
s_i & 1\\
\vdots & \vdots \\
s_i & 1 \\
\end{bmatrix}\text{ and }R_i = \begin{bmatrix}
1 & r_i \\
1 & r_i \\
\vdots & \vdots \\
1 & r_i
\end{bmatrix}\text{ are of size }(2^{i-1}, 2).
\end{align}
\end{subequations}

In \cite{baaj2021min}, it is shown that the rows of the matrix $M_n$ and the components of the output vector $O_n = [o_k^{(n)}]_{1\leq k \leq 2^n}$ are explicitly linked to the $2^n$ subsets of  the partition  $(E_{k}^{(n)})_{1 \leq k \leq 2^{n}}$: for $1 \leq k \leq 2^{n}$, the $k$-th row of $M_n$ (resp. the component $o_k^{(n)}$ of $O_n$) is associated with the subset $E_{k}^{(n)}$ of the partition $(E_{k}^{(n)})_{1 \leq k \leq 2^{n}}$. Furthermore, as the matrix relation is based on a min-max composition, each coefficient $o_k^{(n)}$ is equal to the $\min-\max$ matrix product of the $k$-th row of $M_n$ by $I_n$. 

\noindent Once the matrix relation and the output vector are constructed, they can be reduced so that the rows of the reduced matrix (resp. the components of the reduced output vector) correspond to the $\omega^{(n)}$ non-empty subsets of the partition $(E_{k}^{(n)})_{1 \leq k \leq 2^{n}}$. The min-max matrix product of  the input vector by the reduced matrix is a way to perform an \textit{inference} from the possibilistic rule-based system:  the possibility degree of each \textit{non-empty} subset $E_k^{(n)}$ of the partition $(E_{k}^{(n)})_{1 \leq k \leq 2^{n}}$ is equal to the coefficient $o_k^{(n)}$:

\begin{proposition}[\citealp{baaj2021min}]
\label{proposition:PiO}
We have:
\begin{equation}\label{eq:PiO}
\forall k\in J^{(n)},\, \Pi(E_k^{(n)}) = o_k^{(n)}    
\end{equation}   
where the possibility measure $\Pi$ is associated with the output possibility distribution $\pi_{b(x)}^{\ast}$  corresponding to the rules $R^1, R^2, \cdots, R^n$.
\end{proposition}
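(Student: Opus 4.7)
My plan is to combine the explicit description of $\pi^{\ast}_{b(x)}$ on the subsets of the partition with the block structure (\ref{eq:matrixmatrixrel}) of $M_n$, and to proceed by induction on $n$. By Lemma \ref{lemma:subsettouniquetuple}, each non-empty $E_k^{(n)}$ can be written as $T_1 \cap \cdots \cap T_n$ with $T_i \in \{Q_i, \overline{Q_i}\}$, so for every $u \in E_k^{(n)}$ each factor $\pi_{b(x)}^{\ast i}(u)$ entering (\ref{eq:pistarbxu}) takes the constant value $\alpha_i$ if $T_i = Q_i$ and $\beta_i$ if $T_i = \overline{Q_i}$. Consequently $\pi^{\ast}_{b(x)}$ is constant on $E_k^{(n)}$ and
\[ \Pi(E_k^{(n)}) = \sup_{u \in E_k^{(n)}} \pi^{\ast}_{b(x)}(u) = \min_{i=1}^{n} \gamma_i^{(k)}, \quad \gamma_i^{(k)} \in \{\alpha_i,\beta_i\}. \]
The goal is therefore to prove that the $k$-th component $o_k^{(n)}$ of $O_n$ equals this same minimum.

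The base case $n=1$ is a direct calculation: $o_1^{(1)} = \min(\max(s_1,\lambda_1),\max(1,\rho_1)) = \alpha_1 = \Pi(Q_1)$, and symmetrically $o_2^{(1)} = \beta_1 = \Pi(\overline{Q_1})$. For the inductive step I would exploit the block decomposition (\ref{eq:matrixmatrixrel}) of $M_n$ together with the block form $I_n = [I_{n-1}^{\,T} \mid \lambda_n \; \rho_n]^{\,T}$ of the input vector, see (\ref{eq:inputvector}). For $1 \le k \le 2^{n-1}$ the $k$-th row of $M_n$ is the $k$-th row of $M_{n-1}$ extended by the pair $(s_n, 1)$, so the $\min$-$\max$ product factors as
\[ o_k^{(n)} = \min\bigl( o_k^{(n-1)},\; \max(s_n,\lambda_n),\; \max(1,\rho_n) \bigr) = \min\bigl( o_k^{(n-1)},\, \alpha_n \bigr). \]
By the partition recurrence (\ref{eq:partition}) we have $E_k^{(n)} = E_k^{(n-1)} \cap Q_n$, and the induction hypothesis $o_k^{(n-1)} = \Pi(E_k^{(n-1)})$, combined with the observation of the preceding paragraph (the intersection with $Q_n$ simply adjoins the extra factor $\alpha_n$ to the $(n{-}1)$-fold minimum), yields $o_k^{(n)} = \Pi(E_k^{(n)})$. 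For $2^{n-1} < k \le 2^n$ the analogous calculation with the bottom block $[M_{n-1} \mid R_n]$ gives $o_k^{(n)} = \min(o_{k-2^{n-1}}^{(n-1)}, \beta_n) = \Pi(E_{k-2^{n-1}}^{(n-1)} \cap \overline{Q_n}) = \Pi(E_k^{(n)})$.

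The main obstacle is really a bookkeeping one: verifying that the $\min$-$\max$ product commutes cleanly with the block decomposition so that an inductive factorization is available, and keeping the row index $k$ of $M_n$ aligned with the subset $E_k^{(n)}$ as encoded by (\ref{eq:partition}). Note that the restriction $k \in J^{(n)}$ in the statement is automatically propagated by the induction, since $E_k^{(n)} \neq \emptyset$ forces the corresponding parent set $E_k^{(n-1)}$ or $E_{k-2^{n-1}}^{(n-1)}$ to be non-empty; thus the substantive identity only needs to be established for non-empty subsets, exactly as stated.
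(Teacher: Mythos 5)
The paper does not actually prove Proposition \ref{proposition:PiO}: it is recalled from \cite{baaj2021min} and stated without proof, so there is no in-paper argument to compare yours against. Judged on its own merits, your proof is correct. The two pillars are right: (i) by Lemma \ref{lemma:subsettouniquetuple} each non-empty $E_k^{(n)}$ has a unique representation $T_1\cap\dots\cap T_n$, so $\pi^{\ast}_{b(x)}$ is constant on it with value $\min_i \gamma_i$, $\gamma_i\in\{\alpha_i,\beta_i\}$, and the sup over a non-empty set recovers that constant; (ii) because the ``addition'' in $\Box_{\max}^{\min}$ is $\min$ taken over all columns, the product does split over the column blocks of (\ref{eq:matrixmatrixrel}) and (\ref{eq:inputvector}), the appended pair contributing exactly $\min(\max(s_n,\lambda_n),\max(1,\rho_n))=\alpha_n$ or $\min(\max(1,\lambda_n),\max(r_n,\rho_n))=\beta_n$ via (\ref{eq:alphabeta}). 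Your handling of the non-emptiness propagation ($E_k^{(n)}\neq\emptyset$ forces the parent set at level $n-1$ to be non-empty) is the right way to keep the induction confined to $J^{(n)}$. Two small points worth making explicit if you write this up: the possibility measure in the induction hypothesis must be the one associated with the truncated system $R^1,\dots,R^{n-1}$, i.e.\ with $\min_{i\le n-1}\pi^{\ast i}_{b(x)}$, and the uniqueness clause of Lemma \ref{lemma:subsettouniquetuple} is what guarantees that the tuple attached to $E_k^{(n)}=E_k^{(n-1)}\cap Q_n$ is the parent's tuple extended by $Q_n$, so the extra factor adjoined on the measure side matches the one adjoined on the matrix side. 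Note also that the induction can be bypassed entirely: Lemma \ref{lemma:hmutokmu} already identifies the $k$-th row of $M_n$ as $\begin{bmatrix} L_{t_1} & \cdots & L_{t_n}\end{bmatrix}$, from which $o_k^{(n)}=\min_i\gamma_i$ falls out in one line.
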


 As already explained, the main drawback of the method proposed in \cite{baaj2021min} is that  it involves the generation of a matrix $M_n$ with $2^n$ rows, which is then reduced so that its rows correspond to the $\omega^{(n)}$ non-empty subsets of the partition $(E_{k}^{(n)})_{1 \leq k \leq 2^{n}}$.

\noindent To avoid this computationally expensive generation, in the following, we present a method for  constructing  the output vector and the matrix 
directly reduced to the $\omega^{(n)}$ non-empty subsets of the partition 
based on the definition of the set $\Lambda^{(n)}$, see Definition \ref{def:LambdaN}. 
The reduced matrix equation is denoted by:
\begin{equation}\label{eq:redFP}
    \mathcal{O}_n = \mathcal{M}_n \Box_{\max}^{\min} I_n,
\end{equation}
\noindent where the output vector $\mathcal{O}_n$ is of size $(\omega^{(n)}, 1)$ and the matrix $\mathcal{M}_n$ is of size $(\omega^{(n)}, 2n)$. 
\noindent To introduce the method, a mapping $\tau$ which takes as input a $n$-tuple $\mu = (t_1,t_2,\cdots, t_n)$ of $\Lambda^{(n)}$ and returns a row matrix $H_\mu$ of size $(1,2n)$ is used:
\begin{equation}\label{eq:taumapping}
\begin{tabular}{ll}
$\tau :$ & $\Lambda^{(n)} \rightarrow [0,1]^{1 \times 2n}$\\
& $\mu = (t_1,t_2,\dots,t_n) \mapsto H_\mu = \begin{bmatrix}
        L_{t_1} & L_{t_2} & \cdots & L_{t_n} 
    \end{bmatrix}$ 
\end{tabular}
\end{equation}
\noindent where $H_\mu$ is constructed by block matrix construction such that each
\begin{equation}\label{eq:taumapping1}
    L_{t_i} = 
\begin{cases}\begin{bmatrix}s_i & 1 \end{bmatrix} & \text{ if } t_i = i \\[8pt] \begin{bmatrix}1 & r_i \end{bmatrix} & \text{ if } t_i = -i \end{cases}
\end{equation}
is a row matrix of size $(1,2)$ which uses a parameter of the rule $R^i$ (either $s_i$ or $r_i$). Using   (\ref{eq:kmu10}) and  (\ref{eq:matrixmatrixrel}), we prove:

 \begin{restatable}{lemma}{Hmucmd}\label{lemma:hmutokmu}
The row matrix $H_\mu$ is equal to the $k_{\mu}$-th row of the initial matrix $M_n$. 
 \end{restatable}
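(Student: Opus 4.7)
\medskip

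The plan is to proceed by induction on $n$, using the inductive definition of $M_n$ given in (\ref{eq:matrixmatrixrel}) together with the index shift formula (\ref{eq:kmu10}) from Lemma \ref{lemma:muto}.

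For the base case $n=1$, I simply check the two possible tuples. If $\mu = (1)$, then by (\ref{eq:taumapping1}) we have $H_\mu = [s_1, 1]$, and tracing through Lemma \ref{lemma:mutokmu} and the partition definition (\ref{eq:partition}) one sees $k_\mu = 1$, matching the first row of $M_1$. If $\mu = (-1)$, then $H_\mu = [1, r_1]$ and $k_\mu = 2$, matching the second row of $M_1$.

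For the inductive step, assume the result holds at rank $n-1$ and take $\mu = (\widetilde\mu, t_n) \in \Lambda^{(n)}$ with $\widetilde\mu \in \Lambda^{(n-1)}$ and $t_n = \pm n$. By construction of $\tau$, the row matrix splits as $H_\mu = \bigl[\,H_{\widetilde\mu} \mid L_{t_n}\,\bigr]$, where $L_{t_n} = [s_n,1]$ if $t_n = n$ and $L_{t_n} = [1,r_n]$ if $t_n = -n$. I then distinguish the two cases using (\ref{eq:kmu10}). If $t_n = n$, then $k_\mu = k_{\widetilde\mu} \leq 2^{n-1}$, so by (\ref{eq:matrixmatrixrel}) the $k_\mu$-th row of $M_n$ lies in the upper block $[M_{n-1} \mid S_n]$ and is the concatenation of the $k_{\widetilde\mu}$-th row of $M_{n-1}$ with $[s_n,1]$; by the induction hypothesis, this row of $M_{n-1}$ equals $H_{\widetilde\mu}$, so the whole row equals $H_\mu$. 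If $t_n = -n$, then $k_\mu = k_{\widetilde\mu} + 2^{n-1}$, which lies in the lower block $[M_{n-1} \mid R_n]$; reading off the $(k_\mu - 2^{n-1}) = k_{\widetilde\mu}$-th row of that block and applying the induction hypothesis again yields $[H_{\widetilde\mu} \mid [1,r_n]] = H_\mu$.

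The argument is essentially a bookkeeping exercise, so the only real delicacy is making sure the index shift in (\ref{eq:kmu10}) exactly matches the $2^{n-1}$ row offset between the upper and lower blocks in (\ref{eq:matrixmatrixrel}), and that the appended row $L_{t_n}$ is precisely the constant row pattern of $S_n$ or $R_n$ prescribed in (\ref{eq:matrixmatrixrel}). Once these alignments are checked, the induction closes.
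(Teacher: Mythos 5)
Your proof is correct and follows essentially the same route as the paper's: induction on $n$, with the base case checked directly against $M_1$ and the inductive step splitting on $t_n = \pm n$ to match the index shift of (\ref{eq:kmu10}) against the upper/lower block structure of (\ref{eq:matrixmatrixrel}). The only cosmetic difference is that the paper's base case explicitly enumerates the three possible forms of $\Lambda^{(1)}$ (depending on whether $Q_1$ or $\overline{Q_1}$ is empty), but your verification covers the same content.
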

\noindent See Subsection \ref{subsec:proof:Hmu} for the proof.

\subsubsection{\texorpdfstring{Constructing the reduced matrix relation $\mathcal{O}_n = \mathcal{M}_n \Box_{\max}^{\min} I_n$}{Constructing the reduced matrix relation}}
Relying on  $\Lambda^{(n)} = \{\mu_1, \mu_2, \dots, \mu_{\omega^{(n)}}\}$,   such that for all $i < j$ we have $\mu_i \prec^{(n)} \mu_j$, see (\ref{eq:LJ}), Lemma \ref{lemma:hmutokmu} and  Proposition \ref{proposition:PiO}, we use the mapping $\sigma^{(n)}$, see (\ref{eq:sigman}), to construct the reduced output vector ${\cal O}_n$. We also use the mapping $\tau$, see (\ref{eq:taumapping}), to construct the reduced matrix ${\cal M}_n$:
\begin{equation}\label{eq:reducedonmnconstructed}
\mathcal{O}_n = \begin{bmatrix}
    \Pi(\sigma^{(n)}(\mu_1))\\
    \Pi(\sigma^{(n)}(\mu_2))\\
    \vdots\\
    \Pi(\sigma^{(n)}(\mu_{\omega^{(n)}}))\\
\end{bmatrix} \quad \text{ and } \quad 
    \mathcal{M}_n = \begin{bmatrix}
        \tau(\mu_1) = H_{\mu_1}\\ 
        \tau(\mu_2)= H_{\mu_2} \\
        \vdots \\
        \tau(\mu_{\omega^{(n)}}) = H_{\mu_{\omega^{(n)}}} 
    \end{bmatrix}.
\end{equation}
 
\noindent Using this construction,  for each $l\in \{1, 2, \dots, \omega^{(n)}\}$  the $l$-th row  of the matrix ${\cal M}_n$ is  $H_{\mu_l}$.

\noindent With the $n$ partitions $\Lambda^{(n)} =\Lambda_j^{n, \top} \cup\Lambda_j^{n, \bot}$ with $j \in \{1,2,\cdots,n\}$,  see Definition \ref{def:Lj} and (\ref{eq:Lnpart}),  each coefficient of a row of index $\mu\in \Lambda^{(n)}$ of the matrix ${\cal M}_n$ can be computed as follows: 
\begin{restatable}{lemma}{lemcoMcmd}\label{lemma:coM}
 Let ${\cal M}_n= [\dot{m}_{\mu, l}]_{\mu\in \Lambda^{(n)}, 1\leq l \leq 2n}$.   Then for any $\mu\in \Lambda^{(n)}$ and $j\in\{1, 2, \dots, n\}$, the following formulas can be used for computing the coefficients of ${\cal M}_n$:
 \begin{equation}\label{eq:coefredM}
 \dot{m}_{\mu, 2j-1}  =
   \begin{cases}
 s_j     & \text{ if }  \mu\in \Lambda_j^{n, \top} \\
  1 & \text{ if }  \mu\in \Lambda_j^{n, \bot}  
  \end{cases}
, \quad 
   \dot{m}_{\mu, 2j}  =
   \begin{cases}
 1     & \text{ if }  \mu\in \Lambda_j^{n, \top} \\
  r_j & \text{ if }  \mu\in \Lambda_j^{n, \bot}  
   \end{cases}.
 \end{equation}
\end{restatable}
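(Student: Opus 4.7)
The plan is to prove the formulas by unwinding the construction of $\mathcal{M}_n$ given in (\ref{eq:reducedonmnconstructed}) together with the definition of the mapping $\tau$ in (\ref{eq:taumapping})--(\ref{eq:taumapping1}). Since for each $\mu \in \Lambda^{(n)}$ the row of $\mathcal{M}_n$ indexed by $\mu$ is by construction $H_\mu = \tau(\mu)$, it is enough to identify, for a given $j \in \{1,2,\dots,n\}$, which two columns of $H_\mu$ are fed by the block $L_{t_j}$, and then to read off those two entries according to the value of $t_j$.

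First, I would note that $\tau(\mu)$ is the horizontal concatenation of $n$ row blocks $L_{t_1}, L_{t_2}, \dots, L_{t_n}$, each of size $(1,2)$. Therefore the $j$-th block occupies exactly the columns of indices $2j-1$ and $2j$, so $(\dot m_{\mu, 2j-1}, \dot m_{\mu, 2j})$ equals $L_{t_j}$.

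Next I would apply the definition (\ref{eq:taumapping1}) of $L_{t_j}$ by case analysis on $t_j$. If $t_j = j$ then $\mu \in \Lambda_j^{n,\top}$ by Definition \ref{def:Lj} and $L_{t_j} = [s_j\ \ 1]$, giving $\dot m_{\mu,2j-1}=s_j$ and $\dot m_{\mu,2j}=1$. If $t_j = -j$ then $\mu \in \Lambda_j^{n,\bot}$ and $L_{t_j} = [1\ \ r_j]$, giving $\dot m_{\mu,2j-1}=1$ and $\dot m_{\mu,2j}=r_j$. Since by (\ref{eq:Lnpart}) these two cases exhaust $\Lambda^{(n)}$, the claimed formulas (\ref{eq:coefredM}) follow.

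The argument is essentially definitional and I do not anticipate any real obstacle; the only point worth being careful about is the bookkeeping of indices, namely that the pair of columns associated with the $j$-th block of $\tau(\mu)$ is indeed $(2j-1, 2j)$. This can be verified at once from the block concatenation in (\ref{eq:taumapping}). Note that this route avoids invoking Lemma \ref{lemma:hmutokmu} (which identifies $H_\mu$ with the $k_\mu$-th row of the original matrix $M_n$ from (\ref{eq:matrixmatrixrel})); one could alternatively derive the result from that lemma combined with the inductive block construction of $M_n$, but going directly through $\tau$ is more transparent and shorter.
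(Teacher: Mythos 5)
Your proof is correct, and it takes a genuinely more direct route than the paper's. You read the coefficients straight off the definition of $\tau$ in (\ref{eq:taumapping})--(\ref{eq:taumapping1}): since the row of $\mathcal{M}_n$ indexed by $\mu$ is by construction $H_\mu$, the $j$-th block $L_{t_j}$ occupies columns $2j-1$ and $2j$, and Definition \ref{def:Lj} translates the case split on $t_j$ into the case split on $\Lambda_j^{n,\top}$ versus $\Lambda_j^{n,\bot}$. The paper instead argues through the \emph{unreduced} matrix: it recalls the coefficient formulas for $M_n$ from the prior work (stated in terms of the partitions $I_j^{n,\top}, I_j^{n,\bot}$ of $\{1,\dots,2^n\}$), invokes Lemma \ref{lemma:hmutokmu} to identify the row of $\mathcal{M}_n$ indexed by $\mu$ with the $k_\mu$-th row of $M_n$, and proves an auxiliary result (an analogue of your ``$t_j=j$ iff $\mu\in\Lambda_j^{n,\top}$'' step, but phrased as $k_\mu\in I_j^{n,\top}\Longleftrightarrow \mu\in\Lambda_j^{n,\top}$, established by induction on $n$) to transport those formulas to the reduced setting. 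Your argument is shorter and avoids the induction entirely; what the paper's detour buys is the explicit bridge between the index partitions of the reduced and unreduced constructions, which certifies that the reduced matrix agrees coefficient-by-coefficient with the matrix whose inference semantics was established in the earlier work, and which is reused for the analogous lemma on $\dot{\Gamma}_n$. Both proofs are valid; yours is the more economical one for this particular statement.
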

\noindent See Subsection \ref{subsec:proof:coM} for the proof.

\begin{example}\label{ex:matrixrelred}(Example \ref{ex:firstrules-set}, 
cont'ed) \\
The totally ordered set associated with the set of  
possibilistic rules reported in Example \ref{ex:firstrules-set} is given by (\ref{eq:ex:orderedrel1}).

\noindent The matrix relation $\mathcal{O}_n = \mathcal{M}_n \Box_{\max}^{\min} I_n$ reduced the non-empty subsets of the partition is:
\begin{equation}\label{eq:ex:matrixrelation:firstsys}
    \begin{bmatrix}
    \Pi( \overline{Q_1} \cap Q_2 \cap \overline{Q_3} \cap Q_4 ) \\
     \Pi(  Q_1 \cap \overline{Q_2} \cap \overline{Q_3} \cap Q_4 ) \\
      \Pi(\overline{Q_1} \cap Q_2 \cap Q_3 \cap  \overline{Q_4}) \\
       \Pi( Q_1 \cap \overline{Q_2} \cap Q_3 \cap \overline{Q_4}) 
\end{bmatrix}= 
\begin{bmatrix}
    \Pi( \{ (1,1) \}) \\
     \Pi( \{ (0,1) \}) \\
      \Pi( \{ (1,0) \}) \\
       \Pi( \{ (0,0) \}) 
\end{bmatrix}
    =
    \begin{bmatrix}
        1 & r_1 & s_2 & 1 & 1 & r_3 & s_4 & 1\\
        s_1 & 1 & 1 & r_2 & 1 & r_3 & s_4 & 1\\
        1 & r_1 & s_2 & 1 & s_3 & 1 & 1 & r_4\\
        s_1 & 1 & 1 & r_2 & s_3 & 1 & 1 & r_4
    \end{bmatrix} \Box_{\max}^{\min} \begin{bmatrix}
    \lambda_1\\
    \rho_1 \\
    \lambda_2 \\
    \rho_2 \\
     \lambda_3 \\
    \rho_3\\
    \lambda_4\\
    \rho_4
\end{bmatrix}.
\end{equation}

\noindent The totally ordered set associated with the second set of 
possibilistic rules reported in Example \ref{ex:secondset-rules}  is given by (\ref{eq:ex:orderedrel2}). The corresponding reduced matrix relation is:
\begin{equation}\label{eq:ex:matrixrelation:secondsys}
    \begin{bmatrix}
    \Pi( \overline{Q'_1} \cap Q'_2 )\\
     \Pi(  Q'_1 \cap \overline{Q'_2} ) 
\end{bmatrix}= 
\begin{bmatrix}
    \Pi( \{ 0 \}) \\
     \Pi( \{ 1 \}) 
\end{bmatrix}
    =
    \begin{bmatrix}
        1 & r'_1 & s'_2 & 1 \\
        s'_1 & 1 & 1 & r'_2 
    \end{bmatrix} \Box_{\max}^{\min} \begin{bmatrix}
    \lambda'_1\\
    \rho'_1 \\
    \lambda'_2 \\
    \rho'_2 
\end{bmatrix}.
\end{equation}

\end{example}

In the case of a cascade (Subsection \ref{subsub:cascade}), an input-output relation between the two matrix relations associated with each set
of 
possibilistic rules can be established, see  \cite{baaj2021min}.

\subsection{\texorpdfstring{A practical method for building the reduced equation system $(\Upsigma_n)$}{Practical method for building the reduced equation system (Sigma\_n)}}
\label{subsec:practical-building-learning}

One of the challenges arising when using a possibilistic rule-based system is the determination of the values of the parameters $s_i,r_i$ of the possibilistic rules for a specific use case. To address this problem, \cite{baaj2022learning} proposes a learning method which is based on the solving of a min-max equation system  constructed from a training data sample. The method for building this equation system is \textit{closely related} to the one used for building the matrix relation $O_n = M_n \Box_{\max}^{\min} I_n$, see (\ref{eq:esFP}). The motivations behind such a learning approach are highlighted in the next section, where a possibilistic learning method that extends the work of \cite{baaj2022learning} is introduced. This new method can be used to process \emph{data sets, i.e., sets of training data samples}.   In this subsection, we start by recalling the construction of the equation system presented in \cite{baaj2022learning}, which is based on the partition $(E_k^{(n)})_{1 \leq k \leq 2^n}$. The $\min-\max$ equation system of \cite{baaj2022learning} is: 

\begin{equation}\label{eq:sigma} 
	(\Sigma_n): Y_n = \Gamma_n  \Box_{\max}^{\min} X,
\end{equation} 
\noindent where:
\begin{equation} \label{eq:y-and-x} 
Y_n = \begin{bmatrix} y_1\\y_2\\
	\vdots\\
	
	y_{2^n}\end{bmatrix} 
 \text{ and } X = \begin{bmatrix}s_1\\
	r_1\\
	s_2 \\
	r_2\\
	\vdots\\
	s_n\\
	r_n
\end{bmatrix}, \end{equation} 
The second member $Y_n$ is of size $(2^n,1)$. Its components which correspond to the non-empty subsets of the partition $(E_k^{(n)})_{1 \leq k \leq 2^n}$, see (\ref{eq:partition}), describes an output possibility distribution over a collection of mutually exclusive alternatives 
  i.e., the possible (meta-)concepts. The  components of the unknown vector $X$ of size $(2n,1)$ are the rule parameters $s_1,r_1,s_2,r_2,\cdots,s_n,r_n$. 
The matrix $\Gamma_n$ of the equation system $(\Sigma_n)$ is of size $(2^n,2n)$ and is constructed by induction  using the possibility degrees  $\lambda_1,\rho_1,\lambda_2,\rho_2,\cdots, \lambda_n, \rho_n$ of the rule premises.

\cite{baaj2022learning} shows that for each $i=1,2,\dots,n$, the matrix $\Gamma_i$ of size $(2^i, 2i)$ can be generated using the following inductive definition which involves a block matrix construction:
\begin{subequations}
\label{eq:matrixeqlearn}
\begin{align}
&\text{\textbullet\,} 
\Gamma_1 = \begin{bmatrix}
\lambda_1 & 1 \\
1 & \rho_1 
\end{bmatrix}&\\ 
 &\text{and for  $i > 1$: }&\nonumber
 \\
&\text{\textbullet\,} 
\Gamma_i = 
\left[\begin{array}{@{}c|c@{}}
  {\Gamma}_{i-1}
  & L_i \\
\hline
  {\Gamma}_{i-1} &
  \Rho_i
\end{array}\right],\text{where } L_i = \begin{bmatrix}
\lambda_i & 1\\
\lambda_i & 1\\
\vdots & \vdots \\
\lambda_i & 1 \\
\end{bmatrix}\text{ and }\Rho_i = \begin{bmatrix}
1 & \rho_i \\
1 & \rho_i \\
\vdots & \vdots \\
1 & \rho_i
\end{bmatrix}\text{ are of size }(2^{i-1}, 2). 
\end{align}
\end{subequations}
We can observe that:
\begin{lemma}\label{lemma:gammnsubstitue}
    The matrix $\Gamma_n = M_n(\lambda_i, \rho_i)$  is obtained by substituting $s_i$ by $\lambda_i$ and $r_i$ by $\rho_i$  for $i=1,2,\dots,n$ in the matrix $M_n(s_i, r_i)$ of the matrix relation (\ref{eq:esFP}).
\end{lemma}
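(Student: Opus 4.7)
The plan is a direct induction on $n$, matching the two inductive constructions block by block.

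For the base case $n=1$, substituting $s_1 \mapsto \lambda_1$ and $r_1 \mapsto \rho_1$ into
$M_1 = \begin{bmatrix} s_1 & 1 \\ 1 & r_1 \end{bmatrix}$
yields exactly $\begin{bmatrix} \lambda_1 & 1 \\ 1 & \rho_1 \end{bmatrix} = \Gamma_1$, by comparing the base cases of (\ref{eq:matrixmatrixrel}) and (\ref{eq:matrixeqlearn}).

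For the inductive step, I would assume that $\Gamma_{n-1}$ is obtained from $M_{n-1}$ by the substitution $s_i \mapsto \lambda_i$, $r_i \mapsto \rho_i$ for all $i = 1, 2, \dots, n-1$. Then I examine the two block matrix constructions
\[
M_n = \left[\begin{array}{@{}c|c@{}} M_{n-1} & S_n \\ \hline M_{n-1} & R_n \end{array}\right], \qquad
\Gamma_n = \left[\begin{array}{@{}c|c@{}} \Gamma_{n-1} & L_n \\ \hline \Gamma_{n-1} & \Rho_n \end{array}\right].
\]
Performing the full substitution $s_i \mapsto \lambda_i$, $r_i \mapsto \rho_i$ for $i = 1, 2, \dots, n$ in $M_n$ commutes with the block decomposition: the two $M_{n-1}$ blocks become $\Gamma_{n-1}$ by the induction hypothesis, while the block $S_n$ (whose rows are all $\begin{bmatrix} s_n & 1 \end{bmatrix}$) becomes $L_n$ (whose rows are all $\begin{bmatrix} \lambda_n & 1 \end{bmatrix}$), and similarly $R_n$ (rows $\begin{bmatrix} 1 & r_n \end{bmatrix}$) becomes $\Rho_n$ (rows $\begin{bmatrix} 1 & \rho_n \end{bmatrix}$). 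The resulting block matrix is precisely $\Gamma_n$ as defined in (\ref{eq:matrixeqlearn}), which closes the induction.

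There is essentially no hard step: the entire content of the lemma is the observation that the recursive schemes (\ref{eq:matrixmatrixrel}) and (\ref{eq:matrixeqlearn}) are syntactically identical up to the renaming $s_i \leftrightarrow \lambda_i$, $r_i \leftrightarrow \rho_i$, both at the base case and in the blocks $S_i, R_i$ versus $L_i, \Rho_i$. The only mild point to be careful about is to make sure the substitution is applied uniformly to \emph{all} indices $i \in \{1, \dots, n\}$, not just the outermost one, which is exactly what the induction hypothesis provides for the inner $M_{n-1}$ block.
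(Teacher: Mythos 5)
Your proof is correct and matches the paper's treatment: the paper states this lemma as an immediate observation (with no written proof beyond \verb|\qed|), precisely because the inductive schemes (\ref{eq:matrixmatrixrel}) and (\ref{eq:matrixeqlearn}) are identical up to the renaming $s_i \mapsto \lambda_i$, $r_i \mapsto \rho_i$. Your induction on $n$ simply spells out that observation block by block, which is the natural formalization.
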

\qed

Thus,   the rows of the matrix $\Gamma_n$   are  linked to the $2^n$ subsets of  the partition  $(E_{k}^{(n)})_{1 \leq k \leq 2^{n}}$: for $1 \leq k \leq 2^{n}$, the $k$-th row of $\Gamma_n$ is associated with the subset $E_{k}^{(n)}$ of the partition $(E_{k}^{(n)})_{1 \leq k \leq 2^{n}}$. The drawback of the method of \cite{baaj2022learning} for building the equation system $(\Sigma_n)$ in practice is the same one as for the method used to build the matrix relation: 
it involves generating a matrix $\Gamma_n$ with $2^n$ rows and then to reduce it so that the rows of the matrix correspond to the $\omega^{(n)}$ non-empty subsets of the partition $(E_{k}^{(n)})_{1 \leq k \leq 2^{n}}$.

\noindent

In the following, we present a new  method for  constructing  
the output vector and the matrix of the equation system directly reduced to the $\omega^{(n)}$ non-empty subsets of the partition $(E_{k}^{(n)})_{1 \leq k \leq 2^{n}}$. We proceed as for the matrix relation, i.e., we consider an inductive definition of the ordered set     $(\Lambda^{(n)}, \preceq^{(n)})$, see Subsection \ref{subsec:partitionred}, and we take advantage of Proposition \ref{prop:Des}.

\noindent  The reduced equation system is denoted by:
\begin{equation}\label{eq:upsigma}
    (\Upsigma_n): \dot{Y}_n = \dot{\Gamma_n} \Box_{\max}^{\min} X, 
\end{equation}
 where the vector $\dot{Y}_n$ is of size $(\omega^{(n)}, 1)$ and the matrix $\dot{\Gamma_n}$ is of size $(\omega^{(n)}, 2n)$.  We first need the following mapping $\kappa$ which takes as input a $n$-tuple $\mu = (t_1,t_2,\dots,t_n)$ of $\Lambda^{(n)}$ and returns a row matrix $K_\mu$ of size $(1,2n)$, defined as follows:
\begin{equation}\label{eq:mappingkappa}
\begin{tabular}{ll}
$\kappa :$ & $\Lambda^{(n)} \rightarrow [0,1]^{1 \times 2n}$\\
& $\mu = (t_1,t_2,\dots,t_n) \mapsto K_\mu = \begin{bmatrix}
        N_{t_1} & N_{t_2} & \cdots & N_{t_n} 
    \end{bmatrix}$ 
\end{tabular}
\end{equation}
\noindent where $K_\mu$ is constructed by block matrix construction such that each $$N_{t_i} = \begin{cases}\begin{bmatrix}\lambda_i & 1 \end{bmatrix} & \text{ if } t_i = i \\[8pt] \begin{bmatrix}1 & \rho_i \end{bmatrix} & \text{ if } t_i = -i \end{cases}$$ is a row matrix of size $(1,2)$ which uses the possibility degree of the premise of the rule $R^i$ denoted $\lambda_i$ or  the possibility degree of the negation of the premise, i.e., $\rho_i$. Using Lemma \ref{lemma:gammnsubstitue}, and by applying Lemma \ref{lemma:hmutokmu}, we immediately get:
\begin{lemma}\label{HmuG}
The row matrix $K_\mu$ is equal to the $k_{\mu}$-th row of the initial matrix $\Gamma_n$.
 \end{lemma}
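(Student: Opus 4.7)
The plan is to exploit the fact that the two mappings $\kappa$ and $\tau$ differ only by a uniform substitution of coefficients. Comparing the definition (\ref{eq:mappingkappa}) of $\kappa$ with the definitions (\ref{eq:taumapping})--(\ref{eq:taumapping1}) of $\tau$, one sees that $K_\mu$ is obtained from $H_\mu$ by replacing every occurrence of $s_i$ by $\lambda_i$ and every occurrence of $r_i$ by $\rho_i$, while the constant entries (the $1$'s) are unchanged. This is the exact same substitution that, by Lemma \ref{lemma:gammnsubstitue}, turns $M_n$ into $\Gamma_n$.

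From there I would proceed in a single step. By Lemma \ref{lemma:hmutokmu}, the row matrix $H_\mu$ is the $k_\mu$-th row of $M_n(s_i,r_i)$. Applying the substitution $s_i \mapsto \lambda_i$, $r_i \mapsto \rho_i$ entrywise to this identity, the left-hand side becomes $K_\mu$ (by the observation above) while the right-hand side becomes the $k_\mu$-th row of $M_n(\lambda_i,\rho_i) = \Gamma_n$ (since row selection commutes with entrywise substitution). This yields the claim.

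There is no real obstacle: once the two preceding lemmas are in place, Lemma \ref{HmuG} reduces to the remark that the substitution turning $\tau$ into $\kappa$ is exactly the substitution turning $M_n$ into $\Gamma_n$, so the correspondence of rows is preserved. This is why the statement can be asserted as immediate.
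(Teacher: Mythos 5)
Your proposal is correct and follows exactly the paper's own argument: the paper derives Lemma \ref{HmuG} immediately from Lemma \ref{lemma:gammnsubstitue} (the substitution $s_i \mapsto \lambda_i$, $r_i \mapsto \rho_i$ turning $M_n$ into $\Gamma_n$) combined with Lemma \ref{lemma:hmutokmu}, which is precisely your reasoning. Nothing is missing.
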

\qed

\subsubsection{\texorpdfstring{Constructing the reduced equation system $(\Upsigma_n): \dot{Y}_n = \dot{\Gamma_n} \Box_{\max}^{\min} X$}{Constructing the reduced equation system}}
\noindent
As for the construction of the matrix relation, we rely on  $\Lambda^{(n)} = \{\mu_1, \mu_2, \dots, \mu_{\omega^{(n)}}\}$,   such that for all $i < j$ we have $\mu_i \prec^{(n)} \mu_j$, see (\ref{eq:LJ}). We use the mapping $\sigma^{(n)}$, see (\ref{eq:sigman}), to construct the reduced vector $\dot{Y}_n$ and the mapping $\kappa$, see (\ref{eq:mappingkappa}),  to construct the reduced matrix $\dot{\Gamma_n}$:
\begin{equation}\label{eq:buildupsigman}
\dot{Y}_n = \begin{bmatrix}
    \Pi(\sigma^{(n)}(\mu_1))\\
    \Pi(\sigma^{(n)}(\mu_2))\\
    \vdots\\
    \Pi(\sigma^{(n)}(\mu_{\omega^{(n)}}))\\
\end{bmatrix} \quad \text{ and } \quad 
    \dot{\Gamma_n} = \begin{bmatrix}
        \kappa(\mu_1) = K_{\mu_1}\\ 
        \kappa(\mu_2)= K_{\mu_2} \\
        \vdots \\
        \kappa(\mu_{\omega^{(n)}}) = K_{\mu_{\omega^{(n)}}} 
    \end{bmatrix}.
\end{equation}
 
\noindent \noindent Here, $\Pi$ is the possibility measure associated with a possibility distribution, described in $\dot{Y}_n$ by the possibility degrees of the non-empty subsets of the partition $(E_{k}^{(n)})_{1 \leq k \leq 2^{n}}$.\\
With this construction,  for each $l\in \{1, 2, \dots, \omega^{(n)}\}$  the $l$-th row  of the matrix ${\dot \Gamma }_n$ is  $K_{\mu_l}$.  
The vector  $\dot{Y}_n$  is analogous to that of the reduced output vector in the reduced matrix relation, as described in (\ref{eq:reducedonmnconstructed}).

\noindent Using the $n$ partitions of  $\Lambda^{(n)} =\Lambda_j^{n, \top} \cup\Lambda_j^{n, \bot}$ with $j \in \{1,2,\cdots,n\}$, see Definition \ref{def:Lj} and (\ref{eq:Lnpart}),  each coefficient of a row of index $\mu\in \Lambda^{(n)}$ of the matrix $\dot{\Gamma}_n$ can be computed as follows: 
\begin{restatable}{lemma}{lemcoGcmd}\label{lem:coG}
 Let $\dot{\Gamma}_n= [\dot{\gamma}_{\mu, l}]_{\mu\in \Lambda^{(n)}, 1\leq l \leq 2n}$.   Then for any $\mu\in \Lambda^{(n)}$ and $j\in\{1, 2, \dots, n\}$, the following formulas can be used for computing the coefficients of $\dot{\Gamma}_n$:
 \begin{equation}\label{eq:coefredG}
 \dot{\gamma}_{\mu, 2j-1}  =
   \begin{cases}
 \lambda_j     & \text{ if }  \mu\in \Lambda_j^{n, \top} \\
  1 & \text{ if }  \mu\in \Lambda_j^{n, \bot}  
  \end{cases}
, \quad 
   \dot{\gamma}_{\mu, 2j}  =
   \begin{cases}
 1     & \text{ if }  \mu\in \Lambda_j^{n, \top} \\
  \rho_j & \text{ if }  \mu\in \Lambda_j^{n, \bot}  
   \end{cases}.
 \end{equation}
\end{restatable}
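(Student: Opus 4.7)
The plan is to derive the coefficient formulas directly from the construction of $\dot{\Gamma}_n$ given in (\ref{eq:buildupsigman}), mirroring the argument used for Lemma \ref{lemma:coM}. Indeed, by Lemma \ref{lemma:gammnsubstitue}, the matrix $\Gamma_n$ is obtained from $M_n$ by substituting $s_i \mapsto \lambda_i$ and $r_i \mapsto \rho_i$, and the mapping $\kappa$ of (\ref{eq:mappingkappa}) is obtained from the mapping $\tau$ of (\ref{eq:taumapping}) by exactly the same substitution. Hence, once we unfold the definitions, the proof is an immediate transcription of that of Lemma \ref{lemma:coM}, with $s_j,r_j$ replaced by $\lambda_j,\rho_j$.

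Concretely, I would proceed in three short steps. First, by the construction (\ref{eq:buildupsigman}), for every index $\mu = (t_1, t_2, \dots, t_n)\in \Lambda^{(n)}$ the row of $\dot{\Gamma}_n$ associated with $\mu$ is $K_\mu = \kappa(\mu) = \begin{bmatrix} N_{t_1} & N_{t_2} & \cdots & N_{t_n}\end{bmatrix}$, where each block $N_{t_i}$ is a row of size $(1,2)$ occupying columns $2i-1$ and $2i$. Therefore the pair $(\dot{\gamma}_{\mu, 2j-1}, \dot{\gamma}_{\mu, 2j})$ is exactly the block $N_{t_j}$.

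Second, I would invoke the case analysis of (\ref{eq:mappingkappa}): $N_{t_j} = [\lambda_j, 1]$ when $t_j = j$, and $N_{t_j} = [1, \rho_j]$ when $t_j = -j$. Third, I would translate these two cases into the partition language via Definition \ref{def:Lj}: by (\ref{eq:Lj}), $t_j = j$ is equivalent to $\mu \in \Lambda_j^{n,\top}$ and $t_j = -j$ is equivalent to $\mu \in \Lambda_j^{n,\bot}$, and the partition (\ref{eq:Lnpart}) $\Lambda^{(n)} = \Lambda_j^{n,\top}\cup\Lambda_j^{n,\bot}$ guarantees that the two cases are exhaustive and mutually exclusive. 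Combining these three observations yields exactly the formulas (\ref{eq:coefredG}).

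Since each step is a direct unpacking of prior definitions, I do not anticipate any genuine obstacle; the only subtle point is to make sure that the correspondence between the sign of the $j$-th coordinate of $\mu$ and the membership in $\Lambda_j^{n,\top}$ or $\Lambda_j^{n,\bot}$ is invoked cleanly, so that the block $N_{t_j}$ landing at columns $(2j-1,2j)$ is matched with the correct case. This is already handled by Definition \ref{def:Lj}, so the proof is essentially a two-line verification once the formulas are set up.
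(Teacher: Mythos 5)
Your proof is correct and takes essentially the same route as the paper: the paper's own argument for Lemma~\ref{lem:coG} is precisely the reduction you state in your first paragraph, namely combining Lemma~\ref{lemma:gammnsubstitue} with Lemma~\ref{lemma:coM} via the substitution $s_j \mapsto \lambda_j$, $r_j \mapsto \rho_j$. Your steps two and three additionally supply a self-contained direct verification reading the coefficients off the definition of $\kappa$ in (\ref{eq:mappingkappa}) and the construction (\ref{eq:buildupsigman}) together with Definition~\ref{def:Lj}, which bypasses the unreduced matrix and the index bijection entirely; both routes are sound and yield (\ref{eq:coefredG}).
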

\noindent The proof is easy from Lemma \ref{lemma:gammnsubstitue}  and Lemma \ref{lemma:coM}.
\qed

\begin{example}\label{ex:eqsyslearn} (Example \ref{ex:firstrules-set}, 
cont'ed) \\
    From the first set of possibilistic rules considered (see Example \ref{ex:firstrules-set}), the following equation system is produced using (\ref{eq:ex:orderedrel1}):
    \begin{equation}\label{eq:ex:learning-system1}
\begin{bmatrix}
    \Pi( \overline{Q_1} \cap Q_2 \cap \overline{Q_3} \cap Q_4 ) \\
     \Pi(  Q_1 \cap \overline{Q_2} \cap \overline{Q_3} \cap Q_4 ) \\
      \Pi(\overline{Q_1} \cap Q_2 \cap Q_3 \cap  \overline{Q_4}) \\
       \Pi( Q_1 \cap \overline{Q_2} \cap Q_3 \cap \overline{Q_4}) 
\end{bmatrix}= 
\begin{bmatrix}
    \Pi( \{ (1,1) \}) \\
     \Pi( \{ (0,1) \}) \\
      \Pi( \{ (1,0) \}) \\
       \Pi( \{ (0,0) \}) 
\end{bmatrix}
    =
    \begin{bmatrix}
        1 & \rho_1 & \lambda_2 & 1 & 1 & \rho_3 & \lambda_4 & 1\\
        \lambda_1 & 1 & 1 & \rho_2 & 1 & \rho_3 & \lambda_4 & 1\\
        1 & \rho_1 & \lambda_2 & 1 & \lambda_3 & 1 & 1 & \rho_4\\
        \lambda_1 & 1 & 1 & \rho_2 & \lambda_3 & 1 & 1 & \rho_4
    \end{bmatrix} \Box_{\max}^{\min} X
\end{equation}
\noindent where $X = \begin{bmatrix}
    s_1 \\
    r_1 \\
    s_2 \\ 
    r_2 \\
    s_3 \\
    r_3 \\
    s_4 \\
    r_4
\end{bmatrix}$.\\
From the second set of 
possibilistic rules (see Example \ref{ex:secondset-rules}), the following reduced equation system is produced  using (\ref{eq:ex:orderedrel2}):
\begin{equation}\label{eq:ex:learning-system2}
\begin{bmatrix}
    \Pi( \overline{Q'_1} \cap Q'_2 )\\
     \Pi(  Q'_1 \cap \overline{Q'_2} ) 
\end{bmatrix}= 
\begin{bmatrix}
    \Pi( \{ 0 \}) \\
     \Pi( \{ 1 \}) 
\end{bmatrix}
    =
    \begin{bmatrix}
        1 & \rho'_1 & \lambda'_2 & 1 \\
        \lambda'_1 & 1 & 1 & \rho'_2 
    \end{bmatrix} \Box_{\max}^{\min} X' \text{ where }X' = \begin{bmatrix}
    s'_1 \\
    r'_1 \\
    s'_2 \\ 
    r'_2 
\end{bmatrix}.
\end{equation}

\end{example}

From a practical perspective, constructing the reduced equation system $(\Upsigma_n)$  from a training data sample enables us to leverage the learning method of \cite{baaj2022learning}. Previously, 
\cite{baaj2022learning} showed that solving the non-reduced equation system $(\Sigma_n)$ constructed with a training data sample using Sanchez's result \cite{sanchez1976resolution} yields solutions for the rule parameters of a possibilistic rule-based system that are compatible with the training data sample. 
In the next section,  this approach is presented in more details but based on the reduced equation system $(\Upsigma_n)$, see Subsection \ref{subsec:learnruleparam}, and the reduced equation system is also related to the reduced matrix relation, see (\ref{eq:releqsyscompatible}).

\subsubsection{\texorpdfstring{Constructing the equation system $(\Upsigma_n): \dot{Y}_n = \dot{\Gamma_n} \Box_{\max}^{\min} X$ from a training data sample}{Constructing the reduced equation system from a training data sample}}
\label{subsubsec:constructEStrainingdata}
\noindent Let us consider:
\begin{itemize}
    \item a possibilistic rule-based system that uses the input attributes $a_1, a_2, \ldots, a_k$ and an output attribute $b$. Using Lemma \ref{lemma:LambdaN}  and Definition \ref{def:orderrel}, we know how to construct the totally ordered set $(\Lambda^{(n)}, \preceq^{(n)})$ associated with the set of rules.
    \item a training data sample, which consists of a set of input possibility distributions $\widetilde{\pi}_{a_1(x)}, \widetilde{\pi}_{a_2(x)}, \ldots, \widetilde{\pi}_{a_k(x)}$, associated with the input attributes $a_1, a_2, \ldots, a_k$, and a targeted  output possibility distribution $\widetilde{\pi}_{b(x)}$ associated with the output attribute $b$.
\end{itemize}

\noindent The equation system $(\Upsigma_n): \dot{Y}_n = \dot{\Gamma_n} \Box_{\max}^{\min} X$ can be constructed from the training data sample by following the two steps: 
\begin{enumerate}
    \item Using (\ref{eq:buildupsigman}), first build the matrix $\dot{\Gamma_n}$ with the possibility degrees $\lambda_i = \pi(p_i), \rho_i = \pi(\neg p_i)$ of the rule premises computed using the formulas in (\ref{eq:pinegpiformulae}) and the input possibility distributions $\widetilde{\pi}_{a_1(x)}, \widetilde{\pi}_{a_2(x)}, \ldots, \widetilde{\pi}_{a_k(x)}$.
    \item Then construct the second member $\dot{Y}_n$ using  $(\Lambda^{(n)}, \preceq^{(n)})$, the mapping $\sigma^{(n)}$ (see (\ref{eq:sigman})), and the possibility measure $\widetilde{\Pi}$ associated with $\widetilde{\pi}_{b(x)}$ as in (\ref{eq:buildupsigman}): \begin{equation*}
\dot{Y}_n = \begin{bmatrix}
    \widetilde{\Pi}(\sigma^{(n)}(\mu_1))\\
    \widetilde{\Pi}(\sigma^{(n)}(\mu_2))\\
    \vdots\\
    \widetilde{\Pi}(\sigma^{(n)}(\mu_{\omega^{(n)}}))\\
\end{bmatrix}.
\end{equation*}

\end{enumerate}

\noindent Let us illustrate such a construction:
\begin{example}\label{ex:conssystg}
{(Example \ref{ex:firstrules-set}, 
cont'ed)} \\
We consider a training data sample associated with the first set of rules (see Example \ref{ex:firstrules-set}), which consists of input possibility distributions for the attributes $a_1$ and $a_2$ and a targeted output possibility distribution for the attribute $b$, all provided by an expert. These distributions reflect the expert’s uncertainty regarding the classification of the sample; in the absence of uncertainty, the possibility distribution associated with each attribute assigns a possibility degree of one to a single element of the attribute domain and of zero to all others.
By inferring the rule premises of the first set of rules using the input possibility distributions, we obtain: $\lambda_1 = 0.12, \rho_1 =1,  \lambda_2= 0.18,  \rho_2= 1, \lambda_3= 1,  \rho_3= 0.43, \lambda_4=  1, \rho_4 = 0.66$. The targeted output possibility distribution is as follows:  $ \Pi( \{ (1,1) \}) =0.2,
     \Pi( \{ (0,1) \}) = 0.87,
      \Pi( \{ (1,0) \}) = 0.2$, and $
       \Pi( \{ (0,0) \}) = 1$. \\

\noindent Then, from those values, the reduced equation system can be constructed using (\ref{eq:ex:learning-system1}):
\begin{equation}\label{eq:eqtd1}
    \begin{bmatrix}
    \Pi( \{ (1,1) \}) \\
     \Pi( \{ (0,1) \}) \\
      \Pi( \{ (1,0) \}) \\
       \Pi( \{ (0,0) \}) 
\end{bmatrix}=\begin{bmatrix}
    0.2\\
    0.87\\
    0.2\\
    1
\end{bmatrix}= \begin{bmatrix}
    1 &  1&   0.18& 1&   1&   0.43& 1&   1\\
    0.12& 1&   1&   1&   1&   0.43& 1&   1\\
    1&  1&   0.18& 1&   1&   1&   1&   0.66\\
    0.12& 1&   1&   1&   1&   1&   1&   0.66
\end{bmatrix}
\Box_{\max}^{\min} X.
\end{equation}

\noindent Similarly, the second reduced equation system can be constructed, see (\ref{eq:ex:learning-system2}), with  the possibility degrees of the rule premises $\lambda'_1 = 0.25, \rho'_1= 1,  \lambda'_2= 0.35, \rho'_2= 1$ (which can be derived using input possibility distributions), and the following targeted output possibility distribution: $\Pi( \{ 0 \})= 0.4,
     \Pi( \{ 1 \}) = 1$.  

\noindent The second reduced equation system is:
\begin{equation}\label{eq:eqtd2}
    \begin{bmatrix}
    \Pi( \{ 0 \}) \\
     \Pi( \{ 1 \}) 
\end{bmatrix}=\begin{bmatrix}
    0.4\\
    1
\end{bmatrix} = \begin{bmatrix}
    1 & 1 & 0.35 & 1 \\
    0.25  & 1 &  1 & 1   
   
\end{bmatrix} \Box_{\max}^{\min} X'.
\end{equation}
\end{example}

\section{Learning rule parameters of a possibilistic rule-based system according to training data}
\label{sec:learning}

In this section, the focus is on  \textit{possibilistic learning}, which is involved in the high-level reasoning task of $\Pi$-NeSy. The goal is to determine how to learn the values of the rule parameters of a possibilistic rule-based system according to \emph{a set of training data samples}, where each training data sample is given by the possibility degrees of the rule premises inferred from input possibility distributions and a targeted  output possibility distribution. In the context of $\Pi$-NeSy, for joint learning, the input possibility distributions of the training data samples result from probability-possibility transformations of the output probability distributions generated by the neural network. 

In AI, model parameters are often learned from training data by minimizing 
a loss function 
using gradient descent methods. However, in our case,  such an approach is hampered by the non-differentiability of the error function to be used, given that the possibilistic framework is based on combinations of $\min$ and $\max$ functions.  \cite{baaj2022learning} showed that this learning problem can be tackled by solving the min-max equation system $(\Sigma_n)$, see (\ref{eq:sigma}), which is constructed from a  training data sample. By solving the min-max equation system $(\Sigma_n)$  using Sanchez's result  \cite{sanchez1976resolution},  \cite{baaj2022learning}  shows that  a solution of this system can be used to assign values to the rule parameters that are compatible with the training data sample in the sense of (\ref{eq:releqsyscompatible}). As to learning, the results presented in \cite{BAAJ2024} are also useful since they concern the handling of \emph{inconsistent} min-max equation systems. Indeed, inconsistency may happen when dealing with the equation system $(\Sigma_n)$, for instance when the data used for constructing $(\Sigma_n)$ contain noise or outliers. In particular, the following research questions have been considered in \cite{BAAJ2024}: how to find a consistent equation system(s) that is as close as possible to the considered inconsistent equation system by perturbing the second member of the inconsistent system as little as possible, and how to obtain approximate solutions of the considered inconsistent equation system (which are solutions of one of the consistent equation systems that is as close as possible to the considered inconsistent equation system)?

In this section,  we start by studying how to solve the reduced equation system $(\Upsigma_n)$, see (\ref{eq:upsigma}), since the results of \cite{baaj2022learning} were given for the equation system $(\Sigma_n)$ not reduced to the non-empty subsets of the partition (Subsection \ref{subsec:learnruleparam}). When $(\Upsigma_n)$ is consistent, two solutions of this  equation system are computed, and one of them is the lowest solution of it (see Lemma \ref{lem:coED} and Proposition \ref{prop:coEDU}). The rest of the section focuses on the introduction of a possibilistic learning method based on the work of \cite{baaj2022learning} and \cite{BAAJ2024}, that is efficient enough to be used in practice. In the following, the steps involved in this method are outlined. 

Consider that we want to learn the values of the rule parameters of a set of possibilistic rules according to a set of training data samples. The method first processes each training data sample one by one. 
For a given training data sample, the associated min-max equation system $(\Upsigma_n)$ is considered, see (\ref{eq:upsigma}).  \cite{BAAJ2024} is then used to assess the quality of the training data sample with respect to the set of rules (Subsection \ref{subsec:poorqualtrainingdata}). Indeed, using the min-max equation system $(\Upsigma_n)$ constructed from the training data sample, we can assess whether the training data sample is of good quality or contains noise or outliers. The evaluation consists in measuring the \emph{Chebyschev distance} (based on $L_\infty$ norm) between the min-max equation system $(\Upsigma_n)$ and the closest consistent equation system(s) (this distance is equal to zero if the equation system $(\Upsigma_n)$ is consistent), see Subsection \ref{subsec:handlinginconsisteq}. Then, for an inconsistent equation system $(\Upsigma_n)$, the results of \cite{BAAJ2024} are leveraged to make minimal modifications to the training data sample used for constructing it in order to obtain a consistent equation system (Definition \ref{def:reduced-es} of Subsection \ref{subsec:apptotrainingdatasample}). Once this process has been achieved, each training data sample is associated with a consistent equation system. Since these consistent equation systems share the same unknown vector whose components are the rule parameters, we can construct an equation system denoted by $(\mathbf{\Sigma})$, that stacks all these equation systems into one, see (\ref{eq:stackedES}) in Subsection \ref{subsubsec:learningmultipledata}. We then use \cite{BAAJ2024} to compute an approximate solution of the resulting stacked equation system (a solution of one of its closest consistent equation systems). The obtained approximate solution represents the entire training data set and it can be used to assign values to the rule parameters of the set of rules with respect to the data. \\ Finally, we introduce a method named ``Possibilistic cascade learning'', see  Method \ref{meth:learningincascade}, which can be exploited to extend the learning method presented in Subsection \ref{subsubsec:learningmultipledata} to a cascade (Subsection \ref{subsub:cascade}), i.e., when a possibilistic rule-based system uses two chained sets of possibilistic rules. 
We also explain how to determine values for the hyperparameters (tolerance thresholds, see Definition \ref{def:reliabletau}) which can be used to apply Method \ref{meth:learningincascade} to very large training datasets.

Method \ref{meth:learningincascade} is a key component of  our neuro-symbolic approach $\Pi$-NeSy. It is used both for joint learning (combining neural learning and possibilistic learning) and for learning the values of the parameters of the rules of a cascade. 

\smallskip
The method introduces two novel features that set it apart from current neuro-symbolic approaches: \emph{assessing the quality of a  training data sample with respect to a set of rules}, and \emph{making minimal modifications to poor-quality training data to precisely approximate the values of the rule parameters}.

\subsection{Learning rule parameters according to a training data sample}
\label{subsec:learnruleparam}
The learning method of \cite{baaj2022learning} for determining  values for the rule parameters of a set of 
possibilistic rules of a possibilistic rule-based system according to a training data sample is based on the non-reduced equation system $(\Sigma_n)$, see (\ref{eq:sigma}), using methods for solving $\min-\max$ equation systems introduced by \cite{sanchez1976resolution}.  In the following,  the solving of the reduced equation system $(\Upsigma_n)$, see (\ref{eq:upsigma}) is investigated. We show how to compute two specific solutions of this system when it is consistent.

We remind that the coefficients of the matrix $\dot{\Gamma}_n$  of the reduced equation system  $(\Upsigma_n): \dot{Y}_n = \dot{\Gamma_n} \Box_{\max}^{\min} X$, see (\ref{eq:upsigma}),  are given by the   formulas  (\ref{eq:coefredG})  derived from the  $n$ partitions 
$\Lambda^{(n)} = \Lambda_j^{n, \top} \cup \Lambda_j^{n, \bot}$ with $j \in \{1,2,\cdots,n\}$, see Definition \ref{def:Lj} and (\ref{eq:Lnpart}).

\noindent Two column-vectors are introduced:
\begin{equation}\label{eq:twosoldef}
    E^{n,\downarrow} := \dot{\Gamma}^t_n  \Box_\epsilon^{\max } \dot{Y}_n =   \begin{bmatrix} e^{n,\downarrow}_1 \\  e^{n,\downarrow}_2 \\
	\vdots\\

	e^{n,\downarrow}_{2n}  \end{bmatrix} \quad \text{ and } \quad 
        E^{n,\uparrow} =     \begin{bmatrix} e_{1}^{n,\uparrow} \\  e_{2}^{n,\uparrow}\\
	\vdots\\

	e_{2n}^{n,\uparrow}  \end{bmatrix}
\end{equation}
\noindent where the matrix $\dot{\Gamma}^t_n$ is the transpose of the matrix $\dot{\Gamma}_n$ and the matrix product $\Box_\epsilon^{\max}$ uses $\max$ as addition and $\epsilon-$product as product.  $\epsilon-$product is defined  by: 
\begin{equation}\label{eq:epsilonproduct}
    x \epsilon  y =  \begin{cases}
y & \text{if }   x < y\\
0 & \text{if }   x \geq  y \end{cases} \text{ in } [0, 1].
\end{equation} The coefficients of the two vectors $E^{n,\downarrow}$ and $E^{n,\uparrow}$  are computed using the possibility degrees of the rule premises and  the coefficients of  the   vector   $\dot{Y}_n =[y_\mu]_{\mu\in\Lambda^{(n)}}$ using the sets $\Lambda_j^{n, \top}$ and $\Lambda_j^{n, \bot}$ with $j \in \{1,2,\dots,n\}$:

\begin{definition}\label{def:coEU}
The coefficients of  the vector   $E^{n,\uparrow}$  are  defined by:
\begin{equation}\label{eq:formulasejtop}
  \text{for } j=1,2,\dots,n, 
  \quad e^{n,\uparrow}_{2j - 1}: = \max_{\mu\in \Lambda_j^{n, \top}} y_\mu \quad 
\text{ and }\quad 
    e^{n,\uparrow}_{2j} := \max_{\mu\in \Lambda_j^{n, \bot}} y_\mu.
\end{equation}
 If $\Lambda_j^{n, \top} = \emptyset$ (resp. $\Lambda_j^{n, \bot} = \emptyset$), we have $\max_{\mu\in \Lambda_j^{n, \top}} y_\mu = 0$ (resp. $\max_{\mu\in \Lambda_j^{n, \bot}} y_\mu= 0$).
    
\end{definition}
\begin{restatable}{lemma}{lemcoEDcmd}\label{lem:coED}
The coefficients of  the vector $E^{n,\downarrow}:= \dot{\Gamma}^t_n  \Box_\epsilon^{\max } \dot{Y}_n$ are  given by: \\
For $j=1,2,\dots,n$: 
\begin{equation}\label{eq:eninitialformula}
    e^{n,\downarrow}_{2j-1} = \lambda_j\, \epsilon \max_{\mu\in \Lambda_j^{n, \top}} y_\mu  = \lambda_j \,\epsilon \,e^{n,\uparrow}_{2j - 1} \text{ and } e^{n,\downarrow}_{2j} = \rho_j \, \epsilon \max_{\mu\in \Lambda_j^{n, \bot}} y_\mu = \rho_j\, \epsilon \,e^{n,\uparrow}_{2j}.
\end{equation}
For $l= 1, 2, \dots, 2n$, we have $e^{n,\downarrow}_{l} \leq e^{n,\uparrow}_{l}$, i.e., the vector inequality 
$E^{n,\downarrow} \leq E^{n,\uparrow}$ holds.
\end{restatable}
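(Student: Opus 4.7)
The strategy is to compute each component of $E^{n,\downarrow}=\dot{\Gamma}^t_n\,\Box_\epsilon^{\max}\,\dot{Y}_n$ directly from the definition, split the max over $\Lambda^{(n)}$ according to the partition $\Lambda^{(n)}=\Lambda_j^{n,\top}\cup\Lambda_j^{n,\bot}$ of Definition~\ref{def:Lj}, read off the coefficients of $\dot{\Gamma}_n$ from formula (\ref{eq:coefredG}), and then use a simple distributivity property of $\epsilon$ over $\max$.

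First, unpacking the product, for every $l\in\{1,2,\dots,2n\}$ one has
$$e^{n,\downarrow}_{l}=\max_{\mu\in\Lambda^{(n)}}\bigl(\dot{\gamma}_{\mu,l}\;\epsilon\;y_\mu\bigr).$$
Fix $j\in\{1,2,\dots,n\}$ and take $l=2j-1$. By (\ref{eq:coefredG}), $\dot{\gamma}_{\mu,2j-1}=\lambda_j$ when $\mu\in\Lambda_j^{n,\top}$ and $\dot{\gamma}_{\mu,2j-1}=1$ when $\mu\in\Lambda_j^{n,\bot}$. Splitting the max along this partition gives
$$e^{n,\downarrow}_{2j-1}=\max\!\Bigl(\max_{\mu\in\Lambda_j^{n,\top}}(\lambda_j\,\epsilon\,y_\mu)\,,\ \max_{\mu\in\Lambda_j^{n,\bot}}(1\,\epsilon\,y_\mu)\Bigr).$$
Since $y_\mu\in[0,1]$, the definition (\ref{eq:epsilonproduct}) forces $1\,\epsilon\,y_\mu=0$, so the second term vanishes. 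Treating $l=2j$ symmetrically, the only surviving contribution comes from $\mu\in\Lambda_j^{n,\bot}$ with coefficient $\rho_j$.

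It then remains to establish the elementary distributivity
$$\max_{\mu\in S}(x\,\epsilon\,y_\mu)\;=\;x\,\epsilon\,\max_{\mu\in S}y_\mu \qquad (x\in[0,1],\ S\ \text{finite}),$$
which I would justify by a short case analysis on whether $x$ is greater than, equal to, or strictly less than $Y:=\max_{\mu\in S}y_\mu$: if $x\geq Y$ both sides equal $0$; if $x<Y$, pick $\mu^\star$ attaining $Y$, then $x\,\epsilon\,y_{\mu^\star}=Y$ while every other term is at most $y_\mu\leq Y$, so the max is $Y$ on both sides. Applying this with $S=\Lambda_j^{n,\top}$ and $x=\lambda_j$ (resp.\ $S=\Lambda_j^{n,\bot}$ and $x=\rho_j$) yields the announced formulas (\ref{eq:eninitialformula}), using Definition~\ref{def:coEU} to recognize $e^{n,\uparrow}_{2j-1}$ and $e^{n,\uparrow}_{2j}$.

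For the inequality $E^{n,\downarrow}\leq E^{n,\uparrow}$, the point is that $x\,\epsilon\,y\in\{0,y\}\subseteq[0,y]$ by (\ref{eq:epsilonproduct}), so in particular $\lambda_j\,\epsilon\,e^{n,\uparrow}_{2j-1}\leq e^{n,\uparrow}_{2j-1}$ and $\rho_j\,\epsilon\,e^{n,\uparrow}_{2j}\leq e^{n,\uparrow}_{2j}$, which gives the componentwise bound. There is no real obstacle in this proof: the only mildly delicate step is the distributivity of $\epsilon$ over $\max$, and that reduces to a two-line case split. Everything else follows from the explicit row description of $\dot{\Gamma}_n$ already secured in Lemma~\ref{lem:coG}.
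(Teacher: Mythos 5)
Your proof is correct and follows essentially the same route as the paper's: expand the $\Box_\epsilon^{\max}$ product componentwise, split the max over the partition $\Lambda^{(n)}=\Lambda_j^{n,\top}\cup\Lambda_j^{n,\bot}$ using the coefficient formulas (\ref{eq:coefredG}) (the $1\,\epsilon\,y_\mu=0$ terms dropping out), pull the constant through the max by distributivity of $\epsilon$ over $\max$, and deduce the inequality from $x\,\epsilon\,y\leq y$. The only difference is that you spell out the case analysis proving the distributivity identity, which the paper simply recalls without proof.
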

\noindent See Subsection \ref{subsec:proof:coED} for the proof.

\smallskip
Observe that it is guaranteed that the two solutions $E^{n,\downarrow}$ and $E^{n,\uparrow}$ are distinct. 
Indeed, for $j=1,2,\cdots,n$,  since we have $\max(\lambda_j,\rho_j)=1$ and assuming that $e^{n,\uparrow}_{2j - 1} > 0$ and $e^{n,\uparrow}_{2j} > 0$,  
we  must have $e^{n,\uparrow}_{2j - 1} > e^{n,\downarrow}_{2j-1}$ or $e^{n,\uparrow}_{2j} > e^{n,\downarrow}_{2j}$. In other words, provided that there exists $j \in \{1,2, \cdots,n\}$ such that $e^{n,\uparrow}_{2j - 1} > 0$ and $e^{n,\uparrow}_{2j} > 0$, the two column-vectors $E^{n,\downarrow}$ and $E^{n,\uparrow}$ yields two different solutions of the system $(\Upsigma)$.

\smallskip
We also have the following result:

\begin{restatable}{proposition}{propcoEDUcmd}\label{prop:coEDU}
Let $\dot{\Gamma_n} \Box_{\max}^{\min} E^{n,\downarrow} = [u_\mu]_{\mu\in \Lambda^{(n)}}$. Then, for each $\mu\in \Lambda^{(n)}$, we have:
\begin{equation}\label{eq:autreen}
u_\mu = \min_{1 \leq j \leq n} z_{\mu, j}\quad \text{; } \quad 
z_{\mu, j} := \begin{cases}
\max(\lambda_j, e_{2j-1}^{n, \uparrow})     & \text{ if }  \mu\in \Lambda_j^{n, \top}   \\
  \max(\rho_j, e_{2j}^{n, \uparrow}) & \text{ if }  \mu\in \Lambda_j^{n, \bot} 
\end{cases}.
    \end{equation}
Moreover, the following equality holds: 
\begin{equation}\label{eq:ee}
\dot{\Gamma_n} \Box_{\max}^{\min} E^{n,\downarrow} = \dot{\Gamma_n} \Box_{\max}^{\min} E^{n,\uparrow} 
.
\end{equation}

\end{restatable}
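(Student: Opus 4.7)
The plan is to compute $u_\mu$ directly by expanding the min--max product $\dot{\Gamma}_n \Box_{\max}^{\min} E^{n,\downarrow}$ row by row, using the explicit column-wise formulas of Lemma~\ref{lem:coG} for the coefficients of $\dot{\Gamma}_n$ and the formulas of Lemma~\ref{lem:coED} for the coefficients of $E^{n,\downarrow}$. Writing $u_\mu = \min_{1 \leq l \leq 2n} \max(\dot{\gamma}_{\mu, l}, e^{n,\downarrow}_l)$, I would group the $2n$ columns in consecutive pairs $(2j-1, 2j)$ and treat each $j$ separately by considering whether $\mu \in \Lambda_j^{n, \top}$ or $\mu \in \Lambda_j^{n, \bot}$; this is well-defined since those two sets partition $\Lambda^{(n)}$ by~(\ref{eq:Lnpart}).

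The key technical observation is the elementary absorption identity
\begin{equation*}
\max(x, x\,\epsilon\,y) = \max(x, y), \qquad x, y \in [0, 1],
\end{equation*}
which follows directly from the two cases of the definition~(\ref{eq:epsilonproduct}): if $x \geq y$ then $x\,\epsilon\,y = 0$ and both sides equal $x$, while if $x < y$ then $x\,\epsilon\,y = y$ and both sides equal $y$. Combined with the trivial fact $\max(1, \cdot) = 1$, this collapses each pair of columns to a single expression. For $\mu \in \Lambda_j^{n,\top}$, Lemma~\ref{lem:coG} gives $\dot{\gamma}_{\mu, 2j-1} = \lambda_j$ and $\dot{\gamma}_{\mu, 2j} = 1$; column $2j$ therefore contributes $1$ to the min, while column $2j-1$ contributes $\max(\lambda_j,\, \lambda_j\,\epsilon\, e^{n,\uparrow}_{2j-1}) = \max(\lambda_j, e^{n,\uparrow}_{2j-1}) = z_{\mu, j}$. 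The symmetric computation for $\mu \in \Lambda_j^{n, \bot}$ produces $z_{\mu, j} = \max(\rho_j, e^{n, \uparrow}_{2j})$. Taking the minimum over $j = 1, \ldots, n$ then yields~(\ref{eq:autreen}).

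For the equality~(\ref{eq:ee}), I would rerun exactly the same pairwise case analysis with $E^{n, \uparrow}$ in place of $E^{n, \downarrow}$. The only change is in the ``active'' column of each pair: when $\mu \in \Lambda_j^{n, \top}$, column $2j-1$ now contributes $\max(\lambda_j, e^{n, \uparrow}_{2j-1})$ directly (without the $\epsilon$-product), which is still $z_{\mu, j}$, and the ``dominated'' column $2j$ still contributes $1$; the case $\mu \in \Lambda_j^{n, \bot}$ is symmetric. Thus both matrix products reduce, entry by entry, to $\min_{1 \leq j \leq n} z_{\mu, j}$, proving the equality.

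The only nontrivial step is the $\epsilon$-absorption identity $\max(x, x\,\epsilon\,y) = \max(x, y)$, which is what makes $E^{n, \downarrow}$ and $E^{n, \uparrow}$ indistinguishable under the min--max action of $\dot{\Gamma}_n$. Once that identity is in hand, the rest is routine bookkeeping over the partitions $\Lambda^{(n)} = \Lambda_j^{n, \top} \cup \Lambda_j^{n, \bot}$. No induction on $n$ is needed, since Lemma~\ref{lem:coG} already provides the necessary coefficient structure uniformly in $\mu$.
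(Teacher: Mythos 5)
Your proposal is correct and follows essentially the same route as the paper's proof: both decompose the min--max product into column pairs $(2j-1,2j)$, use the coefficient formulas of Lemma~\ref{lem:coG} together with the partition $\Lambda^{(n)} = \Lambda_j^{n,\top}\cup\Lambda_j^{n,\bot}$ to collapse each pair to a single $\max$ term (the other column being dominated by $1$), and then invoke the absorption identity $\max(x, x\,\epsilon\,y)=\max(x,y)$ applied to the expressions of $E^{n,\downarrow}$ from Lemma~\ref{lem:coED} to identify the result with $z_{\mu,j}$ for both $E^{n,\downarrow}$ and $E^{n,\uparrow}$. No substantive difference.
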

\noindent See Subsection \ref{subsec:proof:coEDU} for the proof. 

\smallskip
By Sanchez's result \cite{sanchez1976resolution}, the reduced equation system $(\Upsigma_n)$ is consistent iff $\dot{Y}_n = \dot{\Gamma_n} \Box_{\max}^{\min} E^{n,\downarrow}$. If this equality is satisfied, the lowest solution of the system $(\Upsigma_n)$ is the vector $E^{n,\downarrow}$ and by     Proposition \ref{prop:coEDU},  the vector $E^{n,\uparrow}$ is another solution of the system $(\Upsigma_n)$. In fact,    any vector $X$ such that $E^{n,\downarrow}\leq X \leq  E^{n,\uparrow}$  (where $\leq$ denotes the usual component-wise order between vectors) is a solution of the reduced equation system $(\Upsigma_n)$:
\begin{corollary}\label{cor:consist}
Suppose that the reduced equation system  $(\Upsigma_n): \dot{Y}_n = \dot{\Gamma_n} \Box_{\max}^{\min} X$ is consistent. Then  any vector $X$ such that $E^{n,\downarrow}\leq X \leq  E^{n,\uparrow}$ is a solution of the reduced equation system $(\Upsigma_n)$.   
\end{corollary}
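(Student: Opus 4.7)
The plan is to use a sandwich argument based on the monotonicity of the $\min$-$\max$ matrix product $\Box_{\max}^{\min}$ combined with the already established equality $\dot{\Gamma_n} \Box_{\max}^{\min} E^{n,\downarrow} = \dot{\Gamma_n} \Box_{\max}^{\min} E^{n,\uparrow}$ from Proposition \ref{prop:coEDU}. The key observation is that both endpoints of the interval $[E^{n,\downarrow}, E^{n,\uparrow}]$ yield the same image under $\dot{\Gamma_n}\,\Box_{\max}^{\min}\,\cdot$, and the operator is order-preserving, so everything in between is squeezed to the same value.

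First, I would invoke Sanchez's characterization \cite{sanchez1976resolution} of consistency: the assumption that $(\Upsigma_n)$ is consistent is equivalent to the equality $\dot{Y}_n = \dot{\Gamma_n} \Box_{\max}^{\min} E^{n,\downarrow}$, where $E^{n,\downarrow}$ is then the lowest solution. Combining this with equation (\ref{eq:ee}) from Proposition \ref{prop:coEDU} immediately gives
\[
\dot{Y}_n \;=\; \dot{\Gamma_n} \Box_{\max}^{\min} E^{n,\downarrow} \;=\; \dot{\Gamma_n} \Box_{\max}^{\min} E^{n,\uparrow},
\]
so $E^{n,\uparrow}$ is another solution of $(\Upsigma_n)$.

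Next, I would record the (routine) monotonicity property: if $X_1 \leq X_2$ componentwise in $[0,1]^{2n}$, then
\[
\dot{\Gamma_n} \Box_{\max}^{\min} X_1 \;\leq\; \dot{\Gamma_n} \Box_{\max}^{\min} X_2,
\]
because each component on the left has the form $\max_l \min(\dot{\gamma}_{\mu,l}, x_{1,l})$, and both $\min$ and $\max$ are nondecreasing in each argument. Applied to a vector $X$ satisfying $E^{n,\downarrow} \leq X \leq E^{n,\uparrow}$, this yields the sandwich
\[
\dot{Y}_n \;=\; \dot{\Gamma_n} \Box_{\max}^{\min} E^{n,\downarrow} \;\leq\; \dot{\Gamma_n} \Box_{\max}^{\min} X \;\leq\; \dot{\Gamma_n} \Box_{\max}^{\min} E^{n,\uparrow} \;=\; \dot{Y}_n,
\]
forcing $\dot{\Gamma_n} \Box_{\max}^{\min} X = \dot{Y}_n$, i.e., $X$ is a solution of $(\Upsigma_n)$.

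There is no substantial obstacle here: all the heavy lifting has already been done in Lemma \ref{lem:coED} and Proposition \ref{prop:coEDU}. The only point that deserves explicit mention is that the vectors $X$ in the interval must lie in $[0,1]^{2n}$ so that the product $\Box_{\max}^{\min}$ is defined, and this holds automatically since $E^{n,\downarrow}$ and $E^{n,\uparrow}$ themselves have coefficients in $[0,1]$ by their definitions in (\ref{eq:twosoldef})--(\ref{eq:formulasejtop}) and (\ref{eq:eninitialformula}).
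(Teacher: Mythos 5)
Your proof is correct and follows essentially the same route as the paper: both rely on Sanchez's characterization to identify $\dot{Y}_n = \dot{\Gamma_n} \Box_{\max}^{\min} E^{n,\downarrow}$, the equality (\ref{eq:ee}) from Proposition \ref{prop:coEDU}, and the monotonicity of $\Box_{\max}^{\min}$ to sandwich $\dot{\Gamma_n} \Box_{\max}^{\min} X$ between two copies of $\dot{Y}_n$. You merely make the monotonicity step and the domain remark more explicit than the paper does.
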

\begin{proof}
As by Lemma \ref{lem:coED}, we have  $E^{n,\downarrow} \leq E^{n,\uparrow}$, we can deduce that for any vector $X$ such that $E^{n,\downarrow}\leq X \leq  E^{n,\uparrow}$:
\[ \dot{Y}_n=  \dot{\Gamma_n} \Box_{\max}^{\min} E^{n,\downarrow} \leq \dot{\Gamma_n} \Box_{\max}^{\min} X \leq 
  \dot{\Gamma_n} \Box_{\max}^{\min} E^{n,\uparrow} = \dot{Y}_n. \]
\end{proof}

Accordingly, whenever the reduced equation system $(\Upsigma_n)$ is consistent, it has an uncountable number of solutions.

\begin{example}
{(Example \ref{ex:firstrules-set}, 
cont'ed)} \\
The equation system (\ref{eq:eqtd1}) of Example \ref{ex:conssystg} is consistent, and it has the following solutions: 
$$ E^{n,\downarrow} = \begin{bmatrix}
   1.0\\ 0\\ 0.2\\ 0\\ 0\\ 0.87\\ 0\\ 1.0
\end{bmatrix} \text{ and } E^{n,\uparrow}= \begin{bmatrix}
    1.0\\0.2\\ 0.2\\ 1.0\\ 1.0\\ 0.87\\ 0.87\\ 1.0
\end{bmatrix}.$$

\noindent The equation system (\ref{eq:eqtd2}) of Example \ref{ex:conssystg} is also consistent, and it has the following solutions: 
$$ E^{n,\downarrow} = \begin{bmatrix}
    1\\ 0\\ 0.4\\ 0 
\end{bmatrix} \text{ and } E^{n,\uparrow}= \begin{bmatrix}
    1\\ 0.4\\ 0.4\\ 1
\end{bmatrix}.$$
\end{example}

\paragraph*{Relating the equation system to the matrix relation} 
The connection between the equation system and the matrix relation is presented in \cite{baaj2022learning} and detailed in the following. Let us consider a possibilistic rule-based system that uses input attributes $a_1, a_2, \ldots, a_k$ and an output attribute $b$, and a training data sample. This sample consists of a set of input possibility distributions $\widetilde{\pi}_{a_1(x)}, \widetilde{\pi}_{a_2(x)}, \ldots, \widetilde{\pi}_{a_k(x)}$, associated with the input attributes $a_1, a_2, \ldots, a_k$, and a targeted  output possibility distribution $\widetilde{\pi}_{b(x)}$ associated with the output attribute $b$.

\noindent
If the reduced equation system $(\Upsigma_n)$ constructed with the training data sample (see Subsection \ref{subsubsec:constructEStrainingdata}) is consistent, we can set the values of the parameters $s_1,r_1,s_2,r_2,\cdots,s_n,r_n$ of the rules $R^1,R^2,\cdots,R^n$ to the coefficients of a solution  $X = [x_l]_{1 \leq l \leq 2n}$  of the equation system $(\Upsigma_n)$ so that: 
$$s_i:= x_{2i -1} \text{ and } r_i := x_{2i} \quad \text{ for all } i\in\{1, 2, \dots, n\}.$$
Then, the matrix $\mathcal{M}_n$ can be built using  (\ref{eq:reducedonmnconstructed}).
 Inference from the possibilistic rule-based system using the possibility degrees of the rule premises computed with the input possibility distributions $\widetilde{\pi}_{a_1(x)}, \widetilde{\pi}_{a_2(x)}, \ldots, \widetilde{\pi}_{a_k(x)}$ of the training data sample (which define the input vector $I_n$, see (\ref{eq:inputvector})), and the learned rule parameter values $(s_i, r_i)$, yields an output possibility distribution $\pi^*_{b(x)}$, see (\ref{eq:pistarbxu}), whose possibility measure $\Pi$  satisfies  (\ref{eq:PiO}). The compatibility of the targeted output possibility distribution $\widetilde{\pi}_{b(x)}$  of the training data sample with the inferred output possibility distribution $\pi^*_{b(x)}$  is ensured by the following results:
 
\begin{restatable}{theorem}{theoremYOcmd}\label{theorem:YO}
The following equality holds:
\begin{equation}\label{eq:releqsyscompatible}
    \dot{Y}_n = \mathcal{O}_n  \quad \text{where} \quad     \mathcal{O}_n := \mathcal{M}_n \Box_{\max}^{\min} I_n.
\end{equation}
 \end{restatable}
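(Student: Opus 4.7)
The plan is to show that both sides of the claimed equality admit the \emph{same} explicit formula, by directly computing the min-max products componentwise using the structural lemmas already established for the rows of $\mathcal{M}_n$ and $\dot{\Gamma}_n$.

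First, I would fix an index $\mu = (t_1,t_2,\dots,t_n)\in \Lambda^{(n)}$ and compute $(\dot{\Gamma}_n \Box_{\max}^{\min} X)_\mu$ using the explicit formulas of Lemma \ref{lem:coG} together with the identifications $x_{2j-1} = s_j$, $x_{2j} = r_j$. Reading the contribution of each column pair $(2j-1, 2j)$ into the min-max product, I get $\max(\dot{\gamma}_{\mu, 2j-1}, s_j)$ and $\max(\dot{\gamma}_{\mu, 2j}, r_j)$, which by (\ref{eq:coefredG}) specialize to $\max(\lambda_j, s_j) = \alpha_j$ and $\max(1,r_j) = 1$ when $t_j = j$, and to $\max(1, s_j) = 1$ and $\max(\rho_j, r_j) = \beta_j$ when $t_j = -j$. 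Taking the min over $l = 1,\dots,2n$, the $1$-entries drop out, so
\begin{equation*}
(\dot{\Gamma}_n \Box_{\max}^{\min} X)_\mu \;=\; \min_{1\le j\le n}\chi_j(\mu), \quad \chi_j(\mu) := \begin{cases}\alpha_j & \text{if } t_j = j\\ \beta_j & \text{if } t_j = -j\end{cases}.
\end{equation*}

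Then I would repeat the same computation for $(\mathcal{M}_n \Box_{\max}^{\min} I_n)_\mu$, using Lemma \ref{lemma:coM} for the coefficients of $\mathcal{M}_n$ and the definition (\ref{eq:inputvector}) of $I_n$ (so $i_{2j-1} = \lambda_j$, $i_{2j} = \rho_j$). The contributions of the column pair $(2j-1,2j)$ are now $\max(\dot{m}_{\mu, 2j-1}, \lambda_j)$ and $\max(\dot{m}_{\mu, 2j}, \rho_j)$, which, by (\ref{eq:coefredM}), become $\max(s_j, \lambda_j) = \alpha_j$ and $\max(1, \rho_j) = 1$ when $t_j = j$, and $\max(1, \lambda_j) = 1$ and $\max(r_j, \rho_j) = \beta_j$ when $t_j = -j$. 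Taking the min over all columns again yields $\min_j \chi_j(\mu)$. The crucial symmetry exploited here is the commutativity $\max(s_j, \lambda_j) = \max(\lambda_j, s_j)$ and $\max(r_j, \rho_j) = \max(\rho_j, r_j)$: swapping the roles of the $(s_j, r_j)$-block of $X$ with the $(\lambda_j, \rho_j)$-block of $I_n$ leaves each column's contribution invariant.

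Combining the two computations gives the identity $\dot{\Gamma}_n \Box_{\max}^{\min} X = \mathcal{M}_n \Box_{\max}^{\min} I_n$ entrywise on $\Lambda^{(n)}$. Since $X$ is by hypothesis a solution of the reduced equation system $(\Upsigma_n)$, i.e., $\dot{Y}_n = \dot{\Gamma}_n \Box_{\max}^{\min} X$, we conclude $\dot{Y}_n = \mathcal{M}_n \Box_{\max}^{\min} I_n = \mathcal{O}_n$, which is the desired equality.

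There is no real obstacle: the whole content of the theorem is captured by the matching of the block expressions $\alpha_j$ and $\beta_j$ produced on both sides, which follows directly from the twin descriptions of the rows of $\mathcal{M}_n$ and $\dot{\Gamma}_n$ provided by Lemma \ref{lemma:coM} and Lemma \ref{lem:coG}. The only point requiring some care is the bookkeeping for the min-max product convention (max acting as product, min as aggregation, so the $1$-entries are absorbed) and the fact that the partition $\Lambda^{(n)} = \Lambda_j^{n,\top}\cup \Lambda_j^{n,\bot}$ of Definition \ref{def:Lj} is applied uniformly on both sides, which ensures that for a given $\mu$ the selection rule ``$\alpha_j$ or $\beta_j$'' is the same on the two sides.
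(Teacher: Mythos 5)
Your proposal is correct and follows essentially the same route as the paper's proof: both fix $\mu\in\Lambda^{(n)}$, expand the two min-max products blockwise via the coefficient formulas (\ref{eq:coefredG}) and (\ref{eq:coefredM}), use the partition $\Lambda^{(n)} = \Lambda_j^{n,\top}\cup\Lambda_j^{n,\bot}$ and the commutativity of $\max$ to match $\max(\dot{\gamma}_{\mu,2j-1},s_j)$ with $\max(\dot{m}_{\mu,2j-1},\lambda_j)$ (and likewise for the even columns), and then invoke the hypothesis that $X$ solves $(\Upsigma_n)$. Your explicit identification of the matched terms with $\alpha_j$ and $\beta_j$ is a cosmetic refinement, not a different argument.
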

\noindent See Subsection \ref{subsec:proof:YO} for the proof. 
Using  the previous notations, two corollaries can be deduced from this result:
\begin{corollary}
For each non-empty subset $E_k^{(n)}$  with $k \in J^{(n)}$ of the partition of $D_b$, we have:
\[ \Pi(E_k^{(n)}) =  \widetilde\Pi(E_k^{(n)})\]
\noindent where $\Pi, \widetilde\Pi$ are the possibility measures associated with $\pi^*_{b(x)},\widetilde{\pi}_{b(x)}$ respectively.
\end{corollary}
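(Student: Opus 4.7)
The plan is to derive the corollary directly from Theorem \ref{theorem:YO} by comparing both sides componentwise and then using the bijection $\Psi$ from Lemma \ref{lemma:mutokmu} to translate statements indexed by $\mu \in \Lambda^{(n)}$ into statements indexed by $k \in J^{(n)}$. There is no real obstacle: the result is essentially a bookkeeping corollary of the definitions of $\dot{Y}_n$ and $\mathcal{O}_n$ combined with the theorem.

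First, I would recall from the construction of $\dot{Y}_n$ for a training data sample (see Subsection \ref{subsubsec:constructEStrainingdata} and equation (\ref{eq:buildupsigman})) that, writing $\Lambda^{(n)} = \{\mu_1, \mu_2, \dots, \mu_{\omega^{(n)}}\}$ in increasing order with respect to $\preceq^{(n)}$, the $l$-th component of $\dot{Y}_n$ equals $\widetilde{\Pi}(\sigma^{(n)}(\mu_l))$. Symmetrically, by the construction (\ref{eq:reducedonmnconstructed}) of $\mathcal{O}_n$ combined with Proposition \ref{proposition:PiO}, the $l$-th component of $\mathcal{O}_n$ equals $\Pi(\sigma^{(n)}(\mu_l))$, where $\Pi$ is the possibility measure associated with the inferred output distribution $\pi^*_{b(x)}$.

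Second, I would invoke Theorem \ref{theorem:YO}, which asserts $\dot{Y}_n = \mathcal{O}_n$. Equating components row by row yields
\[
\widetilde{\Pi}(\sigma^{(n)}(\mu_l)) = \Pi(\sigma^{(n)}(\mu_l)) \quad \text{for all } l \in \{1, 2, \dots, \omega^{(n)}\}.
\]

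Third, I would use Lemma \ref{lemma:mutokmu}, which provides the bijection $\Psi : \Lambda^{(n)} \to J^{(n)}$ defined by $\mu \mapsto k_\mu$, together with the identity $\sigma^{(n)}(\mu) = E_{k_\mu}^{(n)}$. Since $\Psi$ is bijective, as $\mu_l$ ranges over $\Lambda^{(n)}$ the index $k_{\mu_l}$ ranges over every element of $J^{(n)}$. Substituting $\sigma^{(n)}(\mu_l) = E_{k_{\mu_l}}^{(n)}$ into the componentwise equality above gives $\widetilde{\Pi}(E_k^{(n)}) = \Pi(E_k^{(n)})$ for all $k \in J^{(n)}$, which is exactly the claim.
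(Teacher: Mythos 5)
Your proof is correct and follows essentially the same route as the paper's: both identify the components of $\dot{Y}_n$ and $\mathcal{O}_n$ as $\widetilde{\Pi}(E_{k_\mu}^{(n)})$ and $\Pi(E_{k_\mu}^{(n)})$ via (\ref{eq:buildupsigman}), (\ref{eq:PiO}) and Lemma \ref{lemma:mutokmu}, then apply Theorem \ref{theorem:YO} componentwise and use the bijectivity of $\Psi$ to conclude for every $k \in J^{(n)}$. No gaps.
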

\begin{proof}
By (\ref{eq:buildupsigman}) and Lemma \ref{lemma:mutokmu},  we have $\dot{Y}_n= [\widetilde \Pi(E_{k_\mu}^{(n)})]_{\mu\in \Lambda^{(n)}}$. Likewise, by (\ref{eq:PiO}), we have ${\cal O}_n = [\Pi(E_{k_\mu}^{(n)})]_{\mu\in \Lambda^{(n)}}$.  
The mapping 
\begin{center}
\begin{tabular}{ll}
$\Psi :$ & $\Lambda^{(n)} \rightarrow J^{(n)}$\\
& $\mu \mapsto k_\mu
$ 
\end{tabular}
\end{center}
being bijective, see Lemma \ref{lemma:mutokmu}, we deduce from the equality $\dot{Y}_n= {\cal O}_n$   that $\Pi(E_k^{(n)}) =  \widetilde\Pi(E_k^{(n)})$ for all $k \in J^{(n)}$.
\end{proof}

\begin{corollary}
  Suppose that $\text{card}(E_k^{(n)}) = 1$ for all $k\in J^{(n)}$ i.e., the non-empty subsets of the partition  $D_b$ are singletons. Then,  we have:  $$\widetilde{\pi}_{b(x)}(u) = \pi^*_{b(x)}(u) \quad \text{ for each } u \in D_b.$$
\end{corollary}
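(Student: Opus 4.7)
The plan is to invoke the previous corollary, which already gives $\Pi(E_k^{(n)}) = \widetilde{\Pi}(E_k^{(n)})$ for every $k \in J^{(n)}$, and then translate this equality of set-measures into an equality of point-wise possibility degrees via the singleton hypothesis.

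First I would observe that the family $(E_k^{(n)})_{1 \leq k \leq 2^n}$ is, by construction (see (\ref{eq:partition})), a partition of $D_b$. Hence every element $u \in D_b$ belongs to exactly one block $E_{k(u)}^{(n)}$, and this block is necessarily non-empty since it contains $u$; therefore $k(u) \in J^{(n)}$. Under the hypothesis that each non-empty block is a singleton, we must have $E_{k(u)}^{(n)} = \{u\}$.

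Next I would apply the previous corollary at the index $k = k(u)$, which yields
\begin{equation*}
\Pi\bigl(E_{k(u)}^{(n)}\bigr) \;=\; \widetilde{\Pi}\bigl(E_{k(u)}^{(n)}\bigr).
\end{equation*}
Since $E_{k(u)}^{(n)} = \{u\}$, and since for any possibility distribution $\pi$ on $D_b$ with associated possibility measure $M$ one has $M(\{u\}) = \pi(u)$, the left-hand side equals $\pi^{\ast}_{b(x)}(u)$ and the right-hand side equals $\widetilde{\pi}_{b(x)}(u)$. As $u$ was arbitrary in $D_b$, the equality $\widetilde{\pi}_{b(x)}(u) = \pi^{\ast}_{b(x)}(u)$ holds for every $u \in D_b$.

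There is no real obstacle here: the result is essentially a rephrasing of the previous corollary, and the only substantive ingredient beyond it is the elementary fact that the singleton assumption forces each element of $D_b$ to coincide with the unique block of the partition that contains it, so that set-level agreement of $\Pi$ and $\widetilde{\Pi}$ on the blocks transfers to point-wise agreement of the two distributions. One minor point worth mentioning in the write-up is that the singleton hypothesis implicitly imposes $\mathrm{card}(D_b) = \omega^{(n)}$, so the partition exhausts $D_b$ via exactly $\omega^{(n)}$ singletons indexed by $J^{(n)}$.
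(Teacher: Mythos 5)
Your proposal is correct and follows exactly the route the paper intends: the paper leaves this corollary as an immediate consequence of the preceding one (marked only with a \qed), and your argument — locating each $u\in D_b$ in its unique, necessarily non-empty block $E_{k(u)}^{(n)}=\{u\}$ and then using $\Pi(\{u\})=\pi(u)$ to convert set-level agreement into point-wise agreement — is precisely the intended justification.
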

\qed

\subsection{Assessing the quality of the data used to construct the equation system \texorpdfstring{$(\Upsigma_n)$}{(Upsigma n)}}
\label{subsec:poorqualtrainingdata}
The reduced equation system $(\Upsigma_n): \dot{Y}_n = \dot{\Gamma_n} \Box_{\max}^{\min} X$, see (\ref{eq:upsigma}), may be inconsistent for several reasons:
\begin{itemize}
    \item The equation system may be constructed with a training data sample that is altered by noise or is an outlier,
    \item The possibilistic rule base is not well-specified and this may prevent a desired result from being obtained.\\ 
    This situation can occur if the propositions in the premises and conclusions of the rules are poorly designed. For example, an error may arise in the conclusion "$b(x) \in Q$" if the set $Q$ is incorrectly defined; specifically, if it includes elements that do not align with the premise, leading to inconsistent outputs from the possibilistic rule-based system.
     Additionally, compatible premises with conflicting conclusions  can create similar inconsistencies.
\end{itemize}

 Assuming that the rule base is well-specified, the extent to which the training data are of poor quality can be measured using recent results on the handling of inconsistent $\min-\max$ equations systems  \cite{BAAJ2024}. Since, as far as we know, this feature is new for neuro-symbolic approaches, we now explain how the quality of the data can be assessed thanks to it.

\paragraph*{\texorpdfstring{Chebyshev distance associated with the second member $\dot{Y}_n$}{Chebyshev distance associated with the second member dot{Y}n}} 

When the reduced equation system $(\Upsigma_n)$ is constructed from a training data sample, it may be inconsistent if the data is of poor quality. The extent to which the training data sample is of poor quality can be measured  by studying the inconsistency of the equation system $(\Upsigma_n)$ using the work in \cite{BAAJ2024}.
 \cite{BAAJ2024} studies the problem of finding the closest consistent min-max equation systems to a given  min-max inconsistent equation system, assuming all consistent equation systems use the same matrix: that of the inconsistent equation system. Here, a consistent equation system is said to be as close as possible to the initial inconsistent system when the distance (in terms of $L_\infty$ norm) between the second member of the consistent system and that of the inconsistent one is minimal. As explained in the following, the work of \cite{BAAJ2024} can be exploited to compute this minimum distance efficiently, thus providing a means to measure the extent to which the training data sample used to construct the  equation system $(\Upsigma_n)$ is of poor quality.

Let us explain how to apply the approach presented in \cite{BAAJ2024} to the equation system $(\Upsigma_n)$.  If $(\Upsigma_n)$ is inconsistent, the goal is to find a closest consistent equation system of the form $d = \dot{\Gamma_n} \Box_{\max}^{\min} X$, which uses the same matrix $\dot{\Gamma_n} = \begin{bmatrix} \dot{\gamma}_{\mu, l} \end{bmatrix}_{\mu\in \Lambda^{(n)}, 1 \leq l \leq 2n}$ as $(\Upsigma_n)$, where $X$ is an unknown vector, and $d$ is a second member that is as close as possible to the second member $\dot{Y}_n$  of $(\Upsigma_n)$. The minimal distance is called the Chebyshev distance associated with the second member $\dot{Y}_n = \begin{bmatrix} y_\mu \end{bmatrix}_{\mu\in \Lambda^{(n)}}$ of $(\Upsigma_n)$ and it is defined by, see \cite{BAAJ2024}:
$$\nabla = \min_{d \in \mathcal{D}} \Vert \dot{Y}_n - d \Vert_{\infty},$$ 
where the set $\mathcal{D}$ given by 
\begin{align}\label{eq:setD}
    \mathcal{D} &= \{d \mid d = \begin{bmatrix} d_\mu \end{bmatrix}_{\mu\in \Lambda^{(n)}} \text{ with } d_\mu \in [0,1] \text{ and the equation system } d = \dot{\Gamma_n} \Box_{\max}^{\min} X \text{ is consistent} \} 
\end{align}
is formed by the second members of consistent equation systems defined with the same matrix: the matrix  $\dot{\Gamma_n}$ of $(\Upsigma_n)$. An obvious observation is that, except for trivial inconsistent min-max equation systems, the set $\mathcal{D}$ is not countable. However, leveraging previous work of  \cite{cuninghame1995residuation}, which showed that the Chebyshev distance $\nabla$ is the minimal solution of a particular vector inequality (see Corollary 5 in  \cite{BAAJ2024}), \cite{BAAJ2024} provided an explicit analytical formula for computing $\nabla$ in the context of a min-max equation system (see Section 6 of \cite{BAAJ2024}).

\noindent For the case of the equation system $(\Upsigma_n)$, the analytical formula of the Chebyshev distance is (we use the notation $x^+ = \max(x,0)$):
\begin{equation}\label{eq:Nabla}
    \nabla = \max_{\mu\in \Lambda^{(n)}} \nabla_\mu
\end{equation}
where for each $\mu\in \Lambda^{(n)}$: 
\begin{equation} \label{eq:Nablamini} 
\nabla_\mu =  \min_{1 \leq l \leq 2n}\,\max[ (\dot{\gamma}_{\mu, l}-y_\mu)^+,  \max_{\mu'\in \Lambda^{(n)}}\,  \,\min(\frac{(y_{\mu'} - y_\mu)^+}{2}, (y_{\mu'} - \dot{\gamma}_{\mu',l})^+)].
\end{equation}
\cite{BAAJ2024} shows that the following equivalence holds:
\begin{equation}\label{eq:nablaconsist}
    \nabla = 0 \Longleftrightarrow \text{ the equation system } (\Upsigma_n) \text{ is consistent.}
\end{equation}

\noindent For the equation system $(\Upsigma_n)$ constructed using a training data sample, the Chebyshev distance $\nabla$ is a measure of the reliability of this data sample. Given that the possibilistic rule base system is well-specified, the more reliable the training data sample used to build the equation system $(\Upsigma_n)$, the lower the Chebyshev distance  $\nabla$ of the equation system $(\Upsigma_n)$.

Accordingly, we can evaluate  the quality of a training data sample  based on a tolerance threshold in the following way: if the equation system $(\Upsigma_n)$ constructed from the training data sample is consistent or, more generally, if the Chebyshev distance remains  strictly  lower than the preset threshold, then the training data sample is considered as reliable. Formally:
\begin{definition}\label{def:reliabletau}
Let $\tau > 0$ be a fixed tolerance threshold 
used to assess whether the training data are reliable. The equation system $(\Upsigma_n)$ is constructed from a \textit{reliable} training data sample if its Chebyshev distance  $\nabla$,  see (\ref{eq:Nabla}) is strictly lower than $\tau$, i.e., $\nabla < \tau$.
\end{definition}
\noindent As a matter of illustration, let us consider again our running example:
 
\begin{example}{(Example \ref{ex:firstrules-set}, 
cont'ed)} \\
\label{ex:chebdist-12}
We assume that a threshold $\tau$ is set to $0.01$. We consider a training data sample associated with the first set of rules (Example \ref{ex:firstrules-set}), where the possibility degrees of the rule premises inferred with input possibility distributions are $\lambda_1 = 1, \rho_1 =0,  \lambda_2= 0,  \rho_2= 1, \lambda_3= 1,  \rho_3= 0, \lambda_4=  0, \rho_4 = 1$  and the following  output possibility distribution is targeted: $\Pi( \{ (1,1) \}) =0.005,
     \Pi( \{ (0,1) \}) = 0.0,
      \Pi( \{ (1,0) \}) = 0.0,
       \Pi( \{ (0,0) \}) = 1$.  \\
The reduced equation system, see (\ref{eq:ex:learning-system1}), is:
\begin{equation}\label{eq:incsys-ex12}
    \begin{bmatrix}
    \Pi( \{ (1,1) \}) \\
     \Pi( \{ (0,1) \}) \\
      \Pi( \{ (1,0) \}) \\
       \Pi( \{ (0,0) \}) 
\end{bmatrix}=
\begin{bmatrix}
  0.005\\ 0\\ 0\\ 1
\end{bmatrix}= \begin{bmatrix}
   1& 0& 0& 1& 1& 0& 0& 1\\
    1& 1& 1& 1& 1& 0& 0& 1\\
    1& 0& 0& 1& 1& 1& 1& 1\\
    1& 1& 1& 1& 1& 1& 1& 1
\end{bmatrix}
\Box_{\max}^{\min} X.
\end{equation}

The Chebyshev distance (\ref{eq:Nabla}) associated with the equation system can now be computed. We have $\nabla = \max_{\mu \in {\Lambda^{(n)}}} \nabla_i$ where $\nabla_{\mu_1} = 0$, $\nabla_{\mu_2} = 0.0025$, $\nabla_{\mu_3} = 0.0025$ and $\nabla_{\mu_4} = 0$. So, the Chebyshev distance associated with the equation system $(\Upsigma_n)$ is equal to $\nabla = 0.0025$ and therefore the equation system $(\Upsigma_n)$  is inconsistent. Since $\tau = 0.01$, the data sample is considered as reliable nevertheless.

\end{example}

In a similar way, we can  compute the Chebyshev distance associated with the equation system $(\Omega)$, see (\ref{eq:sysOmega}), which is involved in the backpropagation mechanism presented in Subsection \ref{subsub:backpropagation}, and assess the quality of the data used to construct $(\Omega)$.

\subsection{Handling the inconsistency of the equation system \texorpdfstring{$(\Upsigma_n)$}{(Upsigma n)}}
\label{subsec:handlinginconsisteq}
In this subsection, we show how to handle the equation system $(\Upsigma_n)$ when it is inconsistent, i.e., when its Chebyshev distance $\nabla$, see (\ref{eq:Nabla}) is strictly positive. Based on \cite{BAAJ2024}, we show how to minimally update the data used to construct the inconsistent equation system $(\Upsigma_n)$ in order to get consistent equation systems that are as close as possible to $(\Upsigma_n)$, and how to obtain approximate solutions of the inconsistent equation system $(\Upsigma_n)$ (which are solutions of the consistent equation systems that are as close as possible to $(\Upsigma_n)$).

\paragraph*{Chebyshev approximations of the second member $\dot{Y}_n$}
\noindent From an inconsistent equation system  $(\Upsigma_n)$, one can derive a \textit{consistent} equation system as close as possible  to $(\Upsigma_n)$ by computing a Chebyshev approximation of the second member $\dot{Y}_n$ of the system $(\Upsigma_n)$, which is a vector $d = [d_\mu]_{\mu\in \Lambda^{(n)}}$ such that  $\Vert \dot{Y}_n - d\Vert_{\infty} = \nabla$  and  the equation system $d = \dot{\Gamma_n} \Box_{\max}^{\min} X$, where $X$ is an unknown vector, is  consistent, see \cite{BAAJ2024}. 

\noindent The lowest Chebyshev approximation of $\dot{Y}_n$ is defined as the following vector \cite{BAAJ2024}: \begin{equation}\label{eq:lowest}
   \widetilde{\dot{Y}_n}= \dot{\Gamma_n} \Box_{\max}^{\min} ({\dot{\Gamma_n}}^t \Box_{\epsilon}^{\max} \underline{\dot{Y}_n}({\nabla})) \quad \text{ where } \quad \underline{\dot{Y}_n}(\mathbf{\nabla}) = \begin{bmatrix}
        (y_\mu - {\nabla})^+
    \end{bmatrix}_{\mu \in \Lambda^{(n)}}.
\end{equation}
For computing $ \widetilde{\dot{Y}_n}$, a matrix product $\Box_\epsilon^{\max}$ is performed, which uses $\epsilon-$product, see (\ref{eq:epsilonproduct}) as the product and $\max$ as the addition, followed by a matrix product $\Box_{\max}^{\min}$, which uses the function $\max$ as the product and $\min$ as the addition. We have $\Vert \dot{Y}_n- \widetilde{\dot{Y}_n}\Vert_{\infty} = \nabla$ and the equation system $\widetilde{\dot{Y}_n} = \dot{\Gamma_n} \Box_{\max}^{\min}X$ is consistent. 
Deriving other Chebyshev approximations of $\dot{Y}_n$ would be more demanding computationally speaking, see \cite{BAAJ2024}.

 \paragraph*{\texorpdfstring{Approximate solutions of the equation system $(\Upsigma_n)$}{Approximate solutions of the equation system (Upsigma n)}} The methods introduced in \cite{BAAJ2024} can be exploited to obtain approximate solutions of an inconsistent equation system which are defined as follows: \textit{an approximate solution of an inconsistent equation system is a solution of one of the closest consistent equation systems} (an equation system whose matrix is that of the inconsistent equation system and whose second member is a Chebyshev approximation of the second member of the inconsistent equation system). 

\noindent The lowest approximate solution of the inconsistent equation system $(\Upsigma_n)$ is the following vector \cite{BAAJ2024}:
\begin{equation}\label{eq:lowestapprox}
    {\dot{\Gamma_n}}^t \Box_{\epsilon}^{\max} \underline{\dot{Y}_n}({\nabla}),
\end{equation}
\noindent and thus it is also the lowest solution of the consistent equation system $\widetilde{\dot{Y}_n} = \dot{\Gamma_n} \Box_{\max}^{\min}X$ (see \cite{BAAJ2024}), where $\widetilde{\dot{Y}_n}$ is the lowest Chebyshev approximation of $\dot{Y}_n$, see (\ref{eq:lowest}). Any approximate solution $x^\ast$ of the inconsistent equation system $(\Upsigma_n)$ such as the vector ${\dot{\Gamma_n}}^t \Box_{\epsilon}^{\max} \underline{\dot{Y}_n}({\nabla})$ 
satisfies $\Vert {\dot{\Gamma_n}} \Box_{\max}^{\min} x^\ast -  \dot{Y}_n   \Vert_{\infty} = \nabla$, which means that the vector $\dot{\Gamma_n} \Box_{\max}^{\min} x^\ast$ is a Chebyshev approximation of $\dot{Y}_n$.  
Again, obtaining other approximate  solutions 
would be more expensive in terms of computational costs, see \cite{BAAJ2024}.

 \begin{remark}
 \noindent If the equation system $(\Upsigma_n)$ is consistent, i.e., we have $\nabla = 0$, then the lowest Chebyshev approximation $\widetilde{\dot{Y}_n}$ of $\dot{Y}_n$ is equal to the second member $\dot{Y}_n$  of the equation system $(\Upsigma_n)$, i.e., we have $\widetilde{\dot{Y}_n}=\dot{\Gamma_n} \Box_{\max}^{\min} ({\dot{\Gamma_n}}^t \Box_{\epsilon}^{\max} \underline{\dot{Y}_n}({\nabla})) = \dot{Y}_n$,  and the lowest approximate solution of $(\Upsigma_n)$ is equal to the lowest solution of the equation system $(\Upsigma_n)$, i.e., we have $
    {\dot{\Gamma_n}}^t \Box_{\epsilon}^{\max} \underline{\dot{Y}_n}({\nabla}) = {\dot{\Gamma_n}}^t \Box_{\epsilon}^{\max} \dot{Y}_n.$
\end{remark}

\begin{example}
{(Example \ref{ex:firstrules-set}, 
cont'ed)} \\
In Example \ref{ex:chebdist-12}, we explained how to compute the Chebyshev distance $\nabla = 0.0025$ associated with the second member $\dot{Y}_n=\begin{bmatrix}
  0.005\\ 0\\ 0\\ 1
\end{bmatrix}$ of the equation system (\ref{eq:incsys-ex12}), which describes a targeted output possibility distribution. In the following, we indicate how to compute the lowest Chebyshev approximation of the second member $\dot{Y}_n$ of the equation system (\ref{eq:incsys-ex12}) and the lowest approximate solution of the equation system (\ref{eq:incsys-ex12}). 

From $\nabla$ and $\dot{Y}_n$, the following vector can first be computed:
$$\underline{\dot{Y}_n}(\mathbf{\nabla}) = \begin{bmatrix}
       ( y_\mu - {\nabla})^+
    \end{bmatrix}_{\mu \in \Lambda^{(n)}} = \begin{bmatrix}
        0.0025\\
        0\\
        0\\
        0.9975
    \end{bmatrix}.$$
\noindent Then, the lowest Chebyshev approximation of $\dot{Y}_n$ is computed as the following vector: $$\widetilde{\dot{Y}_n} =  \dot{\Gamma_n} \Box_{\max}^{\min} ({\dot{\Gamma_n}}^t \Box_{\epsilon}^{\max} \underline{\dot{Y}_n}({\nabla}))= \begin{bmatrix}
  0.0025\\0.0025\\ 0.0025\\ 1
\end{bmatrix}.$$ The equation system $\widetilde{\dot{Y}_n} = \dot{\Gamma_n} \Box_{\max}^{\min} X$ is consistent.  We have $\Vert \dot{Y}_n- \widetilde{\dot{Y}_n}\Vert_{\infty} = \nabla$. We observe that a minimal  update of the targeted output possibility distribution described in the vector $\dot{Y}_n$  has been performed to obtain the second member $\widetilde{\dot{Y}_n}$ of the equation system.

\noindent The lowest approximate solution of the equation system (\ref{eq:incsys-ex12}) is the vector:
\begin{equation}
   {\dot{\Gamma_n}}^t \Box_{\epsilon}^{\max} \underline{\dot{Y}_n}({\nabla})= \begin{bmatrix}
        0.0\\ 
        0.0025\\ 
        0.0025\\ 
        0.0\\ 
        0.0\\ 
        0.0025\\ 
        0.0025\\ 
        0.0
    \end{bmatrix}.
\end{equation}
It is the lowest solution of the consistent equation system $\widetilde{\dot{Y}_n} = \dot{\Gamma_n} \Box_{\max}^{\min} X$.
\end{example}

\subsection{\texorpdfstring{Application to training data samples of multi-class classification problems}{Application to training data samples of multi-class classification problems}}
\label{subsec:apptotrainingdatasample}

\noindent The results from \cite{BAAJ2024} can be applied in a multi-class classification problem to handle the inconsistency of the equation system $(\Upsigma_n)$, which is constructed from a training data sample. This training data sample consists of input possibility distributions and a targeted output possibility distribution, where only the target (meta-)concept has a possibility degree of one, and all other concepts have a possibility degree of zero. 

In the following, we show how 
the method presented in \cite{BAAJ2024} can be leveraged to slightly modify the training data sample in order to recover consistency. Additionally, we show that considering a threshold $\tau$ strictly below $0.5$ for the Chebyshev distance $\nabla$ of $(\Upsigma_n)$, see (\ref{eq:Nabla}), is suitable for addressing multi-class classification problems, as it allows us to determine whether or not the training data sample is reliable.

\paragraph*{Constructing a consistent equation system close to the inconsistent equation system $(\Upsigma_n)$}
Assuming that the equation system $(\Upsigma_n)$ has been constructed from a training data sample (Subsection \ref{subsubsec:constructEStrainingdata}), and that the Chebyshev distance $\nabla$ of $(\Upsigma_n)$, see (\ref{eq:Nabla}), has been computed, a consistent equation system close to $(\Upsigma_n)$ can be generated, where:
\begin{itemize}
    \item the matrix is $\dot{\Gamma_n}$, 
    \item the second member is the lowest Chebyshev approximation $\widetilde{\dot{Y}_n}$ of  $\dot{Y}_n$.
\end{itemize}
The adjustments required to recover consistency only apply to the targeted output possibility distribution described  in the initial second member $\dot{Y}_n$ and thus to the possibility degrees used to describe the degrees of
membership of the instance to the  (meta-)concepts. In practice, an equation system built from a high-quality training data sample is often nearly consistent, meaning its Chebyshev distance $\nabla$ is close to zero.

The following definition characterizes a consistent equation system close to $(\Upsigma_n)$ with respect to the tolerance threshold $\tau$, see Definition \ref{def:reliabletau}. 

\begin{definition}\label{def:reduced-es}
A \textit{consistent equation system built using a reliable training data sample with respect to the tolerance threshold $\tau$}  (Definition \ref{def:reliabletau}) is defined by:
   \begin{equation}\label{eq:cs-sigma-good}
    (\widetilde \Upsigma_n): \widetilde{\dot{Y}_n} = \dot{\Gamma_n} \Box_{\max}^{\min} X 
\end{equation}
\noindent where:
\begin{itemize}
    \item the Chebyshev distance $\nabla$, see (\ref{eq:Nabla}), of the equation system $(\Upsigma_n)$ is such that $\nabla < \tau$,
    \item the second member $\widetilde{\dot{Y}_n}$ is the lowest Chebyshev approximation, see (\ref{eq:lowest}) of the second member $\dot{Y}_n$ of the equation system $(\Upsigma_n)$.
\end{itemize} 
\end{definition}

\paragraph*{Determining whether a training data sample is reliable} 
For multi-class classification problems,   the second member $\dot{Y}_n$ of the equation system $(\Upsigma_n)$ describes a targeted output possibility distribution where only one (meta-)concept (the target) is fully possible (i.e., with a possibility degree of one), while all other concepts are impossible (each having a possibility degree of zero). In other words,
the second member $\dot{Y}_n$  is a vector in which one component is equal to one and all the others are equal to zero.
Given that the rule base related to the equation system $(\Upsigma_n)$ is well-specified, since the lowest Chebyshev approximation  $\widetilde{\dot{Y}}$ is such that $\Vert \dot{Y} - \widetilde{\dot{Y}} \Vert_{\infty} = \nabla$, where $\nabla$ is the Chebyshev distance  of  $(\Upsigma_n)$, see (\ref{eq:Nabla}),  the following result holds: if $\nabla < 0.5$, then the component of the lowest Chebyshev approximation $\widetilde{\dot{Y}_n}$  with the highest value is \textit{unique} and its index is exactly the index of the component that is set to one in $\dot{Y}_n$. This justifies the use of the $0.5$ threshold: if $\nabla < \tau < 0.5$, the training data sample can be considered reliable to some extent, since the highest  possibility degree in $\widetilde{\dot{Y}_n}$ is the expected one. When $\nabla \geq  0.5$, the training data sample may contain outliers, as the highest possibility degree in $\widetilde{\dot{Y}_n}$ might not align with the expected target (meta-)concept (so the training data sample is not reliable in this sense), but a consistent equation system can still be obtained using the lowest Chebyshev approximation $\widetilde{\dot{Y}_n}$ of $\dot{Y}_n$, see (\ref{eq:lowest}). Further reasoning to determine whether the training data sample is reliable may also be done by checking whether the component with the highest degree  in the lowest Chebyshev approximation $\widetilde{\dot{Y}_n}$ is unique and is the expected one.

\subsection{Learning rule parameters with multiple training data}
\label{subsubsec:learningmultipledata}
\cite{BAAJ2024} shows how to tackle the problem of determining values of rule parameters from a training data set, i.e., a set of training data samples. The approach is as follows: 
\begin{enumerate}
    \item For each sample in the data set, its associated equation system $(\Upsigma)$ is constructed, see (\ref{eq:upsigma}),
    \item  All the equation systems are then stacked into one equation system denoted by $(\mathbf{\Sigma})$,
    \item  Methods for solving $\min-\max$ equation system  and for handling inconsistent $\min-\max$ equation systems are leveraged to obtain (approximate) solutions of $(\mathbf{\Sigma})$ for the rule parameters that are compatible with all the training data samples.
\end{enumerate}
Based on the same ideas, we now present an approach for learning the values of the rule parameters from a training data set.\\
 \paragraph*{Selecting reliable training data samples}
To address the problem of learning the values of rule parameters using very large datasets, our approach focuses on a subset of the training data. The goal is to select the training data samples that are reliable with respect to a tolerance threshold $\tau$ (Definition \ref{def:reliabletau}). The value of $\tau$ can be determined using a validation dataset, see Subsection \ref{subsec:thresholdsdetermination}. This selection process is uniquely motivated by the fact that datasets can be very large, and that possibilistic rule-based systems can have a large number of rules, which can lead to memory and computational problems when using laptops to learning the values of rule parameters.
Of course, this selection process is not mandatory when dealing with commonly-sized datasets and/or with a sufficiently powerful computer (we have verified this experimentally). A simple way of testing learning without the selection process  is to set the tolerance threshold $\tau$ to a value strictly greater than $1$ (a strict inequality is used in Definition \ref{def:reliabletau}), which leads to selecting all the training data samples.

\noindent The selection process is as follows. We suppose that the value of the tolerance threshold $\tau$  is set, and that a subset of  $N$ training data samples, which are considered reliable with respect to $\tau$, has been extracted from the set of training data. Thus, 
for each selected training data sample, the reduced equation system $(\Upsigma)$ built with it has a Chebyshev distance $\nabla$, see (\ref{eq:Nabla}), which is strictly lower than~$\tau$.

\paragraph*{Learning according to the selected training data samples} Our next goal is then to learn the values of the rule parameters according to these $N$ selected reliable training data samples.

\noindent Using Definition \ref{def:reduced-es}, $N$  consistent  reduced equation systems are constructed, based on the selected reliable training data samples. For $i = 1,2, \dots, N$, each equation system is denoted by:
\[
(\widetilde{\Upsigma}^{(i)}): \widetilde{\dot{Y}^{(i)}} = \dot{\Gamma^{(i)}} \Box_{\max}^{\min} X.
\]
So, according to Definition \ref{def:reduced-es}, for each equation system $(\widetilde{\Upsigma}^{(i)})$, its second member $\widetilde{\dot{Y}^{(i)}}$ is the lowest Chebyshev approximation of its initial second member ${\dot{Y}^{(i)}}$ and all these equation systems have the same unknown: a vector $X$ whose components are the rule parameters. From the matrices $\dot{\Gamma^{(1)}}, \dot{\Gamma^{(2)}},\dots, \dot{\Gamma^{(N)}}$ and the second members $\widetilde{\dot{Y}^{(1)}}, \widetilde{\dot{Y}^{(2)}}, \dots, \widetilde{\dot{Y}^{(N)}}$   of the equation systems, a new matrix and a new column vector are formed: 
\begin{equation}
     \mathbf{\Gamma} = \begin{bmatrix}
        \dot{\Gamma^{(1)}}\\
         \dot{\Gamma^{(2)}}\\
        \vdots\\
         \dot{\Gamma^{(N)}}
    \end{bmatrix} \quad \text{ and } \quad \mathbf{Y} =  \begin{bmatrix}
        \widetilde{\dot{Y}^{(1)}}\\
        \widetilde{\dot{Y}^{(2)}}\\
        \vdots \\
        \widetilde{\dot{Y}^{(N)}}
    \end{bmatrix} 
\end{equation}
\noindent The following equation system  stacks the  consistent  equation systems $(\widetilde \Upsigma^{(1)}), (\widetilde \Upsigma^{(2)}),\dots,(\widetilde \Upsigma^{(N)})$ into one:
\begin{equation}\label{eq:stackedES}
    (\mathbf{\Sigma}): \mathbf{Y} = \mathbf{\Gamma} \Box^{\min}_{\max} X, 
\end{equation}
\noindent where    $\mathbf{\Gamma} = \begin{bmatrix}
    \mathbf{\gamma}_{\mathbf{ij}}
\end{bmatrix}_{1 \leq \mathbf{i} \leq \mathbf{n}, 1 \leq \mathbf{j} \leq 2\mathbf{n}}$ and $\mathbf{Y} = \begin{bmatrix}
    \mathbf{y}_{\mathbf{i}}
\end{bmatrix}_{1 \leq \mathbf{i} \leq \mathbf{n}}$. We have $\mathbf{n} = N\cdot \text{card}( \Lambda^{(n)})$ where $\Lambda^{(n)}$ is common to all equation systems $(\widetilde \Upsigma^{(1)}), (\widetilde \Upsigma^{(2)}),\dots,(\widetilde \Upsigma^{(N)})$.

\noindent By solving the equation system $(\mathbf{\Sigma})$, solutions compatible with all the reliable training data samples are obtained, which allow us to assign values to the rule parameters. In our approach, the lowest approximate solution of the equation system $(\mathbf{\Sigma})$ is used, see \cite{BAAJ2024}. It is given by the following vector:  
\begin{equation}\label{eq:lowestapproxsol}
    {\mathbf{\Gamma}}^t \Box_{\epsilon}^{\max} \underline{{\mathbf{Y}}}({\boldsymbol{{\nabla}}})
\end{equation}

\noindent where $\underline{{\mathbf{Y}}}({\boldsymbol{{\nabla}}}) = \begin{bmatrix}
        (\mathbf{{y_i}} - {\boldsymbol{{\nabla}}})^+
    \end{bmatrix}_{1 \leq i \leq \textbf{n}}$  is constructed using the Chebyshev distance $\boldsymbol{{\nabla}}$    associated with the equation system  $(\mathbf{\Sigma})$. The formula of the Chebyshev distance associated with a  $\min-\max$  equation system introduced in    \cite{BAAJ2024}, which is adapted for  
    the equation system $(\Upsigma_n)$ in (\ref{eq:Nabla}), can be straightforwardly adapted for the equation system $(\mathbf{\Sigma})$.\\
    
\begin{remark}If the equation system $(\mathbf{\Sigma})$ is consistent, its Chebyshev distance is equal to zero and so the lowest approximate solution is equal to its lowest solution, i.e., we have  $\mathbf{\Gamma}^t \Box_{\epsilon}^{\max} \mathbf{Y} = {\mathbf{\Gamma}}^t \Box_{\epsilon}^{\max} \underline{{\mathbf{Y}}}({\boldsymbol{{\nabla}}})$.  
\end{remark}

\noindent There are many reasons for the equation system $(\mathbf{\Sigma})$ to be inconsistent. For instance, some training data samples may conflict with others, such as when nearly identical instances are associated with significantly different output possibility distributions. However, in our experiments (Section \ref{sec:exp}),  we observed that, by constructing for each reliable training data sample an associated \textit{consistent} equation system using Definition \ref{def:reduced-es} and then stacking these consistent equation systems into the equation system $(\mathbf{\Sigma})$, the resulting equation system $(\mathbf{\Sigma})$ was  consistent or almost consistent with regard to the value of its Chebyshev distance ${\boldsymbol{{\nabla}}}$. 

\begin{example}
Consider the equation system
$(\widetilde \Upsigma^{(1)}): \widetilde{\dot{Y}^{(1)}} = \dot{\Gamma^{(1)}} \Box_{\max}^{\min} X$, which is associated with a reliable training data sample:
$$\begin{bmatrix}
   0\\
   0\\
   0\\
   1
\end{bmatrix}= \begin{bmatrix}
  1& 0& 0& 1& 1& 0& 0& 1\\
    1& 1& 1& 1& 1& 0& 0& 1\\
    1& 0& 0& 1& 1& 1& 1& 1\\
    1& 1& 1& 1& 1& 1& 1& 1
\end{bmatrix}
\Box_{\max}^{\min} X 
$$
and the equation system $(\widetilde \Upsigma^{(2)}): \widetilde{\dot{Y}^{(2)}} = \dot{\Gamma^{(2)}}  \Box_{\max}^{\min} X$ which is associated with another reliable training data sample:
$$\begin{bmatrix}
  0.0025\\ 0.0025\\ 0.0025\\ 1
\end{bmatrix}= \begin{bmatrix}
   1& 0& 0& 1& 1& 0& 0& 1\\
    1& 1& 1& 1& 1& 0& 0& 1\\
    1& 0& 0& 1& 1& 1& 1& 1\\
    1& 1& 1& 1& 1& 1& 1& 1
\end{bmatrix}
\Box_{\max}^{\min} X. 
$$ 
\end{example}
\noindent We stack $(\widetilde \Upsigma^{(1)})$ and  $(\widetilde \Upsigma^{(2)})$ to form the equation system $(\mathbf{\Sigma})$:
\begin{equation*}
    \begin{bmatrix}
  0\\0\\0\\1\\ 0.0025\\ 0.0025\\ 0.0025\\ 1
\end{bmatrix}= \begin{bmatrix}
1& 0& 0& 1& 1& 0& 0& 1\\
    1& 1& 1& 1& 1& 0& 0& 1\\
    1& 0& 0& 1& 1& 1& 1& 1\\
    1& 1& 1& 1& 1& 1& 1& 1\\
   1& 0& 0& 1& 1& 0& 0& 1\\
    1& 1& 1& 1& 1& 0& 0& 1\\
    1& 0& 0& 1& 1& 1& 1& 1\\
    1& 1& 1& 1& 1& 1& 1& 1
\end{bmatrix}
\Box_{\max}^{\min} X. 
\end{equation*}
\noindent The Chebyshev distance of the equation system $(\mathbf{\Sigma})$ is ${\boldsymbol{\nabla}} = 0.00125$. 
\noindent From $\underline{{\mathbf{Y}}}(\boldsymbol{\nabla}) = \begin{bmatrix}
        (\mathbf{y_i} - \boldsymbol{\nabla})^+
    \end{bmatrix}_{1 \leq i \leq 8} = \begin{bmatrix}
        0\\0\\ 0\\ 0.99875\\ 0.00125\\ 0.00125\\ 0.00125\\ 0.99875
    \end{bmatrix}$, we compute the lowest approximate solution of  the equation system $(\mathbf{{\Sigma}})$ which is the vector ${\mathbf{\Gamma}}^t \Box_{\epsilon}^{\max}\underline{{\mathbf{Y}}}(\boldsymbol{\nabla})=\begin{bmatrix}
         0\\ 0.00125\\ 0.00125\\ 0\\ 0\\ 0.00125\\ 0.00125\\ 0
    \end{bmatrix}$. So we obtain approximate values for the rule parameters: $s_1 = r_2 = s_3 = r_4 = 0$ and $r_1 = s_2  = r_3 = s_4 = 0.00125$. Those values are approximately compatible with the two reliable training data samples at the same time. In other words, for each training data sample, inference based on the possibility degrees of the rule premises computed with the input possibility distributions and the learned rule parameter values, allows us to get an output possibility distribution very close to the targeted one. In the inferred distribution, the concept with the highest possibility degree consistently corresponds to the intended targeted concept for each sample, while all other concepts have possibility degrees very close to zero. This indicates that the approximate rule parameter values enable the system to reliably produce the expected results, closely aligning with the training data samples.

\subsection{Learning rule parameters of a cascade according to training data}

In what follows, a method for learning the values of the rule parameters according to training data for the case of   a cascade (Subsection \ref{subsub:cascade}), i.e., a possibilistic rule-based system that uses two chained sets of 
possibilistic rules, is presented. This learning method follows the forward chaining of sets of rules. Thus, the values of the rule parameters of the first set of 
rules of the cascade according to training data are learned using the approach given in  Subsection \ref{subsubsec:learningmultipledata}. Then inferences are performed from the first set of rules, in order to obtain an output possibility distribution 
for each training data sample used for learning.
Finally,  using the  approach given in  Subsection \ref{subsubsec:learningmultipledata}, the values of the  parameters of the rules of the second set of rules, which is chained to the first set, are learned. Each training data sample uses as input data one of the output possibility distributions inferred using the first set of rules.

The proposed method may take advantage of two hyperparameters, $\tau_1$ and $\tau_2$, which are tolerance thresholds (Definition \ref{def:reliabletau}), each of which being associated with a set of 
possibilistic rules, in order to limit the size of the training data used for learning (the selection process is  described in Subsection \ref{subsubsec:learningmultipledata}).  In the next subsection, we explain how the values  of $\tau_1$ and $\tau_2$ can be determined based on  accuracy measures obtained using a validation dataset. 
We assume that each training data sample is given by input possibility distributions associated with the input attributes of the first set of rules, a targeted output possibility distribution associated with the output attribute of the first set of rules, and another targeted output possibility distribution associated with the output attribute of the second set of rules.

\bigskip
\begin{method}[Possibilistic cascade learning]\label{meth:learningincascade}
\mbox{}
\begin{enumerate}[start=1]

    \item First, the rule parameters of the first set of  possibilistic rules are learned.
    \begin{itemize}
    \item For each training data sample, the following possibility distributions are considered:
    \begin{itemize}
        \item the input possibility distributions, which are associated with the input attributes of the first set of rules, 
        \item the targeted output possibility distribution, which is associated with the output attribute of the first set of rules.
    \end{itemize}
    \item For each training data sample, the possibility degrees of the rule premises of the first set of rules are computed using the input possibility distributions.
    \item For each training data sample, the equation system $(\Upsigma)$, see (\ref{eq:upsigma}), is constructed based on the first set of rules using the possibility degrees of
the rule premises and its targeted output possibility distribution.  We check whether the training data sample is reliable  with respect to $\tau_1$, see Definition \ref{def:reliabletau}. If it is the case, its associated consistent equation system is constructed using Definition \ref{def:reduced-es}. 
    \item All the consistent equation systems constructed using Definition \ref{def:reduced-es} are stacked  into a single equation system $(\mathbf{\Sigma})$, see (\ref{eq:stackedES}).
    \item The rule parameters of the first set of rules are learned by solving the equation system  $(\mathbf{\Sigma})$ and its lowest approximate solution is considered, see  (\ref{eq:lowestapproxsol}). From it, the values for the rule parameters of the first set of 
    possibilistic rules are obtained.
\end{itemize}
\item The values of the parameters of the first set of rules are set. 
\item  Using these rule parameters, the matrix relation of the first set of rules is constructed,  see Subsection \ref{subsec:matrixrelpractical}. 
\item For each training data sample, an inference step is performed using the constructed matrix relation  with the input possibility distributions of the first set of rules associated with the training data sample. From it, for each training data sample, its inferred output possibility distribution related to the first set of rules is obtained.

\item The next step consists in learning the rule parameters of the second set of 
possibilistic rules.
\begin{itemize}
    \item For each training data sample, the targeted output possibility distribution associated with the output attribute of the second set of rules is considered.
    \item For each training data sample, the possibility degrees of the rule premises of the second set of rules are computed. We rely on  the output possibility distribution obtained by the inference of the training data sample using the matrix relation associated with the first set of rules.
    \item For each training data sample, the equation system $(\Upsigma)$ is constructed, see (\ref{eq:upsigma}), associated with the second set of rules, using the possibility degrees of the premises of the rules of the second set of rules and the targeted output possibility distribution associated with the output attribute of the second set of rules.  We check whether the training data sample is reliable with respect to $\tau_2$, see Definition \ref{def:reliabletau}. If it is the case, its associated consistent equation system is constructed using Definition \ref{def:reduced-es}. 
    \item All the consistent equation systems constructed using Definition \ref{def:reduced-es} are stacked into a single equation system $(\mathbf{\Sigma})$, see (\ref{eq:stackedES}).
    \item The rule parameters of the second set of rules are computed by solving the equation system  $(\mathbf{\Sigma})$. Its lowest approximate solution is considered, see (\ref{eq:lowestapproxsol}). That way, values for the rule parameters of the second set of 
    possibilistic rules are obtained.
\end{itemize}
\item The parameters of the second set of rules are set.
\item Using these rule parameters, the matrix relation associated with the second set of rules is constructed, see Subsection \ref{subsec:matrixrelpractical}.
\end{enumerate}

\end{method}

After applying Method \ref{meth:learningincascade}, values for the rule parameters of the two sets of 
rules in the possibilistic rule-based system are determined, based on the training data. The two matrices governing the matrix relations associated with the two sets of rules in the possibilistic rule-based system have also been constructed. Given input possibility distributions associated with the input attributes of the first set of rules, inference from the possibilistic rule-based system can then be achieved using these matrices. 

The method is given for a cascade of two chained sets of 
possibilistic rules. Obviously enough, it can easily be extended to cascades with more sets of 
possibilistic rules.

\begin{example}{(Example \ref{ex:firstrules-set}, 
cont'ed)} \\
We consider again the cascade described in  Example \ref{ex:firstrules-set} and Example \ref{ex:secondset-rules} for addressing  the following neuro-symbolic problem: given two MNIST images, are the two handwritten digits on the images the same? 

\noindent The thresholds $\tau_1 = \tau_2$ are set to $0.05$.  
In the example, the same probability-to-possibility transformation is used: the antipignistic method, see Subsection \ref{sec:ppt}. Four training data samples are considered: 
\begin{itemize}
    \item The first sample is composed of:
    \begin{itemize}
        \item Two MNIST images \includegraphics[width=0.017\textwidth]{z_img__9158_ZERO.png} and \includegraphics[width=0.017\textwidth]{./z_img__ONE_41266.png} labeled $0$ and $1$ respectively in the MNIST dataset.
        \item The label ``different'' (instead of ``same''), which means that the two images have different labels.
        \item The probability distributions produced by the neural network for each of the two MNIST images \includegraphics[width=0.017\textwidth]{z_img__9158_ZERO.png} and \includegraphics[width=0.017\textwidth]{./z_img__ONE_41266.png} are: 
        \begin{itemize}
       \item[-]  $[P^{(1)}_{i_1}(\includegraphics[width=0.017\textwidth]{z_img__9158_ZERO.png} = 0), 
P^{(1)}_{i_1}(\includegraphics[width=0.017\textwidth]{z_img__9158_ZERO.png} = 1)] = [0.995, 0.005]$,
\item[-] $[P^{(1)}_{i_2}(\includegraphics[width=0.017\textwidth]{z_img__ONE_41266.png} = 0), 
P^{(1)}_{i_2}(\includegraphics[width=0.017\textwidth]{z_img__ONE_41266.png} = 1)] = [0.02, 0.98]$.\end{itemize}  They are transformed into possibility distributions (one per image, using the antipignistic transformation):
\begin{itemize}
       \item[-]$[\Pi^{(1)}_{i_1}(\includegraphics[width=0.017\textwidth]{z_img__9158_ZERO.png} = 0), 
\Pi^{(1)}_{i_1}(\includegraphics[width=0.017\textwidth]{z_img__9158_ZERO.png} = 1)] = [1, 0.01]$,
\item[-] $[\Pi^{(1)}_{i_2}(\includegraphics[width=0.017\textwidth]{z_img__ONE_41266.png} = 0), 
\Pi^{(1)}_{i_2}(\includegraphics[width=0.017\textwidth]{z_img__ONE_41266.png} = 1)] = [0.04, 1]$.
\end{itemize}
\item A targeted output possibility distribution associated with the output attribute $b$  of the first set of  possibilistic rules (Example \ref{ex:firstrules-set}). We remind that the domain of $b$  is $D_b = \{ (0,0), (0,1), (1,0), (1,1) \}$.  The targeted  output possibility distribution is as follows: it assigns a possibility degree of $1$ to the value $(0,1)$ i.e., $\Pi^{(1)}_t\{(0,1)\} = 1$ (since the labels of two images \includegraphics[width=0.017\textwidth]{z_img__9158_ZERO.png} and \includegraphics[width=0.017\textwidth]{./z_img__ONE_41266.png} in the MNIST dataset are $0$ and $1$ respectively), and a possibility degree of zero for all other output values.
\item A targeted output possibility distribution associated with the output attribute $c$ of the second set of possibilistic rules (Example \ref{ex:secondset-rules}). We remind that the domain of $c$  is $\{ 0, 1 \}$,  where the output value $0$ means that the two MNIST images represent two different handwritten digits, while the output value $1$ means that the two MNIST images represent the same handwritten digit. The targeted output possibility distribution of the sample is as follows: $ \Pi^{'(1)}_t(\{0\}) = 1$ and $\Pi^{'(1)}_t(\{1\}) = 0$,  since in the  sample, the two images represent two different digits.
    \end{itemize}
    \item The second sample is composed of:
    \begin{itemize}
      \item Two MNIST images \includegraphics[width=0.017\textwidth]{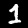} and \includegraphics[width=0.017\textwidth]{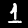} both labeled $1$  in the MNIST dataset.
        \item The label ``same'', which means that the two images have the same label.
        \item The probability distribution produced by the neural network for each of the two MNIST images \includegraphics[width=0.017\textwidth]{z_img_tr_1_a.png} and \includegraphics[width=0.017\textwidth]{./z_img_tr_1_b.png} are $[P^{(2)}_{i_1}(\includegraphics[width=0.017\textwidth]{z_img_tr_1_a.png} = 0), 
P^{(2)}_{i_1}(\includegraphics[width=0.017\textwidth]{z_img_tr_1_a.png} = 1)] = [0.015, 0.985]$
and $[P^{(2)}_{i_2}(\includegraphics[width=0.017\textwidth]{z_img_tr_1_b.png} = 0), 
P^{(2)}_{i_2}(\includegraphics[width=0.017\textwidth]{z_img_tr_1_b.png} = 1)] = [0.01, 0.99]$. We transform them into possibility distributions and obtain: $[\Pi^{(2)}_{i_1}(\includegraphics[width=0.017\textwidth]{z_img_tr_1_a.png} = 0), 
\Pi^{(2)}_{i_1}(\includegraphics[width=0.017\textwidth]{z_img_tr_1_a.png} = 1)] = [0.03, 1]$
and $[\Pi^{(2)}_{i_2}(\includegraphics[width=0.017\textwidth]{z_img_tr_1_b.png} = 0), 
\Pi^{(2)}_{i_2}(\includegraphics[width=0.017\textwidth]{z_img_tr_1_b.png} = 1)] = [0.02, 1]$. 
\item A targeted output possibility distribution associated with the output attribute $b$  of the first set of  possibilistic rules. The targeted  output possibility distribution is as follows: it assigns a possibility degree of $1$ to the value $(1,1)$ i.e., $\Pi^{(2)}_t\{(1,1)\} = 1$, and a possibility degree of zero for all other output values.
\item A targeted output possibility distribution associated with the output attribute $c$ of the second set of possibilistic rules is generated as follows: $ \Pi^{'(2)}_t(\{0\}) = 0$ and $\Pi^{'(2)}_t(\{1\}) = 1$,  since in the  sample, the two images represent the same digit.
    \end{itemize}
\item The third sample is composed of:
\begin{itemize}
   \item Two MNIST images \includegraphics[width=0.017\textwidth]{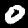} and \includegraphics[width=0.017\textwidth]{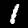}, which are  labeled $0$ and $1$ respectively  in the MNIST dataset,
        \item The label of the training data sample is ``different'', since the two images have a different label.
        \item The probability distribution produced by the neural network for each of the two MNIST images \includegraphics[width=0.017\textwidth]{z_img_tr_3_a.png} and \includegraphics[width=0.017\textwidth]{./z_img_tr_3_b.png} are: $[P^{(3)}_{i_1}(\includegraphics[width=0.017\textwidth]{z_img_tr_3_a.png} = 0), 
P^{(3)}_{i_1}(\includegraphics[width=0.017\textwidth]{z_img_tr_3_a.png} = 1)] = [0.5, 0.5]$
and $[P^{(3)}_{i_2}(\includegraphics[width=0.017\textwidth]{z_img_tr_3_b.png} = 0), 
P^{(3)}_{i_2}(\includegraphics[width=0.017\textwidth]{z_img_tr_3_b.png} = 1)] = [0.05, 0.995]$ respectively. From these probability distributions, we observe that the result is ambiguous for the first image. We transform these probability distributions into possibility distributions, and we obtain:
$[\Pi^{(3)}_{i_1}(\includegraphics[width=0.017\textwidth]{z_img_tr_3_a.png} = 0), 
\Pi^{(3)}_{i_1}(\includegraphics[width=0.017\textwidth]{z_img_tr_3_a.png} = 1)] = [1, 1]$
and $[\Pi^{(3)}_{i_2}(\includegraphics[width=0.017\textwidth]{z_img_tr_3_b.png} = 0), 
\Pi^{(3)}_{i_2}(\includegraphics[width=0.017\textwidth]{z_img_tr_3_b.png} = 1)] = [0.1, 1]$. 
\item A targeted output possibility distribution associated with the output attribute $b$  of the first set of possibilistic rules,  which is the same as for the first sample.  The targeted  output possibility distribution is as follows: it assigns a possibility degree of $1$ to the value $(0,1)$ i.e., $\Pi^{(3)}_t\{(0,1)\} = 1$, and a possibility degree of zero for all other output values.
\item A targeted output possibility distribution associated with the output attribute $c$ of the second set of possibilistic rules, which is generated as follows: $ \Pi^{'(3)}_t(\{0\}) = 1$ and $\Pi^{'(3)}_t(\{1\}) = 0$.
\end{itemize}
\item The fourth sample is composed of: 
\begin{itemize}
    \item Two MNIST images \includegraphics[width=0.017\textwidth]{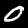} and \includegraphics[width=0.017\textwidth]{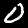} both labeled $0$  in the MNIST dataset.
        \item The  label ``same'', indicating that the two images have the same label.
        \item The probability distribution produced by the neural network for each of the two MNIST images \includegraphics[width=0.017\textwidth]{z_img_tr_4_a.png} and \includegraphics[width=0.017\textwidth]{./z_img_tr_4_b.png} are: $[P^{(4)}_{i_1}(\includegraphics[width=0.017\textwidth]{z_img_tr_4_a.png} = 0), 
P^{(4)}_{i_1}(\includegraphics[width=0.017\textwidth]{z_img_tr_4_a.png} = 1)] = [ 0.995,  0.005]$
and $[P^{(4)}_{i_2}(\includegraphics[width=0.017\textwidth]{z_img_tr_4_b.png} = 0), 
P^{(4)}_{i_2}(\includegraphics[width=0.017\textwidth]{z_img_tr_4_b.png} = 1)] = [0.025, 0.975]$, respectively. From these probability distributions, we observe that the result is incorrect for the second image (misclassification). We transform these probability distributions into possibility distributions: $[\Pi^{(4)}_{i_1}(\includegraphics[width=0.017\textwidth]{z_img_tr_4_a.png} = 0), 
\Pi^{(4)}_{i_1}(\includegraphics[width=0.017\textwidth]{z_img_tr_4_a.png} = 1)] = [1, 0.01]$
and $[\Pi^{(4)}_{i_2}(\includegraphics[width=0.017\textwidth]{z_img_tr_4_b.png} = 0), 
\Pi^{(4)}_{i_2}(\includegraphics[width=0.017\textwidth]{z_img_tr_4_b.png} = 1)] = [0.05, 1]$. 
\item A targeted output possibility distribution associated with the output attribute $b$  of the first set of possibilistic rules. The targeted  output possibility distribution is generated as follows: it assigns a possibility degree of $1$ to the value $(0,0)$ i.e., $\Pi^{(4)}_t\{(0,0)\} = 1$, and a possibility degree of zero for all other output values.
\item A targeted output possibility distribution associated with the output attribute $c$ of the second set of possibilistic rules is constructed as follows: $ \Pi^{'(4)}_t(\{0\}) = 0$ and $\Pi^{'(4)}_t(\{1\}) = 1$,  since in the  sample, the two images represent the same digit.
\end{itemize}

\end{itemize}

\noindent The learning of the rule parameters of the first set of possibilistic rules, see Example \ref{ex:firstrules-set}, begins using these four training data samples. We process the first sample.
The possibility degrees of the premises of the first set of 
possibilistic rules (Example \ref{ex:firstrules-set}) are computed from the possibility distributions of the training data sample (note: the image \includegraphics[width=0.017\textwidth]{z_img__9158_ZERO.png} is linked to the input attribute $a_1$ and the image \includegraphics[width=0.017\textwidth]{z_img__ONE_41266.png} is linked to the input attribute $a_2$). We obtain: $\lambda_1^{(1)} = 1, \rho_1^{(1)} = 0.01$,   $\lambda_2^{(1)} = 0.01,  \rho_2^{(1)} = 1$,  $\lambda_3^{(1)} = 0.04, \rho_3^{(1)} = 1$ and  $\lambda_4^{(1)} = 1, \rho_4^{(1)} = 0.04$. We rely on the possibility degrees of the premises and  the targeted output possibility distribution of the sample which is associated with the output attribute $b$  of the first set of possibilistic rules  to build the  equation system associated with the first set of 
possibilistic rules, see (\ref{eq:ex:learning-system1}) of Example \ref{ex:eqsyslearn}:

\begin{equation}\label{ex15:inc-eq-sys-1}\begin{bmatrix}
    \Pi^{(1)}_t\{(1,1)\}\\
     \Pi^{(1)}_t\{(0,1)\}\\
      \Pi^{(1)}_t\{(1,0)\}\\
       \Pi^{(1)}_t\{(0,0)\}
\end{bmatrix}=\begin{bmatrix}
   0\\
    1 \\
     0 \\
     0 
\end{bmatrix}
    =
    \begin{bmatrix}
        1 & 0.01 & 0.01 & 1 & 1 & 1 & 1 & 1\\
        1 & 1 & 1 & 1 & 1 & 1 & 1 & 1\\
        1 & 0.01 & 0.01 & 1 & 0.04 & 1 & 1 & 0.04\\
        1 & 1 & 1 & 1 & 0.04 & 1 & 1 & 0.04
    \end{bmatrix}  \Box_{\max}^{\min} \begin{bmatrix}
        s_1\\
        r_1\\
        s_2\\
        r_2\\
        s_3\\
        r_3\\
        s_4\\
        r_4
    \end{bmatrix}.\end{equation}

\noindent This equation system is inconsistent, since its Chebyshev distance, see (\ref{eq:Nabla}), is equal to 0.04. As we have $0.04 < \tau_1$, the training data sample is assessed as reliable with respect to $\tau_1$ (Definition \ref{def:reliabletau}). We construct a consistent equation system using Definition \ref{def:reduced-es}:
\begin{equation}\label{ex15:cons-eq-sys-1}
    \begin{bmatrix}
   0.01\\
    1 \\
     0.01 \\
     0.04 
\end{bmatrix}
    =
    \begin{bmatrix}
        1 & 0.01 & 0.01 & 1 & 1 & 1 & 1 & 1\\
        1 & 1 & 1 & 1 & 1 & 1 & 1 & 1\\
        1 & 0.01 & 0.01 & 1 & 0.04 & 1 & 1 & 0.04\\
        1 & 1 & 1 & 1 & 0.04 & 1 & 1 & 0.04
    \end{bmatrix}  \Box_{\max}^{\min} \begin{bmatrix}
        s_1\\
        r_1\\
        s_2\\
        r_2\\
        s_3\\
        r_3\\
        s_4\\
        r_4
    \end{bmatrix}.\end{equation}

\noindent The vector $\begin{bmatrix}
   0.01\\
    1 \\
     0.01 \\
     0.04 
\end{bmatrix}$ in (\ref{ex15:cons-eq-sys-1}) is the lowest Chebyschev approximation of the second member 
$\begin{bmatrix}
   0\\
    1 \\
     0 \\
     0 
\end{bmatrix}$ of the equation system (\ref{ex15:inc-eq-sys-1}) (thus, a minimal update of the targeted output possibility distribution has been performed), which allows us to obtain the consistent equation system (\ref{ex15:cons-eq-sys-1}).


\noindent We process the second sample. The possibility degrees of the premises of the first set of 
possibilistic rules (Example \ref{ex:firstrules-set}) are computed from the possibility distributions of the training data sample (note: the image \includegraphics[width=0.017\textwidth]{z_img_tr_1_a.png} is linked to the input attribute $a_1$ and the image \includegraphics[width=0.017\textwidth]{z_img_tr_1_b.png} is linked to the input attribute $a_2$). We obtain: $\lambda_1^{(2)} = 0.03, \rho_1^{(2)} = 1$,   $\lambda_2^{(2)} = 1,  \rho_2^{(2)} = 0.03$,  $\lambda_3^{(2)} = 0.02, \rho_3^{(2)} = 1$ and  $\lambda_4^{(2)} = 1, \rho_4^{(2)} = 0.02$. We use the possibility degrees of the premises and the targeted output possibility distribution of the sample which is associated with the output attribute $b$  of the first set of possibilistic rules  to build the equation system:
\begin{equation}
    \label{ex15:inc-eq-sys-2}
\begin{bmatrix}
    \Pi^{(2)}_t\{(1,1)\}\\
     \Pi^{(2)}_t\{(0,1)\}\\
      \Pi^{(2)}_t\{(1,0)\}\\
       \Pi^{(2)}_t\{(0,0)\}
\end{bmatrix}=\begin{bmatrix}
   1\\
    0 \\
     0 \\
     0 
\end{bmatrix}
    =\begin{bmatrix}
        1 & 1 & 1& 1 & 1 & 1 & 1 & 1\\
        0.03 & 1 & 1 & 0.03 & 1 & 1 & 1 & 1\\
        1 & 1 & 1 & 1 & 0.02 & 1 & 1 & 0.02\\
        0.03 & 1 & 1 & 0.03 & 0.02 & 1 & 1 & 0.02
    \end{bmatrix}\Box_{\max}^{\min} \begin{bmatrix}
        s_1\\
        r_1\\
        s_2\\
        r_2\\
        s_3\\
        r_3\\
        s_4\\
        r_4
    \end{bmatrix}.\end{equation}
\noindent This equation system is inconsistent, since its Chebyshev distance, see (\ref{eq:Nabla}), is equal to 0.03. As we have $0.03 < \tau_1$, the training data sample is assessed as reliable with respect to $\tau_1$ (Definition \ref{def:reliabletau}).
We construct a consistent equation system using Definition \ref{def:reduced-es}:
\begin{equation}\label{ex15:cons-eq-sys-2}
\begin{bmatrix}
   1\\
    0.03 \\
     0.02 \\
     0.02 
\end{bmatrix}
    =\begin{bmatrix}
        1 & 1 & 1& 1 & 1 & 1 & 1 & 1\\
        0.03 & 1 & 1 & 0.03 & 1 & 1 & 1 & 1\\
        1 & 1 & 1 & 1 & 0.02 & 1 & 1 & 0.02\\
        0.03 & 1 & 1 & 0.03 & 0.02 & 1 & 1 & 0.02
    \end{bmatrix}\Box_{\max}^{\min} \begin{bmatrix}
        s_1\\
        r_1\\
        s_2\\
        r_2\\
        s_3\\
        r_3\\
        s_4\\
        r_4
    \end{bmatrix}.\end{equation}
\noindent The vector $\begin{bmatrix}
    1\\
    0.03 \\
     0.02 \\
     0.02 
\end{bmatrix}$ in (\ref{ex15:cons-eq-sys-2}) is the lowest Chebyschev approximation of the second member 
$\begin{bmatrix}
   1\\
    0 \\
     0 \\
     0 
\end{bmatrix}$ of the equation system (\ref{ex15:inc-eq-sys-2}) (again, a minimal update of the targeted output possibility distribution has been achieved), which allows us to obtain the consistent equation system (\ref{ex15:cons-eq-sys-2}).

\noindent We process the third sample. We compute  the possibility degrees of the premises of the first set of possibilistic rules  from the possibility distributions of the training data sample (the image \includegraphics[width=0.017\textwidth]{z_img_tr_3_a.png} is linked to the input attribute $a_1$ and the image \includegraphics[width=0.017\textwidth]{z_img_tr_3_b.png} is linked to the input attribute $a_2$):   $\lambda_1^{(3)} = 1, \rho_1^{(3)} = 1$,   $\lambda_2^{(3)} = 1,  \rho_2^{(3)} = 1$,  $\lambda_3^{(3)} = 0.1, \rho_3^{(3)} = 1$ and  $\lambda_4^{(3)} = 1, \rho_4^{(3)} = 0.1$. We then construct the equation system associated with the first of  possibilistic rules using the possibility degrees of
the premises and the targeted output possibility distribution  of the sample which is associated with the output attribute $b$:
$$\begin{bmatrix}
    \Pi^{(3)}_t\{(1,1)\}\\
     \Pi^{(3)}_t\{(0,1)\}\\
      \Pi^{(3)}_t\{(1,0)\}\\
       \Pi^{(3)}_t\{(0,0)\}
\end{bmatrix}=\begin{bmatrix}
   0\\
    1 \\
     0 \\
     0 
\end{bmatrix}
    =\begin{bmatrix}
        1 & 1 & 1& 1 & 1 & 1 & 1 & 1\\
        1 & 1 & 1 & 1 & 1 & 1 & 1 & 1\\
        1 & 1 & 1 & 1 & 0.1 & 1 & 1 & 0.1\\
        1 & 1 & 1 & 1 & 0.1 & 1 & 1 & 0.1
    \end{bmatrix}\Box_{\max}^{\min} \begin{bmatrix}
        s_1\\
        r_1\\
        s_2\\
        r_2\\
        s_3\\
        r_3\\
        s_4\\
        r_4
    \end{bmatrix}.$$
\noindent This equation system is inconsistent, since its Chebyshev distance is equal to 1. As we have $1 > \tau_1$, the training data sample is assessed as \textit{not} reliable with respect to $\tau_1$ (Definition \ref{def:reliabletau}). We can compute  the lowest Chebyshev approximation of the second member of the equation system, see  (\ref{eq:lowest}), which is the vector $\begin{bmatrix}
    1\\
    1\\ 
    0.1\\ 
    0.1
\end{bmatrix}$.


\noindent We process  the fourth sample.
The possibility degrees of the premises of the first set of 
possibilistic rules are computed from the possibility distributions of the training data sample (the image \includegraphics[width=0.017\textwidth]{z_img_tr_4_a.png} is linked to the input attribute $a_1$ and the image \includegraphics[width=0.017\textwidth]{z_img_tr_4_b.png} is linked to the input attribute $a_2$). We obtain: $\lambda_1^{(4)} = 1, \rho_1^{(4)} = 0.01$,   $\lambda_2^{(4)} = 0.01,  \rho_2^{(4)} = 1$,  $\lambda_3^{(4)} = 0.05, \rho_3^{(3)} = 1$ and  $\lambda_4^{(4)} = 1, \rho_4^{(4)} = 0.05$. The equation system associated with the first set of 
possibilistic rules, see (\ref{eq:ex:learning-system1}), is built  using the
possibility degrees of the premises and the targeted output possibility distribution of the sample which is associated with the output attribute $b$:
$$\begin{bmatrix}
    \Pi^{(4)}_t\{(1,1)\}\\
     \Pi^{(4)}_t\{(0,1)\}\\
      \Pi^{(4)}_t\{(1,0)\}\\
       \Pi^{(4)}_t\{(0,0)\}
\end{bmatrix}=\begin{bmatrix}
   0\\
    0 \\
     0 \\
     1
\end{bmatrix}
    =\begin{bmatrix}
        1 & 0.01 & 0.01 & 1 & 1    & 1 & 1 & 1\\
        1 & 1    & 1    & 1 & 1    & 1 & 1 & 1\\
        1 & 0.01 & 0.01 & 1 & 0.05 & 1 & 1 & 0.05\\
        1 & 1    & 1    & 1 & 0.05 & 1 & 1 & 0.05
    \end{bmatrix}\Box_{\max}^{\min} \begin{bmatrix}
        s_1\\
        r_1\\
        s_2\\
        r_2\\
        s_3\\
        r_3\\
        s_4\\
        r_4
    \end{bmatrix}.$$
\noindent This equation system is inconsistent, since its Chebyshev distance is equal to 1. As we have $1 > \tau_1$, the training data sample is assessed as \textit{not} reliable with respect to $\tau_1$ (Definition \ref{def:reliabletau}). We can compute using (\ref{eq:lowest}) the lowest Chebyshev approximation of the second member of the equation system, which is the vector $\begin{bmatrix}
   0.01\\
   1\\ 
   0.01\\
   0.05
\end{bmatrix}$.

\noindent We have finished the sample-by-sample preprocessing. Based on the threshold $\tau_1$, the first two training data samples are considered reliable, whereas the last two are not.  The two consistent equation systems obtained using the first two  training data samples, see (\ref{ex15:cons-eq-sys-1}) and (\ref{ex15:cons-eq-sys-2}), are then stacked to give the equation system $(\mathbf{\Sigma})$, see (\ref{eq:stackedES}):
$$
\begin{bmatrix}
0.01\\
    1 \\
     0.01 \\
     0.04 \\
   1\\
    0.03 \\
     0.02 \\
     0.02 
\end{bmatrix}
    =\begin{bmatrix}
    1 & 0.01 & 0.01 & 1 & 1 & 1 & 1 & 1\\
        1 & 1 & 1 & 1 & 1 & 1 & 1 & 1\\
        1 & 0.01 & 0.01 & 1 & 0.04 & 1 & 1 & 0.04\\
        1 & 1 & 1 & 1 & 0.04 & 1 & 1 & 0.04\\
        1 & 1 & 1& 1 & 1 & 1 & 1 & 1\\
        0.03 & 1 & 1 & 0.03 & 1 & 1 & 1 & 1\\
        1 & 1 & 1 & 1 & 0.02 & 1 & 1 & 0.02\\
        0.03 & 1 & 1 & 0.03 & 0.02 & 1 & 1 & 0.02
    \end{bmatrix}\Box_{\max}^{\min} \begin{bmatrix}
        s_1\\
        r_1\\
        s_2\\
        r_2\\
        s_3\\
        r_3\\
        s_4\\
        r_4
    \end{bmatrix}.
$$
The equation system $(\mathbf{\Sigma})$ is consistent and its lowest solution 
is the vector $\begin{bmatrix}
    0 \\
    0 \\
    \vdots \\
    0 
\end{bmatrix}$ 
(by the way, we could check that the vector $\begin{bmatrix}0.03
\\
0.01\\
0.01\\
0.03\\
0.02\\
1\\
1 \\
0.02
\end{bmatrix}$ is another solution of the system $(\mathbf{\Sigma})$).
\noindent Following our learning method (Method \ref{meth:learningincascade}), the rule parameters of the first set of rules are set using the lowest solution of the equation system $(\mathbf{\Sigma})$: $s_1 = r_1 = s_2 = r_2 = \cdots = s_4 = r_4 = 0$. This shows that the rules are certain. Using these rule parameters, the matrix relation associated with the first set of rules,  see (\ref{eq:ex:matrixrelation:firstsys}), is constructed:

$$\begin{bmatrix}
    \Pi\{(1,1)\}\\
     \Pi\{(0,1)\}\\
      \Pi\{(1,0)\}\\
       \Pi\{(0,0)\}
\end{bmatrix}=
    \begin{bmatrix}
        1 & 0 & 0 & 1 & 1 & 0 & 0 & 1\\
        0 & 1 & 1 & 0 & 1 & 0 & 0 & 1\\
        1 & 0 & 0 & 1 & 0 & 1 & 1 & 0\\
        0 & 1 & 1 & 0 & 0 & 1 & 1 & 0
    \end{bmatrix} \Box_{\max}^{\min} \begin{bmatrix}
        \lambda_1\\
        \rho_1\\
        \lambda_2\\
        \rho_2\\
        \lambda_3\\
        \rho_3\\
        \lambda_4\\
        \rho_4
    \end{bmatrix}.$$

An inference is then performed from the first set 
of possibilistic rules using the constructed matrix relation with the possibility degrees of the rule premises of each training data sample. For the first sample, we compute $\lambda_1^{(1)} = 1, \rho_1^{(1)} = 0.01$,   $\lambda_2^{(1)} = 0.01,  \rho_2^{(1)} = 1$,  $\lambda_3^{(1)} = 0.04, \rho_3^{(1)} = 1$ and  $\lambda_4^{(1)} = 1, \rho_4^{(1)} = 0.04$. Using the $\min-\max$ matrix product, the inferred output possibility distribution is obtained: 
\begin{equation}\label{eq:ex15:pi1_o}
\begin{bmatrix}
    \Pi^{(1)}_o\{(1,1)\}\\
     \Pi^{(1)}_o\{(0,1)\}\\
      \Pi^{(1)}_o\{(1,0)\}\\
       \Pi^{(1)}_o\{(0,0)\}
\end{bmatrix}=\begin{bmatrix}
    0.01\\
    1\\
    0.01\\ 
    0.04
\end{bmatrix}
    =
    \begin{bmatrix}
        1 & 0 & 0 & 1 & 1 & 0 & 0 & 1\\
        0 & 1 & 1 & 0 & 1 & 0 & 0 & 1\\
        1 & 0 & 0 & 1 & 0 & 1 & 1 & 0\\
        0 & 1 & 1 & 0 & 0 & 1 & 1 & 0
    \end{bmatrix} \Box_{\max}^{\min} \begin{bmatrix}
    1\\ 
    0.01\\
    0.01\\ 
    1\\ 
    0.04\\ 
    1\\ 
    1\\ 
    0.04
\end{bmatrix}.\end{equation}

\noindent We do the same for the second training data sample. We remind that the possibility degrees of the rule premises are $\lambda_1^{(2)} = 0.03, \rho_1^{(2)} = 1$,   $\lambda_2^{(2)} = 1,  \rho_2^{(2)} = 0.03$,  $\lambda_3^{(2)} = 0.02, \rho_3^{(2)} = 1$ and  $\lambda_4^{(2)} = 1, \rho_4^{(2)} = 0.02$. We obtain:
\begin{equation}\label{eq:ex15:pi2_o}
\begin{bmatrix}
    \Pi^{(2)}_o\{(1,1)\}\\
     \Pi^{(2)}_o\{(0,1)\}\\
      \Pi^{(2)}_o\{(1,0)\}\\
       \Pi^{(2)}_o\{(0,0)\}
\end{bmatrix}=\begin{bmatrix}
    1\\
    0.03\\ 
    0.02\\
    0.02
\end{bmatrix}
    =
    \begin{bmatrix}
        1 & 0 & 0 & 1 & 1 & 0 & 0 & 1\\
        0 & 1 & 1 & 0 & 1 & 0 & 0 & 1\\
        1 & 0 & 0 & 1 & 0 & 1 & 1 & 0\\
        0 & 1 & 1 & 0 & 0 & 1 & 1 & 0
    \end{bmatrix} \Box_{\max}^{\min} \begin{bmatrix}
    0.03\\ 
    1\\ 
    1\\ 
    0.03\\ 
    0.02\\ 
    1\\
    1\\ 
    0.02
\end{bmatrix}.\end{equation}
Similarly, for the third training data sample, for which the possibility degrees of the rule premises are $\lambda_1^{(3)} = 1, \rho_1^{(3)} = 1$,   $\lambda_2^{(3)} = 1,  \rho_2^{(3)} = 1$,  $\lambda_3^{(3)} = 0.1, \rho_3^{(3)} = 1$ and  $\lambda_4^{(3)} = 1, \rho_4^{(3)} = 0.1$. We get:
\begin{equation}\label{eq:ex15:pi3_o}
\begin{bmatrix}
    \Pi^{(3)}_o\{(1,1)\}\\
     \Pi^{(3)}_o\{(0,1)\}\\
      \Pi^{(3)}_o\{(1,0)\}\\
       \Pi^{(3)}_o\{(0,0)\}
\end{bmatrix}=\begin{bmatrix}
    1\\
    1\\
    0.1\\
    0.1
\end{bmatrix}
    =
    \begin{bmatrix}
        1 & 0 & 0 & 1 & 1 & 0 & 0 & 1\\
        0 & 1 & 1 & 0 & 1 & 0 & 0 & 1\\
        1 & 0 & 0 & 1 & 0 & 1 & 1 & 0\\
        0 & 1 & 1 & 0 & 0 & 1 & 1 & 0
    \end{bmatrix} \Box_{\max}^{\min} \begin{bmatrix}
    1\\
    1\\
    1\\
    1\\
    0.1\\
    1\\
    1\\
    0.1
\end{bmatrix}.\end{equation}
 For the fourth training data sample, where the possibility degrees of the rule premises are $\lambda_1^{(4)} = 1, \rho_1^{(4)} = 0.01$,   $\lambda_2^{(4)} = 0.01,  \rho_2^{(4)} = 1$,  $\lambda_3^{(4)} = 0.05, \rho_3^{(3)} = 1$ and  $\lambda_4^{(4)} = 1, \rho_4^{(4)} = 0.05$, we obtain:
\begin{equation}\label{eq:ex15:pi4_o} 
\begin{bmatrix}
    \Pi^{(4)}_o\{(1,1)\}\\
     \Pi^{(4)}_o\{(0,1)\}\\
      \Pi^{(4)}_o\{(1,0)\}\\
       \Pi^{(4)}_o\{(0,0)\}
\end{bmatrix}=\begin{bmatrix}
    0.01\\
    1\\
    0.01\\
    0.05
\end{bmatrix}
    =
    \begin{bmatrix}
        1 & 0 & 0 & 1 & 1 & 0 & 0 & 1\\
        0 & 1 & 1 & 0 & 1 & 0 & 0 & 1\\
        1 & 0 & 0 & 1 & 0 & 1 & 1 & 0\\
        0 & 1 & 1 & 0 & 0 & 1 & 1 & 0
    \end{bmatrix} \Box_{\max}^{\min} \begin{bmatrix}
   1\\
   0.01\\
   0.01\\
   1\\
   0.05\\
   1\\
   1\\
   0.05
\end{bmatrix}.\end{equation}

\noindent Now, one can start learning  the rule parameters of the second set of 
possibilistic rules, see Example \ref{ex:secondset-rules}. 
To do so, one considers its associated reduced equation system, see (\ref{eq:ex:learning-system2}), of Example \ref{ex:eqsyslearn}. We process sample-by-sample. 
For the first sample, we compute the possibility degrees of the rule premises of the second set of rules using the inferred output possibility distribution of the first set of rules derived from the sample, see (\ref{eq:ex15:pi1_o}). We get $\lambda_1^{'(1)} = 0.04$, $\rho_1^{'(1)} = 1$, $\lambda_2^{'(1)} = 1$ and $\rho_2^{'(1)} = 0.04$. Using the possibility degrees of the rule premises and the targeted output possibility distribution  of the sample associated with the output attribute $c$ of the second set of rules, we construct the equation system:
\begin{equation}\label{ex15:inc-eq-sys-3}\begin{bmatrix}
   \Pi^{'(1)}_t( \{ 0 \}) \\
    \Pi^{'(1)}_t(\{ 1 \})
\end{bmatrix}=\begin{bmatrix}
    1\\
    0
\end{bmatrix}
    =
    \begin{bmatrix}
        1    & 1 & 1 & 1 \\
        0.04 & 1 & 1 & 0.04 
    \end{bmatrix} \Box_{\max}^{\min}  \begin{bmatrix}
    s'_1 \\
    r'_1 \\
    s'_2 \\ 
    r'_2 
\end{bmatrix}. \end{equation}

\noindent This equation system is inconsistent, since its Chebyshev distance is equal to 0.04. As we have $0.04 < \tau_2$, the training data sample is assessed as reliable with respect to $\tau_2$ (Definition \ref{def:reliabletau}). We construct a consistent equation system using Definition \ref{def:reduced-es}:
\begin{equation}\label{ex15:cons-eq-sys-3}
    \begin{bmatrix}
   1\\
   0.04
\end{bmatrix}
    =
    \begin{bmatrix}
        1    & 1 & 1 & 1 \\
        0.04 & 1 & 1 & 0.04 
    \end{bmatrix}  \Box_{\max}^{\min} \begin{bmatrix}
    s'_1 \\
    r'_1 \\
    s'_2 \\ 
    r'_2 
    \end{bmatrix}.\end{equation}

    \noindent The vector $\begin{bmatrix}
   1\\
   0.04
\end{bmatrix}$ in (\ref{ex15:cons-eq-sys-3}) is the lowest Chebyschev approximation of the second member 
$\begin{bmatrix}
   1\\
   0
\end{bmatrix}$ of the equation system (\ref{ex15:inc-eq-sys-3}) (thus, a minimal update of the targeted output possibility distribution has been performed), which allows us to obtain the consistent equation system  (\ref{ex15:cons-eq-sys-3}).

Similarly, we process the second sample. We compute the possibility degrees of the rule premises of the second set of rules using the inferred output possibility distribution of the first set of rules derived from the sample, see (\ref{eq:ex15:pi2_o}), and obtain: $\lambda_1^{'(2)} = 1$, $\rho_1^{'(2)} =0.03$, $\lambda_2^{'(2)} = 0.03$ and $\rho_2^{'(2)} = 1$. We rely on the possibility degrees of the rule premises and the targeted output possibility distribution of the sample associated with the output attribute $c$ of the second set of rules to build the equation system:
\begin{equation}\label{ex15:inc-eq-sys-4}\begin{bmatrix}
   \Pi^{'(2)}_t( \{ 0 \}) \\
    \Pi^{'(2)}_t(\{ 1 \})
\end{bmatrix}=\begin{bmatrix}
    0\\
    1
\end{bmatrix}
    =
    \begin{bmatrix}
        1 & 0.03 & 0.03 & 1 \\
        1 & 1    & 1    & 1
    \end{bmatrix} \Box_{\max}^{\min}  \begin{bmatrix}
    s'_1 \\
    r'_1 \\
    s'_2 \\ 
    r'_2 
\end{bmatrix}.\end{equation}

\noindent This equation system is inconsistent, since its Chebyshev distance is equal to 0.03. As we have $0.03 < \tau_2$, the training data sample is assessed as reliable with respect to $\tau_2$ (Definition \ref{def:reliabletau}). We construct a consistent equation system using Definition \ref{def:reduced-es}:
\begin{equation}\label{ex15:cons-eq-sys-4}
    \begin{bmatrix}
   0.03\\
   1
\end{bmatrix}
    =
    \begin{bmatrix}
         1 & 0.03 & 0.03 & 1 \\
        1 & 1    & 1    & 1
    \end{bmatrix}  \Box_{\max}^{\min} \begin{bmatrix}
    s'_1 \\
    r'_1 \\
    s'_2 \\ 
    r'_2 
    \end{bmatrix}.\end{equation}

    \noindent This time, the vector $\begin{bmatrix}
   0.03\\
   1
\end{bmatrix}$ in (\ref{ex15:cons-eq-sys-4}) is the lowest Chebyschev approximation of the second member 
$\begin{bmatrix}
   0\\
   1
\end{bmatrix}$ of the equation system (\ref{ex15:inc-eq-sys-4}) (again, a minimal update of the targeted output possibility distribution has been achieved), which allows us to obtain the consistent equation system  (\ref{ex15:cons-eq-sys-4}).

\noindent We process the third sample. 
 We compute the possibility degrees of the rule premises of the second set of rules using the inferred output possibility distribution of the first set of rules derived from the sample, see (\ref{eq:ex15:pi3_o}), and get: $\lambda_1^{'(3)} = 1$, $\rho_1^{'(3)} =1$, $\lambda_2^{'(3)} = 1$ and $\rho_2^{'(3)} = 1$. The equation system is constructed using the possibility degrees of the rule premises and the targeted output possibility distribution  of the sample associated with the output attribute $c$ of the second set of rules:
$$\begin{bmatrix}
   \Pi^{'(3)}_t( \{ 0 \}) \\
    \Pi^{'(3)}_t(\{ 1 \})
\end{bmatrix}=\begin{bmatrix}
    1\\
    0
\end{bmatrix}
    =
    \begin{bmatrix}
        1 & 1 & 1 & 1 \\
        1 & 1    & 1    & 1
    \end{bmatrix} \Box_{\max}^{\min}  \begin{bmatrix}
    s'_1 \\
    r'_1 \\
    s'_2 \\ 
    r'_2 
\end{bmatrix}.$$

\noindent This equation system is inconsistent, since its Chebyshev distance is equal to 1. As we have $1 > \tau_2$, the training data sample is assessed as \textit{not} reliable with respect to $\tau_2$ (Definition \ref{def:reliabletau}).   We can compute using (\ref{eq:lowest}) the lowest Chebyshev approximation of the second member of the equation system, which is the vector $\begin{bmatrix}
  1\\
  1
\end{bmatrix}$.\\
We process the fourth sample. We compute the possibility degrees of the rule premises of the second set of rules using the inferred output possibility distribution of the first set of rules derived from the sample, see (\ref{eq:ex15:pi4_o}). We get $\lambda_1^{'(4)} = 0.05$, $\rho_1^{'(4)} =1$, $\lambda_2^{'(4)} = 1$ and $\rho_2^{'(4)} = 0.05$. Using these possibility degrees of the rule premises and the targeted output possibility distribution  of the sample associated with the output attribute $c$ of the second set of rules, we construct the equation system:
$$\begin{bmatrix}
   \Pi^{'(4)}_t( \{ 0 \}) \\
    \Pi^{'(4)}_t(\{ 1 \})
\end{bmatrix}=\begin{bmatrix}
    0\\
    1
\end{bmatrix}
    =
    \begin{bmatrix}
        1    &   1    & 1    & 1 \\
        0.05 &   1    & 1    & 0.05
    \end{bmatrix} \Box_{\max}^{\min}  \begin{bmatrix}
    s'_1 \\
    r'_1 \\
    s'_2 \\ 
    r'_2 
\end{bmatrix}.$$

\noindent This equation system is inconsistent, since its Chebyshev distance is equal to 1. As we have $1 > \tau_2$, the training data sample is assessed as \textit{not} reliable with respect to $\tau_2$ (Definition \ref{def:reliabletau}).   We can compute using (\ref{eq:lowest}) the lowest Chebyshev approximation of the second member of the equation system, which is the vector $\begin{bmatrix}
  1\\
  0.05
\end{bmatrix}$.

\noindent Since two reliable training data sample are available (the first and second samples), we construct the equation system $(\mathbf{\Sigma}')$, see (\ref{eq:stackedES}), by stacking  their associated consistent   equation systems, see (\ref{ex15:cons-eq-sys-3}) and (\ref{ex15:cons-eq-sys-4}):
\begin{equation}
    \begin{bmatrix}
   1\\
   0.04\\
   0.03\\
   1
\end{bmatrix}
    =
    \begin{bmatrix}
        1    & 1 & 1 & 1 \\
        0.04 & 1 & 1 & 0.04\\
         1 & 0.03 & 0.03 & 1 \\
        1 & 1    & 1    & 1
    \end{bmatrix}  \Box_{\max}^{\min} \begin{bmatrix}
    s'_1 \\
    r'_1 \\
    s'_2 \\ 
    r'_2 
    \end{bmatrix}.
\end{equation}

\noindent The  equation system $(\mathbf{\Sigma'})$ is consistent. The lowest solution of the equation system $(\mathbf{\Sigma'})$ is the vector $\begin{bmatrix}
    0\\
    0\\
    0\\
    0
\end{bmatrix}$, and, consequently, the parameters of the second set of rules are set as follows: $s'_1 = r'_1 = s'_2 = r'_2 = 0$. Again, this shows that the rules are certain. We could check that he vector $\begin{bmatrix}0.04
\\
0.03\\
0.03\\
0.04
\end{bmatrix}$ is another solution of the system $(\mathbf{\Sigma'})$.

\noindent Using the parameters, the matrix relation of the second set of rules is built using (\ref{eq:ex:matrixrelation:secondsys}) of Example \ref{ex:matrixrelred}:
\begin{equation*}
\begin{bmatrix}
    \Pi( \{ 0 \}) \\
     \Pi( \{ 1 \}) 
\end{bmatrix}=
    \begin{bmatrix}
        1 & 0 & 0 & 1 \\
       0 & 1 & 1 & 0 
    \end{bmatrix} \Box_{\max}^{\min} \begin{bmatrix}
        \lambda'_1\\
        \rho'_1\\
        \lambda'_2\\
        \rho'_2
    \end{bmatrix}.
\end{equation*}
\end{example}

To be able to perform learning in cascade (Method \ref{meth:learningincascade}) in practice with very large datasets, it remains to determine how to set the values of the tolerance thresholds $\tau_1$ and $\tau_2$. This problem is addressed in the next subsection.

Notably, the learning method we have presented is intended to learn rule parameter values based on training data composed of input and output possibility distributions. If, instead, the rule parameter values are set and the goal is to determine targeted input possibility distributions from training data samples composed of output possibility distributions, the backpropagation mechanism presented in Subsection \ref{subsub:backpropagation} can be leveraged.

\subsection{Determining the values of the tolerance thresholds using a validation dataset}
\label{subsec:thresholdsdetermination}

To apply our learning paradigm to very large datasets, thresholds (Definition \ref{def:reliabletau}) can be exploited to select reliable training data samples. Indeed, Method \ref{meth:learningincascade}  uses two thresholds $\tau_1$ and $\tau_2$ for learning the values of the rule parameters of a cascade. 

\noindent We now show how the values of these two thresholds can be determined by an iterative process based on a validation dataset. Let $\epsilon$ be a real number $> 0$. We define the set $\mathcal{T}$ of candidate positive values for the thresholds as follows:
\begin{equation}\label{eq:setTvalidationstep}
     \mathcal{T} = \left\{ t_i = \left(\frac{i}{l}\right)^h \cdot (1 + \epsilon) \, \middle| \, i = 1, \ldots, l \right\}, 
\end{equation}
Here $l$ is a parameter denoting the total number of candidate values and $h \geq 1$ is another parameter used for controlling the number of values of $\mathcal{T}$ that are close to zero: the highest the value of $h$, the more there are values of $\mathcal{T}$ close to zero. Obviously, we have $1 + \epsilon \in \mathcal{T}$.

The definition of $\mathcal{T}$ aims to point out many different small values for the thresholds in order to find a large subset of training data that are considered highly reliable  (Definition \ref{def:reliabletau}). We remind that the lower the Chebyshev distance associated with an equation system $(\Upsigma)$ constructed with a training data sample, the higher the reliability of the training data sample used to build this equation system (Subsection \ref{subsec:poorqualtrainingdata}). If the values set for thresholds are too small, learning in cascade (Method \ref{meth:learningincascade}) may not be possible (no training data would be considered reliable enough). But values for the thresholds which make learning in cascade possible can always be found in the set $\mathcal{T}$. In fact, for a threshold value $t_i$, two possible situations may arise:

\begin{itemize} 
\item All equation systems constructed from training data samples have a Chebyshev distance $\nabla$ greater than or equal to $t_i$ (which implies that $i < l$). In this case, no training data sample is considered reliable with respect to $t_i$, and learning cannot be applied with $t_i$. 
\item Some equation systems constructed from training data samples have a Chebyshev distance $\nabla$ strictly below $t_i$. These training data samples are then considered as reliable with respect to $t_i$, allowing learning to be applied using them. 
\end{itemize}

The worst-case scenario occurs when each equation system constructed from a training data sample has a Chebyshev distance $\nabla$ equal to one. In this case, the highest threshold value $t_l = 1 + \epsilon$ ensures that all training data samples are selected, enabling learning to be performed.

\smallskip
For a cascade, the values of the thresholds $\tau_1$ and $\tau_2$ are determined by iteratively testing values from the set $\mathcal{T}$:
\begin{itemize}
    \item At start, $\tau_1$ and $\tau_2$ are set to the minimum possible value $t_1 \in \mathcal{T}$. 
At each iteration of the process, the value of $\tau_1$ and/or the value of $\tau_2$ is updated by selecting the next value in the ordered set $\mathcal{T}$, ensuring a gradual increase of the tested threshold values.
    \item For each pair of threshold values tested, we first check if it is possible to apply Method \ref{meth:learningincascade} with these thresholds. If this is the case, the accuracy $A(\tau_1,\tau_2)$ of the model built using Method \ref{meth:learningincascade} is evaluated with these fixed thresholds on the validation dataset.
    \item This iterative process stops when the accuracy has not been improved by at least a minimum improvement value 
    after a specified number of consecutive evaluations (called the stagnation parameter). When this condition is met, the thresholds are set to the lowest tested values that allow us to obtain the (stabilized) accuracy on the validation dataset.\\
    In our experiments, the metric used has been classification accuracy, but other metrics for evaluating the predictive performance of a model could be used instead, e.g., precision, recall, F-score, AUC.
\end{itemize}

This iterative approach is designed to avoid the inclusion of unreliable training data samples and to take into account computational constraints: the goal is to test as few values as possible for the thresholds (while expecting good accuracy) and, when dealing with large datasets, we want to avoid having stacked equation systems, see (\ref{eq:stackedES}), that are too large.

Of course, this method can easily be extended to find thresholds for cascades with more than two sets of 
possibilistic rules. Furthermore, if multiple sets of 
rules perform the same task, a common threshold can be used for them.

\section{Experiments}
\label{sec:exp}

We evaluated $\Pi$-NeSy on the following neuro-symbolic challenges:
\begin{itemize}
    \item MNIST Addition-$k$ problems \cite{manhaeve2018deepproblog} where $k \in \{1,2,4,15,100\}$: the goal is to determine the sum of two numbers of $k$ digits, where each digit is represented on one  MNIST image. For instance, if \( k = 2 \), given two pairs of images \(\includegraphics[width=0.017\textwidth]{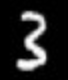}\), \(\includegraphics[width=0.017\textwidth]{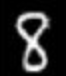}\) and \(\includegraphics[width=0.017\textwidth]{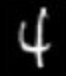}\), \(\includegraphics[width=0.017\textwidth]{./z_img__ONE_41266.png}\), the task is to form the numbers 38 and 41, then add them to predict the sum, which is 79.\\
    As $k$ increases, the complexity of the problem grows, given that the model must accurately recognize more digits from  MNIST images. Consequently, the probability of having at least one digit recognition error in one of the two numbers becomes higher for larger values of $k$. Therefore, handling higher values of $k$ significantly increases the overall complexity of the task.

\item 
MNIST Sudoku puzzle problems \cite{augustine2022visual}, where  Sudoku puzzles of size 4x4 or 9x9 are considered. Each 4x4 Sudoku puzzle is constructed using 16 MNIST images, while each 9x9 Sudoku puzzle uses 81 MNIST images. In the 4x4 Sudoku puzzles, the images represent digits between 0 and 3, and in the 9x9 Sudoku puzzles, the digits range from 0 to 8. The goal is to verify the correctness of the puzzle, ensuring there are no repeated digits in any row, column, or sub-grid.\\
For instance, given the visual Sudoku puzzles 4x4 in Figure \ref{fig:visualSudoku}, a model should predict that the Sudoku puzzle (a) is valid and that the Sudoku puzzle (b) is not.\\
The complexity of MNIST Sudoku puzzle problems also increases with the puzzle dimension: verifying the correctness of a 9x9 Sudoku requires accurately recognizing more digits and satisfying more constraints than for a 4x4 Sudoku.
\begin{figure}[H]
\centering
\begin{subfigure}{.5\textwidth}
  \centering
  \includegraphics[width=.2\linewidth]{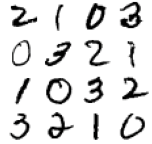}
  \caption{A valid Sudoku puzzle.}
  \label{fig:sub1}
\end{subfigure}%
\begin{subfigure}{.5\textwidth}
  \centering
  \includegraphics[width=.2\linewidth]{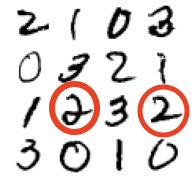}
  \caption{An invalid Sudoku puzzle.}
  \label{fig:sub2}
\end{subfigure}
\caption{ 4x4 Sudoku puzzles.}
\label{fig:visualSudoku}
\end{figure}

\end{itemize}

\paragraph*{Our approach $\Pi$-NeSy}

In $\Pi$-NeSy, for each problem, the low-level perception task consists of recognizing handwritten digits on images. It is performed by a neural network, which produces a probability distribution of the digits represented on the image.  The architecture of  $\Pi$-NeSy's neural network is given in Table \ref{tab:NN}. It includes four convolution layers and three fully connected layers. To transform the outputs of the neural network into a probability distribution, a softmax activation is applied to the output scores.

For the experiments, the following setup of  $\Pi$-NeSy's neural network has been considered. The Adadelta optimizer \cite{zeiler2012adadelta} has been used with the following hyperparameters:  learning rate of 1.0, adjusting it with a decay factor  of 0.7, over 20 epochs. A batch size of 64 was used for both the MNIST Addition-$k$ problems where $k \in \{1,2,4,15,100\}$ and the MNIST Sudoku 9x9 problem, while a batch size of 32 was applied for the MNIST Sudoku 4x4 problem.   Training the model 10 times (each time from randomly initialized weights) using the MNIST training dataset (60,000 items) and evaluating each model on the MNIST test dataset (10,000 items), an average test accuracy of $99.52 \pm 0.06\%$ has been obtained.

In $\Pi$-NeSy, the high-level reasoning task is  performed by a possibilistic rule-based system, which performs inferences based on possibility distributions obtained from probability-possibility transformations applied to the output probability distributions of the neural network. Two versions of $\Pi$-NeSy have been considered, depending on the probability-possibility transformation used (see Section \ref{sec:bg}): 
\begin{itemize}
    \item $\Pi$-NeSy-1 is based on the antipignistic method,
     \item $\Pi$-NeSy-2 uses the transformation method obeying the  minimum specificity principle.
\end{itemize}
In order to find threshold values for performing  learning in cascade (Method \ref{meth:learningincascade}) of the rule parameters of  $\Pi$-NeSy's possibilistic rule-based system, the set $\mathcal{T}$ of possible values for thresholds, see (\ref{eq:setTvalidationstep}), has been obtained based on the  setting $l = 30$, $h = 5$, and $\epsilon=0.001$. The threshold value search process in which this set is involved (see Subsection \ref{subsec:thresholdsdetermination}) is configured with the following setting: minimum improvement is set to $0.01$ and stagnation parameter is set to $1$.

\begin{table}[ht]
\centering
\footnotesize
\begin{tabular}{@{}cll@{}}
\toprule
\textbf{Order} & \textbf{Layer} & \textbf{Configuration} \\
\midrule
1 & \makecell[l]{Convolutional,\\ReLU, Batch Normalization} & \makecell[l]{Kernel Size: 5x5, Stride: 1, Padding: Same,\\Output Channels: 32, Bias: Yes} \\
\midrule
2 & \makecell[l]{Convolutional, \\Batch Normalization, Max Pooling, Dropout} & \makecell[l]{Kernel Size: 5x5, Stride: 1, Padding: Same,\\Output Channels: 32, Bias: No,\\Pooling Size: 2x2, Dropout: 0.25} \\
\midrule
3 & \makecell[l]{Convolutional,\\ReLU, Batch Normalization} & \makecell[l]{Kernel Size: 3x3, Stride: 1, Padding: Same,\\Output Channels: 64, Bias: Yes} \\
\midrule
4 & \makecell[l]{Convolutional,\\Batch Normalization, Max Pooling, Dropout} & \makecell[l]{Kernel Size: 3x3, Stride: 1, Padding: Same,\\Output Channels: 64, Bias: No,\\Pooling Size: 2x2, Dropout: 0.25} \\
\midrule
5 & \makecell[l]{Fully Connected,\\ReLU, Batch Normalization} & \makecell[l]{Input Shape: 3136, Output Shape: 256,\\Bias: No} \\
\midrule
6 & \makecell[l]{Fully Connected,\\ReLU, Batch Normalization} & \makecell[l]{Input Shape: 256, Output Shape: 128,\\Bias: No} \\
\midrule
7 & \makecell[l]{Fully Connected,\\ReLU, Batch Normalization, Dropout} & \makecell[l]{Input Shape: 128, Output Shape: 84,\\Bias: No, Dropout: 0.25} \\
\midrule
8 & Output & Output Shape: 10 \\
\bottomrule
\end{tabular}
\caption{Neural network architecture used for $\Pi$-NeSy experiments.}
\label{tab:NN}
\end{table}
The code of $\Pi$-NeSy  is shared publicly under Apache 2.0 license.\footnote{\url{https://github.com/ibaaj/pi-nesy}} This software relies on a Python library created with Pybind11 \cite{jakob2017pybind11}, which interfaces with a C++ program to perform possibilistic learning and inference computations via Apple Metal, OpenCL or CUDA for GPU acceleration, or multithreaded CPUs. For the experiments with $\Pi$-NeSy reported in this article, Apple Metal has been chosen. The experiments have been conducted on a MacBook Air M2 (16 GB of RAM) under macOS Sequoia 15.2, using its integrated GPU (8 cores). No timeout mechanism has been considered in the experiments.

\normalsize

\subsection{\texorpdfstring{MNIST Addition-$k$ problems}{MNIST Addition k problems}}
\label{subsec:exp:mnistadd}

We followed the methodology described in \cite{ manhaeve2021neural,pryor2023ijcai} to generate the MNIST Addition-$k$ datasets, and we used the same experimental setup as the one presented in this paper. Several neuro-symbolic approaches for solving the MNIST Addition-$k$ problem explicitly represent all possible sums, e.g., Logic Tensor Networks \cite{badreddine2022logic} or NeuPSL \cite{pryor2023ijcai}, resulting in exponential inference complexity with respect to the number $k$ of digits. In contrast, recent approaches like DeepSoftLog \cite{maene2024soft} and A-NeSI \cite{van2024nesi}, along with $\Pi$-NeSy, offer an alternative formulation that scales up linearly with $k$. In these three neuro-symbolic approaches, the digits of the sum and any potential carry are computed in an iterative way.

\subsection{Training, validation and test data}
\label{subsec:trainvaltest}

In our experiments,  three pairwise disjoint datasets have been considered: the training dataset $\mathcal{A}_{\text{train}}$, the validation dataset $\mathcal{A}_{\text{valid}}$, and the test dataset $\mathcal{A}_{\text{test}}$. Their sizes are denoted by $N_{\text{train}}$, $N_{\text{valid}}$, and $N_{\text{test}}$ respectively.

These datasets are generated from the MNIST dataset, which is composed of a set of 60,000 training images and a set of 10,000 test images. The generation process used is the same one as in \cite{pryor2023ijcai}:
\begin{itemize}
    \item \textbf{Training Dataset $\mathcal{A}_{\text{train}}$}: From the 60,000 MNIST training images, 50,000 images are randomly selected without replacement. We form $N_{\text{train}} = \left\lfloor \frac{50000}{2\cdot k} \right\rfloor$ pairs of tuples from these images. Each tuple, containing $k$ images, is formed by selecting images randomly without replacement from the pool of 50,000 images.
    \item \textbf{Validation Dataset $\mathcal{A}_{\text{valid}}$}: From the remaining 10,000 MNIST training images,  $N_{\text{valid}} = \left\lfloor \frac{10000}{2\cdot k} \right\rfloor$ pairs of tuples are created. Each tuple consists of $k$ images selected randomly without replacement.
    \item \textbf{Test Dataset $\mathcal{A}_{\text{test}}$}: $N_{\text{test}} = \left\lfloor \frac{10000}{2\cdot k} \right\rfloor$ pairs of tuples are formed from the MNIST test dataset. Just like for the other datasets, each tuple includes $k$ images randomly selected without replacement.
\end{itemize}

\medskip
Each example in these datasets consists of two tuples of images $\myvaremphasis{A} = (\myvaremphasis{m}_1, \myvaremphasis{m}_2, \ldots, \myvaremphasis{m}_k)$ and $\myvaremphasis{B} = (\myvaremphasis{m}_{k+1}, \myvaremphasis{m}_{k+2}, \ldots, \myvaremphasis{m}_{2k})$, where each tuple contains $k$ labeled images of handwritten digits forming a number. Each element $\myvaremphasis{m}_i = (\myvaremphasis{p}_i, \myvaremphasis{a}_i)$ includes an image $\myvaremphasis{p}_i$ and its associated label $\myvaremphasis{a}_i \in \{0, 1, \ldots, 9\}$. The number associated with $\myvaremphasis{A}$ is calculated as $\sum_{i=0}^{k-1} \myvaremphasis{a}_{k-i} \cdot 10^{i}$, and for $\myvaremphasis{B}$, it is $\sum_{i=0}^{k-1} \myvaremphasis{a}_{2k-i} \cdot 10^{i}$.

For each example, the goal is to calculate the sum of $\myvaremphasis{A}$ and $\myvaremphasis{B}$, resulting in a new number $\myvaremphasis{Y}$, which consists of $k+1$ digits represented as $(\myvaremphasis{y}_0, \myvaremphasis{y}_1, \ldots, \myvaremphasis{y}_k)$, where $\myvaremphasis{y}_0$ is the most significant digit if $\myvaremphasis{y}_0 \neq 0$.

\smallskip
For processing the training data with $\Pi$-NeSy, the following data are generated for each sample:
\begin{itemize}
    \item The tuple $\myvaremphasis{c}_k = (\myvaremphasis{a}_k, \myvaremphasis{a}_{2k})$ is formed and the carry $\myvaremphasis{w}_k$ is calculated: it is equal to 1 if $\myvaremphasis{a}_k + \myvaremphasis{a}_{2k} \geq 10$, otherwise it is equal to 0.
    \item For $i = k-1$ down to 1, we consider $\myvaremphasis{c}_i = (\myvaremphasis{a}_i, \myvaremphasis{a}_{k+i}, \myvaremphasis{w}_{i+1})$ using $\myvaremphasis{a}_i$ from $\myvaremphasis{A}$, $\myvaremphasis{a}_{k+i}$ from $\myvaremphasis{B}$, and the carry $\myvaremphasis{w}_{i+1}$.
    \item The carry $\myvaremphasis{w}_i$ is set to 1 if $\myvaremphasis{a}_i + \myvaremphasis{a}_{k+i} + \myvaremphasis{w}_{i+1} \geq 10$, otherwise it is set to 0.
    \item The $i$-th digit of the sum $\myvaremphasis{y}_i$ for $i = 1, 2, \ldots, k$ is calculated as the sum of the elements in $\myvaremphasis{c}_i$ modulo 10.
    \item We set $\myvaremphasis{y}_0 = \myvaremphasis{w}_1$.
\end{itemize}

\begin{example}
Suppose $k = 3$, and let:
\begin{itemize}
    \item $\myvaremphasis{A} = (\myvaremphasis{m}_1, \myvaremphasis{m}_2, \myvaremphasis{m}_3)$ where $\myvaremphasis{m}_1 = (\includegraphics[width=0.02\textwidth]{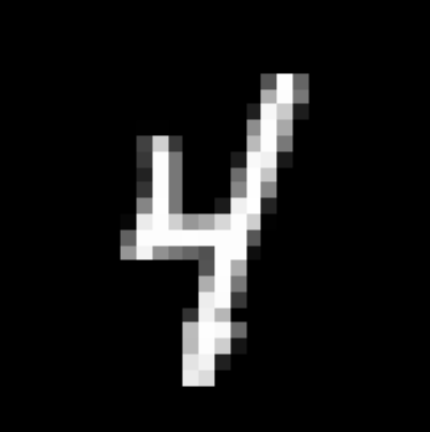}, 4)$, $\myvaremphasis{m}_2 = (\includegraphics[width=0.02\textwidth]{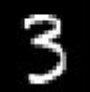}, 3)$, $\myvaremphasis{m}_3 = (\includegraphics[width=0.02\textwidth]{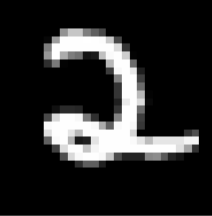}, 2)$, and
    \item $\myvaremphasis{B} = (\myvaremphasis{m}_4, \myvaremphasis{m}_5, \myvaremphasis{m}_6)$ where $\myvaremphasis{m}_4 = (\includegraphics[width=0.02\textwidth]{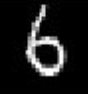}, 6)$, $\myvaremphasis{m}_5 = (\includegraphics[width=0.02\textwidth]{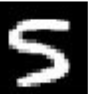}, 5)$, $\myvaremphasis{m}_6 = (\includegraphics[width=0.02\textwidth]{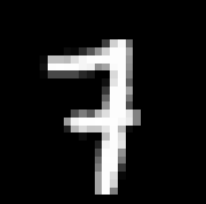}, 7)$.
\end{itemize}
\begin{itemize}
    \item We compute:
    \begin{itemize}
        \item $\myvaremphasis{c}_3 = (2, 7)$, $\myvaremphasis{w}_3= 0$ (since $2 + 7 = 9 < 10$), hence $\myvaremphasis{y}_3 = 9$.
        \item $\myvaremphasis{c}_2 = (3, 5, 0)$, $\myvaremphasis{w}_2 = 0$ (since $3 + 5 = 8 < 10$), hence $\myvaremphasis{y}_2 = 8$.
        \item $\myvaremphasis{c}_1 = (4, 6, 0)$, $\myvaremphasis{w}_1 = 1$ (since $4 + 6 + 0 = 10 \geq 10$), hence $\myvaremphasis{y}_1 = 0$.
    \end{itemize}
    \item $\myvaremphasis{y}_0 = \myvaremphasis{w}_1 = 1$.
\end{itemize}
The result $\myvaremphasis{Y}$ of the addition is $(1, 0, 8, 9)$.
\end{example}

\subsubsection{Possibilistic rule-based system for  \texorpdfstring{MNIST Addition-$k$ problems}{MNIST Addition k problems}}
\label{subsec:rulesformnistadd}
We now introduce a possibilistic rule-based system designed to address the MNIST Addition-$k$ problems. The system uses a set of attributes that mirror the structured data of each  training data sample in our datasets. 

Attributes $a_1, a_2, \ldots, a_k$ and $a_{k+1}, a_{k+2}, \ldots, a_{2k}$ represent the digits of two numbers to be added. Result digit attributes $y_0, y_1, \ldots, y_k$ correspond to the digits of the sum. Tuple attributes $c_1, c_2, \ldots, c_k$ and carry attributes $w_1, w_2, \ldots, w_k$ are used for the computation of each digit of the result.

\paragraph*{Attributes}
The following attributes are used to build the possibilistic rule-based system:

\begin{itemize}
    \item The attributes $a_1, a_2, \ldots, a_k$ are related to the first number, and the attributes $a_{k+1}, a_{k+2}, \ldots, a_{2k}$ are related to the second number. For each $i = 1,2,\ldots,2k$, the domain of the attribute $a_i$ is $D_{a_i} = \{0, 1, \ldots, 9\}$.
    \item The attributes $y_0, y_1, \ldots, y_k$ are related to the resulting number. We have $D_{y_0} = \{0,1\}$ and for $i=1,\cdots,k$, we have  $D_{y_i} = \{0, 1, \ldots, 9\}$.
    \item The attributes $c_1, c_2, \ldots, c_k$ are used for computing the digits of the resulting number. Their related domains are:
    \begin{itemize}
        \item $D_{c_k} = \{(u,v) \mid u, v \in \{0,1,\ldots,9\}\}$,
        \item $D_{c_1} = D_{c_2} = \ldots = D_{c_{k-1}} = \{(u,v,w) \mid u, v \in \{0,1,\ldots,9\}, w \in \{0,1\}\}$.
    \end{itemize}
    \item The attributes $w_1, w_2, \ldots, w_k$ are used for computing the carries involved in the sum. For $i=1,2,\cdots,k$, we have $D_{w_i} = \{0, 1\}$.
\end{itemize}

\paragraph*{Sets of rules}

To address the MNIST Addition-$k$ problems, we generate the following sets of rules, based on the attributes presented in the previous paragraph:

\begin{itemize} \item One set containing 20 rules, all using the output attribute $c_k$. \item $k - 1$ sets, each containing 21 rules. In each of these sets, all rules use the same output attribute $c_i$, where  $i \in \{1, 2, \dots, k - 1\}$. \item $k$ sets, each containing one rule. In each of these sets, the rule uses the output attribute $w_i$, where $i \in \{1, 2, \dots, k\}$. \item $k$ sets, each containing 10 rules. In each of these sets, all rules use the same output attribute $y_i$, where $i \in \{1, 2, \dots, k\}$. \item One additional set containing a single rule, which uses the output attribute $y_0$. \end{itemize}

\smallskip
In total, this results in $1 + (k - 1) + k + k + 1= 3 \cdot k + 1$ sets, and $20 + (k-1) \cdot 21 + k + 10 \cdot k + 1 = 32 \cdot k$ rules. 

\smallskip
In the following,  we describe each set of rules.

\smallskip
Each attribute $c_i$ is the output attribute of a set of possibilistic rules:

\begin{itemize}
    \item The set of rules related to $c_k$ is composed of 20 rules, which are generated using $a_k$ and $a_{2k}$ as follows:
    \begin{itemize}
        \item for each $j=0, 1, \cdots, 9$, the following rule, which involves the attribute $a_k$, is generated: 
        \begin{center}
            ``if $a_{k}(x) \in \{j\}$ then  $c_k(x) \in \{ (j,v) \mid v \in \{ 0, 1, \cdots, 9 \} \}$.''
        \end{center}
        \item for each $j=0, 1, \cdots, 9$, the following rule, which uses $a_{2k}$, is generated:
        \begin{center}
            ``if $a_{2k}(x) \in \{j\}$ then  $c_{k}(x) \in \{ (u,j) \mid u \in \{ 0, 1, \cdots, 9 \} \}$.''
        \end{center}
    \end{itemize}
    \item The set of rules related to $c_i$ for $i \in \{ 1, 2, \dots, k-1 \}$ is composed of 21 rules, which are generated using $a_i, a_{k+i}$ and $w_{i+1}$ as follows:
    \begin{itemize}
        \item for each $j=0, 1, \cdots, 9$, the following rule is generated: 
        \begin{center}
            ``if $a_{i}(x) \in \{j\}$ then  $c_i(x) \in \{ (j,v,w) \mid v \in \{ 0, 1, \cdots, 9 \}, w \in \{0,1\} \}$.''
        \end{center}
        \item for each $j=0, 1, \cdots, 9$, the following rule is generated: 
        \begin{center}
            ``if $a_{k+i}(x) \in \{j\}$ then  $c_{i}(x) \in \{  (u,j,w) \mid u \in \{ 0, 1, \cdots, 9 \}, w \in \{0,1\} \}$.''
        \end{center}
        \item we also add the following rule:
        \begin{center}
            ``if $w_{i+1}(x) \in \{0\}$ then  $c_{i}(x) \in \{  (u,v,0) \mid u,v \in \{ 0, 1, \cdots, 9 \} \}$.''
        \end{center}
    \end{itemize}
\end{itemize}

\smallskip
Each attribute  $w_i$ is the output attribute of a set composed of one rule:
\begin{itemize}
\item The set of rules which uses $w_k$ as output attribute contains the following rule, which uses attribute $c_k$:
    \begin{center}
        ``if $c_k(x) \in \{ (u,v) \mid u,v \in \{ 0,1,\dots,9\} \text{ s.t. } u+v \geq 10 \}$ then $w_k(x) \in \{ 1 \}$''
    \end{center}
    \item The set of rules whose output attribute is $w_i$ with $i \in \{ 1, 2, \cdots, k-1 \}$  is composed of the following  rule that relies on $c_i$:
    \begin{center}
        ``if $c_i(x) \in \{ (u,v,w) \mid u,v \in \{ 0,1,\dots,9\}, w \in \{0,1\} \text{ s.t. } u+v+w \geq 10 \}$ then $w_i(x) \in \{ 1 \}$''
    \end{center}
\end{itemize}

\smallskip
Each attribute $y_i$ is the output attribute of a set of possibilistic rules:
\begin{itemize}
 \item The set of rules which uses $y_k$ as output attribute is composed of 10 rules, which involve attribute $c_k$. For each $j = 0, 1, \cdots 9$, the following rule is generated:
    \begin{center}
        ``if $c_k(x) \in \{ (u,v) \mid u,v \in \{ 0,1,\cdots,9\} \text{ s.t. } (u+v)\mod 10 = j \}$ then $y_k(x) \in \{ j \}$''
    \end{center}
    \item The set of rules which uses $y_i$ with  $i \in \{ 1, 2, \dots, k-1 \}$  as output attribute is composed of  10 rules, which rely on attribute $c_i$. For each $j = 0, 1, \cdots 9$, the following rule is generated:
    \begin{center}
        ``if $c_i(x) \in \{ (u,v,w) \mid u,v \in \{ 0,1,\cdots,9\}, w\in \{0,1\} \text{ s.t. } (u+v+w) \mod 10 = j \}$ then $y_i(x) \in \{ j\}$''
    \end{center}
   \item The set of rules which uses $y_0$ as output attribute is composed of  the following rule which uses attribute $w_1$:
    \begin{center}
        ``if $w_1(x) \in \{ 0 \}$ then $y_0(x) \in \{ 0 \}$.''
    \end{center}
\end{itemize}

\smallskip
For each set of rules, we observe that the partition associated with the output attribute (see Section \ref{sec:practical-procedure-building-reduced-ES})  is composed of all possible singletons of the domain of this output attribute.

\paragraph*{Inference}

Given the possibility distributions $\pi_{a_1(x)}, \pi_{a_2(x)}, \cdots, \pi_{a_{2\cdot k}(x)}$, inference from the possibilistic rule-based system is achieved as follows:
\begin{enumerate}
    \item Inference from the set of rules whose output attribute is $c_k$ using $\pi_{a_k(x)}$ and $\pi_{a_{2k}(x)}$. The output possibility distribution $\pi^\ast_{c_k}(x)$ is obtained.
    \item Inference from the set of rules whose output attribute is $w_k$, which takes as input $\pi^\ast_{c_k}(x)$. The result is $\pi^\ast_{w_{k}(x)}$.
    \item For each $i = k-1$ down to $1$, the following steps are achieved to obtain successively $\pi^\ast_{c_i}(x)$ and then  $\pi^\ast_{w_i}(x)$:
    \begin{enumerate}
        \item Inference from the set of rules whose output attribute is $c_i$ using $\pi_{a_i(x)}$ and $\pi_{a_{k+i}(x)}$ and $\pi_{w_{i+1}^\ast(x)}$. The output possibility distribution $\pi^\ast_{c_i}(x)$ is obtained.
        \item Inference from the set of rules whose output attribute is $w_i$ using $\pi^\ast_{c_i(x)}$.The output possibility distribution $\pi^\ast_{w_i}(x)$ is obtained.
    \end{enumerate}
    \item For each $i=1,2,\cdots,k$, the next step is inference from the set of rules whose output attribute is $y_i$ using $\pi^\ast_{c_i(x)}$. The output attribute $\pi^\ast_{y_i}(x)$ is obtained.
    \item The final step is inference from the set of rules whose output attribute is $y_0$ using $\pi^\ast_{w_1(x)}$. The output attribute $\pi^\ast_{y_0}(x)$ is obtained.
\end{enumerate}

For each $i = 0, 1, \ldots, k$, from the output possibility distributions $\pi^\ast_{y_i}(x)$, the alternative (digit) with the highest possibility degree is retained to determine the $i$-th digit of the sum. Here, the index $i = 0$ corresponds to the most significant digit of the sum, while the index $i = k$ represents the least significant digit.

If, for a given $i \in \{0, 1, \ldots, k\}$, more than one alternative has the highest possibility degree in $\pi^\ast_{y_i}(x)$ (for instance, if two alternatives both have a possibility degree of 1), a misclassification (ambiguity) occurs.

\subsubsection{Learning}

\paragraph*{Neural learning} The neural network  has been trained with  the 50,000  MNIST image examples used to form the training dataset in $\mathcal{A}_{\text{train}}$. The neural network has then been tested with the 10,000 MNIST image examples used to form  $\mathcal{A}_{\text{test}}$.

\paragraph*{Possibilistic learning}

Possibilistic learning consists in finding values for the rule parameters of each set of rules according to training data. For this purpose, the examples in the MNIST Addition-$k$ training dataset $\mathcal{A}_{\text{train}}$ have been used. With each example are associated the following possibility distributions, which are generated using the data corresponding to the example, i.e.,  
$(\myvaremphasis{p}_1, \myvaremphasis{p}_2, \cdots, \myvaremphasis{p}_{2k}, \myvaremphasis{c}_1, \myvaremphasis{c}_2, \cdots,  \myvaremphasis{c}_k, \myvaremphasis{w}_1, \myvaremphasis{w}_2, \cdots,  \myvaremphasis{w}_k, \myvaremphasis{y}_0, \myvaremphasis{y}_1, \cdots,  \myvaremphasis{y}_k)$, see Section \ref{subsec:trainvaltest}:
\begin{itemize}
    \item For each MNIST image $\myvaremphasis{p}_i$ occurring in the two tuples of size $k$ composing the  training data sample, a possibility distribution is obtained by applying a probability-possibility transformation to the probability distribution produced by the neural network for predicting the image label. Therefore, for each training data sample in $\mathcal{A}_{\text{train}}$, the results of the neural network gives rise to $2k$ possibility distributions, each of which being related to one image.
    \item For each $i=1,2,\cdots,k$, for the set of rules whose output attribute is $c_i$ (resp. $w_i$), a targeted output possibility distribution is formed, which assigns a possibility degree equal to one to the tuple which is $\myvaremphasis{c}_i$ (resp. $\myvaremphasis{w}_i$)  while all other tuples have a possibility degree of zero. 
    \item For each $i=0,1,2,\cdots,k$, for the set of rules whose output attribute is $y_i$, a targeted output possibility distribution is formed, which assigns a possibility degree equal to one to the digit which is $\myvaremphasis{y}_i$  while all other digits have a possibility degree of zero. 
\end{itemize}
\noindent In total, for each  training data sample, we have $2k$ input  possibility distributions (associated with the attributes $a_1,a_2,\cdots,a_{2k}$) and $3k+1$ targeted output possibility distributions (associated with the output attributes $c_1,c_2,\cdots,c_k$, $w_1,w_2,\cdots,w_k$, $y_0,y_1,\cdots,y_k$ of the set of rules).

Possibilistic learning is performed in cascade using the  training data samples from $\mathcal{A}_{\text{train}}$: 
\begin{enumerate}
    \item For each $i$ from $k$ to $1$, the parameters for the set of rules whose output attribute is $c_i$ are first learned, and then those for the set of rules whose output attribute is  $w_i$.
    \item Finally, for each $i$ from $0$ to $k$, the parameters for the set of rules whose output attribute is $y_i$ are learned.
\end{enumerate}

The possibilistic learning phase relies on three thresholds $\tau_1,\tau_2$ and $\tau_3$. The first threshold $\tau_1$ is associated with the sets of rules whose output attribute is $c_i$ with $i \in \{1,2,\cdots,k\}$, since all these sets of rules are designed to perform the same task.  Similarly, the same threshold $\tau_2$ is used for all the sets of rules  whose output attribute is $w_i$ with $i \in \{1,2,\cdots,k\}$, and the same threshold $\tau_3$ is used for all the set of rules  whose output attribute is $y_i$ with $i \in \{0,1,\cdots,k\}$.
The approach given in Subsection \ref{subsec:thresholdsdetermination} is used to find the  values of these three thresholds based on the validation dataset $\mathcal{A}_{\text{valid}}$.

Learning in cascade (Method \ref{meth:learningincascade}) the rule parameters of the possibilistic rule-based system is performed, and then the matrix relations associated with all the sets of rules are constructed using the learned rule parameters.

\noindent \paragraph{Evaluation}

As expected, the evaluation is based on the MNIST Addition-$k$ test dataset $\mathcal{A}_{\text{test}}$. For each MNIST image in a given   training data sample of $\mathcal{A}_{\text{test}}$, the neural network produces a probability distribution for its label. These distributions are then transformed into possibility distributions, the latter being  used as input by the possibilistic rule-based system to perform inference using its matrix relations. For each  training data sample, we obtain $k+1$ output possibility distributions $\pi^\ast_{y_0}(x), \pi^\ast_{y_1}(x), \cdots, \pi^\ast_{y_k}(x)$.

In our experiments with  $\Pi$-NeSy, the correctness of a prediction has been assessed by verifying, for each $i = 0, 1, \ldots, k$, that the digit with the highest possibility degree in the output attribute distribution $\pi^\ast_{y_i}(x)$ matches the expected digit $\myvaremphasis{y}_i$ of the given  training data sample.

If, for any given $i$, there is more than one alternative (digit) with the highest possibility degree in $\pi^\ast_{y_i}(x)$ (for example, if two digits both have a possibility degree of 1), $\Pi$-NeSy's prediction is  viewed as incorrect (we consider this case of ambiguity to be a case of misclassification). Note that while we have taken such a case of misclassification into account in our empirical protocol, it did not occur in our experiments.

\subsubsection{Results}

\begin{table}[ht]
\centering
\setlength{\tabcolsep}{1.5pt}
\renewcommand{\arraystretch}{1.5}

\begin{tabular}{lcccccc}
\hline
Digits per number $k$ & 1 & 2 & 4 & 15 & 100\\  
\hline  
\makecell[l]{LTN\\ \cite{badreddine2022logic}} & 80.5 $\pm$ 23.3 & 77.5 $\pm$ 35.6 & timeout & - & - \\[2mm] 
\makecell[l]{NeuPSL\\ \cite{pryor2023ijcai}} & 97.3 $\pm$ 0.3 & 93.9 $\pm$ 0.4 & timeout & - & - \\[2mm] 
\makecell[l]{DeepProbLog\\ \cite{manhaeve2021neural}} & 97.2 $\pm$ 0.5 & 95.2 $\pm$ 1.7 & timeout & - & - \\[2mm] 
\makecell[l]{NeurASP\\ \cite{yang2023neurasp}} & 97.3 $\pm$ 0.3 & 93.9 $\pm$ 0.7 & timeout & - & - \\[2mm] 
\makecell[l]{DeepStochLog\\ \cite{winters2022deepstochlog}} & 97.9 $\pm$ 0.1 & 96.4 $\pm$ 0.1 & 92.7 $\pm$ 0.6 & timeout & - \\[2mm] 
\makecell[l]{Embed2Sym\\ \cite{aspis2022embed2sym}} & 97.6 $\pm$ 0.3 & 93.8 $\pm$ 1.4 & 91.7 $\pm$ 0.6 & 60.5 $\pm$ 20.4 & timeout \\[2mm] 
\makecell[l]{A-NeSI\\ \cite{van2024nesi}} & 97.7 $\pm$ 0.2 & 96.0 $\pm$ 0.4 & 92.6 $\pm$ 0.8 & 75.9 $\pm$ 2.2 & overflow \\[2mm] 
\makecell[l]{DeepSoftLog\\ \cite{maene2024soft}}  & 98.4 $\pm$ 0.1 & 96.6 $\pm$ 0.3 & 93.5 $\pm$ 0.6 & 77.1 $\pm$ 1.6 & 25.6 $\pm$ 3.4 \\[2mm] 
$\Pi$-NeSy-1 & \textbf{99.0 $\pm$ 0.1} & 97.9 $\pm$ 0.3 & \textbf{96.2 $\pm$ 0.5} & 86.6 $\pm$ 1.3 & 34.2 $\pm$ 4.6\\[2mm]
$\Pi$-NeSy-2 & 98.9 $\pm$ 0.2 & \textbf{98.0 $\pm$ 0.2} & 95.9 $\pm$ 0.4 & \textbf{87.1 $\pm$ 1.1} & \textbf{35.8 $\pm$ 5.5}\\[2mm]

\hline
\end{tabular}

\caption{Average accuracy and standard deviation on the test dataset for the MNIST Addition-$k$ problem over 10 runs (best results in bold).
}
\label{tab:mnist_add_results}
\end{table}

We present in Table \ref{tab:mnist_add_results} the accuracy obtained by $\Pi$-NeSy on MNIST Addition-$k$ problems (where $k \in \{1,2,4,15,100\}$), once averaged over 10 independent runs. For a fixed $k$, each of the ten runs consists of the following steps:
\begin{enumerate}
    \item Generate the three datasets: $\mathcal{A}_{\text{train}}$, $\mathcal{A}_{\text{valid}}$, and $\mathcal{A}_{\text{test}}$ from the MNIST dataset.
    \item  Train the neural networks using the MNIST images in the training data samples of $\mathcal{A}_{\text{train}}$.
    \item Apply the possibilistic learning method (Method \ref{meth:learningincascade}) using the training data samples in $\mathcal{A}_{\text{train}}$ and the  output probability distributions generated by the neural network from the MNIST images in each sample.
    \item Perform inference of the test samples in $\mathcal{A}_{\text{test}}$. For each test sample, the neural networks generate output probability distributions based on its MNIST images, and the possibilistic rule-based system performs high-level reasoning to produce the final sum, which is represented by possibility distributions.
    \item We measure accuracy based on the inference results of the test samples.
\end{enumerate}

This procedure is executed independently for each of the 10 runs, and the final performance metric (accuracy) is obtained by averaging the results.

For comparison purposes, we have included in  Table \ref{tab:mnist_add_results} the results obtained by the following approaches: Logic Tensor Networks (LTN) \cite{badreddine2022logic}, NeuPSL \cite{pryor2023ijcai}, DeepProbLog \cite{manhaeve2021neural}, NeurASP \cite{yang2023neurasp}, DeepStochLog \cite{winters2022deepstochlog}, Embed2Sym \cite{aspis2022embed2sym}, A-NeSI \cite{van2024nesi} and DeepSoftLog \cite{maene2024soft} using the same datasets generation method and evaluation protocol (these results were recently reported in \cite{maene2024soft,van2024nesi}).

From the results obtained, it appears that our $\Pi$-NeSy approach outperforms other neuro-symbolic methods on the MNIST Addition-$k$ problems (in particular, for $k=15$ and $k = 100$). The choice of probability-possibility transformation has little impact on performance, except for $k=15$ and $k=100$, where the method based on  minimum specificity principle ($\Pi$-NeSy-2) is slightly more effective than the antipignistic one ($\Pi$-NeSy-1). 
\begin{table}[ht]
\centering

\begin{tabular}{@{}cccccc@{}}
\toprule
k & Training set size & Test set size  & Approach & Learning time  & Inference time per test sample \\ \midrule
1 & 25000 & 5000 & $\Pi$-NeSy-1 & 1845.62 $\pm$ 162.41 & 0.002 $\pm$ 0.00 \\
1 & 25000 & 5000 & $\Pi$-NeSy-2 & 1868.79 $\pm$ 133.10 & 0.002 $\pm$ 0.00 \\
2 & 12500  & 2500 &  $\Pi$-NeSy-1 & 1510.01 $\pm$ 79.64 & 0.004 $\pm$ 0.00 \\
2 & 12500 & 2500 & $\Pi$-NeSy-2 & 1617.90 $\pm$ 97.70 & 0.004 $\pm$ 0.00 \\
4 & 6250 & 1250 & $\Pi$-NeSy-1 & 1396.77 $\pm$ 63.98 & 0.007 $\pm$ 0.00 \\
4 & 6250 & 1250 & $\Pi$-NeSy-2 & 1430.09 $\pm$ 48.23 & 0.007 $\pm$ 0.00 \\
15 & 1666 & 333 & $\Pi$-NeSy-1 & 1270.12 $\pm$ 27.50 & 0.027 $\pm$ 0.00 \\
15 & 1666 & 333 & $\Pi$-NeSy-2 & 1318.40 $\pm$ 28.99 & 0.028 $\pm$ 0.00 \\
100 & 250 & 50 & $\Pi$-NeSy-1 & 1211.84 $\pm$ 18.94 & 0.181 $\pm$ 0.00 \\
100 & 250 & 50 & $\Pi$-NeSy-2 & 1266.96 $\pm$ 17.70 & 0.180 $\pm$ 0.00 \\ \bottomrule
\end{tabular}

\caption{Each row in the table represents a MNIST Addition-$k$ problem addressed using either $\Pi$-NeSy-1 or $\Pi$-NeSy-2.
Each row includes the average learning time in seconds (averaged over 10 runs) and the average inference time per test sample in seconds (averaged over $10 \cdot N_{\text{test}}$ test samples)}.
The processing times for both approaches are similar and small enough. As $k$ increases, the learning time decreases because the possibilistic learning method selects fewer reliable training data (see Section \ref{subsec:appendix:posslearn:mnistadd}).
\label{tab:times_mnistadd}
\end{table}

Empirically, using the specified configuration (as detailed in the beginning of this section), the inference and learning times of $\Pi$-NeSy turned out to be reasonable  for MNIST Addition-$k$ problems, see Table \ref{tab:times_mnistadd}. 

In Section \ref{subsec:appendix:posslearn:mnistadd}, we give a detailed analysis of the empirical results presented in Table \ref{tab:mnist_add_results}. In this analysis, we show that when performing possibilistic learning  using $\Pi$-NeSy-1 and $\Pi$-NeSy-2 on the MNIST Addition-$k$ problems where $k \in \{1,2,4,15,100\}$, the thresholds used by both approaches are always close to zero for these problems and have the same value. We also show that possibilistic learning always yields values close to zero for the rule parameters, and that a small amount of training data is considered reliable.

\subsubsection{Other results}
\label{subsec:mnistaddotherresults}

\phantomsection
\paragraph{Ablation study}

We carried out an ablation study using $\Pi$-NeSy. We performed the MNIST Addition-$k$ problems where $k \in \{1,2,4,15,100\}$ using the same settings but without possibilistic learning:   we do not use thresholds to select training data samples (in practice, setting each threshold to a value strictly greater than 1 is equivalent to not using a threshold), and we manually set all rule parameters of possibilistic rule-based systems to zero (since, in MNIST Addition-$k$ problems, the rules are assumed to be certain). As we operated with the same fixed random seeds as those used in the experiments reported in Table \ref{tab:mnist_add_results} and with deterministic pytorch operations, we obtained the same results as those reported in Table \ref{tab:mnist_add_results}. Note by the way that possibilistic learning does not require the use of random numbers. 

\paragraph{Low data settings}

\begin{table}[ht]
\centering
\setlength{\tabcolsep}{1.5pt}
\renewcommand{\arraystretch}{1.5}
\begin{tabular}{lccccc}
\hline
Digits per number $k$ & 1 & 2 & 4 & 15 & 100\\
\hline
$\Pi$-NeSy-1 & 93.5 $\pm$ 2.9 & 86.3 $\pm$ 5.2 & 80.2 $\pm$ 7.0 & 39.3 $\pm$ 16.2 & 1.0 $\pm$ 1.6\\[2mm]
$\Pi$-NeSy-2 & 93.9 $\pm$ 2.6 & 89.6 $\pm$ 3.1 & 80.6 $\pm$ 5.0 & 40.6 $\pm$ 12.4 & 0.4 $\pm$ 1.2\\[2mm]
\hline
\end{tabular}
\caption{Average accuracy and standard deviation on the test dataset for the MNIST Addition-$k$ problem over 10 runs in low data settings.}
\label{tab:mnist_add_results_low_data_settings}
\end{table}

We also addressed MNIST Addition-$k$ problems in the context of low data settings. In each experiment, we randomly selected 2,500 images without replacement from the MNIST training dataset (instead of 50,000), to form $N_{\text{train}} = \left\lfloor \frac{2500}{2\cdot k} \right\rfloor$ pairs of tuples from these images in  the training dataset $\mathcal{A}_{\text{train}}$, while the validation dataset and the test dataset are generated using the standard settings. 
In Table \ref{tab:mnist_add_results_low_data_settings} we present the average accuracy results we obtained based on 10 independent runs. Interestingly, our approach $\Pi$-NeSy shows quite robust in low data settings. That said, as $k$ increases, task complexity becomes more apparent: for $k = 15$, we observe very high variability, and for $k=100$, $\Pi$-NeSy is unable to address the problem.
As expected, results are worse than under normal conditions.

In low data settings, we encountered a few difficulties in comparing floats, linked to the thresholds used in Method \ref{meth:learningincascade}.
To overcome this problem, we modified the parameters for generating the set $\mathcal{T}$ of candidate thresholds, see (\ref{eq:setTvalidationstep}), by setting $h = 2$, which generates candidate thresholds with higher values (for other experiments, $h = 5$ has been used).

In Subsection \ref{subsec:appendix:lds:posslearn:mnistadd}, an in-depth analysis of the characteristics and performance of our possibilistic learning method in these experiments  is reported.

\paragraph{Using DeepSoftLog convolutional neural network}

We also addressed the MNIST Addition-$k$ problems using DeepSoftLog convolutional neural network (CNN) \cite{maene2024soft} for digit recognition on MNIST images and  our possibilistic rule-based system (Subsection \ref{subsec:rulesformnistadd}) for the high-level reasoning task. The network uses the LeNet architecture (Table \ref{tab:NN_DeepSoftLog}), is optimized with the AdamW optimizer, and applies a cosine annealing learning rate schedule along with cross-entropy loss. 

We trained 10 independent instances of the CNN on the entire MNIST training dataset (60,000 samples), and evaluate their performance on the standard MNIST test dataset (10,000 samples) \footnote{DeepSoftLog uses the complete MNIST training dataset (60,000 samples) to train its neural networks for MNIST Addition-$k$ problems; see \url{https://github.com/jjcmoon/DeepSoftLog/blob/87765eda25dc12b5faf6dbda71aa9fea3a0b2ac0/deepsoftlog/experiments/mnist_addition/dataset.py}
} and on a validation dataset consisting of 10,000 samples randomly taken from the MNIST training dataset and augmented with small perturbations (small rotations, slight translations, and minor scaling) to simulate natural variations.

The CNN hyperparameters were manually fixed according to the results obtained on the validation dataset  (Table \ref{tab:mnist_add_hyperparams_DeepSoftLog}). For these 10 instances, the average validation accuracy was $99.82 \pm 0.05\%$, and the average test accuracy was $99.20 \pm 0.11\%$.  For tackling the MNIST Addition-$k$ problems, the CNN instance with the highest validation accuracy was chosen. This instance achieves $99.89\%$ on the validation dataset and $99.35\%$ on the test dataset \footnote{We were unable to achieve such high performance levels using the same data splits (Subsection \ref{subsec:trainvaltest}) used for training our neural network (In our experiments reported in Table \ref{tab:mnist_add_results}, our CNN (Table \ref{tab:NN}) is trained on a training dataset of 50,000 MNIST training samples taken without replacement from the MNIST training dataset and a validation dataset composed of the remaining 10,000 samples in the MNIST training dataset is used)}.

\begin{table}[ht]
\centering
\footnotesize
\begin{tabular}{@{}cll@{}}
\toprule
\textbf{Order} & \textbf{Layer} & \textbf{Configuration} \\
\midrule
1 & \makecell[l]{Convolutional, Max Pooling, ReLU} 
  & \makecell[l]{
    Kernel Size: 5x5, Stride: 1, Padding: 0,\\
    Input Channels: 1, Output Channels: 6, Bias: Yes,\\
    Pooling: 2x2 (Stride: 2) \\
    \textit{(Input: 1$\times$28$\times$28 $\rightarrow$ Output: 6$\times$12$\times$12)}
  } \\
\midrule
2 & \makecell[l]{Convolutional, Max Pooling, ReLU} 
  & \makecell[l]{
    Kernel Size: 5x5, Stride: 1, Padding: 0,\\
    Input Channels: 6, Output Channels: 16, Bias: Yes,\\
    Pooling: 2x2 (Stride: 2) \\
    \textit{(Input: 6$\times$12$\times$12 $\rightarrow$ Output: 16$\times$4$\times$4)}
  } \\
\midrule
3 & \makecell[l]{Fully Connected, ReLU} 
  & \makecell[l]{
    Input Shape: 16$\times$4$\times$4 = 256,\\
    Output Shape: 120, Bias: Yes
  } \\
\midrule
4 & \makecell[l]{Fully Connected, ReLU} 
  & \makecell[l]{
    Input Shape: 120,\\
    Output Shape: 84, Bias: Yes
  } \\
\midrule
5 & \makecell[l]{Output (Fully Connected)} 
  & \makecell[l]{
    Input Shape: 84,\\
    Output Shape: 10, Bias: Yes
  } \\
\bottomrule
\end{tabular}
\caption{Architecture of the convolutional neural network  of DeepSoftLog \cite{maene2024soft} for the low-level perception task.}
\label{tab:NN_DeepSoftLog}
\end{table}

\begin{table}[ht]
\centering
\footnotesize
\begin{tabular}{@{}ll@{}}
\toprule
\textbf{Hyperparameter}          & \textbf{Value} \\ \midrule
Batch Size                       & 128            \\
Training Epochs                  & 130            \\
Learning Rate                    & 0.001          \\
Weight Decay                     & 0.0001         \\
Cosine Schedule $T_{\max}$       & 100            \\
Cosine Schedule $\eta_{\min}$    & 0.00001        \\ \bottomrule
\end{tabular}
\caption{Hyperparameters used for training DeepSoftLog CNN.}
\label{tab:mnist_add_hyperparams_DeepSoftLog}
\end{table}

\noindent We also considered the MNIST Addition-$k$ problems similarly as before, except that we slightly modified the dataset generation and reuse the previously trained convolutional neural network rather than re-training it at each run. At each run, the training dataset $\mathcal{A}_{\text{train}}$, the validation dataset $\mathcal{A}_{\text{valid}}$, and the test dataset $\mathcal{A}_{\text{test}}$ are generated following the procedure outlined in Subsection~\ref{subsec:trainvaltest}. Specifically, $\mathcal{A}_{\text{train}}$ is derived from the complete MNIST training dataset, yielding $N_{\text{train}} = \left\lfloor \frac{60000}{2\cdot k} \right\rfloor$ examples (instead of  $\left\lfloor \frac{50000}{2\cdot k} \right\rfloor$ previously); the validation dataset $\mathcal{A}_{\text{valid}}$ is generated using the set of 10,000 MNIST training images slightly altered  previously (used for selecting the best-performing CNN instance); and the test dataset $\mathcal{A}_{\text{test}}$ is constructed using images from the MNIST test dataset.

For each MNIST Addition-$k$ problem, the final test accuracy is calculated as the average of the test accuracies obtained for each of the 10 runs. The results are presented in Table \ref{tab:deepsoftlog_cnn_mnist_add_results}.

\begin{table}[ht]
\centering
\setlength{\tabcolsep}{1.5pt}
\renewcommand{\arraystretch}{1.5}
\begin{tabular}{lccccc}
\hline
Digits per number $k$ & 1 & 2 & 4 & 15 & 100\\
\hline
\makecell[l]{DeepSoftLog\\ \cite{maene2024soft}}  & 98.4 $\pm$ 0.1 & 96.6 $\pm$ 0.3 & 93.5 $\pm$ 0.6 & 77.1 $\pm$ 1.6 & 25.6 $\pm$ 3.4 \\[2mm] 
$\Pi$-NeSy-1 & \textbf{98.7 $\pm$ 0.01} & 97.4 $\pm$ 0.04 & 94.9 $\pm$ 0.09 & \textbf{82.0 $\pm$ 0.40} & \textbf{29.4 $\pm$ 2.84}\\[2mm]
$\Pi$-NeSy-2 & 98.7 $\pm$ 0.02 & \textbf{97.4 $\pm$ 0.03} & \textbf{95.0 $\pm$ 0.08} & 81.9 $\pm$ 0.47 & 26.8 $\pm$ 4.49\\[2mm]
\hline
\end{tabular}
\caption{Average accuracy and standard deviation on the test dataset for the MNIST Addition-$k$ problem over 10 runs with $\Pi$-NeSy using DeepSoftLog's CNN. Best results in bold.}
\label{tab:deepsoftlog_cnn_mnist_add_results}
\end{table}

For these settings, $\Pi$-NeSy-1 and $\Pi$-NeSy-2 exhibit better performance than DeepSoftLog for all MNIST Addition-$k$ problems. In the most challenging case, i.e., when $k=100$, $\Pi$-NeSy-1 is slightly better than $\Pi$-NeSy-2: for the latter, we obtained two test accuracies of 16.0 and 22.0, which lowers its final average test accuracy and substantially increases its variability.
\noindent These results reveal an interesting insight: for MNIST Addition-$k$ problems, a combination of a convolutional neural network (here, the one used by DeepSoftLog) and a  rule-based system outperforms the entire DeepSoftLog system.

In Section \ref{sec:appendix:posslearn:withDSL}, we provide an in-depth analysis of the results obtained by our possibilistic learning method in these experiments. Especially, it turns out that for these experiments, almost all training data samples were considered reliable.

\subsection{MNIST Sudoku puzzle problems}

We also tackled the MNIST Sudoku puzzles problem \cite{augustine2022visual}, in which the Sudoku puzzles use MNIST images to represent digits: a 4x4 Sudoku puzzle includes 16 MNIST images where each image represents the digit 0,1,2, or 3, while a 9x9 Sudoku puzzle uses 81 MNIST images that can represent the digits 0,1, ... or 8. 

Similarly to the neuro-symbolic approaches A-NeSI \cite{van2024nesi} and DeepSoftLog \cite{maene2024soft}, the Sudoku generator described in \cite{augustine2022visual} has been used for generating datasets. The same experimental protocol as in \cite{maene2024soft,van2024nesi} has been considered. 

\subsubsection{Possibilistic rule-based systems for MNIST Sudoku puzzle problems}
\label{subsec:exp:sudoku}

\noindent For the sake of brevity, we only present the possibilistic rule-based system designed for addressing 4x4 MNIST Sudoku puzzles. Our approach is based on comparing pairs of MNIST image digits that must be labeled differently (these comparisons are called ``small constraints'' in \cite{demoen2014redundant}). The rule-based system for addressing 4x4 MNIST Sudoku puzzles can be straightforwardly adapted to deal with 9x9 MNIST Sudoku puzzles.

The possibilistic rule-based system for addressing MNIST Sudoku 4x4 puzzles problem consists of a specific cascade: fifty-six sets of  eight rules are chained to one set with a single rule. In total, we have 449 rules which are formed using 73 attributes: 
\begin{itemize}
    \item 16 input attributes $a_{ij}$ where $i,j \in \{1,2,3,4\}$, such that each attribute $a_{ij}$ corresponds to the handwritten digit at coordinate $(i,j)$ in the Sudoku puzzle and its domain is $D_{a_{ij}} = \{ 0, 1, 2, 3\}$,
    \item 56 attributes denoted by ${b_{(i,j,i',j')}}$ to compare each pair of attributes $a_{ij}, a_{i'j'}$ which must be labeled differently (in rows, columns and sub-grids). The domain of each attribute ${b_{(i,j,i',j')}}$ is  $D_{b_{(i,j,i',j')}} = \{ (u,v) \mid u,v \in \{ 0,1,2,3\} \}$.
    \item 1 attribute $c$, whose domain is $D_c = \{0,1\}$, which assesses whether the Sudoku is valid (1) or not (0).
\end{itemize}

A set $F$ of constraints is generated where each constraint is a 4-tuple of the form $(i,j,i',j')$ such that if $(i,j,i',j') \in F$ then the two attributes  $a_{i,j},a_{i',j'}$  must be labeled differently in order to ensure that the Sudoku is valid. The set $F$ is constructed by combining tuples from the rows, columns and sub-grids of the Sudoku, as follows:
\begin{align*}
    F = &\, \bigcup_{i = 1}^{4} \left\{ (i,j,i,j') \,\middle|\, j,j' \in \{1,2,3,4\},\, j' > j \right\} \\
        &\, \cup \bigcup_{j = 1}^{4} \left\{ (i,j,i',j) \,\middle|\, i,i' \in \{1,2,3,4\},\, i' > i \right\} \nonumber\\ 
        &\, \cup \bigcup_{S = 1}^{4} \left\{ (R(S,i),C(S,i),R(S,i'),C(S,i')) \,\middle|\, i,i' \in \{1,2,3,4\},\, i' > i \right\} \nonumber
\end{align*}
\noindent where for a sub-grid of index $S \in \{1,2,3,4\}$ (by convention $1,2,3,4$ are the top left, top right, bottom left, bottom right sub-grids respectively), the following two functions give respectively the row and the column in the coordinate of the $i$-th element ($i \in \{1,2,3,4\}$): 
\begin{align*}
    R(S, i) &= \left( \left\lfloor \frac{S - 1}{2} \right\rfloor \cdot 2 \right) + \left\lfloor \frac{i - 1}{2} \right\rfloor + 1,
\\
    C(S, i) &= \left( (S - 1) \mod 2 \right) \cdot 2 + ((i - 1) \mod 2) + 1. 
\end{align*}
The set $F$ is composed of 56 tuples. For each 4-tuple $(i,j,i',j') \in F$, a set of eight possibilistic rules is built as follows. Let $(i,j,i',j') \in F$ be fixed, for each $k=0,1,2,3$, two rules are generated:
\begin{itemize}
    \item ``If ${a_{ij}}(x) \in \{ k \}$ then ${b_{(i,j,i',j')}}{(x)} \in \{ (k,l) \mid l \in \{0,1,2,3\}\}$'',
    \item ``If ${a_{i'j'}}(x) \in \{ k \}$ then ${b_{(i,j,i',j')}}{(x)} \in \{ (l,k) \mid l \in \{0,1,2,3\}\}$''.
\end{itemize}

\smallskip
For each $(i,j,i',j') \in F$, the output possibility distribution $\pi^\ast_{b_{(i,j,i',j')}(x)} $  is exploited to determine whether the two handwritten digits corresponding to $a_{i,j},a_{i',j'}$ are labeled differently or not. The 56 sets of  possibilistic rules are chained to one set of one rule,  which 
is used to determine 
whether the Sudoku is valid. The premise of the rule is the conjunction of all 56 propositions  of the form ``$b_{(i,j,i',j')}(x)  \in \{ (k,l) \mid k,l \in \{0,1,2,3\}, \text{ s.t. } k \neq l \}$''  for  all $(i,j,i',j') \in F$  and the conclusion is ``$c(x) \in \{1\}$''. 

The possibilistic rule-based system for addressing  MNIST Sudoku puzzles of size 9x9 can be constructed similarly. It uses 81 input attributes $a_{i,j}$ (where $i, j \in \{1, 2, \ldots, 9\}$) to represent Sudoku cells, each having a domain $D_{a_{i,j}} = \{0, 1, \ldots, 8\}$, and 810 attributes $b_{(i,j,i',j')}$ to compare pairs of Sudoku cells that must differ to satisfy Sudoku rules (there are 810 small constraints in a 9x9 Sudoku puzzle, see \cite{demoen2014redundant}), with a domain $D_{b_{(i,j,i',j')}} = \{ (u,v) \mid u, v \in \{0, 1, \ldots, 8\}\}$. A set of rules where the output attribute \(c\) indicates whether the Sudoku is valid or not is generated. This set of rules is composed of a single rule, where its premise is the conjunction of 810 propositions of the form ``$b_{(i,j,i',j')}(x)  \in \{ (k,l) \mid k,l \in \{0,1,\cdots,8\}, \text{ s.t. } k \neq l \}$'' and its conclusion is ``$c(x) \in \{1\}$''.

\subsubsection{Protocol}

\paragraph*{Training, validation and test data}

As with A-NeSI \cite{van2024nesi} and DeepSoftLog \cite{maene2024soft}, the Sudoku generator presented in \cite{augustine2022visual} is used to construct ten data splits with the following parameters: a corruption chance of 0.50 and no overlap (each MNIST image is used only once). Each data split includes a training dataset of 200 Sudoku puzzles (100 correct, 100 incorrect), a validation dataset and a test dataset of 100 Sudoku puzzles each (50 correct, 50 incorrect).

Such data splits are generated for Sudoku puzzles of size 4x4 and 9x9. We seamlessly integrate the three datasets (train dataset $\mathcal{A}_{\text{train}}$, validation dataset $\mathcal{A}_{\text{validation}}$ and test dataset $\mathcal{A}_{\text{test}}$) that are produced by the Sudoku puzzle generator of \cite{augustine2022visual}  with $\Pi$-NeSy. 

Each  training data sample in these three datasets consists of a 4x4 (resp. 9x9) Sudoku puzzle composed of 16 (resp. 81) MNIST images of a handwritten digit, the true label of each MNIST image, and the label of the Sudoku puzzle which states if the Sudoku puzzle is valid or not.

\paragraph*{Neural learning}
As for MNIST Addition-$k$ problems, neural learning is done first using all the images contained in the Sudoku puzzles of the training dataset $\mathcal{A}_{\text{train}}$. For testing, all the images contained in the Sudoku puzzles of $\mathcal{A}_{\text{test}}$ are used.

\paragraph*{Possibilistic learning}
For performing possibilistic learning, each Sudoku puzzle in the MNIST Sudoku training dataset $\mathcal{A}_{\text{train}}$ is associated with the following items:
\begin{itemize}
    \item For each MNIST image in the Sudoku puzzle, a possibility distribution is obtained by applying a probability-possibility transformation of the probability distribution produced by the neural network with each MNIST image.
    \item With each set of rules associated with a tuple $(i,j,i',j') \in F$,  a  targeted output possibility distribution of the output attribute $b_{(i,j,i',j')}$ is associated; it assigns a possibility degree equal to one to the tuple $(k,l)$, where $k$ and $l$ are the true labels of the two MNIST images of coordinate $(i,j)$ and $(i',j')$ respectively, while all other tuples have a possibility degree of zero. 
    \item A targeted output possibility distribution of the output attribute $c$ of the set of the rules at the end of the chain in the cascade is generated, it assigns:
    \begin{itemize}
        \item a possibility degree equal to one to the output value $1$ if the Sudoku is valid, otherwise a possibility degree equal to zero, 
        \item a possibility degree equal to zero to the output value $0$ if the Sudoku is valid, otherwise a possibility degree equal to one. 
    \end{itemize} 
\end{itemize}
In the cascade,  all the sets of rules associated with the sets in $F$ are designed for  performing the same task. Therefore, for all these sets of rules, the same threshold $\tau_1$ is used. For the set of rules at the end of the chain, a second threshold $\tau_2$ is used. The search for threshold values can be carried out in a similar way to a classical cascade using the approach given in Subsection \ref{subsec:thresholdsdetermination}.

Finally,  learning in cascade is achieved (Method \ref{meth:learningincascade}) and the matrix relations associated with each set of rules built with the learned rule parameters are finally obtained.

\paragraph*{Evaluation}
In each experiment, the accuracy of $\Pi$-NeSy is evaluated using the  MNIST Sudoku  test dataset $\mathcal{A}_{\text{test}}$ in the same way as done for the MNIST Addition-$k$ problem. For each Sudoku puzzle of $\mathcal{A}_{\text{test}}$, $\Pi$-NeSy produces an output possibility distribution $\pi^\ast_{c}(x)$. The alternative with the highest degree of possibility in $\pi^\ast_{c}(x)$ is selected. Given a Sudoku puzzle, $\Pi$-NeSy's prediction is valid if the chosen alternative is the number 1 and the Sudoku puzzle is correct, or if the chosen alternative is the number 0 and the Sudoku puzzle is incorrect. Similarly to what has been done for the MNIST Addition-$k$ problem, we consider as a case of misclassification the situation when the two alternatives 0 and 1 in $\pi^\ast_{c}(x)$ have an equal possibility degree (ambiguity). However, such a scenario has not been encountered in our experiments.

\subsubsection{Results}
\label{subsub:mnistsudokuoresults}
The  results of $\Pi$-NeSy on MNIST Sudoku puzzles are reported in Table \ref{tab:visual_Sudoku_classification}. We proceed similarly as with the MNIST Addition-$k$ problem: for the MNIST Sudoku 4x4 (resp. 9x9) problem, we conduct 10 independent runs based on the 10 generated data splits. In each run, the neural network has been trained; possibilistic learning has been carried out using the training set $\mathcal{A}_{\text{train}}$, and subsequently the accuracy of the model has been evaluated based on the test puzzles from $\mathcal{A}_{\text{test}}$. Finally, the reported results are the average of the test accuracies obtained over the ten independent runs. For comparison purposes, we have included the results reported in \cite{maene2024soft,van2024nesi} and obtained by a CNN baseline, NeuPSL, A-NeSI and DeepSoftLog following the same experimental protocol. 
\begin{table}[ht]
\centering

\renewcommand{\arraystretch}{1.5}
\begin{tabular}{lcc}
\hline
{Approach} & {4 x 4  Sudoku}   & {9 x 9 Sudoku}   \\
\hline
CNN & 51.5 $\pm$ 3.34 &51.2 $\pm$ 2.20\\
NeuPSL \cite{pryor2023ijcai}       & 89.7 $\pm$ 2.20  & 51.5 $\pm$ 1.37  \\
A-NeSI \cite{van2024nesi}       & 89.8 $\pm$ 2.08  & 62.3 $\pm$ 2.20  \\
DeepSoftLog \cite{maene2024soft}           & 94.2 $\pm$ 1.84  & 65.0 $\pm$ 1.94  \\
$\Pi$-NeSy-1 & \textbf{96.8 $\pm$ 1.78} & 73.3 $\pm$ 6.03\\
$\Pi$-NeSy-2 & 96.1 $\pm$ 1.92 & \textbf{74.1 $\pm$ 4.50}\\
\hline
\end{tabular}
\caption{Accuracy and standard deviation on the test dataset for the MNIST Sudoku problem over 10 runs (best results in bold). Results for a CNN baseline, NeuPSL, A-NeSI and DeepSoftLog were reported in  \cite{maene2024soft,van2024nesi}.}
\label{tab:visual_Sudoku_classification}
\end{table}

In the 4x4 configuration, $\Pi$-NeSy achieves slightly better performances than DeepSoftLog and exhibits superior performance than NeuPSL, A-NeSI, and the CNN baseline. In the 9x9 configuration, $\Pi$-NeSy significantly outperforms all other methods. The choice of probability-possibility transformation has little impact on performance, but we note that $\Pi$-NeSy-1 is better than $\Pi$-NeSy-2 for the 4x4 configuration, while $\Pi$-NeSy-2 is better than $\Pi$-NeSy-1 for the 9x9 configuration.

The good results obtained by $\Pi$-NeSy are partly due to the performance of the neural network of $\Pi$-NeSy. For each experiment, the neural network is trained using MNIST images of the Sudoku puzzles in the training dataset $\mathcal{A}_{\text{train}}$, which includes 3200 MNIST images for the 4x4 configuration and 16200 MNIST images for the 9x9 configuration. For testing, MNIST images of $\mathcal{A}_{\text{test}}$ (1600 in the 4x4 configuration, 8100 in the 9x9 configuration) are used and the neural model of $\Pi$-NeSy-1 (resp. $\Pi$-NeSy-2)  obtains average accuracies of $99.58\pm 0.24$\% (resp. $99.46 \pm 0.26$) in the 4x4 configuration and $99.09\pm0.25\%$ (resp. $99.08 \pm 0.19\%$) in the 9x9 configuration.

Based on empirical results with the specified configuration (detailed in the beginning of this section),  the inference and learning times of $\Pi$-NeSy are reasonable  for MNIST Sudoku problems, see Table \ref{tab:times_mnistSudoku}.

\begin{table}[ht]
\centering

\begin{tabular}{@{}cccccc@{}}
\toprule
Size & Training set size & Test set size  & Approach & Learning time & Inference time per test sample \\ \midrule
4x4  & 200 & 100 & $\Pi$-NeSy-1 & 154.39 $\pm$ 2.96 & 0.03 $\pm$ 0.00  \\
4x4  & 200 & 100 & $\Pi$-NeSy-2 & 155.30 $\pm$ 2.57 & 0.03 $\pm$ 0.00 \\
9x9  & 200 & 100 & $\Pi$-NeSy-1 & 8814.38 $\pm$ 51.23 & 0.67 $\pm$ 0.01 \\
9x9  & 200 & 100 & $\Pi$-NeSy-2 & 8784.21 $\pm$ 66.18 & 0.60 $\pm$ 0.01  \\ \bottomrule
\end{tabular}

\caption{Each row corresponds to a MNIST Sudoku problem processed by $\Pi$-NeSy-1 or $\Pi$-NeSy-2. Each row lists the corresponding learning time in seconds using training data (averaged over 10 runs) and the average inference time  for a test sample in seconds (averaged over $10\cdot 100$ observations) from these experiments. Both methods exhibit similar processing times.}
\label{tab:times_mnistSudoku}
\end{table}

In Section \ref{subsec:appendix:posslearn:mnistsudoku}, we present a more detailed analysis of the empirical results for  possibilistic learning obtained using $\Pi$-NeSy-1 and $\Pi$-NeSy-2 on the MNIST Sudoku 4x4 and MNIST Sudoku 9x9 problems. Likewise for MNIST Addition-$k$ problems, we show that the thresholds used by the two approaches are always close to zero for these problems, that possibilistic learning always yields values close to zero for the rule parameters, and that a small amount of training data is considered reliable. In the analysis, we also show that the MNIST Sudoku 4x4 problem does not require much memory (RAM) to be processed (around 750 MB), whereas the MNIST Sudoku 9x9 problem requires significantly more memory (around 6.7 GB).

\paragraph{Ablation study}

We conducted an ablation study with $\Pi$-NeSy on MNIST Sudoku problems, following the same procedure as the ablation study for MNIST Addition-$k$ problems. We used the same experimental setup as in the standard  settings, except that we do not apply possibilistic learning: no thresholds were used to select training data samples, and all rule parameters in the possibilistic rule-based system were manually set to zero, assuming the rules to be certain. As we operated with the same fixed random seeds as those used in the experiments reported in Table \ref{tab:visual_Sudoku_classification} and with deterministic pytorch operations, we obtained the same results as those reported in Table \ref{tab:visual_Sudoku_classification}.  

\paragraph{Low data settings}

We also tackled MNIST Sudoku problems in low data settings, where for each experiment, the training dataset $\mathcal{A}_{\text{train}}$ is composed of 10 puzzles (5 correct, 5 incorrect), while the validation dataset and the test dataset are generated as in the standard settings. The results are presented in Table \ref{tab:mnist_sudokuresults_low_data_settings} where, for each configuration (4x4 and 9x9), the reported result is the average of the accuracies reached for 10 independent runs. Our approach $\Pi$-NeSy appears as quite robust when few data are available, even if for the 9x9 configuration, performance is almost equivalent to the one that would be achieved by a random choice. 

Note that we kept  $h=2$ for generating the  set $\mathcal{T}$ of candidate thresholds (see (\ref{eq:setTvalidationstep})) in the experiments conducted, as for those carried out in low data settings to address MNIST Addition-$k$ problems.  

\begin{table}[ht]
\centering
\renewcommand{\arraystretch}{1.5}
\begin{tabular}{lcc}
\hline
{Approach} & {4x4 Sudoku} & {9x9 Sudoku}\\
\hline
$\Pi$-NeSy-1 & 73.8 $\pm$ 6.79 & 51.5 $\pm$ 0.67\\
$\Pi$-NeSy-2 & 72.9 $\pm$ 11.78 & 51.2 $\pm$ 1.17\\
\hline
\end{tabular}
\caption{Accuracy and standard deviation on the test dataset for the MNIST-Sudoku problem over 10 runs in low data settings.}
\label{tab:mnist_sudokuresults_low_data_settings}
\end{table}
In Subsection \ref{subsec:appendix:lds:posslearn:mnistsudoku}, an in-depth analysis of the characteristics and performance of our possibilistic learning method for these experiments is presented. 
We did not address MNIST Sudoku problems using DeepSoftLog CNN (its implementation for this task is not publicly available).

\subsection{Discussion on \texorpdfstring{$\Pi$-NeSy}{Pi-NeSy}}
\label{subsec:disc}
\paragraph*{Design of a possibilistic rule-based system}
One challenge when using $\Pi$-NeSy is to design a possibilistic rule-based system suited to the problem at hand. In a possibilistic rule-based system,  possibilistic rules encode negative information \cite{dubois2003new}  and, when performing inference, the rules are combined conjunctively with the  function $\min$. Let us remind the notion of negative information in the case of a crisp rule ``if $X$ is $A$ then $Z$ is $O$'' in classical logic,  
where $A \subseteq U$ and $O \subseteq V$
are subsets of the domains of the variables $X$ and $Z$,  respectively, and the rule links the two
universes of discourse $U$ and $V$ by their local restrictions $A$ and $O$. The rule encodes negative information when the rule is interpreted as a \textit{constraint} ``if $X$ is $A$ then $Z$ must be $O$”:  it asserts in an implicitly negative way that the values outside $O$ are excluded when $X$ takes its values in $A$. Otherwise, if the rule represents positive information,  the rule is viewed as a condition of the form ``if $X$ is $A$ then $Z$ \textit{can be} $O$'' and
 asserts that when $X$ takes its value in $A$, then {\it any} value in $O$ is eligible for $Z$.

It should be noted that for the possibilistic rule-based systems used in the experiments, each rule is designed in order to  exhibit a form of equivalence between ``if $p$ then $q$'' and ``if $q$ then $p$'' (viewed as ``if $\neg p$ then $\neg q$''). When performing possibilistic learning, rule parameters are thus expected to have values close to or equal to zero, \textit{which means that both the rules and the converse rules are certain}. This is what we observed experimentally, see Section \ref{sec:appendix:expadd}.


Each possibilistic rule-based system used in the experiments also exhibit the following property: assuming that  the rule parameters values are set to zero, if each of the input possibility distributions used for the inference of the possibilistic rule-based system is such that one value of the distribution equals one while all other values are zero (this type of  distribution is  called a one-point distribution \cite{MarekProbability1963}), then, after inference,  the resulting output possibility distribution is also a one-point distribution. Stated differently, the conclusion of the rule is certain since both the rule and its premise are certain.

\paragraph*{Possibilistic learning}
Before performing possibilistic learning in practice, a selection step that aims to exhibit most reliable training data samples has been carried out (see Subsection \ref{subsec:thresholdsdetermination}). This is a new feature for neuro-symbolic approaches. Such a selection process was used in the experiments  to ensure that the equation systems built by stacking $(\mathbf{\Sigma})$, see (\ref{eq:stackedES}), are of a reasonable size. With regard to our experimental results, it appears that our selection method is  efficient enough, and it  filtered well reliable data from unreliable ones according to a threshold. 
The time needed to select a threshold can be reduced by choosing the smallest one that work for applying possibilistic learning,  i.e., we set each threshold  to the minimum value that ensures that at least one training data sample is selected and is therefore considered reliable. 

Not handling the inconsistency of the equation systems constructed with training data
thanks to \cite{BAAJ2024} would have led to unexpected results. In all our experiments, when applying possibilistic learning, the values of the rule parameters were always equal to zero or very close to zero (see Section \ref{sec:appendix:expadd}). Thus, for each set of rules, the learning method allows us to state that the rules are certain with respect to the training data. 


We have also tested the performance of the learning method without achieving first the selection process using data of varying reliability (this can be tested by setting the threshold(s) to a value strictly greater than $1$). This test has been done for the possibilistic rule-based systems used for MNIST Additions-$k$ problems and MNIST Sudoku problems. Since a consistent equation system has been obtained for each training data sample using Definition \ref{def:reduced-es}, the learning phase has been achieved successfully.

Finally, the possibilistic learning method introduced in this article appears to be well performing in practice. Of course, additional experiments will be required to confirm its efficiency in other settings and considering other neuro-symbolic problems.

\paragraph*{Limitations}

When a possibilistic rule-based system contains a very large number of rules and/or the domain of the output attribute has a very large number of output values, the size of the matrix $\dot{\Gamma}$ of the equation system $(\Upsigma)$, see (\ref{eq:upsigma}), can be very large. In such a case, depending on the number of training data samples, the size of the matrix of the equation system constructed by stacking $(\mathbf{\Sigma})$, see (\ref{eq:stackedES}) can be de facto very large as well. This drawback is slightly mitigated by the use of thresholds, which can be leveraged to reduce the size of the matrix of the equation system $(\mathbf{\Sigma})$ as they allow us to select the most reliable training data samples. 

In the experiments carried out, our possibilistic learning method has been applied to rules ``if $p$ then $q$'' exhibiting a form of equivalence with ``if $q$ then $p$'' (viewed as ``if $\neg p$ then $\neg q$''). Possibilistic learning was not tested for other types of rule. Furthermore, possibilistic learning was only tested for multi-class classification problems where the targeted output possibility distribution associated with a training data sample
is such that one output value has a possibility degree equal to one, while all others have a possibility degree equal to zero. 

\paragraph*{Related work}

Neuro-symbolic methods can mainly be categorized into two approaches \cite{vermeulen2023experimental}: regularization-based approaches, which use logical constraints to regularize the neural network during training (such as Semantic Based Regularization (SBR) \cite{diligenti2017semantic} or Logic Tensor Networks (LTNs) \cite{badreddine2022logic}); and logic programming-based approaches, which interface a neural network with a probabilistic or fuzzy extension of a logic programming framework (e.g., DeepProbLog \cite{manhaeve2018deepproblog}, NeurASP \cite{yang2023neurasp}, DeepStochLog \cite{winters2022deepstochlog}). 

$\Pi$-NeSy follows a logic programming-based approach, in the sense that it interfaces a neural network with a possibilistic rule-based system. Possibilistic rule-based systems are based on Possibility Theory, an uncertainty theory developed since the late 70s which lies at the crossroads between fuzzy sets, probability, and nonmonotonic reasoning  \cite{dubois2015possibility}. Unlike probabilistic neuro-symbolic approaches which perform approximate inference based on weighted model counting (WMC) \cite{maene2024hardness}, $\Pi$-NeSy inference is not computationally expensive. As such, it scales well, and allows us to make exact inferences. 

The learning method for possibilistic rule-based systems is a new approach, which differs from classic machine learning paradigms (at work in state-of-the-art neuro-symbolic approaches) by its use of an optimization method based on the solving of $\min-\max$ equation systems and the ability of handling inconsistent $\min-\max$ equation systems. This approach brings two main new features to the development of neuro-symbolic approaches: the measure of  reliability of a training data sample, and the capacity to make minimal modifications to the classification of a data sample in order to make it compatible with a set of rules.

\subsection{Further developments of neuro-symbolic approaches based on possibilistic rule-based systems}

In what follows, two interesting features of possibilistic rule-based systems, which may be of interest for the development of neuro-symbolic approaches based on these systems, are highlighted. The first feature is a backpropagation mechanism that can be defined for improving the neural model used. The second feature concerns the XAI-compliance of the approach: it is possible to derive explanations for the inference results of possibilistic rule-based systems.

\subsubsection{Backpropagation mechanism}
\label{subsub:backpropagation}

In this subsection, we show how a process similar to  backpropagation \cite{rumelhart1986learning} can be performed when dealing with a possibilistic rule-based system.  Let a possibilistic rule-based system be composed of a set of $n$ possibilistic rules, where the values of the rule parameters are set. The rule premises are constructed from $k$ input attributes $a_1,a_2,\dots,a_k$ and the output attribute is denoted by $b$. Let us suppose that the totally ordered set $(\Lambda^{(n)}, \preceq^{(n)})$ associated with the  possibilistic rule-based system has been constructed (see Subsection \ref{subsec:partitionred}). Using (\ref{eq:reducedonmnconstructed}), the matrix $\mathcal{M}_n$ composed of the rule parameters can be built with $(\Lambda^{(n)}, \preceq^{(n)})$.  

Let us now assume a targeted normalized output possibility distribution $\widetilde \pi_{b(x)}$ associated with the output attribute $b$. Using $(\Lambda^{(n)}, \preceq^{(n)})$, the mapping $\sigma^{(n)}$, see (\ref{eq:sigman}) and  the possibility measure $\widetilde \Pi$ associated with $\widetilde \pi_{b(x)}$, the following vector can be formed:
\begin{equation}\label{eq:otilde}
\widetilde O  = [\widetilde o_\mu]_{\mu\in\Lambda^{(n)}} = \begin{bmatrix}
    \widetilde\Pi(\sigma^{(n)}(\mu_1))\\
    \widetilde\Pi(\sigma^{(n)}(\mu_2))\\
    \vdots\\
    \widetilde \Pi(\sigma^{(n)}(\mu_{\omega^{(n)}}))\\
\end{bmatrix} 
\end{equation}

The construction of this vector is similar to the one of the output vector in the matrix relation, see (\ref{eq:reducedonmnconstructed}). As we suppose that the output possibility distribution $\widetilde \pi_{b(x)}$ is normalized and since the non-empty subsets $(\sigma^{(n)}(\mu_i)_{1 \leq i \leq \omega^{(n)}}$ form a partition of the set $D_b$, we have:
\begin{equation}\label{eq:Nor}
 \exists i\in \{1 , 2 , \dots , \omega^{(n)}\} \text{ such that }  
 \widetilde \Pi(\sigma^{(n)}(\mu_i)) = 1.
\end{equation}

For defining a backpropagation process, the following $\min-\max$ system can be used:
\begin{equation}\label{eq:sysOmega}
   (\Omega_n): \widetilde O  = \mathcal{M}_n \Box_{\max}^{\min} X,
\end{equation}
\noindent where $X = \begin{bmatrix}
    \lambda_1\\
    \rho_1 \\
    \lambda_2\\
    \rho_2\\
    \vdots \\
    \lambda_n\\
    \rho_n
\end{bmatrix}$ is an unknown vector, whose components are the possibility degrees of the rule premises ($\lambda_i$ is the possibility degree of the premise $p_i$ and $\rho_i$ is the possibility degree of $\neg p_i$, see Subsection \ref{subsec:possibilistic-rbs}).

A backpropagation mechanism can be implemented as follows:  \textit{
If the equation system $(\Omega_n)$ is consistent (i.e., if the equation system $(\Omega_n)$ has a solution),   normalized input  possibility distributions can be generated. Indeed, inference from the possibilistic rule-based system with these generated input possibility distributions using the  matrix relation associated with the set of rules, see (\ref{eq:redFP}), produce the desired output vector $ \widetilde{O} $ and thus the targeted output possibility distribution $ \widetilde{\pi}_{b(x)} $}. Therefore, the generated input possibility distributions are  ``targeted'' to produce the exact output vector $\widetilde{O}$  by inference with the matrix relation. 

Using the antipignistic probability-possibility transformation, see Subsection \ref{subsec:antipignistic}, the obtained targeted input possibility distributions can finally be transformed into  probability distributions. 

\paragraph{Solving the equation system \texorpdfstring{$(\Omega_n)$}{Omega n}}

In the following, we study the solving of the equation system $(\Omega_n)$. The study is similar to the one for the reduced equation system  $(\Upsigma_n)$ as shown in  Subsection \ref{subsec:learnruleparam}. We show that, if the system $(\Omega_n)$  is consistent,  its solutions can be used for backpropagation. If the equation system $(\Omega_n)$ is inconsistent, \cite{BAAJ2024} can be leveraged  to handle it.  

We remind that the coefficients of the matrix $\mathcal{M}_n$  of the equation system  $(\Omega_n): \widetilde O  = \mathcal{M}_n \Box_{\max}^{\min} X$ are given by the   formulas  (\ref{eq:coefredM})  derived from the  $n$ partitions 
$\Lambda^{(n)} = \Lambda_j^{n, \top} \cup \Lambda_j^{n, \bot}$ with $j \in \{1,2,\cdots,n\}$, see Definition \ref{def:Lj} and (\ref{eq:Lnpart}). We introduce the following column-vectors:  
\begin{equation}F^{n,\downarrow} = {\mathcal{M}_n^t} \Box_{\epsilon}^{\max}  \widetilde O = \begin{bmatrix}
    \lambda_1^{n,\downarrow}\\
    \rho_1^{n,\downarrow} \\
    \lambda_2^{n,\downarrow}\\
    \rho_2^{n,\downarrow}\\
    \vdots \\
    \lambda_n^{n,\downarrow}\\
    \rho_n^{n,\downarrow}
\end{bmatrix}  \quad \text{and} \quad 
    F^{n,\uparrow} = \begin{bmatrix}
    \lambda_1^{n,\uparrow}\\
    \rho_1^{n,\uparrow} \\
    \lambda_2^{n,\uparrow}\\
    \rho_2^{n,\uparrow}\\
    \vdots \\
    \lambda_n^{n,\uparrow}\\
    \rho_n^{n,\uparrow}
\end{bmatrix}.
\end{equation}
where the vector $F^{n,\downarrow}$ is computed by a max-epsilon matrix product. The coefficients of the two vectors $F^{n,\downarrow}$ and $F^{n,\uparrow}$  are obtained using the  rule parameters and  the coefficients of  the   vector   $\widetilde O =[\widetilde o_\mu]_{\mu\in\Lambda^{(n)}}$ using the sets $\Lambda_j^{n, \top}$ and $\Lambda_j^{n, \bot}$ with $j \in \{1,2,\dots,n\}$:

\begin{definition}\label{def:coFU}
The coefficients of  the vector   $F^{n,\uparrow}$  are  defined by:
\begin{equation}\label{eq:formulasejtopOMEGA}
  \text{for } j=1,2,\dots,n,  
  \quad \lambda_j^{n,\uparrow}: = \max_{\mu\in \Lambda_j^{n, \top}} \widetilde o_\mu \quad 
\text{ and }\quad 
    \rho_j^{n,\uparrow} := \max_{\mu\in \Lambda_j^{n, \bot}} \widetilde o_\mu.
\end{equation}
 If $\Lambda_j^{n, \top} = \emptyset$ (resp. $\Lambda_j^{n, \bot} = \emptyset$), we have $\max_{\mu\in \Lambda_j^{n, \top}} \widetilde o_\mu= 0$ (resp. $\max_{\mu\in \Lambda_j^{n, \bot}} \widetilde o_\mu= 0$).
    
\end{definition}
\begin{restatable}{lemma}{lemcoEDcmdOMEGA}\label{lem:coEDOMEGA}
The coefficients of  the vector $F^{n,\downarrow}:= \mathcal{M}^t_n  \Box_\epsilon^{\max } \widetilde O$ are  given by: \\
For $j=1,2,\dots,n$: 
\begin{equation}\label{eq:eninitialformulaOMEGA}
    \lambda_j^{n,\downarrow} = s_j\, \epsilon \max_{\mu\in \Lambda_j^{n, \top}} \widetilde o_\mu  = s_j \,\epsilon \, \lambda^{n,\uparrow}_{j} \text{ and } \rho_j^{n,\downarrow} = r_j \, \epsilon \max_{\mu\in \Lambda_j^{n, \bot}} \widetilde o_\mu = r_j\, \epsilon \,\rho^{n,\uparrow}_{j}.
\end{equation}
For $l= 1, 2, \dots, n$, we have $\lambda_j^{n,\downarrow} \leq \lambda_j^{n,\uparrow}$ and $\rho_j^{n,\downarrow} \leq \rho_j^{n,\uparrow}$, i.e., the vector inequality 
$F^{n,\downarrow} \leq F^{n,\uparrow}$ holds.
\end{restatable}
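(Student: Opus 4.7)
The plan is to mirror closely the argument used to establish Lemma \ref{lem:coED}, exploiting the formal analogy between the matrices $\dot{\Gamma}_n$ and $\mathcal{M}_n$ revealed by comparing the coefficient formulas (\ref{eq:coefredG}) and (\ref{eq:coefredM}): the coefficient structure is identical up to the substitution $\lambda_j \leftrightarrow s_j$ and $\rho_j \leftrightarrow r_j$. Since the second member $\widetilde O$ plays exactly the role of $\dot{Y}_n$, the entire computation of the vector $F^{n,\downarrow}$ proceeds by the same case analysis as for $E^{n,\downarrow}$.

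First I would unfold the definition of the transposed product: for each $l \in \{1,2,\dots,2n\}$, by the definition of $\Box_\epsilon^{\max}$ one has
\[
(F^{n,\downarrow})_l = \max_{\mu \in \Lambda^{(n)}} \bigl(\dot{m}_{\mu, l} \,\epsilon\, \widetilde o_\mu\bigr).
\]
Fixing $l = 2j-1$ and splitting the index set via the partition $\Lambda^{(n)} = \Lambda_j^{n,\top} \cup \Lambda_j^{n,\bot}$ (Definition \ref{def:Lj}), formula (\ref{eq:coefredM}) gives $\dot{m}_{\mu, 2j-1} = s_j$ for $\mu \in \Lambda_j^{n,\top}$ and $\dot{m}_{\mu, 2j-1} = 1$ for $\mu \in \Lambda_j^{n,\bot}$. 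Since $\widetilde o_\mu \in [0,1]$ and $1 \geq \widetilde o_\mu$, the definition (\ref{eq:epsilonproduct}) of $\epsilon$-product yields $1 \,\epsilon\, \widetilde o_\mu = 0$ for every $\mu \in \Lambda_j^{n,\bot}$, so this branch contributes nothing to the max. Hence $\lambda_j^{n,\downarrow} = \max_{\mu \in \Lambda_j^{n,\top}} (s_j \,\epsilon\, \widetilde o_\mu)$.

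Next I would establish the distributivity $s_j \,\epsilon\, \max_{\mu \in \Lambda_j^{n,\top}} \widetilde o_\mu = \max_{\mu \in \Lambda_j^{n,\top}} (s_j \,\epsilon\, \widetilde o_\mu)$ by a two-case argument on whether $s_j$ is strictly less than $\lambda_j^{n,\uparrow}$: if it is, the max on the right-hand side is attained at any $\mu$ realizing $\lambda_j^{n,\uparrow}$ (or exceeding $s_j$), and both sides equal $\lambda_j^{n,\uparrow}$; if $s_j \geq \lambda_j^{n,\uparrow}$, then $s_j \geq \widetilde o_\mu$ for every $\mu \in \Lambda_j^{n,\top}$, so each individual $s_j \,\epsilon\, \widetilde o_\mu$ vanishes and both sides equal $0$. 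This yields exactly $\lambda_j^{n,\downarrow} = s_j \,\epsilon\, \lambda_j^{n,\uparrow}$, and the argument for $l = 2j$ is identical, interchanging the roles of the two halves of the partition and using $r_j$ instead of $s_j$.

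Finally, the inequality $F^{n,\downarrow} \leq F^{n,\uparrow}$ is immediate from the definition of $\epsilon$-product: for every $j$, $\lambda_j^{n,\downarrow} = s_j \,\epsilon\, \lambda_j^{n,\uparrow}$ equals either $\lambda_j^{n,\uparrow}$ (when $s_j < \lambda_j^{n,\uparrow}$) or $0$ (otherwise), hence $\lambda_j^{n,\downarrow} \leq \lambda_j^{n,\uparrow}$, and symmetrically $\rho_j^{n,\downarrow} \leq \rho_j^{n,\uparrow}$. I do not anticipate a real obstacle here: the argument is a direct transcription of the one given in Subsection \ref{subsec:proof:coED}, with no structural difference apart from the renaming of entries. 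The only small point requiring care is handling empty partitions $\Lambda_j^{n,\top} = \emptyset$ or $\Lambda_j^{n,\bot} = \emptyset$, but Definition \ref{def:coFU} already fixes the convention $\max_{\mu\in\emptyset} \widetilde o_\mu = 0$, which makes both sides of the asserted equalities trivially zero in those degenerate cases.
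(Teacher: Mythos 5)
Your proposal is correct and follows essentially the same route as the paper's proof: expand the $\Box_\epsilon^{\max}$ product, split the index set along the partition $\Lambda^{(n)} = \Lambda_j^{n,\top} \cup \Lambda_j^{n,\bot}$ using the coefficient formulas (\ref{eq:coefredM}) so that the branch with coefficient $1$ vanishes, pull $s_j$ (resp. $r_j$) out of the max via distributivity of the $\epsilon$-product, and conclude the inequality from $x\,\epsilon\,y \leq y$. The only cosmetic difference is that you re-derive the identity $x\,\epsilon\,\max(y,z)=\max(x\,\epsilon\,y,\,x\,\epsilon\,z)$ by a two-case argument, whereas the paper invokes it as a known relation.
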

\noindent See Subsection \ref{subsec:proof:coEDOMEGA} for the proof. We also have the following result:

\begin{restatable}{proposition}{propcoEDUcmdOMEGA}\label{prop:coEDUOMEGA}
Let $\mathcal{M}_n \Box_{\max}^{\min} F^{n,\downarrow} = [u_\mu]_{\mu\in \Lambda^{(n)}}$. Then, for each $\mu\in \Lambda^{(n)}$, we have:
\begin{equation}\label{eq:autreenOMEGA}
u_\mu = \min_{1 \leq j \leq n} z_{\mu, j}\quad \text{; } \quad 
z_{\mu, j} := \begin{cases}
\max(s_j, \lambda_{j}^{n, \uparrow})     & \text{ if }  \mu\in \Lambda_j^{n, \top}   \\
  \max(r_j, \rho_{j}^{n, \uparrow}) & \text{ if }  \mu\in \Lambda_j^{n, \bot} 
\end{cases}.
    \end{equation}
Moreover, the following equality holds: 
\begin{equation}\label{eq:eeOMEGA}
\mathcal{M}_n \Box_{\max}^{\min} F^{n,\downarrow} = \mathcal{M}_n \Box_{\max}^{\min} F^{n,\uparrow} 
.
\end{equation}

\end{restatable}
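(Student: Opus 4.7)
The plan is to follow essentially the same strategy as for Proposition \ref{prop:coEDU}, exploiting the structural parallel between $\mathcal{M}_n$ and $\dot{\Gamma}_n$ made explicit by Lemma \ref{lemma:gammnsubstitue}: the two matrices differ only by swapping $(\lambda_i, \rho_i)$ with $(s_i, r_i)$. Since Lemma \ref{lem:coEDOMEGA} plays for $F^{n,\downarrow}, F^{n,\uparrow}$ exactly the same role as Lemma \ref{lem:coED} plays for $E^{n,\downarrow}, E^{n,\uparrow}$, the argument should transfer term-for-term.

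First I would unfold the product componentwise. Fix $\mu \in \Lambda^{(n)}$ and group the $2n$ indices into pairs $(2j-1, 2j)$:
$$u_\mu = \min_{1\le j \le n} \min\!\bigl(\max(\dot{m}_{\mu,2j-1},\lambda_j^{n,\downarrow}),\ \max(\dot{m}_{\mu,2j},\rho_j^{n,\downarrow})\bigr).$$
Applying Lemma \ref{lemma:coM}, in the case $\mu \in \Lambda_j^{n,\top}$ the $j$-th pair of coefficients of $\mathcal{M}_n$ is $(s_j, 1)$, so the inner $\min$ collapses to $\max(s_j, \lambda_j^{n,\downarrow})$; in the case $\mu \in \Lambda_j^{n,\bot}$ the pair is $(1, r_j)$, so the inner $\min$ collapses to $\max(r_j, \rho_j^{n,\downarrow})$. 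The key observation is that in both subcases the factor $1$ coming from $\mathcal{M}_n$ neutralizes one half of the pair, which is what makes the entire computation tractable.

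The next step is to replace the $\downarrow$-values with the $\uparrow$-values. Using Lemma \ref{lem:coEDOMEGA}, $\lambda_j^{n,\downarrow} = s_j\,\epsilon\,\lambda_j^{n,\uparrow}$, whose definition (\ref{eq:epsilonproduct}) gives either $\lambda_j^{n,\uparrow}$ (when $s_j < \lambda_j^{n,\uparrow}$) or $0$ (when $s_j \geq \lambda_j^{n,\uparrow}$). A two-line case split then yields $\max(s_j, \lambda_j^{n,\downarrow}) = \max(s_j, \lambda_j^{n,\uparrow})$, and the symmetric argument gives $\max(r_j, \rho_j^{n,\downarrow}) = \max(r_j, \rho_j^{n,\uparrow})$. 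Putting everything together, $u_\mu = \min_{1\le j \le n} z_{\mu,j}$ with $z_{\mu,j}$ as stated.

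Finally, for the second assertion (\ref{eq:eeOMEGA}), I would repeat the very same computation with $F^{n,\uparrow}$ in place of $F^{n,\downarrow}$. The index-by-index collapse still applies (only the coefficients of $\mathcal{M}_n$ matter), and the $j$-th contribution becomes directly $\max(s_j, \lambda_j^{n,\uparrow})$ (resp. $\max(r_j, \rho_j^{n,\uparrow})$), which is exactly the same $z_{\mu,j}$ obtained in the first part. Hence both products equal $\min_j z_{\mu,j}$ and are therefore equal to each other. No serious obstacle is anticipated: the only delicate point is the $\epsilon$-product case split, but it is precisely the computation already performed in the proof of Proposition \ref{prop:coEDU} and carries over unchanged after the substitution $(\lambda_i,\rho_i) \leftrightarrow (s_i,r_i)$.
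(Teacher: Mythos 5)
Your proposal is correct and follows essentially the same route as the paper's own proof: unfold the $\min$-$\max$ product into pairs $(2j-1,2j)$, use the coefficient formulas of Lemma \ref{lemma:coM} to collapse each inner $\min$ to a single $\max$ term, and then apply Lemma \ref{lem:coEDOMEGA} together with the identity $\max(x, x\,\epsilon\, y)=\max(x,y)$ to pass from the $\downarrow$-values to the $\uparrow$-values. The explicit two-case split you give for the $\epsilon$-product is just a spelled-out version of that identity, so there is no substantive difference.
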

\noindent See Subsection \ref{subsec:proof:coEDUOMEGA} for the proof. 
By Sanchez's result \cite{sanchez1976resolution}, the  equation system $(\Omega_n)$ is consistent iff $\widetilde O = \mathcal{M}_n \Box_{\max}^{\min} F^{n,\downarrow}$. If this equality is satisfied, the lowest solution of the system $(\Omega_n)$ is the vector $F^{n,\downarrow}$ and by     Proposition \ref{prop:coEDUOMEGA},  the vector $F^{n,\uparrow}$ is another solution of the system $(\Omega_n)$. In fact,    any vector $X$ such that $F^{n,\downarrow}\leq X \leq  F^{n,\uparrow}$  (where $\leq$ denotes the usual component-wise order between vectors) is a solution of the  equation system $(\Omega_n)$:
\begin{corollary}\label{cor:consistOMEGA}
Suppose that the equation system  $(\Omega_n): \widetilde O_n = \mathcal{M}_n \Box_{\max}^{\min} X$ is consistent. Then  any vector $X$ such that $F^{n,\downarrow}\leq X \leq  F^{n,\uparrow}$ is a solution of the equation system $(\Omega_n)$.   
\end{corollary}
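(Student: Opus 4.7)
The plan is to mirror exactly the proof strategy used for the analogous Corollary \ref{cor:consist}, since the structural setup is identical: we have two vectors $F^{n,\downarrow}$ and $F^{n,\uparrow}$ that both yield the same image under the min-max matrix product with $\mathcal{M}_n$, and we want to squeeze any intermediate vector between them. The key ingredients are already in place: Lemma \ref{lem:coEDOMEGA} gives the componentwise inequality $F^{n,\downarrow} \leq F^{n,\uparrow}$, and Proposition \ref{prop:coEDUOMEGA} (specifically equation (\ref{eq:eeOMEGA})) gives $\mathcal{M}_n \Box_{\max}^{\min} F^{n,\downarrow} = \mathcal{M}_n \Box_{\max}^{\min} F^{n,\uparrow}$.

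The first step is to invoke Sanchez's result: consistency of $(\Omega_n)$ together with the characterization via the max-epsilon product implies that the lowest solution is $F^{n,\downarrow}$, i.e. $\widetilde O = \mathcal{M}_n \Box_{\max}^{\min} F^{n,\downarrow}$. Combined with (\ref{eq:eeOMEGA}), this immediately yields $\widetilde O = \mathcal{M}_n \Box_{\max}^{\min} F^{n,\uparrow}$ as well, so both endpoints of the interval $[F^{n,\downarrow}, F^{n,\uparrow}]$ map to the desired right-hand side.

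The second step is to use the fact that the min-max matrix product $A \Box_{\max}^{\min} X$ is monotone (non-decreasing) in $X$ with respect to the componentwise order: since $\min$ and $\max$ are both monotone non-decreasing in each of their arguments, if $X \leq X'$ componentwise then $A \Box_{\max}^{\min} X \leq A \Box_{\max}^{\min} X'$. Applying this twice to any vector $X$ with $F^{n,\downarrow} \leq X \leq F^{n,\uparrow}$ gives the sandwich
\begin{equation*}
\widetilde O \;=\; \mathcal{M}_n \Box_{\max}^{\min} F^{n,\downarrow} \;\leq\; \mathcal{M}_n \Box_{\max}^{\min} X \;\leq\; \mathcal{M}_n \Box_{\max}^{\min} F^{n,\uparrow} \;=\; \widetilde O,
\end{equation*}
which forces equality throughout and shows that $X$ is a solution of $(\Omega_n)$.

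There is essentially no obstacle: the result is a direct transcription of Corollary \ref{cor:consist} to the backpropagation setting, and all the nontrivial work has already been carried out in Lemma \ref{lem:coEDOMEGA} and Proposition \ref{prop:coEDUOMEGA}. The only thing to be careful about is to explicitly cite these two results (rather than their analogues for $(\Upsigma_n)$) and to make clear that the monotonicity of $\Box_{\max}^{\min}$ is what allows the sandwich argument — a one-line observation that does not require a separate lemma.
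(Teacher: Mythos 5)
Your proposal is correct and follows exactly the paper's own argument: consistency plus Sanchez's characterization gives $\widetilde O = \mathcal{M}_n \Box_{\max}^{\min} F^{n,\downarrow}$, equation (\ref{eq:eeOMEGA}) extends this to $F^{n,\uparrow}$, and the monotonicity of $\Box_{\max}^{\min}$ yields the sandwich $\widetilde O \leq \mathcal{M}_n \Box_{\max}^{\min} X \leq \widetilde O$. This matches the paper's proof, which is itself a direct transcription of the proof of Corollary \ref{cor:consist}.
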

\begin{proof}
As by Lemma \ref{lem:coEDOMEGA}, we have  $F^{n,\downarrow} \leq F^{n,\uparrow}$, we can deduce that for any vector $X$ such that $F^{n,\downarrow}\leq X \leq  F^{n,\uparrow}$:
\[ \widetilde O= \mathcal{M}_n  \Box_{\max}^{\min} F^{n,\downarrow} \leq \mathcal{M}_n \Box_{\max}^{\min} X \leq 
  \mathcal{M}_n  \Box_{\max}^{\min} F^{n,\uparrow} = \widetilde O. \]
\end{proof}

Accordingly, whenever the equation system $(\Omega_n)$ is consistent, it has an uncountable number of solutions. Some solutions exhibit properties which allow us to perform backpropagation:
\begin{restatable}{proposition}{propSOLOMEGA}\label{prop:SOLOMEGA}\mbox{}
\begin{enumerate}
    \item 
    The components of the vector $F^{n,\uparrow}$ exhibit the following property:
    \[ \forall j \in \{1,2,\cdots,n\}, \text{ we have } \max(\lambda_j^{n,\uparrow}, \rho_j^{n,\uparrow}) = 1.\]
    \item Suppose that all the rule parameters of the possibilistic rule-based system have a value that is strictly lower than $1$, i.e., $\forall j \in \{1,2,\cdots,n\}$, we have $s_j < 1$ and $r_j < 1$. Then, the components of the vector $F^{n,\downarrow}$ exhibit the following property:
    \[ \forall j \in \{1,2,\cdots,n\}, \text{ we have } \max(\lambda_j^{n,\downarrow}, \rho_j^{n,\downarrow}) = 1.\]
    \item Suppose that all the rule parameters of the possibilistic rule-based system have a value that is strictly lower than $1$, i.e.,  $\forall j \in \{1,2,\cdots,n\}$, we have $s_j < 1$ and $r_j < 1$. Then  the components of any vector $X^\ast = \begin{bmatrix}\lambda_1^\ast & \rho_1^\ast & \cdots & \lambda_n^\ast & \rho_n^\ast\end{bmatrix}^t$ such that $F^{n,\downarrow}\leq X^\ast \leq  F^{n,\uparrow}$ exhibit the following property:
    \[ \forall j \in \{1,2,\cdots,n\}, \text{ we have } \max(\lambda_j^\ast, \rho_j^\ast) = 1.\]
\end{enumerate}
\end{restatable}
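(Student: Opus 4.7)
The plan is to exploit the normalization property (\ref{eq:Nor}) of the targeted output distribution, combined with the partition $\Lambda^{(n)} = \Lambda_j^{n,\top} \cup \Lambda_j^{n,\bot}$ from (\ref{eq:Lnpart}) and the explicit $\epsilon$-product formulas from Lemma \ref{lem:coEDOMEGA}. The three items form a cascade: Part 1 follows almost immediately from normalization; Part 2 uses Part 1 together with the definition of $\epsilon$-product and the strict inequality $s_j, r_j < 1$; Part 3 then sandwiches $X^\ast$ between the two bounds established in Parts 1 and 2.

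For Part 1, I would start from (\ref{eq:Nor}): there exists some $\mu^\ast \in \Lambda^{(n)}$ with $\widetilde o_{\mu^\ast} = \widetilde \Pi(\sigma^{(n)}(\mu^\ast)) = 1$. For every fixed $j$, since $\Lambda^{(n)}$ is partitioned into $\Lambda_j^{n,\top}$ and $\Lambda_j^{n,\bot}$, the index $\mu^\ast$ belongs to exactly one of the two sets, and the defining formulas (\ref{eq:formulasejtopOMEGA}) force the corresponding max to be $1$. Coupled with the trivial upper bound $\widetilde o_\mu \le 1$, this gives $\max(\lambda_j^{n,\uparrow}, \rho_j^{n,\uparrow}) = 1$.

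For Part 2, I would use (\ref{eq:eninitialformulaOMEGA}) which states $\lambda_j^{n,\downarrow} = s_j\,\epsilon\,\lambda_j^{n,\uparrow}$ and $\rho_j^{n,\downarrow} = r_j\,\epsilon\,\rho_j^{n,\uparrow}$. By Part 1, at least one of $\lambda_j^{n,\uparrow}, \rho_j^{n,\uparrow}$ equals $1$; say $\lambda_j^{n,\uparrow} = 1$ (the other case is symmetric). Since $s_j < 1 = \lambda_j^{n,\uparrow}$, the definition of $\epsilon$-product in (\ref{eq:epsilonproduct}) gives $\lambda_j^{n,\downarrow} = \lambda_j^{n,\uparrow} = 1$, whence $\max(\lambda_j^{n,\downarrow}, \rho_j^{n,\downarrow}) = 1$. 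This is the step where the strict inequality hypothesis on the rule parameters is crucial: without $s_j < 1$ (resp.\ $r_j < 1$), the $\epsilon$-product could collapse to $0$.

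Part 3 is then a direct squeezing argument: from $F^{n,\downarrow} \leq X^\ast \leq F^{n,\uparrow}$ we read off the componentwise inequalities $\lambda_j^{n,\downarrow} \leq \lambda_j^\ast \leq \lambda_j^{n,\uparrow}$ and $\rho_j^{n,\downarrow} \leq \rho_j^\ast \leq \rho_j^{n,\uparrow}$. If, say, $\lambda_j^{n,\downarrow} = 1$ (by Part 2), then $\lambda_j^\ast = 1$ since $\lambda_j^\ast \leq \lambda_j^{n,\uparrow} \leq 1$; otherwise $\rho_j^{n,\downarrow} = 1$ and analogously $\rho_j^\ast = 1$. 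In either case $\max(\lambda_j^\ast, \rho_j^\ast) = 1$, which is exactly the normalization of the input possibility distributions needed to make the backpropagation meaningful. I do not expect a serious obstacle here; the only subtle point is tracking the role of the strict inequality $s_j, r_j < 1$ in Part 2 to ensure the $\epsilon$-product returns its second argument rather than $0$.
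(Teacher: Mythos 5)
Your proposal is correct and follows essentially the same route as the paper's proof: Part 1 from the normalization (\ref{eq:Nor}) together with the partition $\Lambda^{(n)} = \Lambda_j^{n,\top} \cup \Lambda_j^{n,\bot}$ and the formulas (\ref{eq:formulasejtopOMEGA}); Part 2 from (\ref{eq:eninitialformulaOMEGA}) and the fact that $x\,\epsilon\,1 = 1$ whenever $x < 1$; Part 3 by sandwiching $X^\ast$ between $F^{n,\downarrow}$ and $F^{n,\uparrow}$ (the paper phrases this last step via the monotonicity of $\max$ rather than your case split, but the argument is the same).
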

\noindent See Subsection \ref{subsec:proof:SOLOMEGA} for the proof.

\paragraph{Generating targeted input possibility distributions}

\noindent Let us now assume that the equation system $(\Omega_n)$ is consistent and the vector $X^\ast = \begin{bmatrix}\lambda_1^\ast & \rho_1^\ast & \cdots & \lambda_n^\ast & \rho_n^\ast\end{bmatrix}^t$ is one of its solution. 
If the components of $X^\ast$ are such that for all $i \in \{1,2,\dots,n\}$, $\max(\lambda_i^\ast, \rho_i^\ast)=1$ (Proposition \ref{prop:SOLOMEGA}), then generating input possibility distributions associated with the input attributes of the possibilistic rule-based system from $X^\ast$  based on the definition of the rule premises is feasible.  Indeed, one can state from the vector $X^\ast$ conditions on the possibility degrees $\lambda_1, \rho_1, \lambda_2, \rho_2,\cdots, \lambda_n, \rho_n$ of the rule premises: for all $i \in \{1,2,\dots,n\}$, we must have $\lambda_i = \lambda_i^\ast$ and $\rho_i = \rho_i^\ast$. Then, we can formulate conditions on the propositions composing the rule premises and deduce conditions on the targeted input possibility distributions of the possibilistic rule-based system: 

 \begin{itemize}
            \item  The condition $\lambda_i = \lambda_i^\ast$ relates to the possibility degree  of the  premise $p_i = p_1 \wedge p_2 \wedge \dots \wedge p_{l_i}$. We must have: 
\[ \lambda_i = \min_{1 \leq j \leq l_i} \pi(p_j) = \lambda_i^\ast.\] 
\item The condition $\rho_i = \rho_i^\ast$ relates to the possibility degree of the negation of the premise $p_i$. We must have:  
\begin{equation*}
 \rho_i = \max_{1 \leq j \leq l_i} \pi(\neg p_j) = \rho_i^\ast.
\end{equation*}
        \end{itemize}

\noindent The above conditions on the propositions composing the rule premises can be transformed to conditions on the input possibility distributions of the possibilistic rule-based system.  For instance, for a proposition $p_j$ of the form ``$a_j(x) \in P_j$'',  the condition $\pi(p_j) = \tau$ is equivalent to $\max_{v \in P_j} \pi_{a_j(x)}(v) = \tau$. Thus, targeted input possibility distributions can be generated.

Inference from the possibilistic rule-based system using the targeted input possibility distributions $\pi^t_{a_1(x)},\pi^t_{a_2(x)}, \cdots, \pi^t_{a_k(x)}$ that we obtained for the input attributes $a_1, a_2, \cdots, a_k$ yields the targeted output possibility distribution $\widetilde \pi_{b(x)}$. In fact, by computing the possibility degrees of the rule premises from $\pi^t_{a_1(x)},\pi^t_{a_2(x)}, \cdots, \pi^t_{a_k(x)}$ and then constructing the input vector $I_n$ of the matrix relation using (\ref{eq:inputvector}), the following basic equality is obtained:  $I_n = X^\ast$. Therefore, inference from the possibilistic rule-based system yields the output vector $\widetilde O$ and thus $\widetilde \pi_{b(x)}$: 
\begin{equation}
    \mathcal{M}_n \Box_{\max}^{\min} I_n = \mathcal{M}_n  \Box_{\max}^{\min} X^\ast = \widetilde O.
\end{equation}

\begin{remark}
    The backpropagation mechanism assumes that the rules of the possibilistic rule-based system are well-specified, and therefore that conflicting conditions on the possibility degrees of the rule premises cannot be reached using $X^\ast$, which would preclude the generation of targeted input possibility distributions. An example of conflicting conditions arises when two distinct possibility distributions, $ \pi_{a_j(x)}$ and $\pi'_{a_j(x)}$, are derived from $X^\ast$ for the same attribute $ a_j $ from different rules. If these distributions conflict for a given value $v \in D_{a_j}$, e.g., $ \pi_{a_j(x)}(v) = 1 $ and $ \pi'_{a_j(x)}(v) = 0 $, a contradiction occurs.

    A case when it is easy to obtain input possibility distributions is when every input attribute of the possibilistic rule-based system has a domain that is binary $\{0,1\}$, and every rule premise is a single proposition of the form ``$a_i(x) \in \{0\}$'' or ``$a_i(x) \in \{1\}$'', involving a single input attribute $a_i$ and a single input attribute value.
\end{remark}

A backpropagation mechanism can also be defined while taking account for a training data sample. Let us consider a training data sample associated with a set of possibilistic rules, based on a set of normalized input  possibility distributions $\widetilde \pi_{a_1(x)}, \widetilde  \pi_{a_2(x)}, \cdots, \widetilde  \pi_{a_k(x)}$ associated with the input attributes $a_1,a_2,\dots,a_k$ and a targeted normalized output possibility distribution $\widetilde \pi_{b(x)}$ associated with the output attribute $b$. The input vector $\widetilde I$, see (\ref{eq:inputvector}), can be generated from the possibility degrees of the rule premises computed with  $\widetilde \pi_{a_1(x)}, \widetilde  \pi_{a_2(x)}, \cdots, \widetilde  \pi_{a_k(x)}$  and the output vector $\widetilde O$ associated with $\widetilde \pi_{b(x)}$ can be built using (\ref{eq:otilde}).  By measuring the distance (in terms of $L_\infty$ norm for multi-class classification problems)  between the vector $\mathcal{M}_n \Box_{\max}^{\min} \widetilde I$ and the vector $\widetilde O$, the extent to which the desired result $\widetilde O$ is obtained from $\widetilde I$ can be determined. 

Given that the equation system $(\Omega_n)$ is consistent, one can check whether the  targeted input possibility distributions $\pi^t_{a_1(x)},\pi^t_{a_2(x)}, \cdots, \pi^t_{a_k(x)}$ generated from a solution $X^*$ such that $\max(\lambda_i^* , \rho_i^*) = 1$ (Proposition \ref{prop:SOLOMEGA}) match the input possibility distributions $\widetilde \pi_{a_1(x)}, \widetilde  \pi_{a_2(x)}, \cdots, \widetilde  \pi_{a_k(x)}$ of the considered training data sample.

\begin{example}\label{ex:backprop}
{(Example \ref{ex:firstrules-set}, 
cont'ed)} \\
Let us reuse the matrix relation (Example \ref{ex:matrixrelred}) of the  set of possibilistic rules introduced in Example \ref{ex:firstrules-set}. We assume that the rule parameters are all set to zero and that the matrix $\mathcal{M}_n$ has been constructed. 

Suppose that the following training data sample is considered: the input possibility distributions are [$\widetilde\pi_{a_1(x)}(0), \widetilde\pi_{a_1(x)}(1)$] = $[1, 0]$
and [$\widetilde\pi_{a_2(x)}(0), \widetilde\pi_{a_2(x)}(1)$] = $[0, 1]$, while the  targeted output possibility distribution  associated with the output attribute $b$ assigns a possibility degree of $1$ to $(1,0)$, i.e., $\widetilde \pi_{b(x)}((1,0)) = 1$, and a possibility degree of zero for all other output values. 

From the targeted output possibility distribution, the following output vector of the equation system $(\Omega)$ can be generated:
$$\widetilde O =  \begin{bmatrix}
    \widetilde \pi_{b(x)}((1,1))\\
     \widetilde \pi_{b(x)}((0,1))\\
      \widetilde \pi_{b(x)}((1,0))\\
       \widetilde \pi_{b(x)}((0,0))
\end{bmatrix}=\begin{bmatrix}
   0\\
    0 \\
     1 \\
     0 
\end{bmatrix}.$$ Using the vector $\widetilde O$ and the matrix $\mathcal{M}_n$, the equation system $(\Omega)$ can be constructed:
$$
\begin{bmatrix}
    0\\
    0\\
    1\\
    0
\end{bmatrix}
    =
    \begin{bmatrix}
        1 & 0 & 0 & 1 & 1 & 0 & 0 & 1\\
        0 & 1 & 1 & 0 & 1 & 0 & 0 & 1\\
        1 & 0 & 0 & 1 & 0 & 1 & 1 & 0\\
        0 & 1 & 1 & 0 & 0 & 1 & 1 & 0
    \end{bmatrix} \Box_{\max}^{\min} \begin{bmatrix}
    \lambda_1\\
    \rho_1\\
    \lambda_2\\
    \rho_2\\
    \vdots\\
    \lambda_4\\
    \rho_4
\end{bmatrix}.$$

\noindent This equation system is consistent. Its lowest solution is the vector $F^{n,\downarrow} = \begin{bmatrix}
    \lambda_1^\ast\\
    \rho_1^\ast\\
    \lambda_2^\ast\\
    \rho_2^\ast\\
    \lambda_3^\ast\\
    \rho_3^\ast\\
    \lambda_4^\ast\\
    \rho_4^\ast
\end{bmatrix}= \begin{bmatrix}
    0\\ 1\\ 1\\ 0\\ 1\\ 0\\ 0\\ 1
\end{bmatrix}$.
From the first two rules, which use the same attribute $a_1$, in order to have $ \lambda_1=\lambda_1^\ast, \rho_1=\rho_1^\ast, \lambda_2=\lambda_2^\ast, \rho_2=\rho_2^\ast$, the input possibility distribution $\pi_{a_1(x)}$ must be as follows: $[\pi_{a_1(x)}(0), \pi_{a_1(x)}(1)]$ = $[0,1]$. It is different from the input possibility distribution $\widetilde\pi_{a_1(x)}$ of the training data sample. Similarly, for having $\lambda_3=\lambda_3^\ast, \rho_3=\rho_3^\ast, \lambda_4=\lambda_4^\ast, \rho_4=\rho_4^\ast$, the input possibility distribution $\pi_{a_2(x)}$ must be as follows: $[\pi_{a_2(x)}(0), \pi_{a_2(x)}(1)]$ = $[1,0]$. Again, this distribution is different from the input possibility distribution $\widetilde\pi_{a_2(x)}$ of the training data sample.

Using the anti-pignistic transform (see Subsection \ref{subsec:antipignistic}), the targeted input possibility distributions derived from $F^{\downarrow}$ can finally be transformed into probability distributions. We obtain the following probability distributions:  $[P_{a_1(x)}(0) = 0, P_{a_1(x)}(1) = 1]$ and $[P_{a_2(x)}(0) = 1, P_{a_2(x)}(1) = 0]$.
\end{example}

\subsubsection{Deriving explanations from possibilistic rule-based systems}
\label{subsub:explainability}

A valuable feature offered by possibilistic rule-based systems is their explanation facility. 
Thus, finding explanations for the inferences achieved from possibilistic rule-based systems has been studied by Farreny and Prade in \cite{farrency2013positive} and subsequently developed in \cite{baaj2021representation} (see also \cite{baajtel-03647652}). \cite{baaj2021representation,farrency2013positive} address two explanatory purposes for an output value $u \in D_b$ (where $D_b$ is the domain of the output attribute $b$), which can be formulated as two questions:
\begin{itemize}
    \item (i) What are the \textit{conditions} on the input possibility distributions   for having the possibility degree $\pi^\ast_{b(x)}(u)$ strictly greater (or lower) than a given $\tau \in [0,1]$?
    \item (ii) What are the \textit{inferred} rule premises justifying $\pi^\ast_{b(x)}(u) = \tau$? 
\end{itemize}

Regarding (i) in the context of our neuro-symbolic approach $\Pi$-NeSy, in the case of incorrect classification, i.e., when the possibility degree $\pi^\ast_{b(x)}(u)$ of an output value $u$ is not equal to the expected value, necessary and sufficient conditions on the input possibility distributions in order to obtain the expected  value can be identified using the explanatory methods of \cite{baaj2021representation}.  When the input possibility distributions result from the  antipignistic possibility-probability transformation  (see Subsection \ref{subsec:antipignistic}), these conditions 
can be turned into conditions on the probability distributions produced by the neural network (let us remind that the antipignistic method defines a one-to-one correspondence between probability measures and possibility measures).

Regarding (ii), to the best of our knowledge, current neuro-symbolic approaches are not able to provide detailed and useful explanations of their inference results. In contrast,  for our neuro-symbolic approach $\Pi$-NeSy,  two types of explanations of an inference result can be derived: 
\begin{itemize}
    \item  explanations having the form of justification statements, composed of possibilistic expressions that are sufficient to justify the possibility degree of the output value,
    \item explanations given as unexpectedness statements,  alias semifactual explanations \cite{DBLP:conf/nips/KennyH23,DBLP:conf/aaai/Aryal24}, of the form 
``Even if $X$ holds, the conclusion $b(x)$ is $u$ with a possibility degree of $\pi^\ast_{b(x)}(u)$ remains valid.''
This kind of explanations is in the same vein as the  explanations named "even-if-because" of Darwiche and Hirth \cite{DarwicheH20}. The unexpectedness $X$ is a set of possible or certain possibilistic expressions, which are not involved in the determination of the considered inference result but would not question it if they were present.
\end{itemize}

\begin{example}
{(Example \ref{ex:firstrules-set}, 
cont'ed)} \\
We illustrate the explanatory methods of \cite{baaj2021representation} using  the possibilistic rule-based system considered in Example \ref{ex:firstrules-set}. All rule parameters are set to zero. Inference is performed from the possibilistic rule-based system using the following input possibility distributions: $\pi_{a_1(x)}(0) = 0, \pi_{a_1(x)}(1) = 1$ and $\pi_{a_2(x)}(0) = 1, \pi_{a_2(x)}(1) = 0$. The resulting output possibility distribution $\pi^\ast_{b(x)}$ assigns a possibility degree equal to $1$ to the output value $(1,0)$, while all other output values have a possibility degree equal to zero.  Using \cite{baaj2021representation}, the inference result $\pi^\ast_{b(x)}((1,0)) = 1$ can be justified by triplets of the form \((p, \text{sem}, d)\), where \(p\) is a premise, \(\text{sem}\) denotes the semantics  (\(\textsf{P}\) for Possible, \(\textsf{C}\) for Certain) attached to the degree $d \in \{\pi(p),n(p)\}$  which is either the possibility degree of $p$, i.e., $\pi(p)$, or the necessity degree of $p$, i.e., $n(p) = 1 - \pi(\neg p)$. For this example, the four triplets composing the justification of $\pi^\ast_{b(x)}((1,0)) = 1$ are: $(p_1, \textsf{C}, n(p_1))$, $(p_2, \textsf{P}, \pi(p_2))$, $(p_3, \textsf{P}, \pi(p_3))$ and $(p_4, \textsf{C}, n(p_4))$. The premise $p$ in a triplet \((p, \text{sem}, d)\)  can be reduced  using the premise reduction functions presented in \cite{baaj2021representation}.

From this extraction, a natural language explanation could be generated: ``\textit{It is possible that  $b$ is $(1,0)$. In fact, $a_1$ is $1$ and $a_2$ is $0$. In addition, it is assessed as  impossible  that $a_1$ is 0 or $a_2$ is $1$}''.

\end{example}

\section{Conclusion and perspectives}

In this article, we introduced $\Pi$-NeSy, a neuro-symbolic approach that combines a low-level perception task performed by a neural network with a high-level reasoning task performed by a possibilistic rule-based
system. We focused on the development of methods for building two key data structures associated with the possibilistic rule-based system  at hand: the matrix relation for performing inferences and the equation system for performing learning. For this purpose, we have introduced a practical method for constructing the partition of the domain  of the  output attribute of a set of possibilistic rules according to the conclusions of the rules, which is the essential tool for constructing the matrix relation and the  equation system in a way that scales up. We then have presented a method for performing possibilistic learning in practice, using recent results on the handling of inconsistent $\min-\max$ equation systems. This work led us to propose the neuro-symbolic approach $\Pi$-NeSy where the connection between the neural network and the possibilistic rule-based system is established by transforming the output probability distributions of a neural network into possibility distributions. We have carried out experiments with $\Pi$-NeSy on well-known neuro-symbolic datasets: MNIST-Additions-$k$ and  MNIST Sudoku puzzles problems. These experiments led to quite good results with $\Pi$-NeSy, globally better than those offered by current state-of-the-art methods. 

A perspective for further research is to perform additional experiments with other datasets, and to apply our approach to neuro-symbolic problems that involve knowledge graphs \cite{hitzler2020neural}.
For the future, the development of several features of our neuro-symbolic approach also appears as promising. 
Especially, it would be useful to determine how to automatically acquire uncertain domain knowledge (under the form of possibilistic rules) from data, and how to integrate such an acquisition step into our neuro-symbolic approach. Another perspective consists in exploring the backpropagation mechanism suited to possibilistic rule-based systems, in order to develop a neuro-symbolic learning method based on this mechanism.

\section*{Acknowledgements}

This article was initiated during Ismaïl Baaj's postdoctoral contract at the Centre de Recherche en Informatique de Lens (CRIL), Université d'Artois / CNRS. This contract was partially supported by TAILOR, a project  funded by the EU Horizon 2020 research and innovation programme under grant agreement No. 952215. Ismaïl Baaj expresses his gratitude to the CRIL lab and to TAILOR for their support and for providing a research environment that greatly facilitated this work.

\smallskip 
\noindent Pierre Marquis has benefited from the support of the AI Chair EXPE\textcolor{orange}{KC}TA\-TION (ANR-19-CHIA-0005-01) and of the France 2030 MAIA Project (ANR-22-EXES-0009) of the French National Research Agency (ANR).

\bibliography{sn-bibliography}

\begin{thebibliography}{56}
\providecommand{\natexlab}[1]{#1}
\providecommand{\url}[1]{{#1}}
\providecommand{\urlprefix}{URL }
\providecommand{\doi}[1]{\url{https://doi.org/#1}}
\providecommand{\eprint}[2][]{\url{#2}}
 \bibcommenthead

\bibitem[{Aryal(2024)}]{DBLP:conf/aaai/Aryal24}
Aryal S (2024) Semi-factual explanations in {AI}. In: Wooldridge MJ, Dy JG,
  Natarajan S (eds) Thirty-Eighth {AAAI} Conference on Artificial Intelligence,
  {AAAI} 2024, Thirty-Sixth Conference on Innovative Applications of Artificial
  Intelligence, {IAAI} 2024, Fourteenth Symposium on Educational Advances in
  Artificial Intelligence, {EAAI} 2014, February 20-27, 2024, Vancouver,
  Canada. {AAAI} Press, pp 23379--23380

\bibitem[{Aspis et~al(2022)Aspis, Broda, Lobo, and Russo}]{aspis2022embed2sym}
Aspis Y, Broda K, Lobo J, et~al (2022) {Embed2Sym - Scalable Neuro-Symbolic
  Reasoning via Clustered Embeddings}. In: Proceedings of the 19th
  International Conference on Principles of Knowledge Representation and
  Reasoning, pp 421--431

\bibitem[{Augustine et~al(2022)Augustine, Pryor, Dickens, Pujara, Wang, and
  Getoor}]{augustine2022visual}
Augustine E, Pryor C, Dickens C, et~al (2022) Visual sudoku puzzle
  classification: A suite of collective neuro-symbolic tasks. In: International
  Workshop on Neural-Symbolic Learning and Reasoning (NeSy)

\bibitem[{Baaj(2022{\natexlab{a}})}]{baajtel-03647652}
Baaj I (2022{\natexlab{a}}) Explainability of possibilistic and fuzzy
  rule-based systems. PhD thesis, Sorbonne Universit{\'e}

\bibitem[{Baaj(2022{\natexlab{b}})}]{baaj2022learning}
Baaj I (2022{\natexlab{b}}) Learning rule parameters of possibilistic
  rule-based system. In: 2022 IEEE International Conference on Fuzzy Systems
  (FUZZ-IEEE), IEEE, pp 1--8

\bibitem[{Baaj(2024)}]{BAAJ2024}
Baaj I (2024) On the handling of inconsistent systems of max-min fuzzy
  relational equations. Fuzzy Sets and Systems p 108912

\bibitem[{Baaj et~al(2021{\natexlab{a}})Baaj, Poli, Ouerdane, and
  Maudet}]{baaj2021min}
Baaj I, Poli JP, Ouerdane W, et~al (2021{\natexlab{a}}) Min-max inference for
  possibilistic rule-based system. In: 2021 IEEE International Conference on
  Fuzzy Systems (FUZZ-IEEE), IEEE, pp 1--6

\bibitem[{Baaj et~al(2021{\natexlab{b}})Baaj, Poli, Ouerdane, and
  Maudet}]{baaj2021representation}
Baaj I, Poli JP, Ouerdane W, et~al (2021{\natexlab{b}}) Representation of
  explanations of possibilistic inference decisions. In: Symbolic and
  Quantitative Approaches to Reasoning with Uncertainty: 16th European
  Conference, ECSQARU 2021, Prague, Czech Republic, September 21--24, 2021,
  Proceedings 16, Springer, pp 513--527

\bibitem[{Baaj et~al(2024)Baaj, Bouraoui, Cornu{\'e}jols, Den{\oe}ux,
  Destercke, Dubois, Lesot, Marques-Silva, Mengin, Prade
  et~al}]{baaj2024synergies}
Baaj I, Bouraoui Z, Cornu{\'e}jols A, et~al (2024) Synergies between machine
  learning and reasoning-an introduction by the kay r. amel group.
  International Journal of Approximate Reasoning p 109206

\bibitem[{Bach et~al(2017)Bach, Broecheler, Huang, and Getoor}]{bach2017hinge}
Bach SH, Broecheler M, Huang B, et~al (2017) Hinge-loss markov random fields
  and probabilistic soft logic. Journal of Machine Learning Research 18:1--67

\bibitem[{Badreddine et~al(2022)Badreddine, Garcez, Serafini, and
  Spranger}]{badreddine2022logic}
Badreddine S, Garcez Ad, Serafini L, et~al (2022) Logic tensor networks.
  Artificial Intelligence 303:103649

\bibitem[{Chavira and Darwiche(2008)}]{chavira2008probabilistic}
Chavira M, Darwiche A (2008) On probabilistic inference by weighted model
  counting. Artificial Intelligence 172(6-7):772--799

\bibitem[{Cuninghame-Green and
  Cechl{\'a}rov{\'a}(1995)}]{cuninghame1995residuation}
Cuninghame-Green R, Cechl{\'a}rov{\'a} K (1995) Residuation in fuzzy algebra
  and some applications. Fuzzy Sets and Systems 71(2):227--239

\bibitem[{Darwiche and Hirth(2020)}]{DarwicheH20}
Darwiche A, Hirth A (2020) On the reasons behind decisions. In: Giacomo GD,
  Catal{\'{a}} A, Dilkina B, et~al (eds) {ECAI} 2020 - 24th European Conference
  on Artificial Intelligence, 29 August-8 September 2020, Santiago de
  Compostela, Spain, August 29 - September 8, 2020 - Including 10th Conference
  on Prestigious Applications of Artificial Intelligence {(PAIS} 2020),
  Frontiers in Artificial Intelligence and Applications, vol 325. {IOS} Press,
  pp 712--720

\bibitem[{Delgado and Moral(1987)}]{delgado1987concept}
Delgado M, Moral S (1987) On the concept of possibility-probability
  consistency. Fuzzy sets and systems 21(3):311--318

\bibitem[{Demoen and de~la Banda(2014)}]{demoen2014redundant}
Demoen B, de~la Banda MG (2014) Redundant sudoku rules. Theory and Practice of
  Logic Programming 14(3):363--377

\bibitem[{Deng(2012)}]{deng2012mnist}
Deng L (2012) The mnist database of handwritten digit images for machine
  learning research [best of the web]. IEEE signal processing magazine
  29(6):141--142

\bibitem[{Denoeux et~al(2020)Denoeux, Dubois, and
  Prade}]{DBLP:books/sp/20/DenoeuxDP20}
Denoeux T, Dubois D, Prade H (2020) Representations of uncertainty in
  artificial intelligence: Probability and possibility. In: Marquis P, Papini
  O, Prade H (eds) A Guided Tour of Artificial Intelligence Research: Volume
  {I:} Knowledge Representation, Reasoning and Learning. Springer, p 69--117

\bibitem[{Diligenti et~al(2017)Diligenti, Gori, and
  Sacca}]{diligenti2017semantic}
Diligenti M, Gori M, Sacca C (2017) Semantic-based regularization for learning
  and inference. Artificial Intelligence 244:143--165

\bibitem[{Duan and Pettie(2009)}]{duan2009fast}
Duan R, Pettie S (2009) Fast algorithms for (max, min)-matrix multiplication
  and bottleneck shortest paths. In: Proceedings of the twentieth annual
  ACM-SIAM symposium on Discrete algorithms, SIAM, pp 384--391

\bibitem[{Dubois(2006)}]{dubois2006possibility}
Dubois D (2006) Possibility theory and statistical reasoning. Computational
  statistics \& data analysis 51(1):47--69

\bibitem[{Dubois and Prade(1982)}]{dubois1982several}
Dubois D, Prade H (1982) On several representations of an uncertain body of
  evidence. In: IFAC Symposium on Theory and Application of Digital Control
  (IFAC 1982)

\bibitem[{Dubois and Prade(1983)}]{dubois1983unfair}
Dubois D, Prade H (1983) Unfair coins and necessity measures: towards a
  possibilistic interpretation of histograms. Fuzzy sets and systems
  10(1-3):15--20

\bibitem[{Dubois and Prade(1994)}]{dubois1994validation}
Dubois D, Prade H (1994) On the validation of fuzzy knowledge bases. In: Fuzzy
  Reasoning in Information, Decision and Control Systems. Springer, p 31--49

\bibitem[{Dubois and Prade(2004)}]{dubois2004possibilistic}
Dubois D, Prade H (2004) Possibilistic logic: a retrospective and prospective
  view. Fuzzy sets and Systems 144(1):3--23

\bibitem[{Dubois and Prade(2015)}]{dubois2015possibility}
Dubois D, Prade H (2015) Possibility theory and its applications: Where do we
  stand? Springer handbook of computational intelligence pp 31--60

\bibitem[{Dubois and Prade(2020)}]{dubois2020possibilistic}
Dubois D, Prade H (2020) From possibilistic rule-based systems to machine
  learning-a discussion paper. In: International Conference on Scalable
  Uncertainty Management, Springer, pp 35--51

\bibitem[{Dubois and Prade(2024)}]{dubois2023reasoning}
Dubois D, Prade H (2024) Reasoning and learning in the setting of possibility
  theory - overview and perspectives. International Journal of Approximate
  Reasoning 171:109028

\bibitem[{Dubois et~al(1993)Dubois, Prade, and Sandri}]{dubois1993possibility}
Dubois D, Prade H, Sandri S (1993) On possibility/probability transformations.
  In: Fuzzy logic: State of the art. Springer, p 103--112

\bibitem[{Dubois et~al(2003)Dubois, Prade, and Ughetto}]{dubois2003new}
Dubois D, Prade H, Ughetto L (2003) A new perspective on reasoning with fuzzy
  rules. International journal of intelligent systems 18(5):541--567

\bibitem[{Dubois et~al(2004)Dubois, Foulloy, Mauris, and
  Prade}]{dubois2004probability}
Dubois D, Foulloy L, Mauris G, et~al (2004) Probability-possibility
  transformations, triangular fuzzy sets, and probabilistic inequalities.
  Reliable computing 10(4):273--297

\bibitem[{Farreny and Prade(1986)}]{farreny1986default}
Farreny H, Prade H (1986) Default and inexact reasoning with possibility
  degrees. IEEE transactions on systems, man, and cybernetics 16(2):270--276

\bibitem[{Farreny and Prade(1989)}]{farrency2013positive}
Farreny H, Prade H (1989) {Positive and Negative Explanations of Uncertain
  Reasoning in the Framework of Possibility Theory}. In: Henrion M, Shachter
  RD, Kanal LN, et~al (eds) 5th Conference on Uncertainty in Artificial
  Intelligence (UAI 1989), {Association for Uncertainty in Artificial
  Intelligence}, Windsor, Ontario, Canada, pp 95--101

\bibitem[{Fisz(1963)}]{MarekProbability1963}
Fisz M (1963) Probability theory and mathematical statistics, 3rd edn. Wiley
  series in probability and mathematical statistics, Wiley

\bibitem[{d'Avila Garcez et~al(2019)d'Avila Garcez, Gori, Lamb, Serafini,
  Spranger, and Tran}]{Garcez2019NeuralSymbolicCA}
d'Avila Garcez AS, Gori M, Lamb L, et~al (2019) Neural-symbolic computing: An
  effective methodology for principled integration of machine learning and
  reasoning. FLAP 6:611--632

\bibitem[{Goodfellow et~al(2016)Goodfellow, Bengio, and
  Courville}]{goodfellow2016deep}
Goodfellow I, Bengio Y, Courville A (2016) Deep learning. MIT press

\bibitem[{Hitzler and Sarker(2021)}]{Hitzler2021NeuroSymbolicAI}
Hitzler P, Sarker MK (2021) Neuro-symbolic artificial intelligence: The state
  of the art. In: Neuro-Symbolic Artificial Intelligence

\bibitem[{Hitzler et~al(2020)Hitzler, Bianchi, Ebrahimi, and
  Sarker}]{hitzler2020neural}
Hitzler P, Bianchi F, Ebrahimi M, et~al (2020) Neural-symbolic integration and
  the semantic web. Semantic Web 11(1):3--11

\bibitem[{Jakob et~al(2017)Jakob, Rhinelander, and
  Moldovan}]{jakob2017pybind11}
Jakob W, Rhinelander J, Moldovan D (2017) pybind11--seamless operability
  between c++ 11 and python. URL: https://githubcom/pybind/pybind11

\bibitem[{Kenny and Huang(2023)}]{DBLP:conf/nips/KennyH23}
Kenny EM, Huang W (2023) The utility of "even if" semifactual explanation to
  optimise positive outcomes. In: Oh A, Naumann T, Globerson A, et~al (eds)
  Advances in Neural Information Processing Systems 36: Annual Conference on
  Neural Information Processing Systems 2023, NeurIPS 2023, New Orleans, LA,
  USA, December 10 - 16, 2023

\bibitem[{van Krieken et~al(2024)van Krieken, Thanapalasingam, Tomczak,
  Van~Harmelen, and Ten~Teije}]{van2024nesi}
van Krieken E, Thanapalasingam T, Tomczak J, et~al (2024) A-nesi: A scalable
  approximate method for probabilistic neurosymbolic inference. Advances in
  Neural Information Processing Systems 36

\bibitem[{Maene and De~Raedt(2024)}]{maene2024soft}
Maene J, De~Raedt L (2024) Soft-unification in deep probabilistic logic.
  Advances in Neural Information Processing Systems 36

\bibitem[{Maene et~al(2024)Maene, Derkinderen, and
  De~Raedt}]{maene2024hardness}
Maene J, Derkinderen V, De~Raedt L (2024) On the hardness of probabilistic
  neurosymbolic learning. In: Salakhutdinov R, Kolter Z, Heller K, et~al (eds)
  Proceedings of the 41st International Conference on Machine Learning,
  Proceedings of Machine Learning Research, vol 235. PMLR, pp 34203--34218

\bibitem[{Manhaeve et~al(2018)Manhaeve, Dumancic, Kimmig, Demeester, and
  De~Raedt}]{manhaeve2018deepproblog}
Manhaeve R, Dumancic S, Kimmig A, et~al (2018) Deepproblog: Neural
  probabilistic logic programming. Advances in neural information processing
  systems 31

\bibitem[{Manhaeve et~al(2021)Manhaeve, Duman{\v{c}}i{\'c}, Kimmig, Demeester,
  and De~Raedt}]{manhaeve2021neural}
Manhaeve R, Duman{\v{c}}i{\'c} S, Kimmig A, et~al (2021) Neural probabilistic
  logic programming in deepproblog. Artificial Intelligence 298:103504

\bibitem[{Marra et~al(2024)Marra, Dumančić, Manhaeve, and {De
  Raedt}}]{MARRA2024104062}
Marra G, Dumančić S, Manhaeve R, et~al (2024) From statistical relational to
  neurosymbolic artificial intelligence: A survey. Artificial Intelligence
  328:104062

\bibitem[{Pryor et~al(2023)Pryor, Dickens, Augustine, Albalak, Wang, and
  Getoor}]{pryor2023ijcai}
Pryor C, Dickens C, Augustine E, et~al (2023) Neupsl: Neural probabilistic soft
  logic. In: Elkind E (ed) Proceedings of the Thirty-Second International Joint
  Conference on Artificial Intelligence, {IJCAI-23}. International Joint
  Conferences on Artificial Intelligence Organization, pp 4145--4153, main
  Track

\bibitem[{Raedt et~al(2019)Raedt, Manhaeve, Dumancic, Demeester, and
  Kimmig}]{DBLP:conf/nesy/RaedtMDDK19}
Raedt LD, Manhaeve R, Dumancic S, et~al (2019) Neuro-symbolic = neural +
  logical + probabilistic. In: Doran D, d'Avila Garcez AS, L{\'{e}}cu{\'{e}} F
  (eds) Proceedings of the 2019 International Workshop on Neural-Symbolic
  Learning and Reasoning (NeSy 2019), Annual workshop of the Neural-Symbolic
  Learning and Reasoning Association, Macao, China, August 12, 2019

\bibitem[{Rumelhart et~al(1986)Rumelhart, Hinton, and
  Williams}]{rumelhart1986learning}
Rumelhart DE, Hinton GE, Williams RJ (1986) Learning representations by
  back-propagating errors. nature 323(6088):533--536

\bibitem[{Sanchez(1976)}]{sanchez1976resolution}
Sanchez E (1976) Resolution of composite fuzzy relation equations. Information
  and control 30(1):38--48

\bibitem[{Shafer(1976)}]{shafer1976}
Shafer G (1976) A mathematical theory of evidence. Princeton University Press

\bibitem[{Vermeulen et~al(2023)Vermeulen, Manhaeve, and
  Marra}]{vermeulen2023experimental}
Vermeulen A, Manhaeve R, Marra G (2023) An experimental overview of
  neural-symbolic systems. In: International Conference on Inductive Logic
  Programming, Springer, pp 124--138

\bibitem[{Winters et~al(2022)Winters, Marra, Manhaeve, and
  Raedt}]{winters2022deepstochlog}
Winters T, Marra G, Manhaeve R, et~al (2022) Deepstochlog: Neural stochastic
  logic programming. Proceedings of the AAAI Conference on Artificial
  Intelligence 36(9):10090--10100

\bibitem[{Yang et~al(2020)Yang, Ishay, and Lee}]{yang2023neurasp}
Yang Z, Ishay A, Lee J (2020) Neurasp: Embracing neural networks into answer
  set programming. In: International Joint Conference on Artificial
  Intelligence

\bibitem[{Zadeh(1978)}]{zadeh1978fuzzy}
Zadeh LA (1978) Fuzzy sets as a basis for a theory of possibility. Fuzzy sets
  and systems 1(1):3--28

\bibitem[{Zeiler(2012)}]{zeiler2012adadelta}
Zeiler MD (2012) Adadelta: an adaptive learning rate method. arXiv preprint
  arXiv:12125701

\end{thebibliography}

\clearpage

\appendix 

\section{Proofs}
\label{sec:proofs}

In this section, we present the proofs of the results presented in the paper (Section \ref{sec:practical-procedure-building-reduced-ES} and Subsection \ref{subsec:learnruleparam}). 

\subsection{Proof of Lemma \ref{lemma:subsettouniquetuple}}
\label{subsec:proof:lemmasubsettouniquetuple}
\lemmasubsettouniquetuplecmd*
\begin{proof}
The first two statements are easily deduced by induction on $n$ using the definition of the partition, see (\ref{eq:partition}). 

\noindent
Suppose we have two   tuples $(T_1, T_2, \dots, T_n) $ and $(T'_1, T'_2, \dots, T'_n) $ such that: 
\[ T_1 \cap T_2 \cap \dots \cap T_n = T'_1 \cap T'_2 \cap \dots \cap T'_n \not =\emptyset \quad \text{with} \quad T_i\in\{Q_i, \overline{Q_i}\} \text{ and }    T'_i\in\{Q_i, \overline{Q_i}\}\quad \text{for all  $i$}.    \]
\noindent
Let us take an element $e$ belonging to  $T_1 \cap T_2 \cap \dots \cap T_n = T'_1 \cap T'_2 \cap \dots \cap T'_n$. For any $i\in\{1, 2, \dots, n\}$, we have $e\in T_i \cap T'_i$, so necessarily $T_i = T'_i = Q_i$ or $T_i = T'_i = \overline{Q_i}$.   We have proved the third statement.
\end{proof}

\subsection{Proof of Lemma \ref{lemma:LambdaN}}
\label{subsec:proof:LambdaN}
\LambdaNcmd*
\begin{proof}
The first statement follows from the definition of the partition $E^{(1)}$: we have $E^{(1)}_1 = Q_1$  and 
$E^{(1)}_2 = \overline{Q_1}$.

\noindent
To prove the second statement, we observe that for $1 < i \leq n$
and  $(t_1, t_2, \dots, t_i)$ such that $t_k = \pm k$, we have:
\begin{equation*}
    \sigma^{(i)}(t_1, t_2, \dots, t_i) =\begin{cases}
     \sigma^{(i -1)}(t_1, t_2, \dots, t_{i-1}) \cap Q_i     & \text{ if } \quad t_i = i  \\
 \sigma^{(i -1)}(t_1, t_2, \dots, t_{i-1}) \cap \overline{Q_i}     & \text{ if }  \quad t_i = -i.
\end{cases}
\end{equation*}
\end{proof}
\subsection{Proof of Lemma \ref{lemma:mutokmu}}
\label{subsec:proof:lemmamutokmu}
\lemmamutokmucmd*
\begin{proof}
The proof is organized into three steps:
\begin{enumerate}
\item Let $\mu = (t_1, t_2, \dots, t_n)$ be in $\Lambda^{(n)}$. Then by using the definition of the mapping $\sigma^{(n)}$, see (\ref{eq:sigman}),  and Lemma  \ref{lemma:subsettouniquetuple}, we obtain that   $\sigma^{(n)}(\mu)$ is a non-empty set of the partition   $(E_k^{(n)})_{1 \leq k \leq 2^n}$.
As for all $k, k'\in\{1, 2, \dots, 2^n\}$, we have:
\[ E_k^{(n)} \cap E_{k'}^{(n)} \not=\emptyset \Longrightarrow k = k'\]
there exists a unique $k_\mu \in \{1, 2, \dots, 2^n\}$ such that $\sigma^{(n)}(\mu) = E_{k_\mu}^{(n)}   $. By definition of the set $J^{(n)}$, see Notation \ref{nottrois}, $k_\mu\in J^{(n)}$. 

\item Let  $\mu = (t_1, t_2, \dots, t_n)$  and $\mu' = (t'_1, t'_2, \dots, t'_n)$ be in $\Lambda^{(n)}$.  Let:
 \begin{equation*}
    T_i  =\begin{cases}
      Q_i     & \text{ if } \quad t_i = i  \\
 \overline{Q_i}     & \text{ if } \quad t_i = - i 
\end{cases}, \quad T'_i  =\begin{cases}
      Q_i     & \text{ if } \quad t'_i = i  \\
 \overline{Q_i}     & \text{ if } \quad t'_i = - i 
\end{cases} \quad \text{for all $i = 1, 2 \dots, n$}.
\end{equation*}
\noindent
From (\ref{eq:sigman}), we know that  $\sigma^{(n)}(\mu) = T_1 \cap T_2 \cap \dots \cap T_n = 
E_{k_\mu}^{(n)}$ and $\sigma^{(n)}(\mu') = T'_1 \cap T'_2 \cap \dots \cap T'_n = E_{k_{\mu'}}^{(n)}$. If we have $k_\mu = k_{\mu'}$, as $E_{k_\mu}^{(n)} \not=\emptyset  $, we deduce from Lemma  \ref{lemma:subsettouniquetuple} that $T_i = T'_i$  for all  $i$ and therefore $t_i = t'_i$  for all  $i$: we have proved the injectivity of  the mapping $\Psi$. 

\item Let $k\in J^{(n)}$, i.e.,  the set $E_k^{(n)}$  is non-empty.  Let $(T_1, T_2, \dots, T_n)$ the unique tuple such that $T_i\in \{Q_i, \overline{Q_i}\}$ for all $i$ and  $E_k^{(n)} = T_1 \cap T_2 \cap \dots \cap T_n$, see Lemma \ref{lemma:subsettouniquetuple}. Let  $t_i  =\begin{cases}
      i    & \text{ if } \quad T_i = Q_i  \\
 -i     & \text{ if } \quad T_i = \overline{Q_i} 
\end{cases} \,$  and $\,\mu:= (t_1, t_2, \dots, t_n)$ As we have 
$E_k^{(n)} = \sigma^{(n)}(\mu)$, see (\ref{eq:sigman}),  we deduce that $\mu\in \Lambda^{(n)}$ and  
by   the first part of the proof, 
$k = k_\mu$. We have proved the surjectivity  of the mapping $\Psi$.
\end{enumerate}

\noindent
As  $(E_k^{(n)})_{k\in J^{(n)}}$ is a  partition of the set $D_b$ by non-empty sets and  $J^{(n)} \subseteq \{1, 2, \dots, 2^n\}$, we get the inequality  $\text{card}\, J^{(n)} \leq \min(\text{card}\, D_b, 2^n)$.

\end{proof}
\subsection{Proof of Lemma \ref{lemma:muto}}
\label{subsec:proof:lemmamuto}
\lemmamutocmd*
\begin{proof}
    As   $\mu = (t_1, t_2, \dots, t_n)\in \Lambda^{(n)}$, we know that $\sigma^{(n)}(\mu) \not=\emptyset$ and 
    $t_n = \pm n$, see Definition \ref{def:LambdaN}. Moreover, by definition of the mappings  $\sigma^{(n)}$ and $\sigma^{(n-1)}$, see  (\ref{eq:sigman}), we have:
\begin{equation}\label{eq:kmusig}
\sigma^{(n)}(\mu) = \begin{cases}
 \sigma^{(n-1)}(\widetilde \mu) \cap Q_n    & \text{ if }\,  t_n = n  \\
  \sigma^{(n-1)}(\widetilde \mu) \cap \overline{Q_n}    & \text{ if }\,  t_n =  -n \end{cases}
  .
 \end{equation}
 We deduce that $\widetilde\mu\in\Lambda^{(n-1)}$.
 From the construction (\ref{eq:partition}) and the equalities 
 $\sigma^{(n)}(\mu) = E_{k_\mu}^{(n)}$ and $\sigma^{(n-1)}(\widetilde\mu) = E_{k_{\widetilde \mu}}^{(n-1)}$, we get:
\begin{equation}\label{eq:kmu11}
E^{(n)}_{k_\mu} = \begin{cases}
     E_{k_{\widetilde \mu}}^{(n-1)} \cap Q_n    & \text{ if }\,  t_n = n \\
  E_{k_{\widetilde \mu}}^{(n-1)} \cap  \overline{Q_n}& \text{ if }\,  t_n = - n
\end{cases} = \begin{cases}
     E_{k_{ \mu}}^{(n-1)} \cap Q_n    & \text{ if }\,  k_\mu \leq 2^{n-1} \\
  E_{k_{ \mu} - 2^{n-1}}^{(n-1)} \cap  \overline{Q_n}& \text{ if }\,  2^{n-1} < k_\mu  
\end{cases}.
\end{equation}
\noindent
As we have $E_{k_\mu}^{(n)} \not=\emptyset$, we deduce the equivalences 
$k_\mu \leq 2^{n-1}\Longleftrightarrow t_n = n$ and 
$k_\mu > 2^{n-1}\Longleftrightarrow t_n = - n$.

\begin{itemize}
\item If $k_\mu \leq  2^{n-1}$, then  $ E_{k_{\widetilde \mu}}^{(n-1)} \cap   Q_n =E_{k_{ \mu}}^{(n-1)} \cap Q_n  \not=\emptyset$, so 
 $E_{k_{\widetilde \mu}}^{(n-1)} \cap E_{k_{ \mu}}^{(n-1)}  \not=\emptyset$. As $(E_k^{(n-1)})_{1 \leq k \leq 2^{n-1}}$   is a partition of the set $D_b$, we get      $k_\mu = k_{\widetilde\mu}$.

\item If $k_\mu >   2^{n-1}$,  by replacing $Q_n$ by $\overline{Q_n}$ in the previous reasoning, we obtain in this case $k_\mu = k_{\widetilde\mu} + 2^{n-1}$.
\end{itemize}
 
\end{proof}

\subsection{Proof of Proposition \ref{prop:isom}}
\label{subsec:proof:propisom}
 \propisomcmd*
\begin{proof}
\noindent
To prove   the equivalence 
\[ \mu \preceq^{(n)} \mu' \Longleftrightarrow k_\mu \leq k_{\mu'} \quad \text{for all} \quad \mu, \mu' \in \Lambda^{(n)}, \]
which means that $\Psi$ is an isomorphism for order relations,
we proceed by induction on $n$ and rely on Lemma \ref{lemma:muto}.

\noindent
$\bullet\,$ Case  $n = 1$. From  Lemma \ref{lemma:LambdaN} and the definition  (\ref{eq:partition}) of the partition $(E_k^{(1)})_{1 \leq k \leq 2}$, we  deduce: 
  \[ \Lambda^{(1)} = \begin{cases} \{(1)\}& \text{ if } \quad \overline{Q_1} = \emptyset\\
 \{(-1)\} & \text{ if } \quad Q_1 = \emptyset\\
 \{(1),(-1)\} & \text{otherwise.}\end{cases} \quad \text{and} \quad 
 J^{(1)} = \begin{cases} \{1\}& \text{ if }\quad  \overline{Q_1} = \emptyset\\
 \{2\} & \text{ if }\quad Q_1 = \emptyset\\
 \{1,2\} & \text{otherwise.}
\end{cases}\]
\noindent
If $\overline{Q_1} = \emptyset$ or  $Q_1 = \emptyset$, the mapping  $\Psi : \Lambda^{(1)} \rightarrow J^{(1)}$ respects order relations. In the remaining case, we have   $(1) \prec^{(1)} (-1)$ and 
$\Psi((1)) = 1 < 2 = \Psi((-1))$.

\noindent
$\bullet\,$ Suppose that $n > 1$ and  the mapping 
\begin{center}
\begin{tabular}{ll}
$\widetilde \Psi :$ & $\Lambda^{n-1} \rightarrow J^{(n-1)}$\\
&  $\widetilde \mu \mapsto k_{{\widetilde \mu}}$ 
\end{tabular}
\end{center}
respects order relations. Let us prove that the mapping 
\begin{center}
\begin{tabular}{ll}
$\Psi :$ & $\Lambda^{n} \rightarrow J^{(n)}$\\
&  $ \mu \mapsto k_{\mu}$ 
\end{tabular}
\end{center}
respects order relations.

\noindent Let  $\mu = (t_1, t_2, \dots, t_n), \mu' = (t'_1, t'_2, \dots, t'_n) $ be in $\Lambda^{(n)}$. We must prove the equivalence  
$\mu \preceq^{(n)} \mu' \Longleftrightarrow k_\mu \leq k_{\mu'}  $.

\noindent Set      $\widetilde\mu = (t_1, t_2, \dots, t_{n-1})$ and $\widetilde{\mu'} = (t'_1, t'_2, \dots, t'_{n-1})$, so  $\mu = (\widetilde\mu, t_n)$ and $\mu' = (\widetilde{\mu'}, t'_n)$. 
From Definition \ref{def:LambdaN} and Lemma \ref{lemma:LambdaN}, we know  that:
\[ \widetilde \mu\in \Lambda^{(n-1)} \text{ and } t_n = \pm n \quad ;\quad \widetilde{\mu'}\in \Lambda^{(n-1)}  \text{ and } t'_n = \pm n\]
We remind   that the order relations $\preceq^{(n)}$ and $\preceq^{(n-1)}$ satisfy the following equivalence, see Definition \ref{def:orderrel}:
\[ \mu \preceq^{(n)} \mu' \Longleftrightarrow 
[t_n = n \text{ and } t'_n = -n ]\quad \text{or} \quad 
[\widetilde \mu \preceq^{(n -1)}\widetilde{\mu'} \text{ and } t_n = t'_n].\]
\noindent
To prove the equivalence  $\mu \preceq^{(n)} \mu' \Longleftrightarrow k_\mu \leq k_{\mu'}  $, we rely on (\ref{eq:kmu10}) and the induction hypothesis:
\begin{itemize}
    \item If  $\mu \preceq^{(n)} \mu'$  and  $t_n = n $  and $t'_n = -n$, then we deduce from (\ref{eq:kmu10})  $k_\mu \leq 2^{n-1} < k_{\mu'}$. 
    \item If  $\mu \preceq^{(n)} \mu'$  and  $t_n = t'_n $,  then  we have, see Definition \ref{def:orderrel}, 
  $\widetilde \mu \preceq^{(n -1)}\widetilde{\mu'} $. By the induction hypothesis,  we know that $k_{\widetilde \mu} \leq k_{\widetilde {\mu'}}$ and from  (\ref{eq:kmu10}) and  $t_n = t'_n$, it is easy to deduce the inequality $k_\mu \leq k_{\mu'}$.
\end{itemize}
 \noindent We have proven the implication $\mu \preceq^{(n)} \mu' \Longrightarrow k_\mu \leq k_{\mu'}  $.

\noindent
Suppose now that we have $ k_\mu \leq k_{\mu'}$,  then the indices $k_\mu$ and $k_{\mu'}$ necessarily satisfy:
\[ k_\mu \leq 2^{n-1} < k_{\mu'}\quad \text{or} \quad 
k_\mu \leq k_{\mu'} \leq 2^{n-1}\quad \text{or} \quad 
 2^{n-1} < k_\mu \leq k_{\mu'}.\]
 \noindent
By relying on (\ref{eq:kmu10}) and Definition \ref{def:orderrel}, we deduce:
 \begin{itemize}
     \item If $k_\mu \leq 2^{n-1} < k_{\mu'}$, then $t_n = n$ and  $t'_n = -n$,  which implies $\mu \preceq^{(n)} \mu'    $.
     \item If $k_\mu \leq k_{\mu'} \leq 2^{n-1}$, then $t_n = t'_n =n$ and $k_{\widetilde \mu} = k_\mu \leq k_{\mu'} = k_{\widetilde{\mu'}}$. By the induction hypothesis, we know that $\widetilde \mu \preceq^{(n -1)}\widetilde{\mu'} $ and by definition of the order relation $\preceq^{(n)}$, we have  $\mu \preceq^{(n)} \mu'$.
     \item If $2^{n-1} < k_\mu \leq k_{\mu'}  $, then $t_n = t'_n = - n$ and $k_{\widetilde \mu} = k_\mu  - 2^{n-1}\leq k_{\mu'} - 2^{n-1}= k_{\widetilde{\mu'}}$. We get $\mu \preceq^{(n)} \mu'    $ as in the previous case.
 \end{itemize}

\noindent
 We have proved the implication $\mu \preceq^{(n)} \mu' \Longleftarrow k_\mu \leq k_{\mu'}  $.

\end{proof}
\subsection{Proof of Proposition \ref{prop:complexity:nbOp}}
\label{subsec:proof:propnumberop}
In order to prove Proposition \ref{prop:complexity:nbOp}, we rely on  Lemma \ref{lemma:LambdaN} and the following two additional lemmas:
\begin{lemma}\label{comp1}
Let $n \geq 2$. Then, we have the following partition of the set $\Lambda^{(n-1)}$:
$$\Lambda^{(n -1)} = X_{n -1} \cup Y_{n -1} \cup Z_{n -1} $$
where:\\
\noindent $\bullet$ $X_{n-1} = \{\mu =(t_1, t_2, \dots, t_{n-1})\in \Lambda^{(n -1)}\, \mid 
 \sigma^{n-1}(\mu) \cap Q_n  \not=\emptyset \,\text{ and }\, 
 \sigma^{n-1}(\mu) \cap \overline{Q_n}  =\emptyset \}$,
\noindent $\bullet$ $Y_{n-1} = \{\mu =(t_1, t_2, \dots, t_{n-1})\in \Lambda^{(n -1)}\, \mid 
 \sigma^{n-1}(\mu) \cap Q_n =\emptyset \,\text{ and }\, 
 \sigma^{n-1}(\mu) \cap \overline{Q_n}  \not=\emptyset \}$,
\noindent $\bullet$ $Z_{n-1} = \{\mu =(t_1, t_2, \dots, t_{n-1})\in \Lambda^{(n -1)}\,  \mid 
 \sigma^{n-1}(\mu) \cap Q_n \not=\emptyset \,\text{ and }\, 
 \sigma^{n-1}(\mu) \cap \overline{Q_n}  \neq\emptyset \}$.


    
\end{lemma}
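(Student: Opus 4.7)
The plan is to establish two things: that the three sets $X_{n-1}, Y_{n-1}, Z_{n-1}$ are pairwise disjoint by construction, and that their union exhausts $\Lambda^{(n-1)}$. The first point is immediate from reading off the defining conditions: each of the three sets specifies a distinct combination of emptiness/non-emptiness for the two intersections $\sigma^{(n-1)}(\mu) \cap Q_n$ and $\sigma^{(n-1)}(\mu) \cap \overline{Q_n}$, so no element $\mu$ can satisfy two of them simultaneously.

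For the covering part, I would first note that $Q_n$ and $\overline{Q_n}$ form a partition of $D_b$, hence for any $\mu \in \Lambda^{(n-1)}$ the decomposition
\begin{equation*}
\sigma^{(n-1)}(\mu) = \bigl(\sigma^{(n-1)}(\mu) \cap Q_n\bigr) \,\cup\, \bigl(\sigma^{(n-1)}(\mu) \cap \overline{Q_n}\bigr)
\end{equation*}
holds as a disjoint union. The key observation is then that $\mu \in \Lambda^{(n-1)}$ forces $\sigma^{(n-1)}(\mu) \neq \emptyset$ by Definition \ref{def:LambdaN}, so at least one of the two intersections on the right-hand side must be non-empty.

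Given this, exactly three mutually exclusive configurations remain possible for the pair of non-emptiness statuses: only the $Q_n$-intersection is non-empty (placing $\mu$ in $X_{n-1}$), only the $\overline{Q_n}$-intersection is non-empty (placing $\mu$ in $Y_{n-1}$), or both are non-empty (placing $\mu$ in $Z_{n-1}$). The fourth logical possibility, where both intersections are empty, is ruled out by the previous paragraph. Hence every $\mu \in \Lambda^{(n-1)}$ belongs to exactly one of $X_{n-1}, Y_{n-1}, Z_{n-1}$, which is precisely the claimed partition.

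There is no real obstacle here; the result is essentially a case analysis on the two boolean conditions, with the only substantive ingredient being that $\sigma^{(n-1)}(\mu)$ is non-empty for $\mu \in \Lambda^{(n-1)}$, which is built into the definition of $\Lambda^{(n-1)}$. The lemma is a bookkeeping step whose purpose is to prepare the counting argument for Proposition \ref{prop:complexity:nbOp}, by isolating the three ways in which a tuple in $\Lambda^{(n-1)}$ can extend (uniquely, doubly, or not at all in a given direction) when passing to $\Lambda^{(n)}$ via Lemma \ref{lemma:LambdaN}.
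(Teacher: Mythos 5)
Your proof is correct and follows essentially the same route as the paper's: both rely on the non-emptiness of $\sigma^{(n-1)}(\mu)$ for $\mu \in \Lambda^{(n-1)}$ (Definition \ref{def:LambdaN}) together with the decomposition $\sigma^{(n-1)}(\mu) = (\sigma^{(n-1)}(\mu)\cap Q_n)\cup(\sigma^{(n-1)}(\mu)\cap \overline{Q_n})$ induced by the partition $D_b = Q_n \cup \overline{Q_n}$. Your explicit treatment of pairwise disjointness is a minor addition that the paper leaves implicit.
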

\begin{proof}
For any  $\mu\in \Lambda^{(n-1)}$, the subset $\sigma^{(n-1)}(\mu)$ of the set $D_b$ is non-empty, see Definition \ref{def:LambdaN}. As we have the obvious partition $D_b = Q_n \cup \overline{Q_n}$ and the equality:
    \[ \sigma^{(n-1)}(\mu) = (\sigma^{(n-1)}(\mu)\cap Q_n)\cup (\sigma^{(n-1)}(\mu)\cap \overline{Q_n})\]
it follows that  $\Lambda^{(n -1)} = X_{n -1} \cup Y_{n -1} \cup Z_{n -1} $  is a partition of the set $\Lambda^{(n-1)}$.    
\end{proof}

\noindent We rely on Lemma \ref{lemma:LambdaN} and reuse the notations of   Lemma \ref{comp1}  to  prove:
\begin{lemma}\label{comp2}~
We have:
\begin{itemize}
   \item $\text{card}\,(X_{n-1} \cup Z_{n-1}) = \text{card}\,(\Lambda^{(n)}_+)$,
   \item $\text{card}\,(Y_{n-1} \cup Z_{n-1}) = \text{card}\,(\Lambda^{(n)}_-)$,
    \item $\, 2\cdot \,\text{card}\,(Z_{n-1}) \leq \text{card}\,(\Lambda^{(n)}) \leq \text{card}\,(D_b)$.
\end{itemize}
\end{lemma}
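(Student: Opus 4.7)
The plan is to read Lemma \ref{lemma:LambdaN} together with Lemma \ref{comp1} and extract three natural bijections. From Lemma \ref{lemma:LambdaN}, $\Lambda^{(n)}_+$ consists of tuples $(\widetilde\mu, n)$ where $\widetilde\mu \in \Lambda^{(n-1)}$ satisfies $\sigma^{(n-1)}(\widetilde\mu) \cap Q_n \neq \emptyset$, and by Lemma \ref{comp1} this condition characterizes exactly the elements of $X_{n-1} \cup Z_{n-1}$. So the map $\widetilde\mu \mapsto (\widetilde\mu, n)$ is a bijection from $X_{n-1} \cup Z_{n-1}$ onto $\Lambda^{(n)}_+$, giving the first equality. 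The second equality follows symmetrically, replacing $Q_n$ by $\overline{Q_n}$, and yields a bijection $\widetilde\mu \mapsto (\widetilde\mu, -n)$ from $Y_{n-1} \cup Z_{n-1}$ onto $\Lambda^{(n)}_-$.

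For the third statement, I would first note that the union $\Lambda^{(n)} = \Lambda^{(n)}_+ \cup \Lambda^{(n)}_-$ from Lemma \ref{lemma:LambdaN} is disjoint, because the last coordinate of any element of $\Lambda^{(n)}_+$ is $n$ while the last coordinate of any element of $\Lambda^{(n)}_-$ is $-n$. Combining this with the two bijections just established gives
\[
\text{card}\,(\Lambda^{(n)}) = \text{card}\,(X_{n-1} \cup Z_{n-1}) + \text{card}\,(Y_{n-1} \cup Z_{n-1}).
\]
Using the disjoint partition $\Lambda^{(n-1)} = X_{n-1} \cup Y_{n-1} \cup Z_{n-1}$ from Lemma \ref{comp1}, the right-hand side rewrites as $\text{card}\,(\Lambda^{(n-1)}) + \text{card}\,(Z_{n-1})$, which is at least $2\cdot \text{card}\,(Z_{n-1})$ since $Z_{n-1} \subseteq \Lambda^{(n-1)}$.

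The upper bound $\text{card}\,(\Lambda^{(n)}) \leq \text{card}\,(D_b)$ is already given by Lemma \ref{lemma:mutokmu}, which states $\text{card}\,\Lambda^{(n)} = \text{card}\,J^{(n)} \leq \min(\text{card}\,D_b, 2^n)$, so no additional argument is needed here. I do not anticipate a real obstacle: the proof is essentially a bookkeeping exercise matching the piecewise definition of $\Lambda^{(n)}$ in Lemma \ref{lemma:LambdaN} against the trichotomy in Lemma \ref{comp1}. The only point requiring care is to verify the disjointness of $\Lambda^{(n)}_+$ and $\Lambda^{(n)}_-$ (via the last coordinate) so that cardinalities add cleanly.
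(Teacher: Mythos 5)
Your proof is correct and follows essentially the same route as the paper: both establish the bijections $\widetilde\mu \mapsto (\widetilde\mu, \pm n)$ from $X_{n-1}\cup Z_{n-1}$ and $Y_{n-1}\cup Z_{n-1}$ onto $\Lambda^{(n)}_+$ and $\Lambda^{(n)}_-$, and then deduce the cardinality inequality from the disjointness of these pieces together with Lemma \ref{lemma:mutokmu} for the upper bound. Your intermediate rewriting of the sum as $\text{card}\,(\Lambda^{(n-1)}) + \text{card}\,(Z_{n-1})$ is a harmless variant of the paper's direct estimate.
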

\begin{proof}
Using Lemma \ref{lemma:LambdaN} and the definition of the set  $X_{n-1}, Y_{n-1}, Z_{n-1}$, we get that the two  mappings:
\begin{center}
\begin{tabular}{l}
$X_{n-1} \cup Z_{n-1} \rightarrow \Lambda^{(n)}_+$\\
$(t_1, t_2, \dots, t_{n-1}) \mapsto   (t_1, t_2, \dots, t_{n-1}, n)$ 
\end{tabular}
\end{center}
\noindent and
\begin{center}
\begin{tabular}{l}
$Y_{n-1} \cup Z_{n-1} \rightarrow \Lambda^{(n)}_-$\\
$(t_1, t_2, \dots, t_{n-1}) \mapsto   (t_1, t_2, \dots, t_{n-1} ,-n)$
\end{tabular}
\end{center}

\noindent are bijective.

\noindent
Then, we deduce:
\begin{align}
\, 2\,  \text{card}\,(Z_{n-1}) & \leq 
\text{card}\,(\Lambda^{(n)}_+) + 
\text{card}\,(\Lambda^{(n)}_-) \nonumber\\
&= \text{card}\,(\Lambda^{(n)}) \nonumber\\
& \leq 
\text{card}\,(D_b). \nonumber   
\end{align}
\end{proof}

\noindent We remind Proposition \ref{prop:complexity:nbOp} and prove it:
\propnumberopcmd*
\begin{proof}
\noindent
Let $\beta_n$  be the number of operations required to construct the set $\Lambda^{(n)}$.  We have $\beta_1 = 1$. Let us  prove:
\[ \forall n \geq 2,\, \beta_n = \beta_{n-1} + \omega^{(n-1)} + \text{card}\,(Z_{n-1}) \quad  \text{where} \quad \omega^{(n-1)} = \text{card}\, (\Lambda^{(n-1)}).\]
To construct $\Lambda^{(n)}$  from $\Lambda^{(n-1)}$, we have to construct   the two sets $\Lambda^{(n)}_+$  and  $\Lambda^{(n)}_-$ from $\Lambda^{(n-1)}$, which by    Lemma \ref{comp2} requires 
$\text{card}\,(X_{n-1} \cup Z_{n-1}) + \text{card}\,(Y_{n-1} \cup Z_{n-1})$ operations. Using Lemma \ref{comp1}, we deduce:
\begin{align}
 \beta_n   & = \beta_{n-1} + 
 \text{card}\,(X_{n-1} \cup Z_{n-1}) + \text{card}\,(Y_{n-1}\cup Z_{n-1})\nonumber\\
 &= \beta_{n-1} + 
 \text{card}\,(X_{n-1}) + \text{card}\,(Y_{n-1}) + 2 \,\text{card}\, (Z_{n-1})\nonumber\\
 & = \beta_{n-1} + \omega^{(n-1)} + \text{card}\, (Z_{n-1}).\nonumber\\
\end{align}
 \noindent
As we  have $\omega^{(n-1)} \leq \text{card}\, (D_b)$, see Lemma \ref{lemma:mutokmu}, 
and by  the  third statement of Lemma \ref{comp2}, we deduce   $\text{card}\,(Z_{n-1}) \leq \dfrac{\text{card}\,(D_b)}{2}$, we get the inequality: 
\[ \beta_n \leq \beta_{n-1} + \dfrac{3\cdot \text{card}(D_b)  }{2}.\]
\noindent
Using that $\beta_1 = 1$, we obtain  that
$\forall n \geq 1,\, \beta_n \leq 1 + \dfrac{3\cdot \text{card}(D_b) \cdot (n-1)}{2} \leq \dfrac{3\cdot \text{card}(D_b) \cdot n}{2}$.
\end{proof}

\subsection{Proof of Lemma \ref{lemma:hmutokmu}}
\label{subsec:proof:Hmu}
\Hmucmd*
\begin{proof}
We proceed by induction on $n$ and rely on   (\ref{eq:matrixmatrixrel}) and   (\ref{eq:kmu10}).

\noindent
$\bullet\,$ Case  $n = 1$. From  Lemma \ref{lemma:LambdaN} and  the definition of the partition $(E_k^{(1)})_{1 \leq k \leq 2}$, see (\ref{eq:partition}), we  deduce: 
  \[ \Lambda^{(1)} = \begin{cases} \{(1)\}& \text{ if } \quad \overline{Q_1} = \emptyset\\
 \{(-1)\} & \text{ if } \quad Q_1 = \emptyset\\
 \{(1),(-1)\} & \text{otherwise.}\end{cases} \quad \text{and} \quad 
 J^{(1)} = \begin{cases} \{1\}& \text{ if }\quad  \overline{Q_1} = \emptyset\\
 \{2\} & \text{ if }\quad Q_1 = \emptyset\\
 \{1,2\} & \text{otherwise.}
\end{cases}\]
\noindent
We remind that $M_1 = \begin{bmatrix}
s_1 & 1 \\
1 & r_1
\end{bmatrix}   $. Set $\mu = (1)$ and $\mu' = (-1)$. We distinguish the following cases: 

\begin{itemize}
\item  If  $\Lambda^{(1)} = \{(1)\}$ (respectively   $\Lambda^{(1)} = \{(-1)\}$),  then  we have $k_\mu = 1$
(respectively  $k_{\mu'} = 2$) and $H_\mu$ is the first row  (respectively $H_{\mu'} $ is the second row) of the matrix $M_1$.

\item  If  $\Lambda^{(1)} = \{(1),(-1)\}$, then  $k_\mu = 1$, 
   $k_{\mu'} = 2$  and $H_\mu = \begin{bmatrix}
       s_1 & 1
   \end{bmatrix}$,    $H_{\mu'} = \begin{bmatrix}
       1 & r_1
   \end{bmatrix}$, thus $H_\mu$   (respectively $H_{\mu'}$)  is the first row  (respectively   is the second row) of the matrix $M_1$.
\end{itemize}

\medskip
   \noindent
$\bullet\,$ Case  $n > 1$.  Suppose that for each $\widetilde\mu\in \Lambda^{(n-1)}$, the row matrix $H_{\widetilde\mu}$ is equal to    $k_{\widetilde\mu}$-th row     of the matrix $M_{n-1}$. 
Let $\mu = (t_1, \dots, t_{n-1}, t_n)\in \Lambda^{(n)}$. Set $\widetilde\mu = (t_1, \dots, t_{n-1})$. Then, by Lemma  \ref{lemma:LambdaN},    we have  $\widetilde\mu\in \Lambda^{(n-1)}$ and $t_n = \pm n$.

\noindent
Using  (\ref{eq:taumapping}), we get $H_\mu = \begin{bmatrix}
    H_{\widetilde\mu} &  L_{t_n}
\end{bmatrix}$. We distinguish the following cases: 

\begin{itemize}
\item If $t_n = n$, then by (\ref{eq:kmu10}), we have $k_\mu = k_{\widetilde\mu} \leq 2^{n-1}$.  Using  (\ref{eq:matrixmatrixrel})  and the induction hypothesis, we get:
 $$H_\mu = \begin{bmatrix}
    H_{\widetilde\mu}  & L_{n}
\end{bmatrix} = \begin{bmatrix}
    H_{\widetilde\mu}  & s_n  & 1 
\end{bmatrix}$$
is the $k_{\mu}$-th row  of the matrix $M_n$.

\item If $t_n = - n$, then by (\ref{eq:kmu10}), we have $k_\mu = k_{\widetilde\mu} +  2^{n-1} > 2^{n-1}$.  Using  (\ref{eq:matrixmatrixrel})  and the induction hypothesis, we get:
 $$H_\mu = \begin{bmatrix}
    H_{\widetilde\mu}  & L_{-n}
\end{bmatrix} = \begin{bmatrix}
    H_{\widetilde\mu}  & 1  & r_n 
\end{bmatrix}$$
is the $k_{\mu}$-th row  of the matrix $M_n$.
\end{itemize}

\end{proof}

\subsection{Proof of Lemma \ref{lemma:coM}}
\label{subsec:proof:coM}

\noindent
The idea behind the proof of the  formulas (\ref{eq:coefredM}) in Lemma \ref{lemma:coM} is based on:
\begin{enumerate}
    \item The use of formulas for computing  the coefficients of the unreduced matrix $M_n$ given  in  \cite{baaj2022learning}, which are based on the $n$   partitions $ (I_j^{n,\top}, I_j^{n, \bot})$ of  $I_n=\{1, 2, \dots, 2^n\}$.
    \item The following lemma,    which asserts that    for all   $j\in\{1, 2, \dots,n\}$,  the  bijective mapping  
\begin{center}
\begin{tabular}{ll}
$\Psi : $ & $\Lambda^{(n)} \rightarrow J^{(n)}$\\
& $\mu  \rightarrow  k_\mu$ 
\end{tabular}
\end{center}
    (see Lemma \ref{lemma:mutokmu}) allows us to map    the partition   $ (\Lambda_j^{n,\top}, \Lambda_j^{n, \bot})$ of   the set  $\Lambda^{(n)}$, see (\ref{eq:Lnpart}), onto the partition  $(I_j^{n,\top} \cap J^{{n)}}, I_j^{n, \bot} \cap J^{(n)})$ of the set $J^{(n)}$.
\end{enumerate}

\begin{lemma}\label{lemma:Ltb}
Let $\mu =(t_1, t_2, \dots, t_n)\in \Lambda^{(n)}$
and $j\in\{1, 2, \dots, n\}$. Then, we have: 
\begin{enumerate}
    \item $k_\mu \in I_j^{n, \top} \Longleftrightarrow \mu\in \Lambda_j^{n, \top}$.
    \item $k_\mu \in I_j^{n, \bot} \Longleftrightarrow \mu\in \Lambda_j^{n, \bot}$.
\end{enumerate}
\end{lemma}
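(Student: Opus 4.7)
\textbf{Proof plan for Lemma \ref{lemma:Ltb}.}

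The goal is to show that the bijection $\Psi : \Lambda^{(n)} \to J^{(n)}$, $\mu \mapsto k_\mu$, sends the partition $(\Lambda_j^{n,\top}, \Lambda_j^{n,\bot})$ of $\Lambda^{(n)}$ onto the partition $(I_j^{n,\top}\cap J^{(n)}, I_j^{n,\bot}\cap J^{(n)})$ of $J^{(n)}$. Since the two equivalences in the statement are complementary (each $\mu \in \Lambda^{(n)}$ has either $t_j = j$ or $t_j = -j$, and symmetrically each $k \in I_n$ lies either in $I_j^{n,\top}$ or in $I_j^{n,\bot}$), it suffices to establish one of them, say the first, and the second follows by complementation. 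My plan is to proceed by induction on $n$, treating the two subcases $j = n$ and $j < n$ separately, leveraging Lemma~\ref{lemma:muto} to pass from $\Lambda^{(n)}$ to $\Lambda^{(n-1)}$.

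The base case $n = 1$ is immediate: here only $j = 1$ is possible, $\Lambda_1^{1,\top}$ contains at most the element $(1)$ with image $k_{(1)} = 1 \in I_1^{1,\top}$ by definition of $I_1^{1,\top}$ in \cite{baaj2022learning}, and $\Lambda_1^{1,\bot}$ contains at most $(-1)$ with image $k_{(-1)} = 2 \in I_1^{1,\bot}$. For the inductive step, I would fix $\mu = (t_1,\ldots,t_n) \in \Lambda^{(n)}$, set $\widetilde{\mu} = (t_1,\ldots,t_{n-1}) \in \Lambda^{(n-1)}$, and recall from Lemma~\ref{lemma:muto} that
\[
k_\mu \;=\; \begin{cases} k_{\widetilde{\mu}} & \text{if } t_n = n, \\ k_{\widetilde{\mu}} + 2^{n-1} & \text{if } t_n = -n. \end{cases}
\]
The key fact I will use about the sets $I_j^{n,\top}$ introduced in \cite{baaj2022learning} is that they obey the same block-recursive pattern as the matrix $M_n$ in (\ref{eq:matrixmatrixrel}): namely, $I_n^{n,\top} = \{1,\ldots,2^{n-1}\}$ and $I_n^{n,\bot} = \{2^{n-1}+1,\ldots,2^n\}$, while for $j < n$ one has $I_j^{n,\top} = I_j^{n-1,\top} \cup \bigl(I_j^{n-1,\top} + 2^{n-1}\bigr)$ and similarly for $I_j^{n,\bot}$.

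With these two ingredients the argument splits cleanly. If $j = n$, then $\mu \in \Lambda_n^{n,\top} \Leftrightarrow t_n = n \Leftrightarrow k_\mu = k_{\widetilde{\mu}} \leq 2^{n-1} \Leftrightarrow k_\mu \in I_n^{n,\top}$, where the penultimate equivalence uses Lemma~\ref{lemma:muto} together with the bound $k_{\widetilde{\mu}} \leq 2^{n-1}$ valid in $J^{(n-1)}$. If $j < n$, then the condition $t_j = j$ depends only on $\widetilde{\mu}$, so by the induction hypothesis applied to $\widetilde{\mu} \in \Lambda^{(n-1)}$ we have $\widetilde{\mu} \in \Lambda_j^{n-1,\top} \Leftrightarrow k_{\widetilde{\mu}} \in I_j^{n-1,\top}$; combining this with the two cases of Lemma~\ref{lemma:muto} and the recursive description of $I_j^{n,\top}$ above yields $\mu \in \Lambda_j^{n,\top} \Leftrightarrow k_\mu \in I_j^{n,\top}$.

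The only genuine obstacle is bookkeeping: I must cite (or briefly re-derive) the block-recursive description of $I_j^{n,\top}$ from \cite{baaj2022learning} to line it up with Lemma~\ref{lemma:muto}. Once that alignment is in place, the induction is essentially mechanical. Finally, Lemma~\ref{lemma:coM} follows at once: given $\mu \in \Lambda^{(n)}$, the $k_\mu$-th row of the unreduced $M_n$ has, in columns $2j-1$ and $2j$, the pair $(s_j, 1)$ if $k_\mu \in I_j^{n,\top}$ and $(1, r_j)$ if $k_\mu \in I_j^{n,\bot}$ (by the formulas of \cite{baaj2022learning}); combining with Lemma~\ref{lemma:hmutokmu}, which identifies this row with $H_\mu$, and with the equivalences just proved, one obtains exactly the coefficients $\dot{m}_{\mu,2j-1}$ and $\dot{m}_{\mu,2j}$ in (\ref{eq:coefredM}).
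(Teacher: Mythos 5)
Your proof is correct and follows essentially the same route as the paper's: induction on $n$, the base case $n=1$ checked directly, and the inductive step combining Lemma~\ref{lemma:muto} (the shift $k_\mu = k_{\widetilde\mu}$ or $k_{\widetilde\mu}+2^{n-1}$) with the block-recursive description $I_j^{n,\top} = I_j^{n-1,\top}\cup\tau_{2^{n-1}}I_j^{n-1,\top}$ for $j<n$ and $I_n^{n,\top}=\{1,\dots,2^{n-1}\}$. The only cosmetic differences are that you organize the case split by $j=n$ versus $j<n$ rather than by the sign of $t_n$, and you observe that the second equivalence follows from the first by complementation, both of which are sound.
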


\begin{proof}
We remind that by Definition  \ref{def:Lj},  $\mu\in  \Lambda_j^{n, \top}$ (respectively $\mu\in \Lambda_j^{n, \bot}$) means that   $t_j = j$  (respectively $t_j = - j$).  
 We prove the two equivalences by induction on $n$.

\smallskip
\noindent For $n = 1$, we have:
 \[ \Lambda^{(1)} = \begin{cases} \{(1)\}& \text{ if } \quad \overline{Q_1} = \emptyset\\
 \{(-1)\} & \text{ if } \quad Q_1 = \emptyset\\
 \{(1),(-1)\} & \text{otherwise.}\end{cases}, \quad 
 J^{(1)} = \begin{cases} \{1\}& \text{ if }\quad  \overline{Q_1} = \emptyset\\
 \{2\} & \text{ if }\quad Q_1 = \emptyset\\
 \{1,2\} & \text{otherwise.}
\end{cases}\]
\noindent \text{and} \[ \quad I_1^{1, \top} = \{1\}, \quad I_1^{1,\bot} = \{2\}. \]
\noindent
By using the bijective mapping  
\begin{center}
\begin{tabular}{ll}
$\Psi : $ & $\Lambda^{(1)} \rightarrow J^{(1)}$\\
& $\mu  \mapsto  k_\mu$ 
\end{tabular}
\end{center}
(see Lemma \ref{lemma:mutokmu}), we check that the two equivalences hold.

 \smallskip
\noindent
Assume now that $n > 1$ and that both equivalences hold  for $n - 1$. 
Let $\widetilde\mu = (t_1, t_2, \dots, t_{n-1})\in \Lambda^{(n-1)}$, so we have  $\mu = (\widetilde\mu, t_n)$  and $t_n = \pm n$. 
We rely on Lemma \ref{lemma:muto}
  and the following equalities:
\begin{equation}\label{eq:equal}
 \forall j\in \{1, 2, \dots, n-1\},\,
I_j^{n, \top} = I_j^{n -1, \top}\cup \tau_{2^{n-1}}I_j^{n - 1, \top} \text{ and }
I_j^{n, \bot} = I_j^{n -1, \bot}\cup \tau_{2^{n-1}}I_j^{n - 1, \bot}     
\end{equation}

\noindent where $\tau_{2^{n-1}}$ is the mapping $k \mapsto k + 2^{n-1}$.\\ 
$\bullet\,$
If $t_n = n\,$, i.e., $\,\mu\in\Lambda_n^{n,\top}$,  then we have: 
\[ k_\mu = k_{\widetilde\mu} \leq 2^{n-1}  \quad \text{(Lemma \ref{lemma:muto})}.\]
Thus, for $1 \leq j \leq n -1$, the two equivalences follow from the induction hypothesis and (\ref{eq:equal}).
For $j = n$, we have $I_n^{n, \top} = \{1, 2, \dots, 2^{n-1}\}$  and   $k_\mu \in I_n^{n, \top}$.

\noindent $\bullet\,$
If $t_n = -n\,$, i.e.,  $\,\mu\in\Lambda_n^{n,\bot}$, then we have: 
\[ k_\mu = k_{\widetilde\mu} + 2^{n-1}  \quad \text{(Lemma \ref{lemma:muto})}.  \]
Thus, for $1 \leq j \leq n -1$, the two equivalences follow from the induction hypothesis and (\ref{eq:equal}).
For $j = n$, we have $I_n^{n, \bot} = \{2^{n-1 }+ 1,   \dots, 2^{n}\}$ and   $k_\mu \in I_n^{n, \bot}$.

\end{proof}
\noindent We are now in position to prove Lemma \ref{lemma:coM}:
\lemcoMcmd*
\begin{proof}
\cite{baaj2022learning} proved the following formulas for the coefficients of the matrix $M_n = [m_{ij}]_{1 \leq i \leq 2^n, 1 \leq j \leq 2n}$:    
\begin{equation}\label{eq:coefMrap1}
   m_{i, 2j-1} =
   \begin{cases}
 s_j     & \text{ if }  i\in I_j^{n, \top} \\
  1 & \text{ if }  i\in I_j^{n, \bot}  
  \end{cases}
, \quad 
     m_{i, 2j} =
   \begin{cases}
 1     & \text{ if }  i\in I_j^{n, \top} \\
  r_j & \text{ if }  i\in I_j^{n, \bot} 
  \end{cases}.
  \end{equation}
As by Lemma \ref{lemma:hmutokmu}, the row of the matrix ${\cal M}_n$ with index $\mu\in \Lambda^{(n)}$ is equal to the row of the matrix $M_n$ with index $k_\mu$, we immediately deduce  from Lemma \ref{lemma:Ltb} and (\ref{eq:coefMrap1}) the formulas: 
\begin{equation} 
 \dot{m}_{\mu, 2j-1}  = m_{k_\mu, 2j -1} = 
   \begin{cases}
 s_j     & \text{ if }  \mu\in \Lambda_j^{n, \top} \\
  1 & \text{ if }  \mu\in \Lambda_j^{n, \bot}  
  \end{cases}
, \quad 
   \dot{m}_{\mu, 2j} = m_{k_\mu, 2j} =
   \begin{cases}
 1     & \text{ if }  \mu\in \Lambda_j^{n, \top} \\
  r_j & \text{ if }  \mu\in \Lambda_j^{n, \bot}  
   \end{cases}.
 \nonumber\end{equation}
\end{proof}

\subsection{Proof of Theorem \ref{theorem:YO}}
\label{subsec:proof:YO}
\theoremYOcmd*
\begin{proof}
First, let us recall that that we have $\dot{Y}_n = [\widetilde{\Pi}(\sigma^{(n)}(\mu)]_{\mu\in \Lambda^{(n)}}$ where 
$\sigma^{(n)}(\mu) = E_{k_\mu}^{(n)}$, see (\ref{eq:buildupsigman}) and Lemma \ref{lemma:mutokmu}. From  (\ref{eq:PiO}), we also have 
${\cal O}_n = [\Pi(E_{k_\mu}^{(n)}]_{\mu\in \Lambda^{(n)}}$.

\noindent
$\bullet\,$ As $X = [x_l]_{1\leq l \leq 2n}$ is a solution of the system  $(\Upsigma_n)$, we have for each $\mu\in \Lambda^{(n)}$:
\begin{align}\label{eq:A}
\widetilde{\Pi}(E_{k_\mu}^{(n)}) & = \min_{1 \leq l \leq 2n} \max(\dot{\gamma}_{\mu, l},  x_l)\nonumber\\
& = \min_{1 \leq j \leq n}\min(\max(\dot{\gamma}_{\mu, 2j -1}, x_{2j -1}), \max(\dot{\gamma}_{\mu, 2j}, x_{2j}))\nonumber\\
& = \min_{1 \leq j \leq n}\min(\max(\dot{\gamma}_{\mu, 2j -1}, s_j), \max(\dot{\gamma}_{\mu, 2j}, r_j))\\   
\nonumber\end{align}
\noindent where $\dot{\gamma}_{\mu, l}$ for $\mu\in \Lambda^{(n)}, 1\leq l \leq 2n$ are the coefficients of the matrix $\dot{\Gamma}_n$ of the system $(\Upsigma_n)$, see Lemma \ref{lem:coG}.\\
$\bullet\,$ As ${\cal O}_n := {\cal M}_n \Box_{\max}^{\min} I_n$, from (\ref{eq:PiO}), we get for each $\mu\in \Lambda^{(n)}$:
\begin{equation}\label{eq:B}
\Pi(E_{k_\mu}^{(n)})  
  = \min_{1 \leq j \leq n}\min(\max(\dot{m}_{\mu, 2j -1}, \lambda_j), \max(\dot{m}_{\mu, 2j}, \rho_{j})).    
\end{equation}
But, by using the commutativity of $\max$ and the formulas (\ref{eq:coefredM}) and (\ref{eq:coefredG}),  we get   for all $j\in\{1, 2, \dots n\}$ and $\mu\in \Lambda^{(n)}$: 
\[ \max(\dot{\gamma}_{\mu, 2j -1}, s_j) = \begin{cases}
 \max(\lambda_j, s_j)     & \text{ if }  \mu\in \Lambda_j^{n, \top} \\
  1 & \text{ if }  \mu\in \Lambda_j^{n, \bot}  
  \end{cases}=\max(\dot{m}_{\mu, 2j -1}, \lambda_j);  \]
\[ \max(\dot{\gamma}_{\mu, 2j}, r_j) = \begin{cases}
 1     & \text{ if }  \mu\in \Lambda_j^{n, \top} \\
  \max(\rho_j, r_j) & \text{ if }  \mu\in \Lambda_j^{n, \bot}  
   \end{cases} = \max(\dot{m}_{\mu, 2j}, \rho_j).  \]
It then follows from (\ref{eq:A}) and (\ref{eq:B}) that we have $\widetilde{\Pi}(E_{k_\mu}^{(n)}) = \Pi(E_{k_\mu}^{(n)})$ for all $\mu\in\Lambda^{(n)}$.
\end{proof}

\subsection{Proof of Lemma \ref{lem:coED}}
\label{subsec:proof:coED}
\lemcoEDcmd*

\begin{proof}
 Let us remind that for all $x, y, z$  in $[0, 1]$, we have:
 \[   x \epsilon \max(y, z) = \max(x \, \epsilon \, y, x \,\epsilon \, z), \quad \max(x,  x \,\epsilon \, y) = \max(x, y). \]
 Using (\ref{eq:coefredG}) and      for any $j\in \{1, 2, \dots n\}$, the partition $\Lambda^{(n)} = \Lambda^{n, \top}_j \cup \Lambda^{n, \bot}_j$ (see Definition  \ref{def:Lj}), we get:
\[ e_{2j -1}^{n, \downarrow}   = \max_{\mu\in \Lambda^{(n)}}\, \dot{\gamma}_{\mu, 2j - 1} \,\epsilon \dot{y}_{\mu}= \max_{\mu\in \Lambda_j^{n, \top}}\, \lambda_j  \,\epsilon \dot{y}_\mu=   \lambda_j  \,\epsilon \,\max_{\mu\in \Lambda_j^{n, \top}}\dot{y}_\mu 
  =   \lambda_j  \,\epsilon \,e_{2j -1}^{n, \uparrow}. \]
  \[ e_{2j}^{n, \downarrow}   = \max_{\mu\in \Lambda^{(n)}}\, \dot{\gamma}_{\mu, 2j} \,\epsilon \dot{y}_{\mu}= \max_{\mu\in \Lambda_j^{n, \bot}}\, \rho_j  \,\epsilon \dot{y}_\mu=   \rho_j  \,\epsilon \,\max_{\mu\in \Lambda_j^{n, \bot}}\dot{y}_\mu 
  =   \rho_j  \,\epsilon \,e_{2j}^{n, \uparrow}. \]
\noindent
From the obvious inequality $x \, \epsilon \, y \leq y$, we deduce that for all $l\in\{1, 2, \dots, 2n\}$, we have  $e_{l}^{n, \downarrow} \leq e_{l  }^{n, \uparrow}$.
\end{proof}

\subsection{Proof of Proposition \ref{prop:coEDU}}
\label{subsec:proof:coEDU}
\propcoEDUcmd*

\begin{proof}
Let
$\dot{\Gamma_n} \Box_{\max}^{\min}{E}^{n,\uparrow} = [v_\mu]_{\mu\in \Lambda^{(n)}}$, we prove that for any $\mu\in \Lambda^{(n)}$, we have:
\[v_\mu = \min_{1 \leq j \leq n} z_{\mu, j} = u_\mu \quad \text{where} \quad 
z_{\mu, j} := 
   \begin{cases}
 \max(\lambda_j, e_{2j-1}^{n, \uparrow})     & \text{ if }  \mu\in \Lambda_j^{n, \top}   \\
  \max(\rho_j, e_{2j}^{n, \uparrow}) & \text{ if }  \mu\in \Lambda_j^{n, \bot}  
  \end{cases}.\]
In fact, we have:\[ u_\mu  = \min_{1 \leq l \leq 2n} \, 
  \max(\dot{\gamma}_{\mu, l}, e_l^{n, \downarrow}) 
   = \min_{1 \leq j \leq n} \, 
  \min(\max(\dot{\gamma}_{\mu, 2j -1}, e_{2j -1}^{n, \downarrow}), \max(\dot{\gamma}_{\mu, 2j }, e_{2j}^{n, \downarrow})),\]\noindent
   \[v_\mu = \min_{1 \leq l \leq 2n} \, 
  \max(\dot{\gamma}_{\mu, l}, e_l^{n, \uparrow}) 
   = \min_{1 \leq j \leq n} \, 
  \min(\max(\dot{\gamma}_{\mu, 2j -1}, e_{2j -1}^{n, \uparrow}), \max(\dot{\gamma}_{\mu, 2j }, e_{2j}^{n, \uparrow})).\]
 
 Using (\ref{eq:coefredG}), we can easily check that for all $j\in \{1, 2, \dots, n\}$, we have: 
 \[ \min(\max(\dot{\gamma}_{\mu, 2j -1}, e_{2j -1}^{n, \downarrow}), \max(\dot{\gamma}_{\mu, 2j }, e_{2j}^{n, \downarrow})) = 
 \begin{cases} 
 \max(\lambda_j, e_{2j-1}^{n, \downarrow})     & \text{ if }  \mu\in \Lambda_j^{n, \top}   \\
  \max(\rho_j, e_{2j}^{n, \downarrow}) & \text{ if }  \mu\in \Lambda_j^{n, \bot}  
  \end{cases}.\]
\[ \min(\max(\dot{\gamma}_{\mu, 2j -1}, e_{2j -1}^{n, \uparrow}), \max(\dot{\gamma}_{\mu, 2j }, e_{2j}^{n, \uparrow})) = 
 \begin{cases} 
 \max(\lambda_j, e_{2j-1}^{n, \uparrow})     & \text{ if }  \mu\in \Lambda_j^{n, \top}   \\
  \max(\rho_j, e_{2j}^{n, \uparrow}) & \text{ if }  \mu\in \Lambda_j^{n, \bot}  
  \end{cases}  =  z_{\mu, j}   .\]

As by Lemma \ref{lem:coED},   we have 
$e_{2j -1}^{n, \downarrow} = \lambda_j  \,\epsilon \,e_{2j -1}^{n, \uparrow}$ and 
$e_{2j}^{n, \downarrow} = \rho_j  \,\epsilon \,e_{2j}^{n, \uparrow}$, 
  the well-known relation $\max(x, x \,\epsilon\, y) = \max(x, y)$ leads to the equality
$$z_{\mu, j} = \begin{cases} 
 \max(\lambda_j, e_{2j-1}^{n, \downarrow})     & \text{ if }  \mu\in \Lambda_j^{n, \top}   \\
  \max(\rho_j, e_{2j}^{n, \downarrow}) & \text{ if }  \mu\in \Lambda_j^{n, \bot}  
  \end{cases}$$ for all $j\in\{1, 2, \dots, n\}$ and we obtain 
  $v_\mu = \min_{1 \leq j \leq n} z_{\mu, j} = u_\mu$.\footnote{The equality  $u_\mu = \min_{1 \leq j \leq n} z_{\mu, j}   $ was proved  in \cite{baaj2022learning}, with other notations,  for the non-reduced equation system $(\Sigma_n)$. Unfortunately, the proof of Lemma 2 of \cite{baaj2022learning} contains misprints (a min being written as a max).}
  \end{proof}

\subsection{Proof of Lemma \ref{lem:coEDOMEGA}}
\label{subsec:proof:coEDOMEGA}
\lemcoEDcmdOMEGA*

\begin{proof}We adapt the proof of Lemma \ref{lem:coED}, see Subsection \ref{subsec:proof:coED}, to prove Lemma \ref{lem:coEDOMEGA}.
 Using (\ref{eq:coefredM}) and   considering   for any $j\in \{1, 2, \dots n\}$, the partition $\Lambda^{(n)} = \Lambda^{n, \top}_j \cup \Lambda^{n, \bot}_j$ (see Definition  \ref{def:Lj}), we get:
\[ \lambda_j^{n, \downarrow}   = \max_{\mu\in \Lambda^{(n)}}\, \dot{m}_{\mu, 2j - 1} \,\epsilon \widetilde o_{\mu}= \max_{\mu\in \Lambda_j^{n, \top}}\, s_j  \,\epsilon \widetilde o_{\mu}=   s_j  \,\epsilon \,\max_{\mu\in \Lambda_j^{n, \top}} \widetilde o_\mu 
  =  s_j  \,\epsilon \,\lambda_j^{n, \uparrow}. \]
  \[ \rho_j^{n, \downarrow}   = \max_{\mu\in \Lambda^{(n)}}\, \dot{m}_{\mu, 2j} \,\epsilon \widetilde o_{\mu}= \max_{\mu\in \Lambda_j^{n, \bot}}\, r_j  \,\epsilon \widetilde o_{\mu}=   r_j  \,\epsilon \,\max_{\mu\in \Lambda_j^{n, \bot}}\widetilde o_\mu
  =   r_j  \,\epsilon \, \rho_j^{n, \uparrow}. \]
\noindent
From the obvious inequality $x \, \epsilon \, y \leq y$, we deduce that for each $l\in\{1, 2, \dots, n\}$, we have  $\lambda_{l}^{n, \downarrow} \leq \lambda_{l  }^{n, \uparrow}$ and $\rho_{l}^{n, \downarrow} \leq \rho_{l  }^{n, \uparrow}$.
\end{proof}

\subsection{Proof of Proposition \ref{prop:coEDUOMEGA}}
\label{subsec:proof:coEDUOMEGA}
\propcoEDUcmdOMEGA*

\begin{proof}
We adapt the proof of Proposition \ref{prop:coEDU}, see Subsection \ref{subsec:proof:coEDU}, to prove Proposition \ref{prop:coEDUOMEGA}. Let
$\mathcal{M}_n \Box_{\max}^{\min}{F}^{n,\uparrow} = [v_\mu]_{\mu\in \Lambda^{(n)}}$, we prove that for any $\mu\in \Lambda^{(n)}$, we have:
\[v_\mu = \min_{1 \leq j \leq n} z_{\mu, j} = u_\mu \quad \text{where} \quad 
z_{\mu, j} := 
   \begin{cases}
 \max(s_j, \lambda_{j}^{n, \uparrow})     & \text{ if }  \mu\in \Lambda_j^{n, \top}   \\
  \max(r_j, \rho_{j}^{n, \uparrow}) & \text{ if }  \mu\in \Lambda_j^{n, \bot}  
  \end{cases}.\]
In fact, we have:\[ u_\mu    
   = \min_{1 \leq j \leq n} \, 
  \min(\max(\dot{m}_{\mu, 2j -1}, \lambda_{j}^{n, \downarrow}), \max(\dot{m}_{\mu, 2j }, \rho_{j}^{n, \downarrow})),\]\noindent
   \[v_\mu = \min_{1 \leq j \leq n} \, 
  \min(\max(\dot{m}_{\mu, 2j -1}, \lambda_{j}^{n, \uparrow}), \max(\dot{m}_{\mu, 2j }, \rho_{j}^{n, \uparrow})).\]
 
 Using (\ref{eq:coefredM}), we can easily check that for each $j\in \{1, 2, \dots, n\}$, we have: 
 \[ \min(\max(\dot{m}_{\mu, 2j -1}, \lambda_{j}^{n, \downarrow}), \max(\dot{m}_{\mu, 2j }, \rho_{j}^{n, \downarrow})) = 
 \begin{cases} 
 \max(s_j, \lambda_{j}^{n, \downarrow})     & \text{ if }  \mu\in \Lambda_j^{n, \top}   \\
  \max(r_j, \rho_{j}^{n, \downarrow}) & \text{ if }  \mu\in \Lambda_j^{n, \bot}  
  \end{cases}.\]
\[ \min(\max(\dot{m}_{\mu, 2j -1}, \lambda_{j}^{n, \uparrow}), \max(\dot{m}_{\mu, 2j }, \rho_{j}^{n, \uparrow})) = 
 \begin{cases} 
 \max(s_j, \lambda_{j}^{n, \uparrow})     & \text{ if }  \mu\in \Lambda_j^{n, \top}   \\
  \max(r_j, \rho_{j}^{n, \uparrow}) & \text{ if }  \mu\in \Lambda_j^{n, \bot}  
  \end{cases}  =  z_{\mu, j}   .\]

\noindent
As by Lemma \ref{lem:coEDOMEGA},   we have 
$\lambda_j^{n, \downarrow} = s_j  \,\epsilon \,\lambda_{j}^{n, \uparrow}$ and 
$\rho_{j}^{n, \downarrow} = r_j  \,\epsilon \,\rho_{j}^{n, \uparrow}$, 
  the well-known relation $\max(x, x \,\epsilon\, y) = \max(x, y)$ leads to the equality
$$z_{\mu, j} = \begin{cases} 
 \max(s_j, \lambda_{j}^{n, \downarrow})     & \text{ if }  \mu\in \Lambda_j^{n, \top}   \\
  \max(r_j, \rho_{j}^{n, \downarrow}) & \text{ if }  \mu\in \Lambda_j^{n, \bot}  
  \end{cases}$$ for each $j\in\{1, 2, \dots, n\}$ and we obtain 
  $v_\mu = \min_{1 \leq j \leq n} z_{\mu, j} = u_\mu$.
  \end{proof}

\subsection{Proof of Proposition \ref{prop:SOLOMEGA}}
\label{subsec:proof:SOLOMEGA}
\propSOLOMEGA*

\begin{proof}

\noindent
We remind that for each $j\in\{1 , 2 , \dots , n\}$ , the two sets
$\Lambda^{n, \top}_j , \Lambda^{n, \bot}_j$ form a partition of the set $\Lambda^{(n)}$, and that  we have an element $\mu\in \Lambda^{(n)}$ such that 
$\widetilde o_\mu = 1$, see  (\ref{eq:Nor}). From  the formulas  (\ref{eq:formulasejtopOMEGA}),   we conclude  that statement 1. holds: we have 
$\max(\lambda^{n , \uparrow}_j , \rho^{n , \uparrow}_j) = 1$.

Furthermore, for any $0 \leq x < 1$, we have $x \,\epsilon \,1 = 1$.

From the hypothesis of statement 2., the formulas (\ref{eq:eninitialformulaOMEGA}) and statement 1. (already proved), we deduce that for each $j\in\{1 , 2 , \dots , n\}$:
\[\max(\lambda^{n , \downarrow}_j , \rho^{n , \downarrow}_j) = 
\max(s_j \,\epsilon\, \lambda^{n , \uparrow}_j , r_j \,\epsilon \,\rho^{n , \uparrow}_j) = 1.\]

Since the mapping $(x , y) \mapsto \max(x , y)$ is increasing, given the hypothesis of statement 3. and the already proved statements 1. and 2., we deduce that:
\[ 1 = \max(\lambda^{n , \downarrow}_j , \rho^{n , \downarrow}_j) \leq 
\max(\lambda^{\ast}_j , \rho^{\ast}_j) \leq \max(\lambda^{n , \uparrow}_j , \rho^{n , \uparrow}_j) = 1.\]

This concludes the proof for statement 3.
\end{proof}

 \clearpage

 \section{Additional experimental results in normal settings}
\label{sec:appendix:expadd}

In this section, we provide an in-depth analysis of our possibilistic learning method (Method \ref{meth:learningincascade}), as applied in the two approaches described in Section \ref{sec:exp}: $\Pi$-NeSy-1, based on the antipignistic probability-possibility transformation, and $\Pi$-NeSy-2, which relies on the probability-possibility transformation based on the minimum specificity principle.

\subsection{\texorpdfstring{MNIST Addition-$k$ problems}{MNIST Addition k problems}} 
\label{subsec:appendix:posslearn:mnistadd} 

We analyzed our possibilistic learning method (Method \ref{meth:learningincascade}) applied to the MNIST Addition-$k$ problems  with $k \in \{1,2,4,15,100\}$, leading to the results presented in Table \ref{tab:mnist_add_results}.

For MNIST Addition-$k$ problems (see Subsection \ref{subsec:exp:mnistadd}), our possibilistic learning method uses three thresholds  to select reliable training data. These thresholds, referred to as Threshold \#1, Threshold \#2, and Threshold \#3, correspond to the sets of rules whose output attributes are $c_{\ast}$, $w_{\ast}$, and $y_{\ast}$, respectively. Threshold \#1 is used for learning the rule parameters of rules with output attributes $c_1, c_2, \dots, c_k$ (grouped as $c_{\ast}$). Similarly, Threshold \#2 and Threshold \#3 are associated with the output attributes $w_{\ast}$ and $y_{\ast}$, respectively, and are used for learning the rule parameters of rules with output attributes $w_1, w_2, \dots, w_k$ and $y_0, y_1, y_2, \dots, y_k$.

For MNIST Addition-$k$ problems with $k \in \{1, 2, 4, 15, 100\}$, both $\Pi$-NeSy-1 and $\Pi$-NeSy-2 consistently utilize the same threshold value of $4.12\times10^{-8}$ for Thresholds \#1, \#2, and \#3. The value $4.12\times10^{-8}$ is the smallest of the set $\mathcal{T}$ of candidate thresholds, see (\ref{eq:setTvalidationstep}), given the parameters defined for the experiments (see Section \ref{sec:exp}).

This finding highlights that the threshold values remain very close to zero across all values of $k$, indicating a highly restrictive selection of reliable training data. The absence of threshold variation as $k$ increases further suggests that this fixed value is suitable for all MNIST Addition-$k$ problem instances.

In Tables~\ref{tab:mnist-add-1-pi-nesy-1}--\ref{tab:mnist-add-100-pi-nesy-2}, we present  the average values of the rule parameter values $s$ and $r$ for the output attributes $c_{\ast}$, $w_{\ast}$, and $y_{\ast}$ obtained using our possibilistic learning method. For $c_{\ast}$, the average value is the mean of the rule parameter values $s$ (resp. $r$) across all rules whose output attributes are $c_1, c_2, \dots, c_k$, and similar calculations are performed for $w_{\ast}$ and $y_{\ast}$. Additionally, the tables provide the absolute number and percentage of reliable training samples, as well as the average and maximum memory usage for each experiment.
We can observe that:

\begin{itemize}
    \item The parameters $s$ and $r$ associated with the rules maintain average values close to zero with very low variability, which indicates that the learning process for the rule parameters is stable.
    
    \item  Larger $k$ values increase task complexity, while $N_{\text{train}}$, the total number of training samples, decreases for larger $k$. This results in slightly lower variability for the percentage of training data samples considered reliable. When $k$ increases, the absolute number and percentage of reliable training samples tend to decrease, except for the $w_\ast$ attributes. 

    \item Memory usage remains reasonable and decreases as k increases: from around 1350MB when k = 1 to around 920MB when k = 100, which is partly due to the fact that the percentage of training data samples considered reliable decreases when k increases.
\end{itemize}

\begin{table}[ht]
\centering
\setlength{\tabcolsep}{1.5pt}
\renewcommand{\arraystretch}{1.5}
\begin{tabular}{lcccc}
\toprule
\makecell{Set of rules\\(output attribute)} & \makecell{Average value of the\\rule parameters $s_{\ast}$} & \makecell{Average value of the\\rule parameters $r_{\ast}$} & \makecell{Average number of\\training data samples\\considered reliable} & \makecell{Percent of\\training data samples\\considered reliable} \\
\midrule
$c_{\ast}$ & 1.95e-08 $\pm$ 4.48e-09 & 1.85e-08 $\pm$ 4.61e-09 & 9018.20 $\pm$ 1606.28 & 36.073 $\pm$ 6.425\%\\
$w_{\ast}$ & 0.00e+00 $\pm$ 0.00e+00 & 0.00e+00 $\pm$ 0.00e+00 & 10851.10 $\pm$ 1521.76 & 43.404 $\pm$ 6.087\%\\
$y_{\ast}$ & 2.49e-08 $\pm$ 1.18e-08 & 0.00e+00 $\pm$ 0.00e+00 & 9934.65 $\pm$ 1813.23 & 39.739 $\pm$ 7.253\%\\
\midrule
\multicolumn{5}{l}{\text{Average memory (RAM) of an experiment:} 1353.76 $\pm$ 51.17 MB}\\
\multicolumn{5}{l}{\text{Maximum memory (RAM) of an experiment:} 2251.45 $\pm$ 64.23 MB}\\
\bottomrule
\end{tabular}
\caption{Results for MNIST Addition with $k=1$ ($N_{\text{train}}$ = 25000) using $\Pi$-Nesy-1 over 10 experiments. Averages for the rule parameter values ($s_{\ast}$ and $r_{\ast}$) for the $c_{\ast}$-, $w_{\ast}$-, and $y_{\ast}$-rules are computed from data comprising $10\cdot 20$, $10\cdot 1$, and $10\cdot 11$ observations, respectively. Reliable sample counts for the $c_{\ast}$-, $w_{\ast}$-, and $y_{\ast}$-rules are averaged over $10\cdot 1$, $10\cdot 1$, and $10\cdot 2$ observations, respectively, and memory metrics are averaged over 10 experiments.}
\label{tab:mnist-add-1-pi-nesy-1}
\end{table}

\begin{table}[ht]
\centering
\setlength{\tabcolsep}{1.5pt}
\renewcommand{\arraystretch}{1.5}
\begin{tabular}{lcccc}
\toprule
\makecell{Set of rules\\(output attribute)} & \makecell{Average value of the\\rule parameters $s_{\ast}$} & \makecell{Average value of the\\rule parameters $r_{\ast}$} & \makecell{Average number of\\training data samples\\considered reliable} & \makecell{Percent of\\training data samples\\considered reliable} \\
\midrule
$c_{\ast}$ & 1.95e-08 $\pm$ 4.48e-09 & 1.89e-08 $\pm$ 4.42e-09 & 9373.60 $\pm$ 1254.40 & 37.494 $\pm$ 5.018\%\\
$w_{\ast}$ & 0.00e+00 $\pm$ 0.00e+00 & 0.00e+00 $\pm$ 0.00e+00 & 11039.80 $\pm$ 1077.52 & 44.159 $\pm$ 4.310\%\\
$y_{\ast}$ & 2.50e-08 $\pm$ 1.18e-08 & 0.00e+00 $\pm$ 0.00e+00 & 10206.70 $\pm$ 1435.74 & 40.827 $\pm$ 5.743\%\\
\midrule
\multicolumn{5}{l}{\text{Average memory (RAM) of an experiment:} 1353.37 $\pm$ 43.42 MB}\\
\multicolumn{5}{l}{\text{Maximum memory (RAM) of an experiment:} 2240.44 $\pm$ 54.86 MB}\\
\bottomrule
\end{tabular}
\caption{Results for MNIST Addition with $k=1$ ($N_{\text{train}}$ = 25000) using $\Pi$-Nesy-2 over 10 experiments. Averages for the rule parameter values ($s_{\ast}$ and $r_{\ast}$) for the $c_{\ast}$-, $w_{\ast}$-, and $y_{\ast}$-rules are computed from data comprising $10\cdot 20$, $10\cdot 1$, and $10\cdot 11$ observations, respectively. Reliable sample counts for the $c_{\ast}$-, $w_{\ast}$-, and $y_{\ast}$-rules are averaged over $10\cdot 1$, $10\cdot 1$, and $10\cdot 2$ observations, respectively, and memory metrics are averaged over 10 experiments.}
\label{tab:mnist-add-1-pi-nesy-2}
\end{table}

\begin{table}[ht]
\centering
\setlength{\tabcolsep}{1.5pt}
\renewcommand{\arraystretch}{1.5}
\begin{tabular}{lcccc}
\toprule
\makecell{Set of rules\\(output attribute)} & \makecell{Average value of the\\rule parameters $s_{\ast}$} & \makecell{Average value of the\\rule parameters $r_{\ast}$} & \makecell{Average number of\\training data samples\\considered reliable} & \makecell{Percent of\\training data samples\\considered reliable} \\
\midrule
$c_{\ast}$ & 1.89e-08 $\pm$ 5.33e-09 & 1.68e-08 $\pm$ 5.28e-09 & 3048.50 $\pm$ 1337.70 & 24.388 $\pm$ 10.702\%\\
$w_{\ast}$ & 0.00e+00 $\pm$ 0.00e+00 & 0.00e+00 $\pm$ 0.00e+00 & 4742.00 $\pm$ 747.76 & 37.936 $\pm$ 5.982\%\\
$y_{\ast}$ & 2.57e-08 $\pm$ 1.07e-08 & 0.00e+00 $\pm$ 0.00e+00 & 3442.30 $\pm$ 1268.65 & 27.538 $\pm$ 10.149\%\\
\midrule
\multicolumn{5}{l}{\text{Average memory (RAM) of an experiment:} 1209.47 $\pm$ 54.21 MB}\\
\multicolumn{5}{l}{\text{Maximum memory (RAM) of an experiment:} 2030.62 $\pm$ 106.39 MB}\\
\bottomrule
\end{tabular}
\caption{Results for MNIST Addition with $k=2$ ($N_{\text{train}}$ = 12500) using $\Pi$-Nesy-1 over 10 experiments. Averages for the rule parameter values ($s_{\ast}$ and $r_{\ast}$) for the $c_{\ast}$-, $w_{\ast}$-, and $y_{\ast}$-rules are computed from data comprising $10\cdot 41$, $10\cdot 2$, and $10\cdot 21$ observations, respectively. Reliable sample counts for the $c_{\ast}$-, $w_{\ast}$-, and $y_{\ast}$-rules are averaged over $10\cdot 2$, $10\cdot 2$, and $10\cdot 3$ observations, respectively, and memory metrics are averaged over 10 experiments.}
\label{tab:mnist-add-2-pi-nesy-1}
\end{table}

\begin{table}[ht]
\centering
\setlength{\tabcolsep}{1.5pt}
\renewcommand{\arraystretch}{1.5}
\begin{tabular}{lcccc}
\toprule
\makecell{Set of rules\\(output attribute)} & \makecell{Average value of the\\rule parameters $s_{\ast}$} & \makecell{Average value of the\\rule parameters $r_{\ast}$} & \makecell{Average number of\\training data samples\\considered reliable} & \makecell{Percent of\\training data samples\\considered reliable} \\
\midrule
$c_{\ast}$ & 1.88e-08 $\pm$ 5.33e-09 & 1.70e-08 $\pm$ 5.22e-09 & 3654.85 $\pm$ 1410.19 & 29.239 $\pm$ 11.282\%\\
$w_{\ast}$ & 0.00e+00 $\pm$ 0.00e+00 & 0.00e+00 $\pm$ 0.00e+00 & 5375.40 $\pm$ 740.02 & 43.003 $\pm$ 5.920\%\\
$y_{\ast}$ & 2.57e-08 $\pm$ 1.05e-08 & 0.00e+00 $\pm$ 0.00e+00 & 4077.77 $\pm$ 1340.02 & 32.622 $\pm$ 10.720\%\\
\midrule
\multicolumn{5}{l}{\text{Average memory (RAM) of an experiment:} 1280.83 $\pm$ 57.95 MB}\\
\multicolumn{5}{l}{\text{Maximum memory (RAM) of an experiment:} 2159.32 $\pm$ 101.99 MB}\\
\bottomrule
\end{tabular}
\caption{Results for MNIST Addition with $k=2$ ($N_{\text{train}}$ = 12500) using $\Pi$-Nesy-2 over 10 experiments. Averages for the rule parameter values ($s_{\ast}$ and $r_{\ast}$) for the $c_{\ast}$-, $w_{\ast}$-, and $y_{\ast}$-rules are computed from data comprising $10\cdot 41$, $10\cdot 2$, and $10\cdot 21$ observations, respectively. Reliable sample counts for the $c_{\ast}$-, $w_{\ast}$-, and $y_{\ast}$-rules are averaged over $10\cdot 2$, $10\cdot 2$, and $10\cdot 3$ observations, respectively, and memory metrics are averaged over 10 experiments.}
\label{tab:mnist-add-2-pi-nesy-2}
\end{table}

\begin{table}[ht]
\centering
\setlength{\tabcolsep}{1.5pt}
\renewcommand{\arraystretch}{1.5}
\begin{tabular}{lcccc}
\toprule
\makecell{Set of rules\\(output attribute)} & \makecell{Average value of the\\rule parameters $s_{\ast}$} & \makecell{Average value of the\\rule parameters $r_{\ast}$} & \makecell{Average number of\\training data samples\\considered reliable} & \makecell{Percent of\\training data samples\\considered reliable} \\
\midrule
$c_{\ast}$ & 1.84e-08 $\pm$ 5.69e-09 & 1.41e-08 $\pm$ 6.72e-09 & 1265.83 $\pm$ 674.65 & 20.253 $\pm$ 10.794\%\\
$w_{\ast}$ & 0.00e+00 $\pm$ 0.00e+00 & 0.00e+00 $\pm$ 0.00e+00 & 2385.62 $\pm$ 385.31 & 38.170 $\pm$ 6.165\%\\
$y_{\ast}$ & 2.52e-08 $\pm$ 1.01e-08 & 0.00e+00 $\pm$ 0.00e+00 & 1463.20 $\pm$ 734.34 & 23.411 $\pm$ 11.749\%\\
\midrule
\multicolumn{5}{l}{\text{Average memory (RAM) of an experiment:} 1047.38 $\pm$ 40.41 MB}\\
\multicolumn{5}{l}{\text{Maximum memory (RAM) of an experiment:} 1625.33 $\pm$ 74.88 MB}\\
\bottomrule
\end{tabular}
\caption{Results for MNIST Addition with $k=4$ ($N_{\text{train}}$ = 6250) using $\Pi$-Nesy-1 over 10 experiments. Averages for the rule parameter values ($s_{\ast}$ and $r_{\ast}$) for the $c_{\ast}$-, $w_{\ast}$-, and $y_{\ast}$-rules are computed from data comprising $10\cdot 83$, $10\cdot 4$, and $10\cdot 41$ observations, respectively. Reliable sample counts for the $c_{\ast}$-, $w_{\ast}$-, and $y_{\ast}$-rules are averaged over $10\cdot 4$, $10\cdot 4$, and $10\cdot 5$ observations, respectively, and memory metrics are averaged over 10 experiments.}
\label{tab:mnist-add-4-pi-nesy-1}
\end{table}

\begin{table}[ht]
\centering
\setlength{\tabcolsep}{1.5pt}
\renewcommand{\arraystretch}{1.5}
\begin{tabular}{lcccc}
\toprule
\makecell{Set of rules\\(output attribute)} & \makecell{Average value of the\\rule parameters $s_{\ast}$} & \makecell{Average value of the\\rule parameters $r_{\ast}$} & \makecell{Average number of\\training data samples\\considered reliable} & \makecell{Percent of\\training data samples\\considered reliable} \\
\midrule
$c_{\ast}$ & 1.82e-08 $\pm$ 5.67e-09 & 1.42e-08 $\pm$ 6.67e-09 & 1460.75 $\pm$ 667.51 & 23.372 $\pm$ 10.680\%\\
$w_{\ast}$ & 0.00e+00 $\pm$ 0.00e+00 & 0.00e+00 $\pm$ 0.00e+00 & 2627.97 $\pm$ 290.17 & 42.048 $\pm$ 4.643\%\\
$y_{\ast}$ & 2.53e-08 $\pm$ 9.57e-09 & 0.00e+00 $\pm$ 0.00e+00 & 1666.82 $\pm$ 731.00 & 26.669 $\pm$ 11.696\%\\
\midrule
\multicolumn{5}{l}{\text{Average memory (RAM) of an experiment:} 1059.29 $\pm$ 37.25 MB}\\
\multicolumn{5}{l}{\text{Maximum memory (RAM) of an experiment:} 1649.35 $\pm$ 69.11 MB}\\
\bottomrule
\end{tabular}
\caption{Results for MNIST Addition with $k=4$ ($N_{\text{train}}$ = 6250) using $\Pi$-Nesy-2 over 10 experiments. Averages for the rule parameter values ($s_{\ast}$ and $r_{\ast}$) for the $c_{\ast}$-, $w_{\ast}$-, and $y_{\ast}$-rules are computed from data comprising $10\cdot 83$, $10\cdot 4$, and $10\cdot 41$ observations, respectively. Reliable sample counts for the $c_{\ast}$-, $w_{\ast}$-, and $y_{\ast}$-rules are averaged over $10\cdot 4$, $10\cdot 4$, and $10\cdot 5$ observations, respectively, and memory metrics are averaged over 10 experiments.}
\label{tab:mnist-add-4-pi-nesy-2}
\end{table}

\begin{table}[ht]
\centering
\setlength{\tabcolsep}{1.5pt}
\renewcommand{\arraystretch}{1.5}
\begin{tabular}{lcccc}
\toprule
\makecell{Set of rules\\(output attribute)} & \makecell{Average value of the\\rule parameters $s_{\ast}$} & \makecell{Average value of the\\rule parameters $r_{\ast}$} & \makecell{Average number of\\training data samples\\considered reliable} & \makecell{Percent of\\training data samples\\considered reliable} \\
\midrule
$c_{\ast}$ & 1.67e-08 $\pm$ 6.17e-09 & 7.27e-09 $\pm$ 7.63e-09 & 240.44 $\pm$ 109.22 & 14.432 $\pm$ 6.556\%\\
$w_{\ast}$ & 0.00e+00 $\pm$ 0.00e+00 & 0.00e+00 $\pm$ 0.00e+00 & 587.69 $\pm$ 78.87 & 35.275 $\pm$ 4.734\%\\
$y_{\ast}$ & 2.14e-08 $\pm$ 1.03e-08 & 0.00e+00 $\pm$ 0.00e+00 & 261.28 $\pm$ 134.45 & 15.683 $\pm$ 8.070\%\\
\midrule
\multicolumn{5}{l}{\text{Average memory (RAM) of an experiment:} 911.12 $\pm$ 11.25 MB}\\
\multicolumn{5}{l}{\text{Maximum memory (RAM) of an experiment:} 1328.48 $\pm$ 19.72 MB}\\
\bottomrule
\end{tabular}
\caption{Results for MNIST Addition with $k=15$ ($N_{\text{train}}$ = 1666) using $\Pi$-Nesy-1 over 10 experiments. Averages for the rule parameter values ($s_{\ast}$ and $r_{\ast}$) for the $c_{\ast}$-, $w_{\ast}$-, and $y_{\ast}$-rules are computed from data comprising $10\cdot 314$, $10\cdot 15$, and $10\cdot 151$ observations, respectively. Reliable sample counts for the $c_{\ast}$-, $w_{\ast}$-, and $y_{\ast}$-rules are averaged over $10\cdot 15$, $10\cdot 15$, and $10\cdot 16$ observations, respectively, and memory metrics are averaged over 10 experiments.}
\label{tab:mnist-add-15-pi-nesy-1}
\end{table}

\begin{table}[ht]
\centering
\setlength{\tabcolsep}{1.5pt}
\renewcommand{\arraystretch}{1.5}
\begin{tabular}{lcccc}
\toprule
\makecell{Set of rules\\(output attribute)} & \makecell{Average value of the\\rule parameters $s_{\ast}$} & \makecell{Average value of the\\rule parameters $r_{\ast}$} & \makecell{Average number of\\training data samples\\considered reliable} & \makecell{Percent of\\training data samples\\considered reliable} \\
\midrule
$c_{\ast}$ & 1.71e-08 $\pm$ 5.99e-09 & 8.00e-09 $\pm$ 7.79e-09 & 356.03 $\pm$ 119.59 & 21.370 $\pm$ 7.178\%\\
$w_{\ast}$ & 0.00e+00 $\pm$ 0.00e+00 & 0.00e+00 $\pm$ 0.00e+00 & 734.45 $\pm$ 78.21 & 44.084 $\pm$ 4.695\%\\
$y_{\ast}$ & 2.29e-08 $\pm$ 1.00e-08 & 0.00e+00 $\pm$ 0.00e+00 & 378.73 $\pm$ 146.45 & 22.733 $\pm$ 8.790\%\\
\midrule
\multicolumn{5}{l}{\text{Average memory (RAM) of an experiment:} 924.22 $\pm$ 10.14 MB}\\
\multicolumn{5}{l}{\text{Maximum memory (RAM) of an experiment:} 1324.28 $\pm$ 8.94 MB}\\
\bottomrule
\end{tabular}
\caption{Results for MNIST Addition with $k=15$ ($N_{\text{train}}$ = 1666) using $\Pi$-Nesy-2 over 10 experiments. Averages for the rule parameter values ($s_{\ast}$ and $r_{\ast}$) for the $c_{\ast}$-, $w_{\ast}$-, and $y_{\ast}$-rules are computed from data comprising $10\cdot 314$, $10\cdot 15$, and $10\cdot 151$ observations, respectively. Reliable sample counts for the $c_{\ast}$-, $w_{\ast}$-, and $y_{\ast}$-rules are averaged over $10\cdot 15$, $10\cdot 15$, and $10\cdot 16$ observations, respectively, and memory metrics are averaged over 10 experiments.}
\label{tab:mnist-add-15-pi-nesy-2}
\end{table}

\begin{table}[ht]
\centering
\setlength{\tabcolsep}{1.5pt}
\renewcommand{\arraystretch}{1.5}
\begin{tabular}{lcccc}
\toprule
\makecell{Set of rules\\(output attribute)} & \makecell{Average value of the\\rule parameters $s_{\ast}$} & \makecell{Average value of the\\rule parameters $r_{\ast}$} & \makecell{Average number of\\training data samples\\considered reliable} & \makecell{Percent of\\training data samples\\considered reliable} \\
\midrule
$c_{\ast}$ & 1.11e-08 $\pm$ 7.60e-09 & 1.81e-09 $\pm$ 4.88e-09 & 25.10 $\pm$ 8.58 & 10.041 $\pm$ 3.434\%\\
$w_{\ast}$ & 0.00e+00 $\pm$ 0.00e+00 & 0.00e+00 $\pm$ 0.00e+00 & 77.11 $\pm$ 11.18 & 30.845 $\pm$ 4.471\%\\
$y_{\ast}$ & 1.25e-08 $\pm$ 1.01e-08 & 0.00e+00 $\pm$ 0.00e+00 & 25.60 $\pm$ 9.94 & 10.239 $\pm$ 3.978\%\\
\midrule
\multicolumn{5}{l}{\text{Average memory (RAM) of an experiment:} 910.22 $\pm$ 7.01 MB}\\
\multicolumn{5}{l}{\text{Maximum memory (RAM) of an experiment:} 1307.98 $\pm$ 6.10 MB}\\
\bottomrule
\end{tabular}
\caption{Results for MNIST Addition with $k=100$ ($N_{\text{train}}$ = 250) using $\Pi$-Nesy-1 over 10 experiments. Averages for the rule parameter values ($s_{\ast}$ and $r_{\ast}$) for the $c_{\ast}$-, $w_{\ast}$-, and $y_{\ast}$-rules are computed from data comprising $10\cdot 2099$, $10\cdot 100$, and $10\cdot 1001$ observations, respectively. Reliable sample counts for the $c_{\ast}$-, $w_{\ast}$-, and $y_{\ast}$-rules are averaged over $10\cdot 100$, $10\cdot 100$, and $10\cdot 101$ observations, respectively, and memory metrics are averaged over 10 experiments.}
\label{tab:mnist-add-100-pi-nesy-1}
\end{table}

\begin{table}[ht]
\centering
\setlength{\tabcolsep}{1.5pt}
\renewcommand{\arraystretch}{1.5}
\begin{tabular}{lcccc}
\toprule
\makecell{Set of rules\\(output attribute)} & \makecell{Average value of the\\rule parameters $s_{\ast}$} & \makecell{Average value of the\\rule parameters $r_{\ast}$} & \makecell{Average number of\\training data samples\\considered reliable} & \makecell{Percent of\\training data samples\\considered reliable} \\
\midrule
$c_{\ast}$ & 1.20e-08 $\pm$ 7.66e-09 & 2.32e-09 $\pm$ 5.44e-09 & 45.21 $\pm$ 12.00 & 18.085 $\pm$ 4.798\%\\
$w_{\ast}$ & 0.00e+00 $\pm$ 0.00e+00 & 0.00e+00 $\pm$ 0.00e+00 & 104.54 $\pm$ 13.87 & 41.817 $\pm$ 5.549\%\\
$y_{\ast}$ & 1.38e-08 $\pm$ 9.91e-09 & 0.00e+00 $\pm$ 0.00e+00 & 45.81 $\pm$ 13.39 & 18.323 $\pm$ 5.356\%\\
\midrule
\multicolumn{5}{l}{\text{Average memory (RAM) of an experiment:} 926.24 $\pm$ 4.83 MB}\\
\multicolumn{5}{l}{\text{Maximum memory (RAM) of an experiment:} 1320.76 $\pm$ 6.32 MB}\\
\bottomrule
\end{tabular}
\caption{Results for MNIST Addition with $k=100$ ($N_{\text{train}}$ = 250) using $\Pi$-Nesy-2 over 10 experiments. Averages for the rule parameter values ($s_{\ast}$ and $r_{\ast}$) for the $c_{\ast}$-, $w_{\ast}$-, and $y_{\ast}$-rules are computed from data comprising $10\cdot 2099$, $10\cdot 100$, and $10\cdot 1001$ observations, respectively. Reliable sample counts for the $c_{\ast}$-, $w_{\ast}$-, and $y_{\ast}$-rules are averaged over $10\cdot 100$, $10\cdot 100$, and $10\cdot 101$ observations, respectively, and memory metrics are averaged over 10 experiments.}
\label{tab:mnist-add-100-pi-nesy-2}
\end{table}

\FloatBarrier

\subsection{MNIST Sudoku problems}
\label{subsec:appendix:posslearn:mnistsudoku}

 We also analyzed our possibilistic learning method applied to the MNIST Sudoku problems, which led to the results reported in Table \ref{tab:visual_Sudoku_classification}.

For MNIST Sudoku problems (Subsection \ref{subsec:exp:sudoku}), our possibilistic  learning method uses two thresholds to select reliable training data: Threshold \#1 corresponds to the sets of rules whose output attributes are $b_{(i,j,i',j')}$ where $(i,j,i',j') \in F$ and Threshold \#2 corresponds to the set of rules whose output attribute is $c$. 

Table~\ref{tab:mnist-sudoku-thresholds} presents the average values of these thresholds for both $\Pi$-NeSy-1 and $\Pi$-NeSy-2 approaches, computed over ten experiments. These thresholds are very small. 

In Tables~\ref{tab:mnist-sudoku-4-pi-nesy-1}--\ref{tab:mnist-sudoku-9-pi-nesy-2}, we report the average values  of the rule parameter values $s$ and $r$ for the output attributes $b_{(i,j,i',j')}$ and $c$ obtained using our possibilistic learning method. For $b_{(i,j,i',j')}$, the average values of $s$ and $r$ represent the mean of the rule parameter values across all rules whose output attributes are $b_{(i,j,i',j')}$ where $(i,j,i',j') \in F$, while for $c$, the corresponding values are shown for the rules whose output attribute is $c$. In addition, the tables provide the absolute number and percentage of reliable training data samples, as well as the average and maximum memory usage for each problem.

From these results, we observe:

\begin{itemize}
    \item The parameters $s$ and $r$ associated with the rules have values very close to or equal to zero. The variability of the parameters is low.

    \item The percentage of reliable training samples relative to $N_{\text{train}} = 200$ varies with the two problems. For MNIST Sudoku 4x4, $\Pi$-NeSy-2 considers $37.69 \pm 7.20\%$ of reliable training data for $b_{(i,j,i',j')}$ compared to $30.90 \pm 8.85\%$ for $\Pi$-NeSy-1. For MNIST Sudoku 9x9, these percentages drop to $ 20.02 \pm 11.30\%$ for $\Pi$-NeSy-2 and $13.72 \pm 8.41\%$ for $\Pi$-NeSy-1.

    \item For MNIST Sudoku 4x4, memory usage averages 750 MB and remains reasonable for both approaches. For MNIST Sudoku 9x9, memory usage increases considerably, exceeding 6.7 GB for both approaches.
\end{itemize}

\begin{table}[h!]
\centering
\begin{tabular}{@{}llll@{}}
\toprule
Dimension & Approach & Average threshold for $b_{(i,j,i',j')}$ & Average threshold for $c$  \\ \midrule
4x4 & $\Pi$-NeSy-1 & 1.318e-06 $\pm$ 0.000e+00 & 1.689e-07 $\pm$ 3.831e-07 \\
4x4 & $\Pi$-NeSy-2 & 1.318e-06 $\pm$ 0.000e+00 & 4.119e-08 $\pm$ 0.000e+00 \\
9x9 & $\Pi$-NeSy-1 & 1.689e-07 $\pm$ 3.831e-07 & 4.119e-08 $\pm$ 0.000e+00 \\
9x9 & $\Pi$-NeSy-2 & 2.966e-07 $\pm$ 5.108e-07 & 4.119e-08 $\pm$ 0.000e+00 \\
\bottomrule
\end{tabular}
\caption{Average thresholds for MNIST-Sudoku problems used for selecting reliable training data during possibilistic learning. In the 4x4 configuration (resp. 9x9 configuration), averages for $b_{(i,j,i',j')}$ are computed over 10 observations and for $c$ over 10 observations for both approaches.}
\label{tab:mnist-sudoku-thresholds}\end{table}

\begin{table}[ht]
\centering
\setlength{\tabcolsep}{1.5pt}
\renewcommand{\arraystretch}{1.5}
\begin{tabular}{lcccc}
\toprule
\makecell{Set of rules\\(output attribute)} & \makecell{Average value of the\\rule parameters $s$} & \makecell{Average value of the\\rule parameters $r$} & \makecell{Average number of\\training data samples\\considered reliable} & \makecell{Percent of\\training data samples\\considered reliable} \\
\midrule
$b_{(i,j,i',j')}$ & 4.30e-07 $\pm$ 2.40e-07 & 2.08e-07 $\pm$ 2.50e-07 & 61.8 $\pm$ 17.7 & 30.90 $\pm$ 8.85\% \\
c & 0.00e+00 $\pm$ 0.00e+00 & 0.00e+00 $\pm$ 0.00e+00 & 43.4 $\pm$ 35.8 & 21.70 $\pm$ 17.90\% \\
\midrule
\multicolumn{5}{l}{\text{Average memory (RAM) of an experiment:} 754.8 $\pm$ 4.9 MB}\\
\multicolumn{5}{l}{\text{Maximum memory (RAM) of an experiment:} 803.8 $\pm$ 5.3 MB}\\
\bottomrule
\end{tabular}
\caption{Possibilistic learning results on the MNIST Sudoku 4x4 problem using $\Pi$-NeSy-1. Results are averaged over ten experiments. The respective averages for the rule parameters corresponding to $b_{(i,j,i',j')}$ are computed from $10\cdot 448$ observations, while those for the attribute $c$ are computed from $10\cdot 1$ observations. Reliable training sample counts for $b_{(i,j,i',j')}$ and $c$ are computed from $10\cdot 56$ and $10\cdot 1$ observations, respectively.}
\label{tab:mnist-sudoku-4-pi-nesy-1}
\end{table}
\begin{table}[ht]
\centering
\setlength{\tabcolsep}{1.5pt}
\renewcommand{\arraystretch}{1.5}
\begin{tabular}{lcccc}
\toprule
\makecell{Set of rules\\(output attribute)} & \makecell{Average value of the\\rule parameters $s$} & \makecell{Average value of the\\rule parameters $r$} & \makecell{Average number of\\training data samples\\considered reliable} & \makecell{Percent of\\training data samples\\considered reliable} \\
\midrule
$b_{(i,j,i',j')}$ & 4.60e-07 $\pm$ 2.30e-07 & 2.29e-07 $\pm$ 2.55e-07 & 75.4 $\pm$ 14.4 & 37.69 $\pm$ 7.20\% \\
c & 0.00e+00 $\pm$ 0.00e+00 & 0.00e+00 $\pm$ 0.00e+00 & 48.0 $\pm$ 40.2 & 24.00 $\pm$ 20.09\% \\
\midrule
\multicolumn{5}{l}{\text{Average memory (RAM) of an experiment:} 755.4 $\pm$ 4.9 MB}\\
\multicolumn{5}{l}{\text{Maximum memory (RAM) of an experiment:} 804.6 $\pm$ 5.2 MB}\\
\bottomrule
\end{tabular}
\caption{Possibilistic learning results on the MNIST Sudoku 4x4 problem using $\Pi$-NeSy-2. Results are averaged over ten experiments. The respective averages for the rule parameters corresponding to $b_{(i,j,i',j')}$ are computed from $10\cdot 448$ observations, while those for the attribute $c$ are computed from $10\cdot 1$ observations. Reliable training sample counts for $b_{(i,j,i',j')}$ and $c$ are computed from $10\cdot 56$ and $10\cdot 1$ observations, respectively.}
\label{tab:mnist-sudoku-4-pi-nesy-2}
\end{table}
\begin{table}[ht]
\centering
\setlength{\tabcolsep}{1.5pt}
\renewcommand{\arraystretch}{1.5}
\begin{tabular}{lcccc}
\toprule
\makecell{Set of rules\\(output attribute)} & \makecell{Average value of the\\rule parameters $s$} & \makecell{Average value of the\\rule parameters $r$} & \makecell{Average number of\\training data samples\\considered reliable} & \makecell{Percent of\\training data samples\\considered reliable} \\
\midrule
$b_{(i,j,i',j')}$ & 5.79e-08 $\pm$ 1.54e-07 & 1.09e-08 $\pm$ 6.71e-08 & 27.4 $\pm$ 16.8 & 13.72 $\pm$ 8.41\% \\
c & 0.00e+00 $\pm$ 0.00e+00 & 0.00e+00 $\pm$ 0.00e+00 & 46.0 $\pm$ 14.8 & 23.00 $\pm$ 7.42\% \\
\midrule
\multicolumn{5}{l}{\text{Average memory (RAM) of an experiment:} 6713.6 $\pm$ 16.9 MB}\\
\multicolumn{5}{l}{\text{Maximum memory (RAM) of an experiment:} 7597.5 $\pm$ 16.8 MB}\\
\bottomrule
\end{tabular}
\caption{Possibilistic learning results on the MNIST Sudoku 9x9 problem using $\Pi$-NeSy-1. Results are averaged over ten experiments. The respective averages for the rule parameters corresponding to $b_{(i,j,i',j')}$ are computed from $10\cdot 14580$ observations, while those for the attribute $c$ are computed from $10\cdot 1$ observations. Reliable training sample counts for $b_{(i,j,i',j')}$ and $c$ are computed from $10\cdot 810$ and $10\cdot 1$ observations, respectively.}
\label{tab:mnist-sudoku-9-pi-nesy-1}
\end{table}
\begin{table}[ht]
\centering
\setlength{\tabcolsep}{1.5pt}
\renewcommand{\arraystretch}{1.5}
\begin{tabular}{lcccc}
\toprule
\makecell{Set of rules\\(output attribute)} & \makecell{Average value of the\\rule parameters $s$} & \makecell{Average value of the\\rule parameters $r$} & \makecell{Average number of\\training data samples\\considered reliable} & \makecell{Percent of\\training data samples\\considered reliable} \\
\midrule
$b_{(i,j,i',j')}$ & 1.07e-07 $\pm$ 2.11e-07 & 2.20e-08 $\pm$ 9.68e-08 & 40.0 $\pm$ 22.6 & 20.02 $\pm$ 11.30\% \\
c & 0.00e+00 $\pm$ 0.00e+00 & 0.00e+00 $\pm$ 0.00e+00 & 52.0 $\pm$ 20.7 & 26.00 $\pm$ 10.34\% \\
\midrule
\multicolumn{5}{l}{\text{Average memory (RAM) of an experiment:} 6715.6 $\pm$ 13.3 MB}\\
\multicolumn{5}{l}{\text{Maximum memory (RAM) of an experiment:} 7616.1 $\pm$ 10.5 MB}\\
\bottomrule
\end{tabular}
\caption{Possibilistic learning results on the MNIST Sudoku 9x9 problem using $\Pi$-NeSy-2. Results are averaged over ten experiments. The respective averages for the rule parameters corresponding to $b_{(i,j,i',j')}$ are computed from $10\cdot 14580$ observations, while those for the attribute $c$ are computed from $10\cdot 1$ observations. Reliable training sample counts for $b_{(i,j,i',j')}$ and $c$ are computed from $10\cdot 810$ and $10\cdot 1$ observations, respectively.}
\label{tab:mnist-sudoku-9-pi-nesy-2}
\end{table}

\FloatBarrier
\section{Additional experimental results in low data settings}

We have also analyzed the characteristics and performance of the possibilistic learning method (Method \ref{meth:learningincascade}) applied to the MNIST Addition-$k$ problems with $k \in \{1,2,4,15,100\}$ and to MNIST Sudoku problems in low data settings. \ref{tab:mnist_sudokuresults_low_data_settings}. 
In the following, we reuse the notations given in Section \ref{sec:appendix:expadd}.

When dealing with low data settings, let us first remind that we encountered a few difficulties in comparing floats, linked to the thresholds used in Method \ref{meth:learningincascade}.
To overcome this problem, we modified the parameters for generating the set $\mathcal{T}$ of candidate thresholds, see (\ref{eq:setTvalidationstep}), by setting $h = 2$, which generates candidate thresholds with higher values.

\subsection{\texorpdfstring{MNIST Addition-$k$ problems in low data settings}{MNIST Addition k problems}} 
\label{subsec:appendix:lds:posslearn:mnistadd}

In the following, we present results obtained by our possibilistic learning method  applied to the MNIST Addition-$k$ problems with $k \in \{1,2,4,15,100\}$ in low data setings, see Table \ref{tab:mnist_add_results_low_data_settings}.
The thresholds used for MNIST Addition-$k$ problems with $k \in \{1, 2, 4, 15, 100\}$ in low data settings are presented in Table \ref{tab:lds:thresholds:mnistadd}. We remind  that in  normal settings, both $\Pi$-NeSy-1 and $\Pi$-NeSy-2 consistently use the fixed threshold value of $4.12\times10^{-8}$ for Thresholds \#1, \#2, and \#3 (see Subsection \ref{subsec:appendix:posslearn:mnistadd}). Contrastingly, in low data settings, for $k \in \{1,2,4,15\}$, both approaches yield an average threshold value of $1.11\times10^{-3} \pm 2.17\times10^{-19}$, while for $k = 100$, the average threshold increases to $2.11\times10^{-3} \pm 1.53\times10^{-3}$ for Thresholds \#1 and \#2. These values, which are higher than those in normal settings, indicate a slightly less restrictive selection of training data in low data settings.

\begin{table}[ht!]
\centering
\begin{tabular}{|c|c|c|c|c|}
\hline
k & Approach & Threshold \#1 & Threshold \#2 & Threshold \#3 \\
\hline
1 & $\Pi$-Nesy-1 & 1.11e-03 $\pm$ 2.17e-19 & 1.11e-03 $\pm$ 2.17e-19 & 1.11e-03 $\pm$ 2.17e-19 \\
1 & $\Pi$-Nesy-2 & 1.11e-03 $\pm$ 2.17e-19 & 1.11e-03 $\pm$ 2.17e-19 & 1.11e-03 $\pm$ 2.17e-19 \\
2 & $\Pi$-Nesy-1 & 1.11e-03 $\pm$ 2.17e-19 & 1.11e-03 $\pm$ 2.17e-19 & 1.11e-03 $\pm$ 2.17e-19 \\
2 & $\Pi$-Nesy-2 & 1.11e-03 $\pm$ 2.17e-19 & 1.11e-03 $\pm$ 2.17e-19 & 1.11e-03 $\pm$ 2.17e-19 \\
4 & $\Pi$-Nesy-1 & 1.11e-03 $\pm$ 2.17e-19 & 1.11e-03 $\pm$ 2.17e-19 & 1.11e-03 $\pm$ 2.17e-19 \\
4 & $\Pi$-Nesy-2 & 1.11e-03 $\pm$ 2.17e-19 & 1.11e-03 $\pm$ 2.17e-19 & 1.11e-03 $\pm$ 2.17e-19 \\
15 & $\Pi$-Nesy-1 & 1.11e-03 $\pm$ 2.17e-19 & 1.11e-03 $\pm$ 2.17e-19 & 1.11e-03 $\pm$ 2.17e-19 \\
15 & $\Pi$-Nesy-2 & 1.11e-03 $\pm$ 2.17e-19 & 1.11e-03 $\pm$ 2.17e-19 & 1.11e-03 $\pm$ 2.17e-19 \\
100 & $\Pi$-Nesy-1 & 2.11e-03 $\pm$ 1.53e-03 & 1.11e-03 $\pm$ 2.17e-19 & 2.11e-03 $\pm$ 1.53e-03 \\
100 & $\Pi$-Nesy-2 & 2.11e-03 $\pm$ 1.53e-03 & 1.11e-03 $\pm$ 2.17e-19 & 2.11e-03 $\pm$ 1.53e-03 \\
\hline
\end{tabular}
\caption{Average thresholds for MNIST Addition-$k$ problems with  $\Pi$-Nesy-1 and $\Pi$-Nesy-2 in low data settings.}
\label{tab:lds:thresholds:mnistadd}
\end{table}

\begin{table}[ht]
\centering
\begin{tabular}{@{}cccccc@{}}
\toprule
k & Training set size & Test set size & Approach & Learning time & Inference time per test sample \\
\midrule
1 & 1250 & 5000 & $\Pi$-NeSy-1 & 221.17 $\pm$ 29.95 & 0.001 $\pm$ 0.00 \\
1 & 1250 & 5000 & $\Pi$-NeSy-2 & 249.63 $\pm$ 38.46 & 0.001 $\pm$ 0.00 \\
2 & 625 & 2500 & $\Pi$-NeSy-1 & 200.72 $\pm$ 30.29 & 0.002 $\pm$ 0.00 \\
2 & 625 & 2500 & $\Pi$-NeSy-2 & 265.12 $\pm$ 30.67 & 0.002 $\pm$ 0.00 \\
4 & 312 & 1250 & $\Pi$-NeSy-1 & 210.02 $\pm$ 24.95 & 0.004 $\pm$ 0.00 \\
4 & 312 & 1250 & $\Pi$-NeSy-2 & 217.97 $\pm$ 27.30 & 0.004 $\pm$ 0.00 \\
15 & 83 & 333 & $\Pi$-NeSy-1 & 169.70 $\pm$ 10.50 & 0.015 $\pm$ 0.00 \\
15 & 83 & 333 & $\Pi$-NeSy-2 & 173.36 $\pm$ 10.30 & 0.015 $\pm$ 0.00 \\
100 & 12 & 50 & $\Pi$-NeSy-1 & 161.61 $\pm$ 3.27 & 0.101 $\pm$ 0.00 \\
100 & 12 & 50 & $\Pi$-NeSy-2 & 159.66 $\pm$ 4.16 & 0.100 $\pm$ 0.00 \\
\bottomrule
\end{tabular}

\caption{Each row in the table represents a MNIST Addition-$k$ problem addressed using either $\Pi$-NeSy-1 or $\Pi$-NeSy-2 in low data settings.
Each row includes the average learning time in seconds (averaged over 10 runs) and the average inference time per test sample in seconds (averaged over $10 \cdot N_{\text{test}}$ test samples) from these experiments. }
\label{tab:lds:times_mnistadd}
\end{table}

Empirically, using the specified configuration (detailed in Subsection \ref{subsec:mnistaddotherresults}), the inference and learning times of  $\Pi$-NeSy in low data settings are small, see Table \ref{tab:lds:times_mnistadd}. We remark that  inference times are  less important than in normal settings (see Table \ref{tab:times_mnistadd} for inference and learning times obtained under normal conditions) which may be due to the fact that the program uses less memory (see below).

Tables~\ref{tab:lds:mnist-add-1-pi-nesy-1}--\ref{tab:lds:mnist-add-100-pi-nesy-2} report the average values of the rule parameters obtained by possibilistic learning, the absolute number and percentage of reliable training data samples used, and the memory usage. As with normal settings, the parameters $s$ and $r$ maintain average values close to zero with very low variability. In low data settings, the percentage of training data samples considered reliable is higher than in normal settings, and remains stable for all $k \in \{1, 2, 4, 15, 100\}$. To run the experiments in low data settings, $\Pi$-NeSy needs around 615MB of memory (which is significantly less than the memory required (750MB) under normal conditions).

\begin{table}[ht]
\centering
\setlength{\tabcolsep}{1.5pt}
\renewcommand{\arraystretch}{1.5}
\begin{tabular}{lcccc}
\toprule
\makecell{Set of rules\\(output attribute)} & \makecell{Average value of the\\rule parameters $s_{\ast}$} & \makecell{Average value of the\\rule parameters $r_{\ast}$} & \makecell{Average number of\\training data samples\\considered reliable} & \makecell{Percent of\\training data samples\\considered reliable} \\
\midrule
$c_{\ast}$ & 0.00e+00 $\pm$ 0.00e+00 & 0.00e+00 $\pm$ 0.00e+00 & 711.90 $\pm$ 137.88 & 56.952 $\pm$ 11.030\%\\
$w_{\ast}$ & 0.00e+00 $\pm$ 0.00e+00 & 0.00e+00 $\pm$ 0.00e+00 & 876.50 $\pm$ 113.76 & 70.120 $\pm$ 9.101\%\\
$y_{\ast}$ & 0.00e+00 $\pm$ 0.00e+00 & 0.00e+00 $\pm$ 0.00e+00 & 794.20 $\pm$ 150.83 & 63.536 $\pm$ 12.066\%\\
\midrule
\multicolumn{5}{l}{\text{Average memory (RAM) of an experiment:} 604.64 $\pm$ 5.06 MB}\\
\multicolumn{5}{l}{\text{Maximum memory (RAM) of an experiment:} 705.89 $\pm$ 7.53 MB}\\
\bottomrule
\end{tabular}
\caption{Results for MNIST Addition with $k=1$ in low data settings ($N_{\text{train}}$ = 1250) using $\Pi$-Nesy-1 over 10 experiments. Averages for the rule parameter values ($s_{\ast}$ and $r_{\ast}$) for the $c_{\ast}$-, $w_{\ast}$-, and $y_{\ast}$-rules are computed from data comprising $10\cdot 20$, $10\cdot 1$, and $10\cdot 11$ observations, respectively. Reliable sample counts for the $c_{\ast}$-, $w_{\ast}$-, and $y_{\ast}$-rules are averaged over $10\cdot 1$, $10\cdot 1$, and $10\cdot 2$ observations, respectively, and memory metrics are averaged over 10 experiments.}
\label{tab:lds:mnist-add-1-pi-nesy-1}
\end{table}

\begin{table}[ht]
\centering
\setlength{\tabcolsep}{1.5pt}
\renewcommand{\arraystretch}{1.5}
\begin{tabular}{lcccc}
\toprule
\makecell{Set of rules\\(output attribute)} & \makecell{Average value of the\\rule parameters $s_{\ast}$} & \makecell{Average value of the\\rule parameters $r_{\ast}$} & \makecell{Average number of\\training data samples\\considered reliable} & \makecell{Percent of\\training data samples\\considered reliable} \\
\midrule
$c_{\ast}$ & 7.10e-08 $\pm$ 5.69e-07 & 0.00e+00 $\pm$ 0.00e+00 & 845.00 $\pm$ 155.58 & 67.600 $\pm$ 12.446\%\\
$w_{\ast}$ & 0.00e+00 $\pm$ 0.00e+00 & 0.00e+00 $\pm$ 0.00e+00 & 1002.00 $\pm$ 125.23 & 80.160 $\pm$ 10.018\%\\
$y_{\ast}$ & 0.00e+00 $\pm$ 0.00e+00 & 0.00e+00 $\pm$ 0.00e+00 & 923.50 $\pm$ 161.57 & 73.880 $\pm$ 12.926\%\\
\midrule
\multicolumn{5}{l}{\text{Average memory (RAM) of an experiment:} 611.66 $\pm$ 4.94 MB}\\
\multicolumn{5}{l}{\text{Maximum memory (RAM) of an experiment:} 717.52 $\pm$ 9.90 MB}\\
\bottomrule
\end{tabular}
\caption{Results for MNIST Addition with $k=1$  in low data settings ($N_{\text{train}}$ = 1250) using $\Pi$-Nesy-2 over 10 experiments. Averages for the rule parameter values ($s_{\ast}$ and $r_{\ast}$) for the $c_{\ast}$-, $w_{\ast}$-, and $y_{\ast}$-rules are computed from data comprising $10\cdot 20$, $10\cdot 1$, and $10\cdot 11$ observations, respectively. Reliable sample counts for the $c_{\ast}$-, $w_{\ast}$-, and $y_{\ast}$-rules are averaged over $10\cdot 1$, $10\cdot 1$, and $10\cdot 2$ observations, respectively, and memory metrics are averaged over 10 experiments.}
\label{tab:lds:mnist-add-1-pi-nesy-2}
\end{table}

\begin{table}[ht]
\centering
\setlength{\tabcolsep}{1.5pt}
\renewcommand{\arraystretch}{1.5}
\begin{tabular}{lcccc}
\toprule
\makecell{Set of rules\\(output attribute)} & \makecell{Average value of the\\rule parameters $s_{\ast}$} & \makecell{Average value of the\\rule parameters $r_{\ast}$} & \makecell{Average number of\\training data samples\\considered reliable} & \makecell{Percent of\\training data samples\\considered reliable} \\
\midrule
$c_{\ast}$ & 1.69e-06 $\pm$ 8.92e-06 & 6.40e-08 $\pm$ 8.30e-07 & 276.40 $\pm$ 99.13 & 44.224 $\pm$ 15.860\%\\
$w_{\ast}$ & 0.00e+00 $\pm$ 0.00e+00 & 0.00e+00 $\pm$ 0.00e+00 & 392.60 $\pm$ 79.82 & 62.816 $\pm$ 12.772\%\\
$y_{\ast}$ & 1.07e-05 $\pm$ 3.64e-05 & 0.00e+00 $\pm$ 0.00e+00 & 311.63 $\pm$ 105.96 & 49.861 $\pm$ 16.953\%\\
\midrule
\multicolumn{5}{l}{\text{Average memory (RAM) of an experiment:} 611.69 $\pm$ 7.21 MB}\\
\multicolumn{5}{l}{\text{Maximum memory (RAM) of an experiment:} 714.01 $\pm$ 16.97 MB}\\
\bottomrule
\end{tabular}
\caption{Results for MNIST Addition with $k=2$  in low data settings ($N_{\text{train}}$ = 625) using $\Pi$-Nesy-1 over 10 experiments. Averages for the rule parameter values ($s_{\ast}$ and $r_{\ast}$) for the $c_{\ast}$-, $w_{\ast}$-, and $y_{\ast}$-rules are computed from data comprising $10\cdot 41$, $10\cdot 2$, and $10\cdot 21$ observations, respectively. Reliable sample counts for the $c_{\ast}$-, $w_{\ast}$-, and $y_{\ast}$-rules are averaged over $10\cdot 2$, $10\cdot 2$, and $10\cdot 3$ observations, respectively, and memory metrics are averaged over 10 experiments.}
\label{tab:lds:mnist-add-2-pi-nesy-1}
\end{table}

\begin{table}[ht]
\centering
\setlength{\tabcolsep}{1.5pt}
\renewcommand{\arraystretch}{1.5}
\begin{tabular}{lcccc}
\toprule
\makecell{Set of rules\\(output attribute)} & \makecell{Average value of the\\rule parameters $s_{\ast}$} & \makecell{Average value of the\\rule parameters $r_{\ast}$} & \makecell{Average number of\\training data samples\\considered reliable} & \makecell{Percent of\\training data samples\\considered reliable} \\
\midrule
$c_{\ast}$ & 2.57e-07 $\pm$ 1.63e-06 & 0.00e+00 $\pm$ 0.00e+00 & 422.95 $\pm$ 64.76 & 67.672 $\pm$ 10.361\%\\
$w_{\ast}$ & 0.00e+00 $\pm$ 0.00e+00 & 0.00e+00 $\pm$ 0.00e+00 & 524.75 $\pm$ 31.43 & 83.960 $\pm$ 5.029\%\\
$y_{\ast}$ & 4.65e-07 $\pm$ 3.36e-06 & 0.00e+00 $\pm$ 0.00e+00 & 454.73 $\pm$ 71.85 & 72.757 $\pm$ 11.496\%\\
\midrule
\multicolumn{5}{l}{\text{Average memory (RAM) of an experiment:} 618.52 $\pm$ 4.75 MB}\\
\multicolumn{5}{l}{\text{Maximum memory (RAM) of an experiment:} 735.42 $\pm$ 8.18 MB}\\
\bottomrule
\end{tabular}
\caption{Results for MNIST Addition with $k=2$  in low data settings ($N_{\text{train}}$ = 625) using $\Pi$-Nesy-2 over 10 experiments. Averages for the rule parameter values ($s_{\ast}$ and $r_{\ast}$) for the $c_{\ast}$-, $w_{\ast}$-, and $y_{\ast}$-rules are computed from data comprising $10\cdot 41$, $10\cdot 2$, and $10\cdot 21$ observations, respectively. Reliable sample counts for the $c_{\ast}$-, $w_{\ast}$-, and $y_{\ast}$-rules are averaged over $10\cdot 2$, $10\cdot 2$, and $10\cdot 3$ observations, respectively, and memory metrics are averaged over 10 experiments.}
\label{tab:lds:mnist-add-2-pi-nesy-2}
\end{table}

\begin{table}[ht]
\centering
\setlength{\tabcolsep}{1.5pt}
\renewcommand{\arraystretch}{1.5}
\begin{tabular}{lcccc}
\toprule
\makecell{Set of rules\\(output attribute)} & \makecell{Average value of the\\rule parameters $s_{\ast}$} & \makecell{Average value of the\\rule parameters $r_{\ast}$} & \makecell{Average number of\\training data samples\\considered reliable} & \makecell{Percent of\\training data samples\\considered reliable} \\
\midrule
$c_{\ast}$ & 3.71e-05 $\pm$ 1.18e-04 & 1.11e-05 $\pm$ 6.29e-05 & 180.32 $\pm$ 45.32 & 57.796 $\pm$ 14.525\%\\
$w_{\ast}$ & 0.00e+00 $\pm$ 0.00e+00 & 0.00e+00 $\pm$ 0.00e+00 & 239.85 $\pm$ 28.04 & 76.875 $\pm$ 8.987\%\\
$y_{\ast}$ & 1.08e-04 $\pm$ 1.67e-04 & 0.00e+00 $\pm$ 0.00e+00 & 192.02 $\pm$ 48.16 & 61.545 $\pm$ 15.435\%\\
\midrule
\multicolumn{5}{l}{\text{Average memory (RAM) of an experiment:} 612.64 $\pm$ 4.18 MB}\\
\multicolumn{5}{l}{\text{Maximum memory (RAM) of an experiment:} 700.34 $\pm$ 9.42 MB}\\
\bottomrule
\end{tabular}
\caption{Results for MNIST Addition with $k=4$  in low data settings ($N_{\text{train}}$ = 312) using $\Pi$-Nesy-1 over 10 experiments. Averages for the rule parameter values ($s_{\ast}$ and $r_{\ast}$) for the $c_{\ast}$-, $w_{\ast}$-, and $y_{\ast}$-rules are computed from data comprising $10\cdot 83$, $10\cdot 4$, and $10\cdot 41$ observations, respectively. Reliable sample counts for the $c_{\ast}$-, $w_{\ast}$-, and $y_{\ast}$-rules are averaged over $10\cdot 4$, $10\cdot 4$, and $10\cdot 5$ observations, respectively, and memory metrics are averaged over 10 experiments.}
\label{tab:lds:mnist-add-4-pi-nesy-1}
\end{table}

\begin{table}[ht]
\centering
\setlength{\tabcolsep}{1.5pt}
\renewcommand{\arraystretch}{1.5}
\begin{tabular}{lcccc}
\toprule
\makecell{Set of rules\\(output attribute)} & \makecell{Average value of the\\rule parameters $s_{\ast}$} & \makecell{Average value of the\\rule parameters $r_{\ast}$} & \makecell{Average number of\\training data samples\\considered reliable} & \makecell{Percent of\\training data samples\\considered reliable} \\
\midrule
$c_{\ast}$ & 2.46e-05 $\pm$ 9.44e-05 & 8.12e-06 $\pm$ 5.34e-05 & 193.45 $\pm$ 45.18 & 62.003 $\pm$ 14.482\%\\
$w_{\ast}$ & 0.00e+00 $\pm$ 0.00e+00 & 0.00e+00 $\pm$ 0.00e+00 & 254.20 $\pm$ 23.90 & 81.474 $\pm$ 7.659\%\\
$y_{\ast}$ & 4.12e-05 $\pm$ 1.16e-04 & 0.00e+00 $\pm$ 0.00e+00 & 205.38 $\pm$ 48.08 & 65.827 $\pm$ 15.411\%\\
\midrule
\multicolumn{5}{l}{\text{Average memory (RAM) of an experiment:} 614.09 $\pm$ 5.47 MB}\\
\multicolumn{5}{l}{\text{Maximum memory (RAM) of an experiment:} 702.24 $\pm$ 7.63 MB}\\
\bottomrule
\end{tabular}
\caption{Results for MNIST Addition with $k=4$  in low data settings ($N_{\text{train}}$ = 312) using $\Pi$-Nesy-2 over 10 experiments. Averages for the rule parameter values ($s_{\ast}$ and $r_{\ast}$) for the $c_{\ast}$-, $w_{\ast}$-, and $y_{\ast}$-rules are computed from data comprising $10\cdot 83$, $10\cdot 4$, and $10\cdot 41$ observations, respectively. Reliable sample counts for the $c_{\ast}$-, $w_{\ast}$-, and $y_{\ast}$-rules are averaged over $10\cdot 4$, $10\cdot 4$, and $10\cdot 5$ observations, respectively, and memory metrics are averaged over 10 experiments.}
\label{tab:lds:mnist-add-4-pi-nesy-2}
\end{table}

\begin{table}[ht]
\centering
\setlength{\tabcolsep}{1.5pt}
\renewcommand{\arraystretch}{1.5}
\begin{tabular}{lcccc}
\toprule
\makecell{Set of rules\\(output attribute)} & \makecell{Average value of the\\rule parameters $s_{\ast}$} & \makecell{Average value of the\\rule parameters $r_{\ast}$} & \makecell{Average number of\\training data samples\\considered reliable} & \makecell{Percent of\\training data samples\\considered reliable} \\
\midrule
$c_{\ast}$ & 1.40e-04 $\pm$ 1.87e-04 & 2.59e-05 $\pm$ 9.17e-05 & 39.13 $\pm$ 16.06 & 47.149 $\pm$ 19.348\%\\
$w_{\ast}$ & 0.00e+00 $\pm$ 0.00e+00 & 0.00e+00 $\pm$ 0.00e+00 & 59.22 $\pm$ 11.93 & 71.349 $\pm$ 14.368\%\\
$y_{\ast}$ & 1.69e-04 $\pm$ 2.16e-04 & 0.00e+00 $\pm$ 0.00e+00 & 40.31 $\pm$ 16.54 & 48.569 $\pm$ 19.932\%\\
\midrule
\multicolumn{5}{l}{\text{Average memory (RAM) of an experiment:} 609.51 $\pm$ 7.60 MB}\\
\multicolumn{5}{l}{\text{Maximum memory (RAM) of an experiment:} 667.62 $\pm$ 13.40 MB}\\
\bottomrule
\end{tabular}
\caption{Results for MNIST Addition with $k=15$  in low data settings ($N_{\text{train}}$ = 83) using $\Pi$-Nesy-1 over 10 experiments. Averages for the rule parameter values ($s_{\ast}$ and $r_{\ast}$) for the $c_{\ast}$-, $w_{\ast}$-, and $y_{\ast}$-rules are computed from data comprising $10\cdot 314$, $10\cdot 15$, and $10\cdot 151$ observations, respectively. Reliable sample counts for the $c_{\ast}$-, $w_{\ast}$-, and $y_{\ast}$-rules are averaged over $10\cdot 15$, $10\cdot 15$, and $10\cdot 16$ observations, respectively, and memory metrics are averaged over 10 experiments.}
\label{tab:lds:mnist-add-15-pi-nesy-1}
\end{table}

\begin{table}[ht]
\centering
\setlength{\tabcolsep}{1.5pt}
\renewcommand{\arraystretch}{1.5}
\begin{tabular}{lcccc}
\toprule
\makecell{Set of rules\\(output attribute)} & \makecell{Average value of the\\rule parameters $s_{\ast}$} & \makecell{Average value of the\\rule parameters $r_{\ast}$} & \makecell{Average number of\\training data samples\\considered reliable} & \makecell{Percent of\\training data samples\\considered reliable} \\
\midrule
$c_{\ast}$ & 1.46e-04 $\pm$ 1.92e-04 & 2.71e-05 $\pm$ 9.41e-05 & 45.18 $\pm$ 14.58 & 54.434 $\pm$ 17.569\%\\
$w_{\ast}$ & 0.00e+00 $\pm$ 0.00e+00 & 0.00e+00 $\pm$ 0.00e+00 & 64.22 $\pm$ 9.83 & 77.373 $\pm$ 11.847\%\\
$y_{\ast}$ & 1.69e-04 $\pm$ 2.27e-04 & 0.00e+00 $\pm$ 0.00e+00 & 46.39 $\pm$ 15.01 & 55.889 $\pm$ 18.079\%\\
\midrule
\multicolumn{5}{l}{\text{Average memory (RAM) of an experiment:} 609.63 $\pm$ 7.05 MB}\\
\multicolumn{5}{l}{\text{Maximum memory (RAM) of an experiment:} 668.93 $\pm$ 11.88 MB}\\
\bottomrule
\end{tabular}
\caption{Results for MNIST Addition with $k=15$  in low data settings ($N_{\text{train}}$ = 83) using $\Pi$-Nesy-2 over 10 experiments. Averages for the rule parameter values ($s_{\ast}$ and $r_{\ast}$) for the $c_{\ast}$-, $w_{\ast}$-, and $y_{\ast}$-rules are computed from data comprising $10\cdot 314$, $10\cdot 15$, and $10\cdot 151$ observations, respectively. Reliable sample counts for the $c_{\ast}$-, $w_{\ast}$-, and $y_{\ast}$-rules are averaged over $10\cdot 15$, $10\cdot 15$, and $10\cdot 16$ observations, respectively, and memory metrics are averaged over 10 experiments.}
\label{tab:lds:mnist-add-15-pi-nesy-2}
\end{table}

\begin{table}[ht]
\centering
\setlength{\tabcolsep}{1.5pt}
\renewcommand{\arraystretch}{1.5}
\begin{tabular}{lcccc}
\toprule
\makecell{Set of rules\\(output attribute)} & \makecell{Average value of the\\rule parameters $s_{\ast}$} & \makecell{Average value of the\\rule parameters $r_{\ast}$} & \makecell{Average number of\\training data samples\\considered reliable} & \makecell{Percent of\\training data samples\\considered reliable} \\
\midrule
$c_{\ast}$ & 9.85e-05 $\pm$ 1.49e-04 & 1.37e-05 $\pm$ 6.56e-05 & 6.50 $\pm$ 2.27 & 54.125 $\pm$ 18.904\%\\
$w_{\ast}$ & 0.00e+00 $\pm$ 0.00e+00 & 0.00e+00 $\pm$ 0.00e+00 & 8.74 $\pm$ 2.04 & 72.867 $\pm$ 16.965\%\\
$y_{\ast}$ & 1.49e-04 $\pm$ 2.12e-04 & 0.00e+00 $\pm$ 0.00e+00 & 6.52 $\pm$ 2.28 & 54.356 $\pm$ 19.014\%\\
\midrule
\multicolumn{5}{l}{\text{Average memory (RAM) of an experiment:} 630.61 $\pm$ 3.79 MB}\\
\multicolumn{5}{l}{\text{Maximum memory (RAM) of an experiment:} 696.87 $\pm$ 3.76 MB}\\
\bottomrule
\end{tabular}
\caption{Results for MNIST Addition with $k=100$  in low data settings ($N_{\text{train}}$ = 12) using $\Pi$-Nesy-1 over 10 experiments. Averages for the rule parameter values ($s_{\ast}$ and $r_{\ast}$) for the $c_{\ast}$-, $w_{\ast}$-, and $y_{\ast}$-rules are computed from data comprising $10\cdot 2099$, $10\cdot 100$, and $10\cdot 1001$ observations, respectively. Reliable sample counts for the $c_{\ast}$-, $w_{\ast}$-, and $y_{\ast}$-rules are averaged over $10\cdot 100$, $10\cdot 100$, and $10\cdot 101$ observations, respectively, and memory metrics are averaged over 10 experiments.}
\label{tab:lds:mnist-add-100-pi-nesy-1}
\end{table}

\begin{table}[ht]
\centering
\setlength{\tabcolsep}{1.5pt}
\renewcommand{\arraystretch}{1.5}
\begin{tabular}{lcccc}
\toprule
\makecell{Set of rules\\(output attribute)} & \makecell{Average value of the\\rule parameters $s_{\ast}$} & \makecell{Average value of the\\rule parameters $r_{\ast}$} & \makecell{Average number of\\training data samples\\considered reliable} & \makecell{Percent of\\training data samples\\considered reliable} \\
\midrule
$c_{\ast}$ & 9.99e-05 $\pm$ 1.50e-04 & 1.23e-05 $\pm$ 6.23e-05 & 6.01 $\pm$ 2.53 & 50.108 $\pm$ 21.073\%\\
$w_{\ast}$ & 0.00e+00 $\pm$ 0.00e+00 & 0.00e+00 $\pm$ 0.00e+00 & 8.55 $\pm$ 1.97 & 71.233 $\pm$ 16.401\%\\
$y_{\ast}$ & 1.54e-04 $\pm$ 2.16e-04 & 0.00e+00 $\pm$ 0.00e+00 & 6.04 $\pm$ 2.54 & 50.330 $\pm$ 21.131\%\\
\midrule
\multicolumn{5}{l}{\text{Average memory (RAM) of an experiment:} 630.06 $\pm$ 3.98 MB}\\
\multicolumn{5}{l}{\text{Maximum memory (RAM) of an experiment:} 696.94 $\pm$ 4.35 MB}\\
\bottomrule
\end{tabular}
\caption{Results for MNIST Addition with $k=100$  in low data settings ($N_{\text{train}}$ = 12) using $\Pi$-Nesy-2 over 10 experiments. Averages for the rule parameter values ($s_{\ast}$ and $r_{\ast}$) for the $c_{\ast}$-, $w_{\ast}$-, and $y_{\ast}$-rules are computed from data comprising $10\cdot 2099$, $10\cdot 100$, and $10\cdot 1001$ observations, respectively. Reliable sample counts for the $c_{\ast}$-, $w_{\ast}$-, and $y_{\ast}$-rules are averaged over $10\cdot 100$, $10\cdot 100$, and $10\cdot 101$ observations, respectively, and memory metrics are averaged over 10 experiments.}
\label{tab:lds:mnist-add-100-pi-nesy-2}
\end{table}

\FloatBarrier
\subsection{MNIST Sudoku problems in low data settings}
\label{subsec:appendix:lds:posslearn:mnistsudoku}

We also analyzed our possibilistic learning method applied to MNIST Sudoku problems in low data settings, yielding the results presented in Table \ref{tab:mnist_sudokuresults_low_data_settings}.

Table~\ref{tab:lds:mnist-sudoku-thresholds} presents the average values of thresholds for both $\Pi$-NeSy-1 and $\Pi$-NeSy-2 approaches, computed over ten experiments in low data settings. These average values are very small. The threshold values of $\Pi$-NeSy-2 are sometimes slightly higher than those of $\Pi$-NeSy-1.

\begin{table}[ht!]
\centering
\begin{tabular}{@{}llll@{}}
\toprule
Dimension & Approach & Average threshold for $b_{(i,j,i',j')}$ & Average threshold for $c$  \\ \midrule
4x4 & $\Pi$-NeSy-1 & 5.561e-03 $\pm$ 2.224e-03 & 1.780e-03 $\pm$ 1.335e-03 \\
4x4 & $\Pi$-NeSy-2 & 1.034e-01 $\pm$ 2.992e-01 & 1.112e-03 $\pm$ 2.168e-19 \\
9x9 & $\Pi$-NeSy-1 & 1.446e-03 $\pm$ 1.001e-03 & 1.112e-03 $\pm$ 2.168e-19 \\
9x9 & $\Pi$-NeSy-2 & 3.337e-03 $\pm$ 2.724e-03 & 1.112e-03 $\pm$ 2.168e-19 \\
\bottomrule
\end{tabular}
\caption{Average thresholds for MNIST-Sudoku problems  in low data settings used for selecting reliable training data during possibilistic learning. In the 4x4 configuration (resp. 9x9 configuration), averages for $b_{(i,j,i',j')}$ are computed over 10 observations and for $c$ over 10 observations for both approaches.}
\label{tab:lds:mnist-sudoku-thresholds}\end{table}

As expected, using the specified configuration (detailed in subsection \ref{subsub:mnistsudokuoresults}), $\Pi$-NeSy's inference and learning times in low data settings are low, see Table \ref{tab:lds:times_mnistSudoku}. Learning times are significantly shorter than in normal settings (see Table \ref{tab:times_mnistSudoku} for inference and learning times obtained under normal conditions). Inference times are lower than in normal settings, which can be explained by the fact that the program uses less memory.

\begin{table}[ht]
\centering

\begin{tabular}{@{}cccccc@{}}
\toprule
Size & Training set size & Test set size & Approach & Learning time & Inference time per test sample \\
\midrule
4x4 & 10 & 100 & $\Pi$-NeSy-1 & 21.73 $\pm$ 0.11 & 0.01 $\pm$ 0.00 \\
4x4 & 10 & 100 & $\Pi$-NeSy-2 & 21.51 $\pm$ 0.32 & 0.01 $\pm$ 0.00 \\
9x9 & 10 & 100 & $\Pi$-NeSy-1 & 657.73 $\pm$ 3.50 & 0.48 $\pm$ 0.00 \\
9x9 & 10 & 100 & $\Pi$-NeSy-2 & 633.75 $\pm$ 3.92 & 0.40 $\pm$ 0.00 \\
\bottomrule
\end{tabular}

\caption{Each row corresponds to a MNIST Sudoku problem processed by $\Pi$-NeSy-1 or $\Pi$-NeSy-2 in low data settings. Each row lists the corresponding learning time in seconds using training data (averaged over 10 runs) and the average inference time  for a test sample in seconds (averaged over $10\cdot 100$ observations) from these experiments. }
\label{tab:lds:times_mnistSudoku}
\end{table}

In Tables~\ref{tab:lds:mnist-sudoku-4-pi-nesy-1}--\ref{tab:lds:mnist-sudoku-9-pi-nesy-2}, we present the average values  of the rule parameter values $s$ and $r$ for the output attributes $b_{(i,j,i',j')}$ and $c$ obtained with our possibilistic learning method, the absolute number and percentage of reliable training samples used, as well as the average and maximum memory usage for each problem. As in normal settings, the parameters $s$ and $r$ associated with the rules have values very close to or equal to zero, with low variability. The percentage of  training data samples considered reliable with respect to $N_{\text{train}} = 10$ is very high for for the output attributes $b_{(i,j,i',j')}$ and medium for $c$. 

For MNIST Sudoku 4x4, memory usage averages 650 MB  for both approaches. For MNIST Sudoku 9x9, memory usage averages 1250 MB, which is significantly lower than in the normal settings, where we needed around 6.7 GB.

\begin{table}[ht]
\centering
\setlength{\tabcolsep}{1.5pt}
\renewcommand{\arraystretch}{1.5}
\begin{tabular}{lcccc}
\toprule
\makecell{Set of rules\\(output attribute)} & \makecell{Average value of the\\rule parameters $s$} & \makecell{Average value of the\\rule parameters $r$} & \makecell{Average number of\\training data samples\\considered reliable} & \makecell{Percent of\\training data samples\\considered reliable} \\
\midrule
$b_{(i,j,i',j')}$ & 1.05e-05 $\pm$ 6.28e-05 & 4.08e-06 $\pm$ 4.06e-05 & 8.0 $\pm$ 2.1 & 80.25 $\pm$ 21.18\% \\
c & 0.00e+00 $\pm$ 0.00e+00 & 0.00e+00 $\pm$ 0.00e+00 & 2.8 $\pm$ 1.5 & 28.00 $\pm$ 14.76\% \\
\midrule
\multicolumn{5}{l}{\text{Average memory (RAM) of an experiment:} 657.5 $\pm$ 1.9 MB}\\
\multicolumn{5}{l}{\text{Maximum memory (RAM) of an experiment:} 678.3 $\pm$ 2.4 MB}\\
\bottomrule
\end{tabular}
\caption{Possibilistic learning results on the MNIST Sudoku 4x4 problem using $\Pi$-NeSy-1  in low data settings. Results are averaged over ten experiments. The respective averages for the rule parameters corresponding to $b_{(i,j,i',j')}$ are computed from $10\cdot 448$ observations, while those for the attribute $c$ are computed from $10\cdot 1$ observations. Reliable training sample counts for $b_{(i,j,i',j')}$ and $c$ are computed from $10\cdot 56$ and $10\cdot 1$ observations, respectively.}
\label{tab:lds:mnist-sudoku-4-pi-nesy-1}
\end{table}
\begin{table}[ht]
\centering
\setlength{\tabcolsep}{1.5pt}
\renewcommand{\arraystretch}{1.5}
\begin{tabular}{lcccc}
\toprule
\makecell{Set of rules\\(output attribute)} & \makecell{Average value of the\\rule parameters $s$} & \makecell{Average value of the\\rule parameters $r$} & \makecell{Average number of\\training data samples\\considered reliable} & \makecell{Percent of\\training data samples\\considered reliable} \\
\midrule
$b_{(i,j,i',j')}$ & 3.02e-05 $\pm$ 1.05e-04 & 7.46e-06 $\pm$ 5.45e-05 & 8.1 $\pm$ 2.5 & 81.41 $\pm$ 24.63\% \\
c & 0.00e+00 $\pm$ 0.00e+00 & 0.00e+00 $\pm$ 0.00e+00 & 4.6 $\pm$ 0.5 & 46.00 $\pm$ 5.16\% \\
\midrule
\multicolumn{5}{l}{\text{Average memory (RAM) of an experiment:} 656.0 $\pm$ 2.3 MB}\\
\multicolumn{5}{l}{\text{Maximum memory (RAM) of an experiment:} 676.4 $\pm$ 2.4 MB}\\
\bottomrule
\end{tabular}
\caption{Possibilistic learning results on the MNIST Sudoku 4x4 problem using $\Pi$-NeSy-2  in low data settings. Results are averaged over ten experiments. The respective averages for the rule parameters corresponding to $b_{(i,j,i',j')}$ are computed from $10\cdot 448$ observations, while those for the attribute $c$ are computed from $10\cdot 1$ observations. Reliable training sample counts for $b_{(i,j,i',j')}$ and $c$ are computed from $10\cdot 56$ and $10\cdot 1$ observations, respectively.}
\label{tab:lds:mnist-sudoku-4-pi-nesy-2}
\end{table}
\begin{table}[ht]
\centering
\setlength{\tabcolsep}{1.5pt}
\renewcommand{\arraystretch}{1.5}
\begin{tabular}{lcccc}
\toprule
\makecell{Set of rules\\(output attribute)} & \makecell{Average value of the\\rule parameters $s$} & \makecell{Average value of the\\rule parameters $r$} & \makecell{Average number of\\training data samples\\considered reliable} & \makecell{Percent of\\training data samples\\considered reliable} \\
\midrule
$b_{(i,j,i',j')}$ & 1.32e-04 $\pm$ 1.59e-04 & 1.49e-05 $\pm$ 6.75e-05 & 8.4 $\pm$ 1.8 & 84.39 $\pm$ 18.04\% \\
c & 0.00e+00 $\pm$ 0.00e+00 & 0.00e+00 $\pm$ 0.00e+00 & 5.1 $\pm$ 0.3 & 51.00 $\pm$ 3.16\% \\
\midrule
\multicolumn{5}{l}{\text{Average memory (RAM) of an experiment:} 1254.5 $\pm$ 16.2 MB}\\
\multicolumn{5}{l}{\text{Maximum memory (RAM) of an experiment:} 1336.5 $\pm$ 16.3 MB}\\
\bottomrule
\end{tabular}
\caption{Possibilistic learning results on the MNIST Sudoku 9x9 problem using $\Pi$-NeSy-1  in low data settings. Results are averaged over ten experiments. The respective averages for the rule parameters corresponding to $b_{(i,j,i',j')}$ are computed from $10\cdot 14580$ observations, while those for the attribute $c$ are computed from $10\cdot 1$ observations. Reliable training sample counts for $b_{(i,j,i',j')}$ and $c$ are computed from $10\cdot 810$ and $10\cdot 1$ observations, respectively.}
\label{tab:lds:mnist-sudoku-9-pi-nesy-1}
\end{table}
\begin{table}[ht]
\centering
\setlength{\tabcolsep}{1.5pt}
\renewcommand{\arraystretch}{1.5}
\begin{tabular}{lcccc}
\toprule
\makecell{Set of rules\\(output attribute)} & \makecell{Average value of the\\rule parameters $s$} & \makecell{Average value of the\\rule parameters $r$} & \makecell{Average number of\\training data samples\\considered reliable} & \makecell{Percent of\\training data samples\\considered reliable} \\
\midrule
$b_{(i,j,i',j')}$ & 1.14e-04 $\pm$ 1.57e-04 & 1.10e-05 $\pm$ 5.95e-05 & 8.6 $\pm$ 1.7 & 86.46 $\pm$ 17.34\% \\
c & 0.00e+00 $\pm$ 0.00e+00 & 0.00e+00 $\pm$ 0.00e+00 & 5.1 $\pm$ 1.2 & 51.00 $\pm$ 11.97\% \\
\midrule
\multicolumn{5}{l}{\text{Average memory (RAM) of an experiment:} 1247.6 $\pm$ 12.1 MB}\\
\multicolumn{5}{l}{\text{Maximum memory (RAM) of an experiment:} 1332.9 $\pm$ 12.1 MB}\\
\bottomrule
\end{tabular}
\caption{Possibilistic learning results on the MNIST Sudoku 9x9 problem using $\Pi$-NeSy-2  in low data settings. Results are averaged over ten experiments. The respective averages for the rule parameters corresponding to $b_{(i,j,i',j')}$ are computed from $10\cdot 14580$ observations, while those for the attribute $c$ are computed from $10\cdot 1$ observations. Reliable training sample counts for $b_{(i,j,i',j')}$ and $c$ are computed from $10\cdot 810$ and $10\cdot 1$ observations, respectively.}
\label{tab:lds:mnist-sudoku-9-pi-nesy-2}
\end{table}

\FloatBarrier
\section{Additional experimental results for MNIST Addition-\texorpdfstring{$k$}{k} problems using \texorpdfstring{$\Pi$}{Pi}-NeSy with  DeepSoftLog's CNN}
\label{sec:appendix:posslearn:withDSL}

To complete our empirical study, we also analyzed the characteristics and performance of the possibilistic learning method (Method \ref{meth:learningincascade}) applied to the MNIST Addition-$k$ problems with $k \in \{1,2,4,15,100\}$ when $\Pi$-NeSy takes advantage of  DeepSoftLog's CNN, yielding to the results presented in Table \ref{tab:deepsoftlog_cnn_mnist_add_results}. We reuse the notations given in Section \ref{sec:appendix:expadd}.

For MNIST Addition-$k$ problems with $k \in \{1, 2, 4, 15, 100\}$, both $\Pi$-NeSy-1 and $\Pi$-NeSy-2 consistently utilize the same threshold value of $4.12\times10^{-8}$ for Thresholds \#1, \#2, and \#3, as in the normal settings (see Subsection \ref{subsec:appendix:posslearn:mnistadd}). This threshold value is the smallest in the set $\mathcal{T}$ of candidate thresholds, see (\ref{eq:setTvalidationstep}).

\begin{table}[ht]
\centering
\begin{tabular}{@{}cccccc@{}}
\toprule
k & Training set size & Test set size & Approach & Learning time & Inference time per test sample \\
\midrule
1 & 30000 & 5000 & $\Pi$-NeSy-1 & 5121.64 $\pm$ 23.21 & 0.002 $\pm$ 0.00 \\
1 & 30000 & 5000 & $\Pi$-NeSy-2 & 4914.52 $\pm$ 313.03 & 0.002 $\pm$ 0.00 \\
2 & 15000 & 2500 & $\Pi$-NeSy-1 & 5266.28 $\pm$ 35.31 & 0.004 $\pm$ 0.00 \\
2 & 15000 & 2500 & $\Pi$-NeSy-2 & 5272.54 $\pm$ 86.30 & 0.004 $\pm$ 0.00 \\
4 & 7500 & 1250 & $\Pi$-NeSy-1 & 3779.06 $\pm$ 13.97 & 0.007 $\pm$ 0.00 \\
4 & 7500 & 1250 & $\Pi$-NeSy-2 & 3783.56 $\pm$ 41.99 & 0.007 $\pm$ 0.00 \\
15 & 2000 & 333 & $\Pi$-NeSy-1 & 2051.94 $\pm$ 19.01 & 0.027 $\pm$ 0.00 \\
15 & 2000 & 333 & $\Pi$-NeSy-2 & 2058.92 $\pm$ 9.31 & 0.027 $\pm$ 0.00 \\
100 & 300 & 50 & $\Pi$-NeSy-1 & 1418.69 $\pm$ 28.05 & 0.181 $\pm$ 0.01 \\
100 & 300 & 50 & $\Pi$-NeSy-2 & 1409.68 $\pm$ 11.33 & 0.177 $\pm$ 0.00 \\
\bottomrule
\end{tabular}

\caption{Each row in the table represents a MNIST Addition-$k$ problem addressed using either $\Pi$-NeSy-1 or $\Pi$-NeSy-2 with DeepSoftLog's CNN.
Each row includes the average learning time in seconds (averaged over 10 runs) and the average inference time per test sample in seconds (averaged over $10 \cdot N_{\text{test}}$ test samples) from these experiments. }
\label{tab:dsl:times_mnistadd}
\end{table}

Empirically, using the specified configuration (given  in Subsection \ref{subsec:mnistaddotherresults}), the inference and learning times of  $\Pi$-NeSy with DeepSoftLog's CNN are reasonable, see Table \ref{tab:dsl:times_mnistadd}, but learning times are  much more important than in normal settings (see Table \ref{tab:times_mnistadd} for inference and learning times obtained under normal conditions). This is explained by the results presented in Tables~\ref{tab:dsl:mnist-add-1-pi-nesy-1}--\ref{tab:dsl:mnist-add-100-pi-nesy-2}, which report the average values of the rule parameters obtained by possibilistic learning, the absolute number and percentage of reliable training data samples used, and the memory usage  obtained using $\Pi$-NeSy with DeepSoftLog's CNN when addressing MNIST Addition-$k$ problems.  The parameters $s$ and $r$ associated with the rules maintain average values close to zero with very low variability. We observe that a very large number of training data samples are considered reliable (approx. 99.8\%), resulting in much higher memory requirements (approx. 2.8 GB for k = 1 to 1150 MB for k = 100) than under normal conditions (approx. 1350 MB for k = 1 to 920 MB when k = 100). This observation is partly due to the fact that, as the CNN has been trained with a very large number of epochs (130) and a very good  accuracy has been reached, the quality of the probability distributions produced by the CNN on MNIST training images is very good, in the sense that they are very close to one-point distributions, where, for each  distribution, the probability of one is associated with the correct number to be predicted.

    \begin{table}[ht]
    \centering
    \setlength{\tabcolsep}{1.5pt}
    \renewcommand{\arraystretch}{1.5}
    \begin{tabular}{lcccc}
    \toprule
    \makecell{Set of rules\\(output attribute)} & \makecell{Average value of the\\rule parameters $s_{\ast}$} & \makecell{Average value of the\\rule parameters $r_{\ast}$} & \makecell{Average number of\\training data samples\\considered reliable} & \makecell{Percent of\\training data samples\\considered reliable} \\
    \midrule
    $c_{\ast}$ & 1.91e-08 $\pm$ 4.51e-09 & 1.43e-08 $\pm$ 6.31e-09 & 29895.00 $\pm$ 0.00 & 99.650 $\pm$ 0.000\%\\
    $w_{\ast}$ & 0.00e+00 $\pm$ 0.00e+00 & 0.00e+00 $\pm$ 0.00e+00 & 29924.70 $\pm$ 4.52 & 99.749 $\pm$ 0.015\%\\
    $y_{\ast}$ & 2.12e-08 $\pm$ 1.29e-08 & 0.00e+00 $\pm$ 0.00e+00 & 29909.85 $\pm$ 15.19 & 99.700 $\pm$ 0.051\%\\
    \midrule
    \multicolumn{5}{l}{\text{Average memory (RAM) of an experiment:} 2782.00 $\pm$ 10.06 MB}\\
    \multicolumn{5}{l}{\text{Maximum memory (RAM) of an experiment:} 3245.57 $\pm$ 22.42 MB}\\
    \bottomrule
    \end{tabular}
    \caption{Results for MNIST Addition with $k=1$ ($N_{\text{train}}$ = 30000) using $\Pi$-Nesy-1 with DeepSoftLog's CNN over 10 experiments. Averages for the rule parameter values ($s_{\ast}$ and $r_{\ast}$) for the $c_{\ast}$-, $w_{\ast}$-, and $y_{\ast}$-rules are computed from data comprising $10\cdot 20$, $10\cdot 1$, and $10\cdot 11$ observations, respectively. Reliable sample counts for the $c_{\ast}$-, $w_{\ast}$-, and $y_{\ast}$-rules are averaged over $10\cdot 1$, $10\cdot 1$, and $10\cdot 2$ observations, respectively, and memory metrics are averaged over 10 experiments.}
    \label{tab:dsl:mnist-add-1-pi-nesy-1}
    \end{table}

    \begin{table}[ht]
    \centering
    \setlength{\tabcolsep}{1.5pt}
    \renewcommand{\arraystretch}{1.5}
    \begin{tabular}{lcccc}
    \toprule
    \makecell{Set of rules\\(output attribute)} & \makecell{Average value of the\\rule parameters $s_{\ast}$} & \makecell{Average value of the\\rule parameters $r_{\ast}$} & \makecell{Average number of\\training data samples\\considered reliable} & \makecell{Percent of\\training data samples\\considered reliable} \\
    \midrule
    $c_{\ast}$ & 1.92e-08 $\pm$ 4.62e-09 & 1.07e-08 $\pm$ 6.98e-09 & 29972.00 $\pm$ 0.00 & 99.907 $\pm$ 0.000\%\\
    $w_{\ast}$ & 0.00e+00 $\pm$ 0.00e+00 & 0.00e+00 $\pm$ 0.00e+00 & 29981.20 $\pm$ 2.23 & 99.937 $\pm$ 0.007\%\\
    $y_{\ast}$ & 1.82e-08 $\pm$ 1.28e-08 & 0.00e+00 $\pm$ 0.00e+00 & 29976.60 $\pm$ 4.86 & 99.922 $\pm$ 0.016\%\\
    \midrule
    \multicolumn{5}{l}{\text{Average memory (RAM) of an experiment:} 2793.14 $\pm$ 7.91 MB}\\
    \multicolumn{5}{l}{\text{Maximum memory (RAM) of an experiment:} 3265.04 $\pm$ 11.51 MB}\\
    \bottomrule
    \end{tabular}
    \caption{Results for MNIST Addition with $k=1$ ($N_{\text{train}}$ = 30000) using $\Pi$-Nesy-2 with DeepSoftLog's CNN over 10 experiments. Averages for the rule parameter values ($s_{\ast}$ and $r_{\ast}$) for the $c_{\ast}$-, $w_{\ast}$-, and $y_{\ast}$-rules are computed from data comprising $10\cdot 20$, $10\cdot 1$, and $10\cdot 11$ observations, respectively. Reliable sample counts for the $c_{\ast}$-, $w_{\ast}$-, and $y_{\ast}$-rules are averaged over $10\cdot 1$, $10\cdot 1$, and $10\cdot 2$ observations, respectively, and memory metrics are averaged over 10 experiments.}
    \label{tab:dsl:mnist-add-1-pi-nesy-2}
    \end{table}

    \begin{table}[ht]
    \centering
    \setlength{\tabcolsep}{1.5pt}
    \renewcommand{\arraystretch}{1.5}
    \begin{tabular}{lcccc}
    \toprule
    \makecell{Set of rules\\(output attribute)} & \makecell{Average value of the\\rule parameters $s_{\ast}$} & \makecell{Average value of the\\rule parameters $r_{\ast}$} & \makecell{Average number of\\training data samples\\considered reliable} & \makecell{Percent of\\training data samples\\considered reliable} \\
    \midrule
    $c_{\ast}$ & 1.80e-08 $\pm$ 5.54e-09 & 1.12e-08 $\pm$ 7.03e-09 & 14929.90 $\pm$ 17.68 & 99.533 $\pm$ 0.118\%\\
    $w_{\ast}$ & 0.00e+00 $\pm$ 0.00e+00 & 0.00e+00 $\pm$ 0.00e+00 & 14959.35 $\pm$ 6.43 & 99.729 $\pm$ 0.043\%\\
    $y_{\ast}$ & 2.02e-08 $\pm$ 1.22e-08 & 0.00e+00 $\pm$ 0.00e+00 & 14938.00 $\pm$ 18.52 & 99.587 $\pm$ 0.123\%\\
    \midrule
    \multicolumn{5}{l}{\text{Average memory (RAM) of an experiment:} 2876.65 $\pm$ 14.56 MB}\\
    \multicolumn{5}{l}{\text{Maximum memory (RAM) of an experiment:} 3411.16 $\pm$ 22.02 MB}\\
    \bottomrule
    \end{tabular}
    \caption{Results for MNIST Addition with $k=2$ ($N_{\text{train}}$ = 15000) using $\Pi$-Nesy-1 with DeepSoftLog's CNN over 10 experiments. Averages for the rule parameter values ($s_{\ast}$ and $r_{\ast}$) for the $c_{\ast}$-, $w_{\ast}$-, and $y_{\ast}$-rules are computed from data comprising $10\cdot 41$, $10\cdot 2$, and $10\cdot 21$ observations, respectively. Reliable sample counts for the $c_{\ast}$-, $w_{\ast}$-, and $y_{\ast}$-rules are averaged over $10\cdot 2$, $10\cdot 2$, and $10\cdot 3$ observations, respectively, and memory metrics are averaged over 10 experiments.}
    \label{tab:dsl:mnist-add-2-pi-nesy-1}
    \end{table}

    \begin{table}[ht]
    \centering
    \setlength{\tabcolsep}{1.5pt}
    \renewcommand{\arraystretch}{1.5}
    \begin{tabular}{lcccc}
    \toprule
    \makecell{Set of rules\\(output attribute)} & \makecell{Average value of the\\rule parameters $s_{\ast}$} & \makecell{Average value of the\\rule parameters $r_{\ast}$} & \makecell{Average number of\\training data samples\\considered reliable} & \makecell{Percent of\\training data samples\\considered reliable} \\
    \midrule
    $c_{\ast}$ & 1.78e-08 $\pm$ 5.82e-09 & 7.38e-09 $\pm$ 7.19e-09 & 14981.50 $\pm$ 4.59 & 99.877 $\pm$ 0.031\%\\
    $w_{\ast}$ & 0.00e+00 $\pm$ 0.00e+00 & 0.00e+00 $\pm$ 0.00e+00 & 14990.00 $\pm$ 2.86 & 99.933 $\pm$ 0.019\%\\
    $y_{\ast}$ & 1.87e-08 $\pm$ 1.15e-08 & 0.00e+00 $\pm$ 0.00e+00 & 14984.00 $\pm$ 5.34 & 99.893 $\pm$ 0.036\%\\
    \midrule
    \multicolumn{5}{l}{\text{Average memory (RAM) of an experiment:} 2887.93 $\pm$ 12.65 MB}\\
    \multicolumn{5}{l}{\text{Maximum memory (RAM) of an experiment:} 3428.11 $\pm$ 18.24 MB}\\
    \bottomrule
    \end{tabular}
    \caption{Results for MNIST Addition with $k=2$ ($N_{\text{train}}$ = 15000) using $\Pi$-Nesy-2 with DeepSoftLog's CNN over 10 experiments. Averages for the rule parameter values ($s_{\ast}$ and $r_{\ast}$) for the $c_{\ast}$-, $w_{\ast}$-, and $y_{\ast}$-rules are computed from data comprising $10\cdot 41$, $10\cdot 2$, and $10\cdot 21$ observations, respectively. Reliable sample counts for the $c_{\ast}$-, $w_{\ast}$-, and $y_{\ast}$-rules are averaged over $10\cdot 2$, $10\cdot 2$, and $10\cdot 3$ observations, respectively, and memory metrics are averaged over 10 experiments.}
    \label{tab:dsl:mnist-add-2-pi-nesy-2}
    \end{table}

    \begin{table}[ht]
    \centering
    \setlength{\tabcolsep}{1.5pt}
    \renewcommand{\arraystretch}{1.5}
    \begin{tabular}{lcccc}
    \toprule
    \makecell{Set of rules\\(output attribute)} & \makecell{Average value of the\\rule parameters $s_{\ast}$} & \makecell{Average value of the\\rule parameters $r_{\ast}$} & \makecell{Average number of\\training data samples\\considered reliable} & \makecell{Percent of\\training data samples\\considered reliable} \\
    \midrule
    $c_{\ast}$ & 1.69e-08 $\pm$ 5.97e-09 & 8.13e-09 $\pm$ 7.37e-09 & 7457.68 $\pm$ 10.33 & 99.436 $\pm$ 0.138\%\\
    $w_{\ast}$ & 0.00e+00 $\pm$ 0.00e+00 & 0.00e+00 $\pm$ 0.00e+00 & 7478.05 $\pm$ 4.22 & 99.707 $\pm$ 0.056\%\\
    $y_{\ast}$ & 2.02e-08 $\pm$ 1.12e-08 & 0.00e+00 $\pm$ 0.00e+00 & 7461.50 $\pm$ 12.12 & 99.487 $\pm$ 0.162\%\\
    \midrule
    \multicolumn{5}{l}{\text{Average memory (RAM) of an experiment:} 2216.00 $\pm$ 9.02 MB}\\
    \multicolumn{5}{l}{\text{Maximum memory (RAM) of an experiment:} 2636.40 $\pm$ 11.77 MB}\\
    \bottomrule
    \end{tabular}
    \caption{Results for MNIST Addition with $k=4$ ($N_{\text{train}}$ = 7500) using $\Pi$-Nesy-1 with DeepSoftLog's CNN over 10 experiments. Averages for the rule parameter values ($s_{\ast}$ and $r_{\ast}$) for the $c_{\ast}$-, $w_{\ast}$-, and $y_{\ast}$-rules are computed from data comprising $10\cdot 83$, $10\cdot 4$, and $10\cdot 41$ observations, respectively. Reliable sample counts for the $c_{\ast}$-, $w_{\ast}$-, and $y_{\ast}$-rules are averaged over $10\cdot 4$, $10\cdot 4$, and $10\cdot 5$ observations, respectively, and memory metrics are averaged over 10 experiments.}
    \label{tab:dsl:mnist-add-4-pi-nesy-1}
    \end{table}

    \begin{table}[ht]
    \centering
    \setlength{\tabcolsep}{1.5pt}
    \renewcommand{\arraystretch}{1.5}
    \begin{tabular}{lcccc}
    \toprule
    \makecell{Set of rules\\(output attribute)} & \makecell{Average value of the\\rule parameters $s_{\ast}$} & \makecell{Average value of the\\rule parameters $r_{\ast}$} & \makecell{Average number of\\training data samples\\considered reliable} & \makecell{Percent of\\training data samples\\considered reliable} \\
    \midrule
    $c_{\ast}$ & 1.59e-08 $\pm$ 6.66e-09 & 5.11e-09 $\pm$ 6.73e-09 & 7489.15 $\pm$ 3.45 & 99.855 $\pm$ 0.046\%\\
    $w_{\ast}$ & 0.00e+00 $\pm$ 0.00e+00 & 0.00e+00 $\pm$ 0.00e+00 & 7494.70 $\pm$ 1.99 & 99.929 $\pm$ 0.027\%\\
    $y_{\ast}$ & 1.61e-08 $\pm$ 1.05e-08 & 0.00e+00 $\pm$ 0.00e+00 & 7490.16 $\pm$ 3.75 & 99.869 $\pm$ 0.050\%\\
    \midrule
    \multicolumn{5}{l}{\text{Average memory (RAM) of an experiment:} 2225.07 $\pm$ 11.33 MB}\\
    \multicolumn{5}{l}{\text{Maximum memory (RAM) of an experiment:} 2647.62 $\pm$ 18.61 MB}\\
    \bottomrule
    \end{tabular}
    \caption{Results for MNIST Addition with $k=4$ ($N_{\text{train}}$ = 7500) using $\Pi$-Nesy-2 with DeepSoftLog's CNN over 10 experiments. Averages for the rule parameter values ($s_{\ast}$ and $r_{\ast}$) for the $c_{\ast}$-, $w_{\ast}$-, and $y_{\ast}$-rules are computed from data comprising $10\cdot 83$, $10\cdot 4$, and $10\cdot 41$ observations, respectively. Reliable sample counts for the $c_{\ast}$-, $w_{\ast}$-, and $y_{\ast}$-rules are averaged over $10\cdot 4$, $10\cdot 4$, and $10\cdot 5$ observations, respectively, and memory metrics are averaged over 10 experiments.}
    \label{tab:dsl:mnist-add-4-pi-nesy-2}
    \end{table}

    \begin{table}[ht]
    \centering
    \setlength{\tabcolsep}{1.5pt}
    \renewcommand{\arraystretch}{1.5}
    \begin{tabular}{lcccc}
    \toprule
    \makecell{Set of rules\\(output attribute)} & \makecell{Average value of the\\rule parameters $s_{\ast}$} & \makecell{Average value of the\\rule parameters $r_{\ast}$} & \makecell{Average number of\\training data samples\\considered reliable} & \makecell{Percent of\\training data samples\\considered reliable} \\
    \midrule
    $c_{\ast}$ & 1.36e-08 $\pm$ 6.92e-09 & 3.60e-09 $\pm$ 6.17e-09 & 1987.43 $\pm$ 3.42 & 99.371 $\pm$ 0.171\%\\
    $w_{\ast}$ & 0.00e+00 $\pm$ 0.00e+00 & 0.00e+00 $\pm$ 0.00e+00 & 1993.90 $\pm$ 2.24 & 99.695 $\pm$ 0.112\%\\
    $y_{\ast}$ & 1.64e-08 $\pm$ 9.79e-09 & 0.00e+00 $\pm$ 0.00e+00 & 1987.78 $\pm$ 3.60 & 99.389 $\pm$ 0.180\%\\
    \midrule
    \multicolumn{5}{l}{\text{Average memory (RAM) of an experiment:} 1430.21 $\pm$ 53.85 MB}\\
    \multicolumn{5}{l}{\text{Maximum memory (RAM) of an experiment:} 1724.20 $\pm$ 63.85 MB}\\
    \bottomrule
    \end{tabular}
    \caption{Results for MNIST Addition with $k=15$ ($N_{\text{train}}$ = 2000) using $\Pi$-Nesy-1 with DeepSoftLog's CNN over 10 experiments. Averages for the rule parameter values ($s_{\ast}$ and $r_{\ast}$) for the $c_{\ast}$-, $w_{\ast}$-, and $y_{\ast}$-rules are computed from data comprising $10\cdot 314$, $10\cdot 15$, and $10\cdot 151$ observations, respectively. Reliable sample counts for the $c_{\ast}$-, $w_{\ast}$-, and $y_{\ast}$-rules are averaged over $10\cdot 15$, $10\cdot 15$, and $10\cdot 16$ observations, respectively, and memory metrics are averaged over 10 experiments.}
    \label{tab:dsl:mnist-add-15-pi-nesy-1}
    \end{table}

    \begin{table}[ht]
    \centering
    \setlength{\tabcolsep}{1.5pt}
    \renewcommand{\arraystretch}{1.5}
    \begin{tabular}{lcccc}
    \toprule
    \makecell{Set of rules\\(output attribute)} & \makecell{Average value of the\\rule parameters $s_{\ast}$} & \makecell{Average value of the\\rule parameters $r_{\ast}$} & \makecell{Average number of\\training data samples\\considered reliable} & \makecell{Percent of\\training data samples\\considered reliable} \\
    \midrule
    $c_{\ast}$ & 1.21e-08 $\pm$ 7.38e-09 & 2.32e-09 $\pm$ 5.15e-09 & 1996.77 $\pm$ 1.83 & 99.838 $\pm$ 0.091\%\\
    $w_{\ast}$ & 0.00e+00 $\pm$ 0.00e+00 & 0.00e+00 $\pm$ 0.00e+00 & 1998.53 $\pm$ 1.20 & 99.927 $\pm$ 0.060\%\\
    $y_{\ast}$ & 1.35e-08 $\pm$ 8.84e-09 & 0.00e+00 $\pm$ 0.00e+00 & 1996.88 $\pm$ 1.84 & 99.844 $\pm$ 0.092\%\\
    \midrule
    \multicolumn{5}{l}{\text{Average memory (RAM) of an experiment:} 1444.54 $\pm$ 73.58 MB}\\
    \multicolumn{5}{l}{\text{Maximum memory (RAM) of an experiment:} 1736.21 $\pm$ 91.18 MB}\\
    \bottomrule
    \end{tabular}
    \caption{Results for MNIST Addition with $k=15$ ($N_{\text{train}}$ = 2000) using $\Pi$-Nesy-2 with DeepSoftLog's CNN over 10 experiments. Averages for the rule parameter values ($s_{\ast}$ and $r_{\ast}$) for the $c_{\ast}$-, $w_{\ast}$-, and $y_{\ast}$-rules are computed from data comprising $10\cdot 314$, $10\cdot 15$, and $10\cdot 151$ observations, respectively. Reliable sample counts for the $c_{\ast}$-, $w_{\ast}$-, and $y_{\ast}$-rules are averaged over $10\cdot 15$, $10\cdot 15$, and $10\cdot 16$ observations, respectively, and memory metrics are averaged over 10 experiments.}
    \label{tab:dsl:mnist-add-15-pi-nesy-2}
    \end{table}

    \begin{table}[ht]
    \centering
    \setlength{\tabcolsep}{1.5pt}
    \renewcommand{\arraystretch}{1.5}
    \begin{tabular}{lcccc}
    \toprule
    \makecell{Set of rules\\(output attribute)} & \makecell{Average value of the\\rule parameters $s_{\ast}$} & \makecell{Average value of the\\rule parameters $r_{\ast}$} & \makecell{Average number of\\training data samples\\considered reliable} & \makecell{Percent of\\training data samples\\considered reliable} \\
    \midrule
    $c_{\ast}$ & 7.06e-09 $\pm$ 7.29e-09 & 1.15e-09 $\pm$ 3.85e-09 & 298.05 $\pm$ 1.32 & 99.350 $\pm$ 0.441\%\\
    $w_{\ast}$ & 0.00e+00 $\pm$ 0.00e+00 & 0.00e+00 $\pm$ 0.00e+00 & 299.08 $\pm$ 0.92 & 99.695 $\pm$ 0.306\%\\
    $y_{\ast}$ & 9.53e-09 $\pm$ 7.14e-09 & 0.00e+00 $\pm$ 0.00e+00 & 298.06 $\pm$ 1.32 & 99.353 $\pm$ 0.441\%\\
    \midrule
    \multicolumn{5}{l}{\text{Average memory (RAM) of an experiment:} 1160.71 $\pm$ 10.02 MB}\\
    \multicolumn{5}{l}{\text{Maximum memory (RAM) of an experiment:} 1475.45 $\pm$ 278.41 MB}\\
    \bottomrule
    \end{tabular}
    \caption{Results for MNIST Addition with $k=100$ ($N_{\text{train}}$ = 300) using $\Pi$-Nesy-1 with DeepSoftLog's CNN over 10 experiments. Averages for the rule parameter values ($s_{\ast}$ and $r_{\ast}$) for the $c_{\ast}$-, $w_{\ast}$-, and $y_{\ast}$-rules are computed from data comprising $10\cdot 2099$, $10\cdot 100$, and $10\cdot 1001$ observations, respectively. Reliable sample counts for the $c_{\ast}$-, $w_{\ast}$-, and $y_{\ast}$-rules are averaged over $10\cdot 100$, $10\cdot 100$, and $10\cdot 101$ observations, respectively, and memory metrics are averaged over 10 experiments.}
    \label{tab:dsl:mnist-add-100-pi-nesy-1}
    \end{table}

    \begin{table}[ht]
    \centering
    \setlength{\tabcolsep}{1.5pt}
    \renewcommand{\arraystretch}{1.5}
    \begin{tabular}{lcccc}
    \toprule
    \makecell{Set of rules\\(output attribute)} & \makecell{Average value of the\\rule parameters $s_{\ast}$} & \makecell{Average value of the\\rule parameters $r_{\ast}$} & \makecell{Average number of\\training data samples\\considered reliable} & \makecell{Percent of\\training data samples\\considered reliable} \\
    \midrule
    $c_{\ast}$ & 5.58e-09 $\pm$ 6.35e-09 & 8.56e-10 $\pm$ 3.08e-09 & 299.48 $\pm$ 0.70 & 99.827 $\pm$ 0.232\%\\
    $w_{\ast}$ & 0.00e+00 $\pm$ 0.00e+00 & 0.00e+00 $\pm$ 0.00e+00 & 299.76 $\pm$ 0.48 & 99.920 $\pm$ 0.160\%\\
    $y_{\ast}$ & 7.94e-09 $\pm$ 5.90e-09 & 0.00e+00 $\pm$ 0.00e+00 & 299.49 $\pm$ 0.69 & 99.828 $\pm$ 0.231\%\\
    \midrule
    \multicolumn{5}{l}{\text{Average memory (RAM) of an experiment:} 1145.01 $\pm$ 8.14 MB}\\
    \multicolumn{5}{l}{\text{Maximum memory (RAM) of an experiment:} 1366.51 $\pm$ 9.96 MB}\\
    \bottomrule
    \end{tabular}
    \caption{Results for MNIST Addition with $k=100$ ($N_{\text{train}}$ = 300) using $\Pi$-Nesy-2 with DeepSoftLog's CNN over 10 experiments. Averages for the rule parameter values ($s_{\ast}$ and $r_{\ast}$) for the $c_{\ast}$-, $w_{\ast}$-, and $y_{\ast}$-rules are computed from data comprising $10\cdot 2099$, $10\cdot 100$, and $10\cdot 1001$ observations, respectively. Reliable sample counts for the $c_{\ast}$-, $w_{\ast}$-, and $y_{\ast}$-rules are averaged over $10\cdot 100$, $10\cdot 100$, and $10\cdot 101$ observations, respectively, and memory metrics are averaged over 10 experiments.}
    \label{tab:dsl:mnist-add-100-pi-nesy-2}
    \end{table}

\end{document}